\newif\ifcolorversion
    \newcommand{\orange}[1]{\textcolor{orange}{#1}}
    \newcommand{\cyan}[1]{\textcolor{cyan}{#1}}
    \newcommand{\red}[1]{\textcolor{red}{#1}}
    \newcommand{\blue}[1]{\textcolor{blue}{#1}}
    \newcommand{\purple}[1]{\textcolor{purple}{#1}}
    \newcommand{\orange}[1]{\textcolor{black}{#1}}
    \newcommand{\cyan}[1]{\textcolor{black}{#1}}
    \newcommand{\red}[1]{\textcolor{black}{#1}}
    \newcommand{\blue}[1]{\textcolor{black}{#1}}
    \newcommand{\purple}[1]{\textcolor{black}{#1}}
\def\@captype{table}
\declaretheorem[name=Hypothesis,refname={Hypothesis,Hypotheses}]{hyp}
\declaretheorem[name=Proposition,refname={Proposition,Propositions}]{prop}
\newenvironment{Ualgorithm}[1][htpb]{\def\@algocf@post@ruled{\kern\interspacealgoruled\hrule  height\algoheightrule\kern3pt\relax}%
\def\@algocf@capt@ruled{under}%
\begin{algorithm}[#1]}
{\end{algorithm}}
\newcommand{\bmf}[1]{\bm{\mathsf{#1}}}
\DeclareMathOperator*{\E}{\mathbb{E}}
\DeclareMathOperator*{\Var}{Var}
\newcommand{\loss}{\mathcal{L}}
\newcommand{\R}{\mathbb{R}}
\newcommand{\tar}{\text{tar}}
\newcommand{\tp}{^\mathsf{T}}
\newcommand{\ve}{\bmf{e}}
\newcommand{\va}{\bmf{a}}
\newcommand{\vG}{\bmf{G}}
\newcommand{\vO}{\bmf{O}}
\newcommand{\vh}{\bmf{h}}
\newcommand{\vB}{\bmf{B}}
\newcommand{\vb}{\bmf{b}}
\newcommand{\vc}{\bmf{c}}
\newcommand{\vL}{\bmf{L}}
\newcommand{\vp}{\bmf{p}}
\newcommand{\vq}{\bmf{q}}
\newcommand{\vw}{\bmf{w}}
\newcommand{\vx}{\bmf{x}}
\newcommand{\vy}{\bmf{y}}
\newcommand{\vz}{\bmf{z}}
\newcommand{\vu}{\bmf{u}}
\newcommand{\vchi}{\bmf{\chi}}
\definecolor{cgpt}{HTML}{00E079}
\definecolor{cclaude}{HTML}{9B4400}
\newcommand{\cvxu}{\cyan{\bmf{x}_u}}
\newcommand{\cvyu}{\cyan{\bmf{y}_u}}
\newcommand{\cxu}{\cyan{x_u}}
\newcommand{\cyu}{\cyan{y_u}}
\newcommand{\cvxo}{\orange{\bmf{x}_o}}
\newcommand{\cxo}{\orange{x_o}}
\newcommand{\cyo}{\orange{y_o}}
\newcommand{\cvchiu}{\cyan{\bmf{\chi}_u}}
\newcommand{\cvchio}{\orange{\bmf{\chi}_o}}
\DeclareMathOperator*{\argmax}{argmax}
\DeclareMathOperator{\bigO}{\mathcal{O}}
\newcommand\indep{\protect\mathpalette{\protect\independenT}{\perp}}
\def\independenT#1#2{\mathrel{\rlap{$#1#2$}\mkern2mu{#1#2}}}
\DeclareMathOperator{\Softmax}{\mathsf{Softmax}}
\DeclareMathOperator{\Softmaxcol}{\mathsf{Softmax\_column}}
\newcommand{\danica}[2][noinline]{\todo[color=violet!20,#1]{Danica: #2}}
\newcommand{\committeeMember}[4]{#1, #2, #3\\\textit{#4}}
\def \theAuthor {Yi (Joshua) Ren}
\def \theTitle {Learning Dynamics of Deep Learning}
\def \theSubtitle {--- Force Analysis of Deep Neural Networks}
\def \theDegree {Doctor of Philosophy} %
\def \theProgramTitle {Computer Science}
\def \theCopyrightyear {2025}
\def \theSubmitdate {September 2025}
\institution{The University Of British Columbia}
\title{\theTitle}
\author{\theAuthor}
\begin{document}

\frontmatter

\maketitle                      %
\chapter*{}
\vskip -0.5 cm
The following individuals certify that they have read, and recommend to the Faculty of Graduate and Postdoctoral Studies for acceptance, the thesis entitled: 

\textbf{Learning Dynamics of Deep Learning -- Force Analysis of Deep Neural Networks},

submitted by \textbf{\theAuthor} in partial fulfillment of the requirements for the degree of \textbf{\theDegree} in \textbf{\theProgramTitle}.

\textbf{Examining Committee:}

\committeeMember{Danica J. Sutherland}{Assistant Professor}{Department of Computer Science, UBC} {Supervisor}

\committeeMember{Mark Schmidt}{Professor}{Department of Computer Science, UBC} {Supervisory Committee Member}

\committeeMember{Trevor Campbell}{Associate Professor}{Department of Statistics, UBC} {Supervisory Committee Member}

\committeeMember{Leonid Sigal}{Professor}{Department of Computer Science, UBC} {University Examiner}

\committeeMember{Muhammad Abdul-Mageed}{Associate Professor}{School of Information and Department of Linguistics, UBC} {University Examiner}

\committeeMember{Quanquan Gu}{Associate Professor}{Department of Computer Science, University of California, Los Angeles}{External Examiner}

\textbf{Additional Supervisory Committee Members:}

\committeeMember{Jeff Clune}{Professor}{Department of Computer Science, UBC} {Supervisory Committee Member}

\committeeMember{Aaron Courville}{Professor}{Department of Computer Science and Operations Research, Université de Montréal and Mila} {Supervisory Committee Member}

\begin{abstract}                %
\danica{Max 350 words, this is a little too long}
This thesis investigates the learning dynamics of deep learning systems through a local, physics-inspired analytical lens.
Motivated by the need for fine-grained insights into model behavior, we begin with the step-wise influence that a single training example exerts on a specific observing example during learning. 
Central to our approach is the proposed $\mathcal{AKG}$ decomposition, which dissects this influence into three interpretable components: similarity ($\mathcal{K}$), normalization ($\mathcal{A}$), and prediction gap ($\mathcal{G}$). 
This decomposition enables an analogy with classical force analysis: the force originates from $\mathcal{G}$, is shaped by $\mathcal{AK}$, and is ultimately applied to the target object, e.g., to the model confidence, output, hidden representations, or parameters.

Building upon this foundation, we gradually scale the analysis from individual interactions to cumulative effects over time, akin to tracking an object’s motion under multiple forces.
We apply it to the following problems.
\begin{itemize}
    \item \textbf{Supervised classification:} We study the learning trajectories of examples with varying difficulty and reveal an interesting ``zig-zag'' pattern that emerges during optimization. Our analysis explains this behavior and inspires a novel knowledge distillation method, Filter-KD, which improves supervision signals for student models.
    \item \textbf{Large language model (LLM) finetuning:} We extend the framework to account for the autoregressive nature of LLMs and the presence of negative gradients. The unified perspective explains behaviors across finetuning methods such as SFT, DPO, and GRPO. We also highlight the critical role of negative gradients. In particular, we identify the ``squeezing effect'': a counterintuitive phenomenon caused by improperly applied gradient ascent.
    \item \textbf{Representation learning:} We explore the dynamics of hidden features, revealing how adaptation energy and directions influence the feature drift. Our analysis leads to a provable pattern of feature adaptation in a head-probing then finetuning pipeline, offering insights and inspiring several practical strategies.
    \item \textbf{Simplicity bias and compositional learning:} Revisiting foundational questions about why structured representations are learned faster, we apply our framework to a compositional learning setting. Our findings align with principles such as Occam’s Razor and the idea of ``compression for AGI,'' offering a novel dynamical explanation rooted in compression and learning speed.
\end{itemize}

\end{abstract}

\chapter{Lay Summary}
This thesis explores how deep learning models learn over time, using ideas inspired by force analysis.
Specifically, we zoom in on the model's training procedure to see how one training example affects another during learning, like analyzing how forces move objects. 
We break this influence into two parts: how similar the two examples are, and how strong the updating force is. 
This framework helps us understand a wide range of the model's behaviors in different real systems. 
For example, it explains why certain examples have non-trivial learning paths, why (and why not) some LLM finetuning methods work, and why simpler, more structured patterns tend to be learned more easily. 
We apply this approach to various learning tasks and uncover new strategies for improving model training.
While the method is still developing, it offers a new way to interpret models' behaviors systematically.    %
\chapter{Preface}
The work presented in this dissertation has been included in several publications. We list them as below:

\textit{1. ``Better Supervisory Signals by Observing Learning Paths.''}\\
\underline{Yi Ren}, Shangmin Guo, Danica J. Sutherland\\
{International Conference on Learning Representations (ICLR), 2022}\\
\citep{renbetter}

Contributions: We formulated the problem together. Danica introduced the NTK into the framework, which made the decomposition more theoretically solid and interpretable. We all contributed to the writing.

\textit{2. ``How to Prepare Your Task Head for Finetuning.''}\\
\underline{Yi Ren}, Shangmin Guo, Wonho Bae, Danica J. Sutherland\\
{International Conference on Learning Representations (ICLR), 2023}\\
\citep{ren2023howto}

Contributions: We formulated the problem together. Shangmin helped with coding. Wonho did the image segmentation part. Danica checked the theoretical derivations carefully. We all contributed to the writing.

\textit{3. ``Improving Compositional Generalization using Iterated Learning and Simplicial Embeddings.''}\\
\underline{Yi Ren}, Samuel Lavoie, Mikhail Galkin, Danica J. Sutherland, Aaron Courville\\
Conference on Neural Information Processing Systems (NeurIPS), 2023\\
\citep{ren2023improving}

Contributions: We formulated the problem together. Samuel helped with the SEM part. Mikhail helped with the graph network part. Danica and Aaron checked the theory and results. We all contributed to the writing.

\textit{4. ``lpNTK: Better Generalisation with Less Data via Sample Interaction During Learning''}\\
Shangmin Guo, \underline{Yi Ren}, Stefano V Albrecht, Kenny Smith\\
International Conference on Learning Representations (ICLR), 2024\\
\citep{guo2024lpntk}

Contributions: We formulated the problem together. Shangmin did most of the work; I helped with the theory part. We all contributed to the writing.

\textit{5. ``Bias Amplification in Language Model Evolution: An Iterated Learning Perspective.''}\\
\underline{Yi Ren}, Shangmin Guo, Linlu Qiu, Bailin Wang, Danica J. Sutherland\\
Conference on Neural Information Processing Systems (NeurIPS), 2024\\
\citep{ren:iicl}

Contributions: We formulated the problem together. Linlu helped with the game design. Danica helped with the theoretical and experimental checks. We all contributed to the writing.

\textit{6. ``Learning Dynamics of LLM Finetuning.''}\\
\underline{Yi Ren}, Danica J. Sutherland\\
International Conference on Learning Representations (ICLR), 2025\\
\citep{ren2025learning_dynamics_LLM}

Contributions: We formulated the problem together. Danica helped with the theory and reorganization of the paper. We all contributed to the writing.

\textit{7. ``On the Effect of Negative Gradient in Group Relative Deep Reinforcement Optimization.''}\\
Wenlong Deng, \underline{Yi Ren}, Muchen Li, Danica J Sutherland, Xiaoxiao Li, Christos Thrampoulidis\\
Conference on Neural Information Processing Systems (NeurIPS), 2025\\
\citep{deng2025effect}

Contributions: We formulated the problem together. Wenlong did most of the work; I helped with the theory part. We all contributed to the writing.

\textit{8. ``Understanding Simplicity Bias towards Compositional Mappings via Learning Dynamics.''}\\
\underline{Yi Ren}, Danica J. Sutherland\\
Workshop on Compositional Learning at NeurIPS, 2024\\
\citep{ren2024understanding}

Contributions: We formulated the problem together. Danica helped with the theory and reorganization of the paper. We all contributed to the writing.

\vskip 0.5 cm

\cref{sec:fundamentalLD} is largely based on papers 1 and 4; \cref{sec:case1} is largely based on papers 1; \cref{sec:case2} is largely based on papers 5, 6, and 7; \cref{sec:case3} is largely based on papers 2; \cref{sec:case4} is largely based on papers 8 and 3.

\vskip 0.5 cm

Disclosure of Generative AI Usage: A generative AI tool, specifically
\texttt{ChatGPT}, was used solely for grammar checking and rephrasing. 
No AI-generated ideas were included in the research or analysis presented in this dissertation.

\tableofcontents                %
\listoftables                   %
\listoffigures                  %
\chapter{List of Abbreviations}

\def\arraystretch{1.5}
\begin{longtable}[l]{l l}
    \textbf{AGI} &  Artificial General Intelligence\\
    \textbf{AIE} & Average Initial Energy\\
    \textbf{CNN}& Convolutional Neural Network\\
    \textbf{CE}& Cross Entropy\\
    \textbf{DL}& Deep Learning\\
    \textbf{DPO}& Direct Preference Optimization\\
    \textbf{ERM}& Empirical Risk Minimization\\
    \textbf{EMA}& Exponential Moving Average\\
    \textbf{FT}& Finetuning (Full-tuning in Chapter 6)\\
    \textbf{GAN}& Generative Adversarial Networks \\
    \textbf{(S)GD}& (Stochastic) Gradient Descent\\
    \textbf{GNN}& Graph Neural Network\\
    \textbf{GRPO}& Group Relative Policy Optimization\\
    \textbf{HP}& Head Probing\\
    \textbf{KD}& Knowledge Distillation\\
    \textbf{K-complexity}& Kolmogorov Complexity\\
    \textbf{KL-distance}& Kullback–Leibler divergence\\
    \textbf{LLM}& Large Language Models\\
    \textbf{LLDisp}& Lazy Likelihood Displacement\\
    \textbf{MSE}& Mean Square Error\\
    \textbf{MLP}& Multilayer Perceptron\\
    \textbf{NTHR}& Negative Token Hidden Reward\\
    \textbf{OOD}& Out of Distribution\\
    \textbf{OPM}& Overparameterized Model\\
    \textbf{PT}& Pretraining\\
    \textbf{PAC}& Probably Approximately Correct\\
    \textbf{PPO}& Proximal Policy Optimization\\
    \textbf{RL}& Reinforcement Learning \\
    \textbf{RLHF}& Reinforcement Learning with Human Feedback\\
    \textbf{RLVR}& Reinforcement Learning with Verifiable Rewards\\
    \textbf{SFT}& Supervised Finetuning\\
    \textbf{TRPO}& Trust Region Policy Optimization\\
    \textbf{UFM}& Unconstrained Feature Model\\
    \textbf{VC-dim}& Vapnik-Chervonenkis Dimension\\
    \textbf{ViT}& Vision Transformer\\
\end{longtable}
\def\arraystretch{1}
\chapter{List of Symbols}

\def\arraystretch{1.5}
\begin{longtable}[l]{l l}
    $\mathcal{A}$ & normalizing term in our decomposition \\
    $A_n, A_{n,l}$ & the advantage used in GRPO\\
    $\|\va\|_2$ & L2-norm of $\va$\\
    $\|\va\|_\text{op}$ & operation-norm of $\va$\\
    $\|\va\|_F$ & Frobenius Norm of $\va$\\
    $\langle\va,\vb\rangle$ & inner product or tensors multiplication\\
    $\va^\top$ & matrix or vector transpose\\
    $c * \va$  & scalar multiplication \\
    $\va\cdot \vb$ & inner product or matrix multiplication \\
    $\vb\odot\va$ & element-wise multiplication\\
    $\vb\oslash\va$ & element-wise division\\
    $\vB$ & the backbone parameters in OPM, the role of $\theta$ in $h_\theta$\\
    $\vb$ & the column-wise reparameterization of $\vB$, a long vector\\
    $c,\vc$ & usually a constant, or a constant vector\\
    $\mathcal{D}$ & dataset\\
    $\bar{d}$ & usually the estimated distance\\
    $\mathbb{E}$ & expectation\\
    $\ve_y$ & one-hot label\\
    $f_\theta:\mathcal{X}\rightarrow\mathcal{Y}$ & a function we want to learn\\
    $\mathcal{G}$ & updating term in our decomposition \\
    $\vG$ & the ground truth semantical generating factors\\    
    $\vh$ & the feature vector in UFM or OPM\\    
    $\mathcal{I}^+, \mathcal{I}^-$ & the index of positive (negative) responses in one roll-out\\
    $\mathcal{K}$ & similarity term in our decomposition, eNTK \\
    $\mathcal{K}_{uo}$ & the simplified notation for $\mathcal{K}(\vx_u,\vx_o)$ \\
    $L$ & the length of the sequence, or the number of labels\\
    $\mathcal{L}$ & loss function\\
    $\mathcal{M}:\mathcal{G}\rightarrow \mathcal{Z}$ & a mapping from the ground truth factors to representations\\
    $m, v$ & number of factors, the index of the attributes\\
    $N^+, N^- $ & the number of positive (negative) responses in one roll-out\\
    $\mathcal{N}$ & Gaussian distribution\\
    $\vO$ & the task-irrelavant factors\\
    $\vp=\sigma(\vz)$ & probability predictions\\
    $\vp_i \text{ or } p_i$ & the $i$-th element of this vector\\
    $\vq, q$ & another notation for probability distribution\\
    $\hat{\mathcal{R}}_n$ & the empirical risk\\
    $\mathbb{R}$ & real number\\   
    $\mathcal{R}$ & the expected risk\\ 
    $S_{v}$ & a symmetric group with $v$ elements\\    
    $t, T$ & usually the training time, or the stopped training time\\
    $\tau$ & sometimes the training time, sometimes the temperature\\
    $\vu$ & a uniform distribution vector\\
    $\text{Var}$ & variance\\
    $V$ & usually the vocabulary size or the number of classes\\
    $\vw$ & the readout layer in UFM or OPM, the role of $\phi$ in $g_\phi$\\
    $x,\vx\in\mathcal{X}$ & the input data or features\\
    ${\vx_o}$ & we generally use subscript ``o'' for observing elements\\    
    ${\vx_u}$ & we generally use subscript ``u'' for updating elements\\
    $\vchi=[\vx,\vy]$ & the concatenated input used in LLM part\\
    $\vchi^+_u, \vchi^-_u$ & the positive (negative) sequences, they share $\vx_u$ \\
    $y, \vy\in\mathcal{Y}$ & the supervision label or labels\\
    $\vz$ & logits of the output\\

\quad\quad \\
    $\hat{\alpha}_{n,l}^+, \hat{\alpha}_{n,l}^-$ & NTHR score (with positive or negative advantage) for GRPO\\
    $\beta$ & regularity strength in DPO, sometimes a pre-defined ratio\\
    $\gamma_{n,l}$ & the ratio of the $l$-th token in the $n$-th response in GRPO\\
    $\kappa$ & the eNTK of all data samples \\
    $\pi(\vy\mid\vx)$ & predicting probability in the LLM part, replace $\vp$\\
    $\rho^+, \rho^-$ & the ratio of positive (negative) responses in one roll-out\\   
    $\mathbf{\mu}$ & mean of the Gaussian distribution\\
    $\sigma \circ g_\phi \circ h_\theta(\cdot)$ & a general decomposition of our function $f$\\
    $\sigma(\cdot)$ & $\mathsf{Softmax}$ or $\mathsf{Sigmod}$ function\\
    $\mathcal{\sigma}$ & variance of the Gaussian distribution\\
    $\theta$ & parameters of the backbone\\
    $\phi$ & parameters of the task-head\\
    $\eta$ &  learning rate, sometimes a pre-defined ratio\\

\end{longtable}
\def\arraystretch{1}
\chapter{Acknowledgements}
This thesis is the result of years of curiosity, frustration, discovery, and reflection. 
But more than that, it is a testament to the support, guidance, and kindness I’ve received from many people along the way. 
I am deeply grateful to everyone who has walked this path with me.

First and foremost, I would like to express my deepest gratitude to my supervisor, Professor Danica J. Sutherland, for her invaluable guidance, unwavering support, and insightful feedback throughout my Ph.D. journey. 
Her constant encouragement, the cheerful and open atmosphere she fostered, and the intellectual freedom she generously offered have been essential to the development of my research and my growth as an independent researcher.

I would also like to thank the members of my committee, Prof.\ Mark Schmidt, Prof.\ Jeff Clune, Prof.\ Trevor Campbell, and Prof.\ Aaron Courville, for their constructive comments and inspiring discussions.

I am grateful to my labmates and friends in Sutherlab. 
Special thanks to Dr. Wonho Bae, the first Ph.D. graduate in our lab.
Whether warm start or cold start, you have already embarked on a new and exciting start at your new job!
Thanks to Hamed Shirzad, the ``social expert'' in our lab --- wishing you all the best in tackling important biological problems with your beloved graph models.
I also thank Dr.\ Bingshan Hu, Zheng He, Nathaniel Xu, Aaron Wei, Kuang Yao, and Mohamad Bazzi --- wishing you all enjoyable times ahead in the lab (perhaps in a newly upgraded room!).

Special thanks go to my collaborators and mentors across the world. 
To Shangmin Guo, who first introduced me to the intersection of cognitive science and deep learning. 
To Wenlong Deng, your passion for reinforcement learning and LLMs made working with you a real joy and a fruitful process. 
To Prof.\ Xiaoxiao Li and Prof.\ Christos Thrampoulidis, your instructions and discussions were truly enlightening. 
I also thank Linlu Qiu and Dr.\ Bailin Wang for our exciting Bayesian iterated learning project.
Thanks to Qi Yan, Muchen Li, and other UBC peers; discussion with you is inspiring.

I am grateful to my mentors at Borealis AI, Dr.\ Tristan Sylvain and Dr.\ David Evans --- thank you for your guidance and for welcoming me so warmly to the group. 
I am especially thankful for the opportunity to visit and work with Prof.\ Aaron Courville’s group at Mila.
Their expertise and open-minded research environment were deeply inspiring. 
Thanks to my collaborators there, Samuel Lavoie, Mikhail Galkin, and Prof.\ Courville, for your critical insights and ideas, which were key to our work’s success.

I also wish to thank Prof.\ Shay Cohen and Prof.\ Simon Kirby.
Your teachings ignited my interest in natural language processing and iterated learning --- interests that ultimately led me into machine learning.

Thanks to my dear friends and small group members --- Kevin, Oleg, and Ray.
Your support helped carry me through many difficult times during this journey.
I am also deeply grateful to the other members of our home group, Jungsun, Ricky, Mitsi, Joe, Sophia, and others, for creating such a warm and welcoming space.
Every time we gathered, you made me feel at home.

Thanks as well to pickleball and video games, my unexpected companions on this journey. 
They comforted me whenever I found ``my machine refused to learn.''
When experiments failed and proofs got stuck, at least I knew I was improving at something: either my pickleball skills or my World of Warcraft gear. :-)

To my parents and other family members, thank you for your endless love, patience, and belief in me, even in times when I doubted myself. Though we had many misunderstandings about my career path and future plans, we endured a long, harsh winter together. 
Thankfully, we made it through, and things have begun to bloom again.
Your unwavering support has been my anchor throughout this journey.

And finally, my deepest and most heartfelt thanks go to my best friend of the past several years, my closest confidante, the rock of our family, and the love of my life --- my wife, Yezi (Charlotte) Xia. 
You were there from the very beginning, when I left my first job to chase my dreams.
We've traveled the world together, seen friends come and go, but through it all, we’ve always had each other. 
You’ve stood by me in every anxious, helpless, or painful moment, and celebrated with me in every joyful one. 
You’ve witnessed every milestone of the past several years and shared every quiet detail of our daily life. 
Thank you for making me a better person, for building a shared life full of memories with me on this winding road, and for making my life exciting and beautiful. 
I look forward to a lifetime of adventures together.
\chapter{Dedication}
To my family.

\clearpage
\thispagestyle{empty}
\textcolor{white}{.}

\vskip 5.5 cm

To close this chapter, I would like to share a few poetic lines that capture what I feel about this journey --- about what the world has given me:

\vskip 1.5 cm

\begin{quote}
\itshape
            The world bestowed on me a life, and the grace to love it back; \\
            A tiny window to the mystery, and a gentle guide along the track. \\
            It gave me a name to grow into, and years that softly rise; \\
            A quiet poem in my hands, unfolding calm and wise.
\end{quote}
\vskip 0.7 cm
\hfill \textemdash{} adapted from the lyrics of \emph{“The Treasures Presented by the World”}

\mainmatter

\chapter{Introduction}
\label{sec:intro}

Deep learning (DL) has experienced rapid growth over the past decade, revolutionizing a wide range of fields including computer vision, natural language processing, robotics, and scientific discovery. 
Its success can largely be attributed to the remarkable ability of deep neural networks to automatically extract useful representations from high-dimensional data, enabling breakthroughs in tasks that were once considered intractable. 
From powering superhuman-level agents like AlphaGo \citep{silver2016mastering} to driving recent advances in generative models and large language models (LLMs), deep learning has become a central force in modern artificial intelligence.

Despite its empirical success, our theoretical understanding of DL remains limited. 
The behaviors of these models sometimes exceed what traditional learning theory would predict (in both beneficial and harmful ways), raising fundamental questions about generalization, optimization, and the nature of the learned representations. 
As deep learning systems grow increasingly complex and widely deployed, gaining deeper insights into their learning behavior becomes not only intellectually intriguing but also practically important for improving robustness, efficiency, and interpretability.

This section begins by contrasting two major paradigms of deep learning theory: the ``mathematical style,'' which emphasizes formal guarantees and proofs, and the ``physics style,'' which focuses on behaviors and average trends. 
We then use the evolution of reinforcement learning (RL) inspired LLM finetuning algorithms as a case study to illustrate why rapidly evolving systems demand physics-like theories.

A brief overview of different chapters is also provided. 
In essence, this work aims to advance our understanding of deep learning systems by studying their learning dynamics.
Through a combination of theoretical insights and empirical analyses, we investigate how learning different data samples influences the model's behavior and how these influences are accumulated to shape both the model’s generalization behavior and its internal representational structures.
We argue that the proposed \textbf{force analysis} and \textbf{learning dynamics} framework offers a valuable and flexible toolkit for interpreting and monitoring the behavior of modern deep learning systems.
It has the potential to generate novel insights and guide the development of more principled and efficient algorithms.

\section{Math-like and Physics-like DL Theory}
\label{sec:intro_motivation}

In the study of deep learning, researchers often adopt two distinct theoretical paradigms: math-like theory and physics-like theory. 
Although sharing many overlaps and similarities, these paradigms differ in their goals, methodologies, and views on inevitable assumptions.

Math-like theory aims for a rigorous and formal understanding. 
It focuses on providing provable guarantees, typically under well-defined assumptions, regarding generalization, optimization, or expressivity.
These results are often elegant and general, but may rely on idealized conditions that do not fully capture the complexity of real-world deep learning systems. 
For example, convergence proofs for gradient descent or bounds on VC dimension fall under this category \citep{blumer1989learnability}. 
The strength of this approach lies in its clarity and reliability.
However, its scope is sometimes limited by the gap between theoretical assumptions and practical realities.

In contrast, physics-like theory draws inspiration from empirical sciences. 
Rather than seeking absolute guarantees, it emphasizes building intuitive, often approximate models that help explain and predict observed behaviors of neural networks. 
This approach treats deep learning systems as complex dynamical entities, akin to physical systems, and tries to identify governing principles, such as implicit bias, learning dynamics, or phase transitions, through a combination of empirical evidence, analytical approximations, and simplified models. 
Examples include the information bottleneck \citep{tishby2000information, tishby2015deep}, which combines information theory and a general phase transition of deep models' training, the study of local elastics on neural networks \citep{elasticity},
loss landscapes, circuit hypothesis of the Transformer \citep{olsson2022context}, and so on.

The physics-like approach accepts that a perfect model may be unattainable, but a good enough approximation can still yield powerful insight and guide practice. 
While this comes at the cost of mathematical rigor, it allows theory to stay closer to the empirical frontier, particularly for large-scale models where traditional assumptions no longer apply.

In summary, math-like theory prioritizes analytical expressions and rigor,
while physics-like theory values intuition and empirical alignment.
Both are essential: one provides a solid foundation, and the other helps us navigate the messy reality of deep learning systems.
Their interplay continues to shape the evolving landscape of theoretical machine learning.
The frameworks studied in this thesis lean more towards the physics-like theory.

\section{Rapidly Evolving Algorithms Call for Physics-like Theories}
\label{sec:intro_contributions}

\begin{figure}[h]
\vspace{-0 in }
    \centering
    \includegraphics[width=0.65\linewidth]{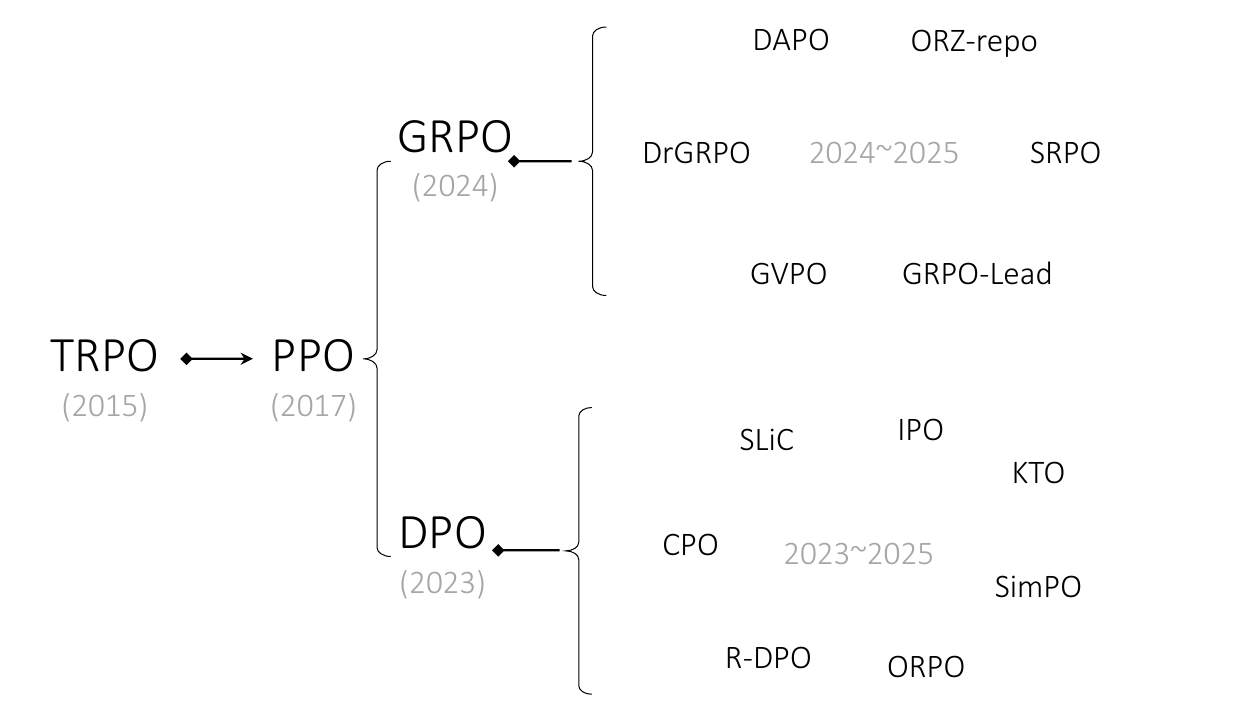}
    \caption{The evolution of xPO methods. The losses of DPO-like methods can be found in Table 3 of the SimPO paper \citep{meng2024simpo}, and the GRPO-like summary is in Table 9 of the GPG paper \citep{chu2025gpg}.}
    \label{fig:chap1_rl_llm}
\end{figure}

Many deep learning systems are highly complex in different ways, which makes math-like theories hard to track.
For instance, many such theories focus on convergence to a global optimum or assume full training to completion. 
In practice, however, models are often early-stopped based on performance on a held-out validation set, and convergence to a local optimum (or even stopping at the vicinity of a good local optimum) is usually sufficient for deployment.
Similarly, while theoretical analyses often assume datasets are drawn from well-behaved distributions, real-world models might encounter some outlier examples, which can strongly influence the model's behavior: the actual dataset might be way smaller than the required size for the theory's assumption.
Moreover, the architectures of modern systems like LLMs are often too complicated for a precise analytical description.
In short, while we aspire for theory to be rigorous and comprehensive, practical algorithms typically operate under far-from-ideal conditions.

This gap between theory and practice is further enlarged by the rapid evolution of the focusing tasks and settings of the algorithms.
For example, the Trust Region Policy Optimization (TRPO, \cite{schulman2015trust}) algorithm provides strong theoretical guarantees within the classical reinforcement learning (RL) framework.
Its successor, Proximal Policy Optimization (PPO, \cite{schulman2017proximal}), retains similar theoretical backing and significantly improves its efficiency.

However, when applying these RL algorithms to the field of LLM finetuning, maintaining strict theoretical guarantees has become increasingly difficult.
For example, while Direct Preference Optimization (DPO, \cite{rafailov2024direct}) offers valuable theoretical insights in its equivalence to PPO in the ideal setting, its extension to LLM finetuning requires more considerations, e.g., whether modeling context-token as a state-action pair is appropriate, or whether the samples are generated in an on-policy way.
Due to its simplicity and empirical effectiveness, DPO inspired numerous follow-up methods.
These variants prioritize \textit{theoretical intuition} more over \textit{theoretical guarantees}, as the scenarios they consider are more practical and subtle.

For example, SimPO points out the inaccuracy of the token-level reward and hence designs a new reward using the mean log-probability \citep{meng2024simpo}.
SLiC calibrates the likelihood of ranked sequences and applies a threshold to the margin \citep{zhao2023slic}. 
Other extensions, e.g., KTO \citep{ethayarajh2024kto}, CPO \citep{xu2024contrastive}, IPO \citep{azar2024general}, R-DPO \citep{park2024disentangling}, and ORPO \citep{hong2024orpo}, are also proposed following a similar methodology. 
Although these methods sacrifice some theoretical rigor compared to PPO and DPO, they demonstrate practical effectiveness across diverse settings: understanding why the algorithms work, although many assumptions are violated, is interesting and important.

A parallel trend is seen in Group Relative Policy Optimization (GRPO, \cite{shao2024deepseekmath}) and its derivatives. 
For instance, DAPO independently adjusts GRPO's dual thresholds to balance exploration and exploitation \citep{yu2025dapo}; DrGRPO removes the normalization term when computing advantages \citep{liu2025understanding}; and GPG omits both the clipping term and KL regularization entirely \citep{chu2025gpg}. 
Other variants, such as ORZ-Repo \citep{hu2025openreasonerzeroopensourceapproach}, GVPO \citep{zhang2025gvpo}, GRPO-Lead \citep{zhang2025grpo}, and SRPO \citep{zhang2025srpo}, further deviate from the original theoretical assumptions while still achieving strong empirical results under different settings.

Also striking is the pace of the development of the algorithms: all these variants emerged between 2024 and 2025, within a single year of DPO and GRPO's first introduction.
Such a rapid growth of different practical methods focusing on distinct scenarios is also common in other deep learning fields, e.g., computer vision, vision language models, graph neural networks, etc.
This rapid evolution inevitably restrains the exploration of those strict guarantees and simultaneously underscores the need for fast-adapting physics-like theories: for more on capturing qualitative patterns and dynamics rather than on deriving strict guarantees, and if possible, a unified view for most of them.

\section{Publications and Related Chapters}
This subsection presents the set of research papers that form the foundation of this thesis, to which I have contributed.
The papers are listed in chronological order to illustrate how the central ideas, particularly learning dynamics and force analysis in deep neural networks, have gradually developed over time.
For each paper, I include a brief summary of its main content, key takeaways, and specific contributions of each author.

As this thesis spans a broad range of topics, some conceptual connections may not be immediately clear upon first reading. 
I recommend revisiting this part after completing the thesis to gain a more comprehensive understanding of how learning dynamics serve as a unifying framework across diverse deep learning systems. 
The relationships between these works and their respective roles in different chapters are also illustrated in \cref{fig:chap0_outline}.

\vspace{0.15 in}

\textit{0. ``Compositional Languages Emerge in a Neural Iterated Learning Model.''}\\
\underline{Yi Ren}, Shangmin Guo, Matthieu Labeau, Shay B. Cohen, Simon Kirby\\
{International Conference on Learning Representations (ICLR), 2020

This paper was completed during my Master's studies, but I believe it deserves inclusion here, as it directly inspired the development of the learning dynamics framework central to this thesis. 
The work extends the iterated learning framework, originally studied in evolutionary linguistics \citep{kirby2015compression}, to the domain of deep learning. 
In essence, \citet{kirby2015compression} argue that the evolution of human language toward compositionality can be explained by the simultaneous pressures of expressivity (language should be useful and unambiguous) and compressibility (language should be as efficient as possible), which are then gradually amplified over generations.

In our study, we adapt this idea to a simplified emergent communication setting, modeled as a two-agent reinforcement learning scenario involving a sender and a receiver. 
We observe that the simplicity bias favoring compositional mappings is manifested as a learning speed advantage.

To understand the origin of this simplicity bias, we initially considered formalizing it using the Vapnik-Chervonenkis dimension (VC-dim,~\cite{blumer1989learnability}) within the Probably Approximately Correct (PAC,~\cite{valiant1984theory}) framework. 
However, we found that such classical generalization theories struggled to adequately capture the dynamics we observed. 
This limitation led us to shift our focus to a more fine-grained, micro-level perspective, i.e., examining how learning one updating example influences the model’s behavior on another observing example. 
This line of thinking laid the foundation for the force analysis framework later developed and expanded in this thesis.
This paper is covered by \cref{sec:fundamentalLD,sec:case4}.}

\vspace{0.15 in}

\textit{1. ``Better Supervisory Signals by Observing Learning Paths.''}\\
\underline{Yi Ren}, Shangmin Guo, Danica J. Sutherland\\
{International Conference on Learning Representations (ICLR), 2022}

In this work, we begin to formalize the interaction between different training examples within a fundamental supervised learning setting. 
We introduce the $\mathcal{AKG}$ decomposition, where $\mathcal{G}(\cvxu)$ quantifies the distributional gap from the model’s prediction to the supervisory distribution of the updating example $\cvxu$; and $\mathcal{K}(\cvxo, \cvxu)$ captures how the learning of $\cvxu$ influences the model’s confidence on an observing example $\cvxo$. 
Notably, we find that $\mathcal{K}$ corresponds exactly to the empirical Neural Tangent Kernel (eNTK, \cite{NTK}) of the backbone network, allowing us to integrate a broad range of existing NTK-related theoretical results into our analysis.

The most interesting contribution of this paper lies in applying the proposed force analysis framework to explain the ``zig-zag'' learning trajectories observed in certain difficult examples. 
These dynamics arise from the interplay between the similarity structure encoded in $\mathcal{K}$ and the evolving gradient magnitude governed by $\mathcal{G}$. 
This analysis offers a mechanistic understanding of how the model's predictions gradually evolve.

As a practical byproduct of our framework, we also propose ``Filter-KD,'' a variant of knowledge distillation designed to improve robustness, particularly in scenarios where labels may be noisy.
By selectively filtering examples based on their learning dynamics, this method helps stabilize training and enhance generalization.
This paper is covered by \cref{sec:fundamentalLD,sec:case1}.

\vspace{0.15 in}

\textit{2. ``How to Prepare Your Task Head for Finetuning.''}\\
\underline{Yi Ren}, Shangmin Guo, Wonho Bae, Danica J. Sutherland\\
{International Conference on Learning Representations (ICLR), 2023}

This paper extends our learning dynamics framework to a broader representation learning setting, where we model the adaptation of hidden features rather than just the model's output. 
The initial motivation stemmed from the idea of generalizing the iterated learning framework to deep networks. 
Since iterated learning typically involves a sender and a receiver, we reinterpret a standard neural network by splitting it at the hidden representations: the backbone acts as the sender, the task head as the receiver, and the hidden representation as the message passed between them.

Although this parallelism directly hints at a multi-generation training for representation learning, this paper does not pursue that direction in depth.
Instead, we uncover a particularly interesting dynamical interaction between the adaptation of features and the initialization of the task head. 
As a result, the paper shifts focus to understanding how to choose a task head that best facilitates feature adaptation. 
The analysis of feature learning dynamics thus becomes a secondary yet insightful by-product of this investigation.
We will elaborate on this work in \cref{sec:case3}.

\vspace{0.15 in}

\textit{3. ``Improving Compositional Generalization using Iterated Learning and Simplicial Embeddings.''}\\
\underline{Yi Ren}, Samuel Lavoie, Mikhail Galkin, Danica J. Sutherland, Aaron Courville\\
Conference on Neural Information Processing Systems (NeurIPS), 2023

This work successfully extends the iterated learning framework to more general representation learning problems and can be viewed as a direct follow-up to our ICLR 2020 paper. 
We generalize the framework to address the systematic generalization problem across different domains, including both visual inputs and molecular graph inputs.

In this study, force analysis and learning dynamics are employed as explanatory tools to account for the observation that mappings containing more correct structural guesses are learned more efficiently. 
Thus, the paper serves not only as a practical application of iterated learning and force-based analysis to real-world problems, but also as a bridge linking learning dynamics to another fundamental theme in deep learning: simplicity bias.
This connection is further explored and elaborated in \cref{sec:case4} of the thesis.

\vspace{0.15 in}

\textit{4. ``lpNTK: Better Generalisation with Less Data via Sample Interaction During Learning.''}\\
Shangmin Guo, \underline{Yi Ren}, Stefano V Albrecht, Kenny Smith\\
International Conference on Learning Representations (ICLR), 2024

This paper is a joint side project with Shangmin. 
Here, we focus on the $\mathcal{K}$ term in our $\mathcal{AKG}$ decomposition. 
Since $\mathcal{K}$ can be interpreted as a similarity measurement between examples, we explore its application to the core-set selection problem in supervised learning.

Our analysis reveals that the relationships between different training examples can be broadly categorized into three types: 1.) Interchangeable pairs, where the examples contain highly similar information, such that retaining only one of them is sufficient; 2.) Orthogonal pairs, where the updates from each example have minimal influence on the other; 3.) Contradictory pairs, where learning from one example negatively impacts the model’s confidence on the other, and vice versa.

These insights provide a richer understanding of the interactions among training examples, offering valuable perspectives on phenomena such as generalization, overfitting, and data redundancy. 
This work demonstrates how the $\mathcal{K}$ term can serve as a powerful tool for data-efficient learning.

\vspace{0.15 in}

\textit{5. ``Bias Amplification in Language Model Evolution: An Iterated Learning Perspective.''}\\
\underline{Yi Ren}, Shangmin Guo, Linlu Qiu, Bailin Wang, Danica J. Sutherland\\
Conference on Neural Information Processing Systems (NeurIPS), 2024

This work represents our initial attempt to extend the iterated learning framework to large language models (LLMs).
Motivated by the growing popularity of self-interaction among LLM agents, we observe that their iterative behaviors can often be well-characterized by the Bayesian iterated learning framework. 
While our learning dynamics framework does not play a central role in this study, the observed phenomenon of self-amplifying behavior in LLM self-improvement strongly motivates a deeper investigation into their evolution during finetuning.
This direction ultimately leads to our subsequent project—Learning Dynamics of LLM Finetuning, which serves as the central focus and highlight of this thesis.

\vspace{0.15 in}

\textit{6. ``Learning Dynamics of LLM Finetuning.''}\\
\underline{Yi Ren}, Danica J. Sutherland\\
International Conference on Learning Representations (ICLR), 2025

In this work, we extend our previously developed learning dynamics framework to the finetuning of LLMs. 
The $\mathcal{AKG}$ decomposition is also adapted to the next-token prediction setting, with the help of some standard assumptions commonly considered in LLM training.

By applying force analysis to the model’s training, we offer a unified mechanistic explanation for several intriguing behaviors observed across different finetuning methods.
These include the unexpected increase in confidence for less-preferred answers during Supervised Finetuning (SFT), the emergence of specific hallucinations, the exacerbated ``repeater'' phenomenon, and the simultaneous reduction in confidence for both chosen and rejected responses during Direct Preference Optimization (DPO, \cite{rafailov2024direct}).

A particularly notable and counter-intuitive phenomenon revealed by our analysis is the ``squeezing effect,'' which arises when negative gradients are inappropriately imposed in the loss function.
This effect offers insight into how confidence mass can be unintentionally redistributed during finetuning, sometimes harming the model’s alignment and reasoning capabilities.

The key results and analysis from this paper are reorganized into a more accessible structure in \cref{sec:case2} of this thesis. Readers may also find the foundational background provided in \cref{sec:fundamentalLD,sec:case1} helpful for understanding the full context of this work.
Some new findings like ``positive pressure scatters, negative pressure unites'' are also proposed in this thesis.

\vspace{0.15 in}

\textit{7. ``On the Effect of Negative Gradient in Group Relative Deep Reinforcement Optimization.''}\\
Wenlong Deng, \underline{Yi Ren}, Muchen Li, Danica J Sutherland, Xiaoxiao Li, Christos Thrampoulidis\\
Conference on Neural Information Processing Systems (NeurIPS), 2025

This is a recent project led by Wenlong, in which I participated actively. 
In this work, we extend our learning dynamics analysis to a more complex reinforcement learning-based LLM finetuning algorithm, namely Group Relative Policy Optimization (GRPO, \cite{shao2024deepseekmath}). 
Specifically, we adapt the $\mathcal{AKG}$ decomposition to a token-wise formulation, allowing for fine-grained analysis of gradient dynamics during training.

Our study reveals that certain inappropriately applied negative gradients can still degrade model performance, even in this on-policy setup. 
Through empirical analysis, we find that such harmful negative gradients are often imposed on semantically correct tokens (i.e., the correct part in an incorrect response), which can undermine the model’s intended behavior.

To address this, we introduce the concept of Negative Token Hidden Reward (NTHR), a metric designed to assess whether the negative gradient on a specific token is constructive or detrimental. 
By selectively masking out tokens with high NTHR during training, we are able to stabilize GRPO’s training process and enhance the model's reasoning ability.
This paper is also discussed in \cref{sec:case2}.

\vspace{0.15 in}

\textit{8. ``Understanding Simplicity Bias towards Compositional Mappings via Learning Dynamics.''}\\
\underline{Yi Ren}, Danica J. Sutherland\\
Workshop on Compositional Learning at NeurIPS, 2024

This is the final paper discussed in this thesis, which returns to the foundational question that originally motivated our study of learning dynamics: Can the interactions among training examples help explain the emergence of simplicity bias? 
This question lies at the heart of representation learning, and connects to broader philosophical and theoretical themes such as Occam’s Razor and the pursuit of Artificial General Intelligence (AGI).

This study represents an initial step toward understanding this problem. 
We show that representations that are better aligned with the underlying ground-truth factors tend to be learned faster, providing empirical evidence for a learning speed bias favoring simpler, more structured solutions.

These findings resonate with several contemporary discussions, including the Platonic Representation Hypothesis \citep{huh2024position} and Compression for AGI \citep{compression_for_AGI}. 
We hope that our proposed framework offers a new lens to understand this fundamental phenomenon, and that it can inspire more practical methods for improving deep learning systems by leveraging insights from learning dynamics.
We discuss this paper in \cref{sec:case4}.

\section{Thesis Outline}
\label{sec:intro_outline}

Recognizing the need for a physics-like theory to understand various deep learning systems, we adopt the following perspective: instead of focusing on global or asymptotic behavior, we emphasize a local and microscopic view of model dynamics. 
Specifically, we begin by analyzing the model’s one-step behavior after learning a single training example.
Once this foundational behavior is understood, we carefully scale up our analysis to consider the accumulated influence over multiple updates or across different stages of training. 
This progressive methodology offers a clearer picture of how complex behaviors and emergent phenomena arise in deep learning systems.

Our main analytical tool is a form of ``force analysis'' applied to the model’s output or hidden parameters, theoretically grounded in an influence function decomposition. 
The central topic studied here is the learning dynamics of the model, i.e., how its outputs and internal states evolve over time under the influence of training signals. 
This approach resonates with classical mechanics, where understanding the force acting on an object reveals its future trajectory. 
For this reason, the thesis is titled ``Learning Dynamics of Deep Learning.''
The remainder of this thesis is organized as follows:
\begin{itemize}[leftmargin=0.4cm]
    \item \cref{sec:overview} reviews the historical development and recent trends in theoretical deep learning, highlighting a renewed interest in physics-like approaches. This context motivates our core contribution: a microscopic and dynamical analytical framework centered on learning dynamics.
    
    \item \cref{sec:fundamentalLD} introduces the foundational elements of our framework.
    Starting from a basic supervised classification setting, we present key concepts such as the $\mathcal{AKG}$ decomposition, the one-step, and accumulated influences, and the dynamical evolution of the force subsequently. 
    The chapter concludes by drawing parallels between our framework and force analysis in Newtonian mechanics to help readers intuitively grasp the gist and implications of our framework.

    \item \cref{sec:case1} applies the framework to a knowledge distillation (KD) setting. 
    We show how better supervisory signals can emerge during training, and propose a practical method called Filter-KD, which is especially robust under noisy labeling conditions.

    \item \cref{sec:case2} is the highlight of this thesis. 
    Here, we extend our force analysis framework to LLM finetuning, offering novel insights into several counterintuitive behaviors observed across different settings. 
    Topics include hallucination, model degeneration, misalignment, confidence decay in preferred responses, and the underestimated influence of negative gradients. 
    We study a range of algorithms, including SFT, DPO, and GRPO, and emphasize that our framework is broadly applicable to other methods that rely on gradients of model log probabilities. 
    We hope this framework offers clearer interpretations of the pros and cons of different algorithms, and hence inspires more effective future algorithm designs.

    \item \cref{sec:case3} expands the framework by changing the observed targets from outputs to hidden representations. 
    In the context of feature adaptation, we theoretically predict how internal features evolve under different initialization conditions. 
    We also highlight core physical concepts such as force, direction, and adaptation energy.
    Inspired by the analysis, we propose strategies to guide adaptation in a controlled and efficient manner.

    \item \cref{sec:case4} switches to a more fundamental fact in machine learning: understanding the simplicity bias in deep learning, i.e., answering why models tend to favor simpler mappings in accordance with Occam’s Razor. 
    Using force analysis, we connect this bias to mutual influence between training examples across different mappings, offering a perspective aligned with several prominent theories in the machine learning literature.

    \item \cref{sec:conclusion} summarizes the contributions of the thesis. The limitations of this framework and potential future directions are also discussed.
    The associated publications for each chapter are listed in \cref{fig:chap0_outline}.
\end{itemize}

\begin{figure}[h]
    \vspace {-0.1 in}
    \centering
    \includegraphics[width=1\linewidth]{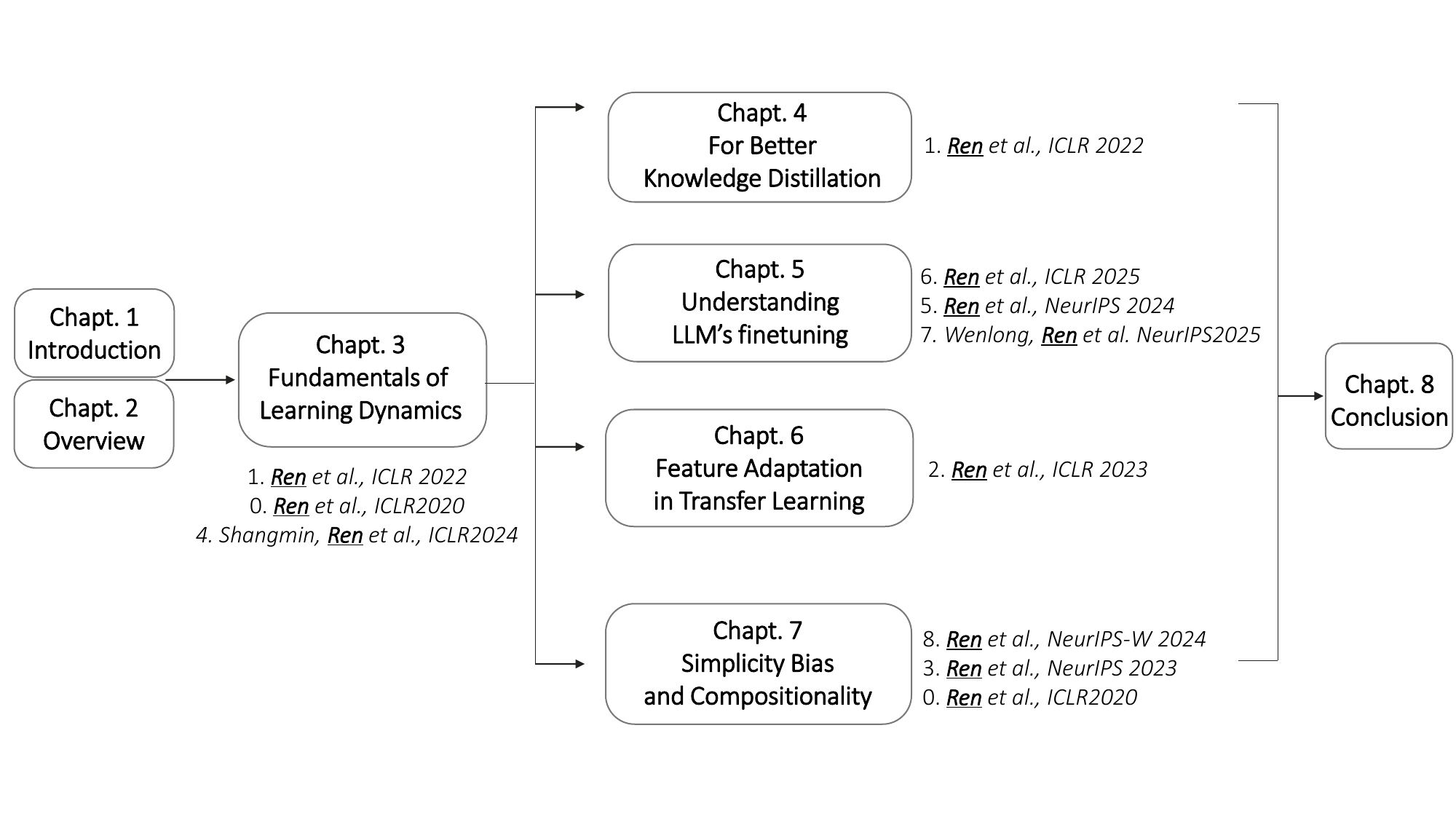}
    \caption{Outline of the thesis and papers covered in each part. Chapters 1 to 3 provide the necessary fundamentals for all the following four applications. Hence, it is fine to focus only on Chapters 3 and 5 if the reader only feels interested in LLM finetuning.}
    \label{fig:chap0_outline}
\end{figure}

\chapter{An Overview of Progress in Deep Learning}
\label{sec:overview}

\section{History of Deep Learning}
\label{sec:overview_01}

The history of deep learning traces back to the early development of artificial neural networks in the 1940s, beginning with the McCulloch-Pitts neuron model, which laid the foundation for the usage of simple binary threshold units \citep{mcculloch1943logical}. 
This was followed by the perceptron algorithm introduced by \citet{perceptron}, which demonstrated the feasibility of learning from labeled examples, but was soon critiqued for its inability to solve non-linearly separable tasks such as \texttt{XOR} \citep{minsky2017perceptrons}. 
The 1980s witnessed a revival of interest with the development of backpropagation, a gradient-based optimization algorithm enabling the training of multi-layer networks (MLP, \cite{rumelhart1986learning}). 
This period also saw the introduction of convolutional neural networks (CNNs) for image recognition tasks, notably the Neocognitron \citep{fukushima1980neocognitron} and later LeNet \citep{lecun1989handwritten}.

Despite these advances, neural networks remained relatively niche for decades, in part due to computational limitations and the dominance of alternative machine learning models such as support vector machines (SVMs) and decision trees. 
The resurgence of deep learning in the 2000s and 2010s was enabled by three key factors: access to large labeled datasets, advances in computing hardware (especially GPUs), and algorithmic innovations. 
A major turning point came in 2012 with the success of AlexNet \citep{krizhevsky2012imagenet} in the ImageNet Large Scale Visual Recognition Challenge (ILSVRC, \cite{deng2009imagenet}), which dramatically reduced the error rate and demonstrated the power of scale.

Since then, deep learning has rapidly expanded across domains, including speech recognition \citep{hinton2012deep}, natural language processing (NLP, \cite{sutskever2014sequence}), and reinforcement learning (RL, \cite{mnih2015human}). 
Transformer-based architectures \citep{vaswani2017attention} in particular have become foundational to modern NLP systems, culminating in large-scale models like BERT \citep{devlin2019bert} and GPT \citep{radford2018improving}.

Throughout the history of deep learning, both the models and the problems they address have grown increasingly complex along multiple dimensions.
One notable example is model size, typically measured by the number of parameters. 
For instance, the state-of-the-art sequence-to-sequence (seq2seq) model for natural language processing in 2014, based on LSTMs \citep{hochreiter1997long} and introduced by \citet{sutskever2014sequence}, contained approximately 100 million parameters. 
This number rose to 340 million in 2018 with the introduction of BERT-large.
Today, with the rise of \texttt{ChatGPT} and the validation of scaling laws for large language models (LLMs), the scale of modern deep learning systems has expanded dramatically, reaching hundreds of billions of parameters. 
The estimated parameter count for \texttt{GPT-4}, for example, is around 1 trillion (1000B), which is 10,000 times larger than the early seq2seq model.
This rapid increase in model size has not only significantly enhanced the capabilities of LLMs but also made the analysis of their behavior vastly more challenging.

Another dimension of increasing complexity lies in the data and tasks modern models are expected to handle.
As the capabilities of language models have advanced significantly, the focus of natural language processing has shifted: from relatively simple tasks like grammatical parsing to more challenging ones such as reading comprehension and mathematical reasoning.

Moreover, the widespread adoption of Transformer architectures across different modalities has made multimodal tasks not only feasible but increasingly common. 
In these settings, models must integrate and reason over heterogeneous inputs, such as text, images, and audio, posing a much greater challenge than single-modality tasks.

These developments have not only redefined state-of-the-art performance across tasks but also prompted a renewed focus on understanding the learning behavior and generalization properties of deep models. 
As deep learning systems continue to grow in scale and application, there is an increasing need for theoretical frameworks that can capture their dynamics, interpretability, and robustness, motivating the exploration of a unified and effective explanatory framework pursued in this thesis.

\section{More Local and More Microscopic}
\label{sec:overview_02}
As models continue to grow rapidly in size and capability, we are witnessing a subtle but significant shift in the direction of deep learning theory to keep pace with this remarkable progress.
In particular, there is a growing interest in behavioral analysis and physics-inspired theoretical frameworks.
The focus of such theories is also shifted from a global and macroscopic perspective to a more \textbf{local and microscopic} viewpoint.
Below, we present several illustrative examples.

For example, classical generalization theory has traditionally relied on global capacity measures, including VC-dimension, Rademacher complexity, and uniform convergence bounds across the entire hypothesis space. 
However, recent research emphasizes the local geometry of decision boundaries around data points. 
The norm-based and margin-based generalization bounds for deep networks introduced by \citet{neyshabur2017exploring} and \citet{bartlett2017spectrally}, for instance, analyze local quantities such as the Lipschitz constant near training points. These localized analyses help explain why overparameterized networks can still generalize well, despite their large hypothesis classes.

A similar shift is observed in the study of the loss of landscapes. 
Earlier work focused on understanding the global non-convex optimization landscape, such as identifying the number and nature of critical points or bad local minima. 
In contrast, more recent analyses emphasize local curvature properties, such as the Hessian near the learned solution or throughout training. 
A notable example is the flat minima hypothesis proposed by \citet{keskar2016large}, which suggests that flatter local regions in the loss landscape are associated with better generalization.

The neural tangent kernel (NTK, \cite{NTK}) hypothesis, a central theoretical tool in this thesis, can also be interpreted as an instance of local and microscopic analysis.
Its derivation relies heavily on repeated applications of the first-order Taylor expansion, which not only provides an effective linear approximation of the model's behavior but also shifts the analytical focus to higher-level phenomena, such as model architecture, interactions among different components of the model, or phase transitions during training.

This theory has also been applied to analyze generalization \citep{arora2019fine} and convergence rates \citep{zou2020gradient}, some typical questions in classical learning theory.
Furthermore, NTK-based analysis has been extended to different network architectures, e.g.,  convolutional networks \citep{arora2019exact}, residual networks \citep{lee2019wide}, and even graph neural networks \citep{du2019graph}. 
Despite its limitations in capturing feature learning in finite-width settings, NTK remains a powerful tool for probing the local geometry of the learning process and motivating further theoretical developments.

This local perspective also extends to the temporal dimension of training.
Different from the classical theoretical analysis, which focuses more on the asymptotic convergence behavior of models, more work about understanding the training dynamics over time, particularly phenomena such as phase transitions and emergent behaviors \citep{wei2022emergent}, has emerged.

Collectively, these developments reflect a growing recognition that to truly understand how deep models learn, adapt, and generalize, it is crucial to study the dynamics of local updates, gradient flows, and pointwise behaviors, because the global, static analyses often miss these nuances.

In line with this paradigm shift, the learning dynamics framework proposed in this thesis can be viewed as a local extension of the influence function framework introduced by \citet{koh2017understanding}.
While that work traces a model’s test-time predictions back to influential training examples over the entire training trajectory, our framework zooms in further: we begin by analyzing the one-step influence between pairs of examples, and then gradually scale up to accumulated influence over time.
Instead of using the gradient flow, which assumes the training time $t$ to be infinitely small and focuses on the change of the observing example, we use the one-step gradient update, which focuses more on the mutual influence between the updating and observing examples.
However, we believe that these two framing frameworks are inherently identical under mild assumptions.
\chapter{Fundamentals of Learning Dynamics}
\label{sec:fundamentalLD}

As discussed in earlier chapters, the growing complexity of models and tasks in modern deep learning has made it increasingly difficult for traditional analytical approaches, which usually study the system as a whole, to provide accurate global behavioral guarantees. 
In response, this thesis adopts a bottom-to-up methodology, i.e., the one that emphasizes localized analysis and aligns more closely with a physics-inspired theoretical style.
Specifically, we narrow down our scope in the following two aspects:
\begin{itemize}
    \item Time-level: rather than the global convergence analysis, which considers training a randomly initialized model to its convergence, we start from the model's behavioral change after \textit{one (or few) update(s)};
    \item Sample-level: rather than analyzing the model's performance on a full dataset, we start from the model's behavioral change on \textit{one example} after learning \textit{one training sample}.
\end{itemize}

Combining the two principles, we define the learning dynamics here as
\begin{center}
    \textit{Measuring how the model's confidence on an \orange{observing example}\\ changes after learning an \cyan{updating example}.}
\end{center}

In the remainder of this thesis, we will define the problem more formally under corresponding settings. 
For now, we only need to know that the model parameters are updated using the gradient from an ``updating example,'' and we track the resulting confidence change in one or more ``observing examples.''
Our analysis centers on the proposed $\mathcal{AKG}$ decomposition, which breaks down this one-step change into three interpretable terms. 
We also extend the analysis to accumulated influence over time, illustrated by a typical ``zig-zag'' learning trajectory. 
The chapter concludes with a summary and discussion of limitations.

\section{Warm-up with Supervised Learning}
\label{sec:fundamentalLD_01}

\subsection{Problem: Start from Supervised Classification}
Supervised learning is one of the most established and widely used paradigms in machine learning. 
It refers to the task of learning a function $f: \mathcal{X} \rightarrow \mathcal{Y}$ from a labeled dataset of input-output pairs $\{(x_i, y_i)\}_{i=1}^n$, where $x_i \in \mathcal{X}$ denotes the input (e.g., a feature vector) and $y_i \in \mathcal{Y}$ denotes the corresponding label or target for the model to learn.

The goal is to find a hypothesis $f \in \mathcal{F}$, from a chosen function class $\mathcal{F}$ (e.g., neural networks, decision trees, etc.), that generalizes well to new data drawn from the same (but unknown) data-generating distribution $\mathcal{D}(x, y)$.

The learning process aims to minimize the expected risk:
\[
\mathcal{R}(f) = \mathbb{E}_{(x, y) \sim \mathcal{D}}[\mathcal{L}(f(x), y)]
\]
where $\mathcal{L}: \mathcal{Y} \times \mathcal{Y} \rightarrow \mathbb{R}$ is a loss function that quantifies the error between prediction and ground truth (e.g., 0-1 loss, mean squared error, cross-entropy). 
Since $\mathcal{D}$ is usually unknown, we minimize the empirical risk instead:
\[
\hat{\mathcal{R}}_n(f) = \frac{1}{n} \sum_{i=1}^n \mathcal{L}(f(x_i), y_i).
\]
This leads to the Empirical Risk Minimization (ERM) principle, a central concept in statistical learning theory.

For most of the deep learning systems considered in this thesis, the output would be a probability distribution over all possible classes.
We hence decompose the function $f$ into several parts and parameterize them separately.
In general, we have $f(x) = \sigma  \circ g_\phi\circ h_\theta  (x)$,
where $h_\theta(\cdot)$ is usually the backbone of the network parameterized by $\theta$, and $g_\phi(\cdot)$ is the task head parameterized by $\phi$.
We usually call the output of the backbone part $\vh=h_\theta(x)$ ``hidden embeddings'' or ``hidden features.''

Then, we define the logits $\vz\in\mathbb{R}^V$ as $\vz=g_\phi \circ h_\theta (x)$,
which is a length-$V$ vector, and $V$ is the number of all possible classes.
Since we usually want our model to give a probability distribution prediction over all possible $V$,
a softmax layer $\sigma(\cdot)$ is usually attached to the logits $\vz$, i.e.,
\[ 
    \sigma_i(\vz) \triangleq \mathsf{Softmax}_i(\vz)=\frac{e^{z_i}}{\sum_{j=1}^V e^{z_j}}.
\]

Note that the separation of $h_\theta$ and $g_\phi$ can be arbitrary, depending on what hidden features we want to observe (as will be discussed in detail in \cref{sec:case3}).
As a warm-up example, we simplify the model by considering $g$ as an identity function.
In other words, we have
\[
    \vp(x) =f(x) = \sigma(h_\theta(x)) = \sigma(\vz) ,
\]
where $\vp(x)=[p(y=1\mid x), \dots, p(y=V\mid x)]$ is the vector form of the predicted distribution.
Please refer to \cref{fig:chap3_z_hg_x} for an example of the notations and their roles in a deep learning system.
We will largely stick to this definition throughout this thesis.

\begin{figure}[t]
\vskip -0in
    \begin{center}
    \centerline{\includegraphics[width=0.9\columnwidth,trim=0 0 0 0, clip]{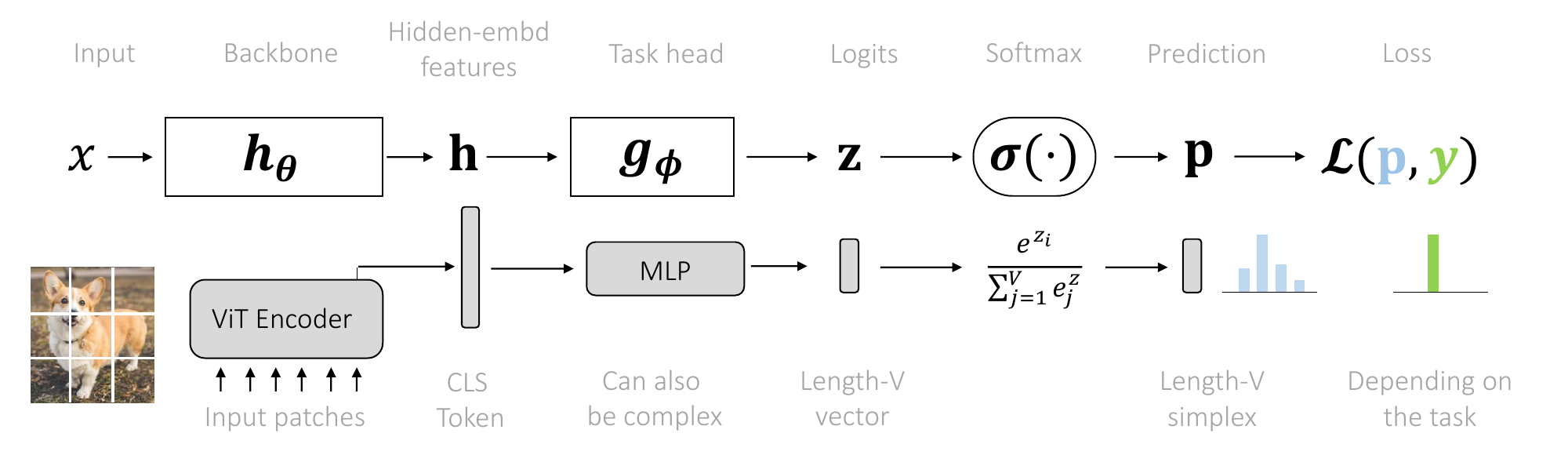}}
    \caption{A general pipeline and notation for a typical supervised classification task using Vision Transformer (ViT, \citealp{vit}).}
    \label{fig:chap3_z_hg_x}
    \end{center}
\vskip -0.2 in
\end{figure}

We then discuss the target we want to optimize.
Depending on the problem we are facing, the loss functions can be diverse.
Throughout this thesis, we mainly focus on the following two types of losses:

\begin{itemize}
    \item \textbf{Cross-Entropy Loss (CE, for classification with probabilities)}:
  \begin{equation}
      \mathcal{L}_{ce} \left(f(x), y\right) = - \sum_{v=1}^{V} y_v \log(p(y=v\mid x)) = -\ve_y^\top  \log \vp(x),
      \label{eq:sec3:ce_loss}
  \end{equation}
  
  where $y_v$ is the indicator of ground-truth label and $\ve_y$ is its vector form (i.e., $y_v$ is one for the correct label $v$ and is zero otherwise).
  $\vp(x) \triangleq \left[p(y=1\mid x), \dots, p(y=V\mid x)\right]$ is the vector form of the model's prediction.
  If the supervisory signal is not one-hot but a distribution,
  e.g., in the knowledge distillation setting we will discuss in \cref{sec:case1},
  we can have a cross-entropy loss like $\mathcal{L}_{ce} = -\vq^\top\log\vp$,
  where $\vq$ is another probability vector with the same size as $\vp$.

  \item \textbf{Mean Squared Error (MSE, usually for regression)}:
  \begin{equation}
      \mathcal{L}_{mse} \left(f(x), y\right) = \left(f(x) - y\right)^2,
      \label{eq:sec3:mse_loss}
  \end{equation}
  where both $f(x)$ and $\mathcal{Y}\in\mathbb{R}$ are scalars.
  Note that it is also possible for us to use MSE loss for classification problems.
  For example, if our model generates probabilistic predictions $\vp(x)$ and we still want to use MSE loss, we can use $\mathcal{L}_{mse} = \|\vp - \ve_y\|_2^2$,
  where $\|\cdot\|_2$ is the L2-norm of a vector.
  \citet{ce_vs_mse} provides more discussions about the pros and cons between MSE and CE when applied to a classification task.
\end{itemize}

\subsection{Learning: Gradient Descent}
Recall our goal is to find a good function $f$ that performs well on the data distribution $\mathcal{D}$.
Since the ground truth $\mathcal{D}$ is usually intractable,
we instead minimize the empirical risk $\hat{\mathcal{R}}_n$ using a training dataset sampled from $\mathcal{D}$.
Depending on the task we are facing,
an appropriate loss function is chosen to measure how well our parameterized model $f_\theta$ performs on this dataset.
Then, the next question is: how could we find the best $f_{\theta^*}$ with the help of a training set $\mathcal{D}_\text{train}$?

Although numerous closed-form solutions for $f_{\theta^*}$ exist under various assumptions, optimization-based methods have become the primary workhorse in the deep learning era, owing to their simplicity and scalability.
Among these, gradient descent (GD, see e.g.\ \citealp{gd_sgd}) is particularly popular, as it only requires the existence of first-order gradients and is applicable across deep learning systems of varying scales.

Specifically, we want to solve the following optimization problem:
\[
    \theta^* = \mathop{\arg\min}\limits_{\theta\in\Theta} \ \hat{\mathcal{R}}_n(f).
\]
The idea of GD is to iteratively update the parameters $\theta$ in the direction of the gradient of the function, which points toward the direction that provides the steepest loss descent:
\[
    \theta_{t+1} = \theta_t - \eta\nabla_{\theta} \hat{\mathcal{R}}_n(f),
\]
where $\eta>0$ is the learning rate, which is usually very small, and $t$ denotes the time (or step) during training.
$\nabla_{\theta} \hat{\mathcal{R}}_n(f)$ is the gradient of the estimated risk on all training data,
which can also be written as $\frac{1}{n}\sum_{i=1}^n\nabla_{\theta} \mathcal{L}(f(x_i),y_i)$.

However, the number of training examples $n$ is usually huge, so a variant called stochastic gradient descent (SGD) is mostly applied in practice.
Specifically, in SGD, the model updates its parameters using only a small subset (or even just one example) of the training data at each step, such as
\begin{equation}
    \theta_{t+1} = \theta_t - \eta\nabla_{\theta} \mathcal{L}(f(x_i),y_i),\quad (x_i,y_i)\sim\mathcal{D}_\text{train}.
\end{equation}

Although each update in SGD is noisy and does not guarantee a decrease in the total loss, each step is far more computationally efficient and the noise can even help escape shallow local minima and plateaus in non-convex optimization landscapes. 
Over time, averaging over many such noisy updates typically leads to convergence to a good solution, especially when combined with techniques like:
\begin{itemize}
    \item Learning rate schedules (e.g., warm-up or cosine scheduling in \cite{loshchilov2016sgdr}),
    \item Momentum of historical updates (e.g., Adam in \cite{kingma2014adam}),
    \item Regularization (e.g., weight decay in \cite{loshchilov2017decoupled}).
\end{itemize}
SGD is not just a practical algorithm; it also plays a central role in understanding the learning dynamics of deep model training.
Its behavior has been linked to generalization performance (including in \citealp{renbetter} and also in \cref{sec:case1}), implicit regularization \citep{barrett2020implicit}, and even simplicity bias (in \citealp{ren2024understanding} and also in \cref{sec:case4}).

\subsection{Evaluation: Loss, Accuracy, and More}
Now, we can build our model and train it using SGD.
At the same time, we want to check whether our model performs well or not, which needs evaluation from different perspectives.
Evaluating deep models goes beyond just checking their accuracy or training loss.
Depending on our ultimate aims, i.e., whether we aim for performance, interpretability, robustness, or understanding learning dynamics, different evaluation techniques are used.

In general, we wish to cover log-probability of the model's prediction, expected calibration error, norm or other quantities of hidden embeddings, out-of-distribution probing accuracies, noise sensitivity, 2D-projection of features, representation similarity, etc.
The theme is to provide a more thorough understanding of how the model's different parts gradually evolve by tracking important quantities during training.
Compared with traditional learning theory such as the PAC learning framework \citep{valiant1984theory},
which usually evaluates the model's behavior when fully trained, we focus more on the dynamical process \textit{during training} using force analysis.
The results from some classical training dynamics literature, e.g., those on SGD \citep{mandt2017stochastic, arora2018optimization, xie2020diffusion}, those on infinite-width approximation like Neural Tangent Kernel \citep{NTK}, and those on loss landscapes \citep{foret2020sharpness}, are usually considered as the practical assumptions or lemmas during our force analysis.

\danica{Should mention here and elsewhere that there are more ``mathematical-style'' analyses of training dynamics, and cite a few}

\section{One-Step Influence}
\label{sec:fundamentalLD_02}

\subsection{The $\mathcal{AKG}$ Decomposition}
Recall that we want to track how the model's confidence in an observing example \orange{$x_o$} changes after learning an updating example \cyan{$x_u$}.
To simplify the analysis in the CE loss setting, we track the logarithmic probability change:
\[
    \Delta^t(\cxo) \triangleq \log p_{\theta^{t+1}}(y\mid \cxo) - \log p_{\theta^{t}}(y\mid \cxo).
\]
To calculate this, we first approximate $\log p_{\theta^{t+1}}(y\mid \cxo)$ using a first-order Taylor expansion. (We use $p^t$ to represent $p_{\theta^t}$ and use $p(x)$ or $\vp(x)$ interchangeably to represent $p(y\mid x)$, for notational conciseness.) This gives
\[
    \log \vp^{t+1}(\cxo) = \log \vp^{t}(\cxo) + \
    \langle \nabla \log \vp^{t}(\cxo),\ \theta^{t+1}-\theta^t \rangle + O(\| \theta^{t+1}-\theta^t \|^2)
.\]
Rearranging the terms and rewriting the inner product leads to
\[
    \Delta^t(\cxo) = \underbrace{
        \nabla_{\theta}\log \vp^t(\cxo)|_{\theta^t}
    }_{V \times d}
    \underbrace{ \big( \theta^{t+1}-\theta^t \big) }_{d \times 1}
    {} + O\big(\lVert \theta^{t+1}-\theta^t \rVert^2\big),
\]
where $d$ is the number of parameters of the model.
If the model updates its parameters using SGD with a single ``updating example,''
we have
\[
    \theta^{t+1}-\theta^t = -\eta\nabla_{\theta}\mathcal{L}(\vp(\cxu),\cyu).
\]

To evaluate the leading term, we plug in the definition of SGD
and repeatedly use the chain rule, recalling that $\vp=\sigma(\vz)$ and $\vz=h_\theta(x)$:
\begin{align}\nonumber
    \Delta^t(\cxo) &= \underbrace{
        \nabla_{\theta} \log \vp^t(\cxo)|_{\theta^t}
    }_{V \times d}
    \underbrace{ \big( \theta^{t+1}-\theta^t \big) }_{d \times 1} &\\\nonumber
    &= \underbrace{
        \nabla_{\theta} \log \vp^t(\cxo)|_{\theta^t}
    }_{V \times d} \big(-\eta
        \underbrace{
        \nabla_{\theta} \mathcal{L} (\cxu,\cyu)|_{\theta^t}
        }_{1 \times d}
        \big)\tp \\\nonumber
    &=  \big(
        \underbrace{
            \nabla_{\vz} \log \vp^t(\cxo)|_{\vz^t}
        }_{V \times V}
        \underbrace{
            \nabla_{\theta}\vz^t(\cxo)|_{\theta^t}
        }_{V \times d}
        \big)
        \big(-\eta
        \underbrace{
        \nabla_{\theta} \mathcal{L} (\cxu,\cyu)|_{\theta^t}
        }_{1 \times d}
        \big)\tp
\\\nonumber
    &=
    \underbrace{
        \nabla_{\vz} \log \vp^t(\cxo)|_{\vz^t}
    }_{V \times V}
    \underbrace{
        \nabla_{\theta}\vz^t(\cxo)|_{\theta^t}
    }_{V \times d}
    \big(
    \underbrace{
        -\eta\nabla_{\vz} \mathcal{L} (\cxu,\cyu)|_{\vz^t}
    }_{1 \times V}
    \underbrace{
        \nabla_{\theta}\vz^t(\cxu)|_{\theta^t}
    }_{V \times d}
    \big)\tp
\\\nonumber
    &= -\eta
      \underbrace{\nabla_{\vz} \log \vp^t(\cxo)|_{\vz^t}}_{V \times V}
      \big[
        \underbrace{\nabla_{\theta}\vz^t(\cxo)|_{\theta^t}}_{V \times d}
        \underbrace{\left(\nabla_{\theta}\vz^t(\cxu)|_{\theta^t}\right)\tp}_{d \times V}
      \big]
      \underbrace{
      \big(
        \nabla_{\vz} \mathcal{L} (\cxu,\cyu)|_{\vz^t}
      \big)\tp
      }_{V \times 1}
    \\
    &= -\eta \, \mathcal{A}^t(\cxo) \, 
\mathcal{K}^t(\cxo,\cxu) \, \mathcal{G}^t(\cxu,\cyu),\label{eq:sec3:akg}
\end{align}
which is the core $\mathcal{AKG}$ decomposition of our one-step influence function.

For the higher-order term, $O\big(\lVert \theta^{t+1}-\theta^t \rVert^2\big)$, we know that
\[
    \theta^{t+1} - \theta^t
    = 
    -\eta \left(
    \nabla_{\theta}\vz^t(\cxu)|_{\theta^t}
    \right)^\top
    \mathcal{G}^t(\cxu,\cyu).
\]
The residual term $\mathcal{G}^t$, which is the norm of the gap between two probabilities, is obviously bounded.
In practice, the gradient of $\vz$ to $\theta$ is also usually bounded, both because the training process does not diverge, and particularly when considering the widely applied practice of gradient clipping.
We hence have
\[
    O\left( \| \theta^{t+1} - \theta^{t} \|^2\right)
    = O\left( \eta^2
    \| \nabla_{\theta} \vz^t(\cxu) \|_{\text{op}}^2 \,
    \| \mathcal{G}^t(\cxu,\cyu) \|^2
    \right),
\]
which is typically $O(\eta^2)$ and can be neglected when $\eta$ is small.

The decomposition above holds under mild assumptions.
For instance, the function $f$ must be differentiable around $\theta$ so that the first-order Taylor expansion exists, and the network’s gradient should be bounded to ensure that higher-order terms remain controlled. These are very common assumptions applied in many existing works, and practitioners also design many mechanisms to stabilize the training, e.g., clipping gradients, smoothing the activation function (using smoothed variants of ReLU), etc.

Another concern is the role played by those higher-order terms in the Taylor series. While higher-order terms can play an important role over the long run, as discussed by \citet{damian2022self}, we argue that in one-step or short-horizon analyses, and assuming the learning rate is relatively small, the first-order component dominates the model’s behavior. A more detailed exploration of higher-order effects is left to future work.

\subsection{Role of $\mathcal{A}$}
Recalling \cref{eq:sec3:akg}, we have $\mathcal{A}^t(\cxo) = \nabla_{\vz}\log \vp^t(\cxo)$, which only depends on the observing example $\cxo$.
If we use $\{p_1, \dots,p_V\}$ to represent the model's prediction on different dimensions, we can write $\mathcal{A}^t$ as:
\[
    \mathcal{A}^t(\cxo)
    = I-\bmf{1} \vp_{\theta^t}^\top
    = \left[
    \begin{matrix}
        1- p_1  &- p_2    &\cdots     &- p_V \\
        - p_1   &1- p_2   &\cdots     &- p_V \\
        \dots   & \dots   &\ddots     &\dots  \\
        - p_1   &- p_2    &\cdots     &1- p_V
    \end{matrix}
    \right].
\]
This matrix is a \textit{centering} operator based on $\vp_{\theta^t}$.
For any length-$V$ vector $\vb$, we have $\mathcal{A}\vb = (I-\bmf{1} \vp_{\theta^t}^\top) \vb=\vb-\bar{b}\bmf{1}$, where $\bar{b}=\vp_{\theta^t}^\top \vb$ is the weighted sum of all elements in $\vb$.

Imagine, in one time step, that the influence imposed by the term $\mathcal{KG}$ is this vector $\vb$.
Such an influence will then be normalized by $\mathcal{A}$ term depending on how well this $\vb$ aligns with current $\vp^t(\cxo)$: better alignment leads to a larger $\bar{b}$, which means the knowledge provided by $\vb$ is not surprising.
Hence, the resulting influence is penalized by $\bar{b}$ for every dimension and vice versa.
In other words, $\mathcal{A}$ is self-stabilizing the model's confidence change based on its current prediction, and vice versa.

\subsection{Role of $\mathcal{K}$}
We then check $\mathcal{K}^t(\cxo, \cxu) = (\nabla_{\theta}\vz(\cxo)|_{\theta^t})(\nabla_{\theta}\vz(\cxu)|_{\theta^t})^\top$,
which is the product of the model's gradients with respect to $\cxo$ and $\cxu$.
It is also the empirical neural tangent kernel (eNTK, \cite{NTK}) of the backbone network $h_\theta$ (recall \cref{fig:chap3_z_hg_x}, and we assume $g_\phi$ to be an identity function here).
Most of the consistent trends observed in this thesis implicitly rely on the following assumption (although the equations above always hold even without it):
\begin{center}
    \textit{During training, the relative influence of learning $\cxu$ \\ on other different $\cxo$ is relatively stable (e.g., w.r.t. ranking).}
\end{center}
This is much less strict than the common ``lazy eNTK'' assumption discussed by \citet{arora2019exact}; that is a sufficient but not necessary condition for this thesis.
We will elaborate on this assumption later under different problems.

Intuitively, we can also interpret $\mathcal{K}^t$ as a model-specific similarity measurement between different input samples.
The relative strength can be approximated by the Frobenius norm\footnote{$\|A\|_F=\sqrt{\sum^m_{i=1}\sum_{j=1}^n|a_{ij}|^2}=\sqrt{\mathsf{Tr}(AA^\top)}$ is similar to the 2-norm of a vector.} of this matrix: larger $\|\mathcal{K}^t\|_F$ means that the update of $\cxu$ likely influences the model's prediction on $\cxo$ more.

\subsection{Role of $\mathcal{G}$}
Finally, we check the role of $\mathcal{G}^t(\cxu,\cyu)=\nabla_{\vz}\mathcal{L}(\cxu,\cyu)|_{\vz^t}$, which could be considered as the ``gap'' term between the model's current prediction and the supervisory signal on $(\cxu,\cyu)$.
Considering a popular cross-entropy loss in \cref{eq:sec3:ce_loss}, we have
\[
    \nabla_{\vz}\mathcal{L}(\cxu,\cyu)=
    -\nabla_{\vz}\ve_{\cyu}^\top \log \sigma(\vz)=
    -\nabla_{\vz}\log \sigma_i(\vz)=
    -\frac{1}{\sigma_i(\vz)}\nabla_{\vz} \sigma_i(\vz).
\]
For each dimension of this gradient, we have the standard result that
\[
    \frac{\partial\sigma_i}{\partial z_k} = \begin{cases}
        \sigma_i(1-\sigma_k)\quad \text{if } i=k \\
        \sigma_i(0-\sigma_k)\quad \text{if } i\neq k.
    \end{cases}
\]
Substituting this back leads to the vector form of $\mathcal{G}^t$ under CE-loss:
\begin{equation}
    \mathcal{G}^t_{ce}(\cxu,\cyu)=\vp^t(\cxu)-\ve_{\cyu},
\end{equation}
which is a length-$V$ vector.
Using a similar method, we can calculate the gap term for the MSE-loss ($\mathcal{L}_{mse}=\|\vp(\cxu)-\ve_{\cyu}\|^2_2$) as:
\begin{equation}
    \mathcal{G}^t_{mse}(\cxu,\cyu)=2\vc\odot (\vp^t(\cxu)-\ve_{\cyu}),
\end{equation}
where $\odot$ is element-wise multiplication and $\vc_i=\sigma_i(1-\sigma_i)$.

We find that $\vp^t(\cxu)-\ve_{\cyu}$ can be interpreted as the ``force'' imposed by learning $(\cxu,\cyu)$,
which starts from the current prediction and points to a one-hot distribution, as demonstrated in \cref{fig:chap3_mnist_akg01}.
Note that both the \textit{strength} and the \textit{direction} of this term matter in our analysis.

\subsection{Demonstration using an MNIST Example}

\begin{figure}[t]
\vskip -0in
    \begin{center}
    \centerline{\includegraphics[width=1\columnwidth,trim=0 0 0 0, clip]{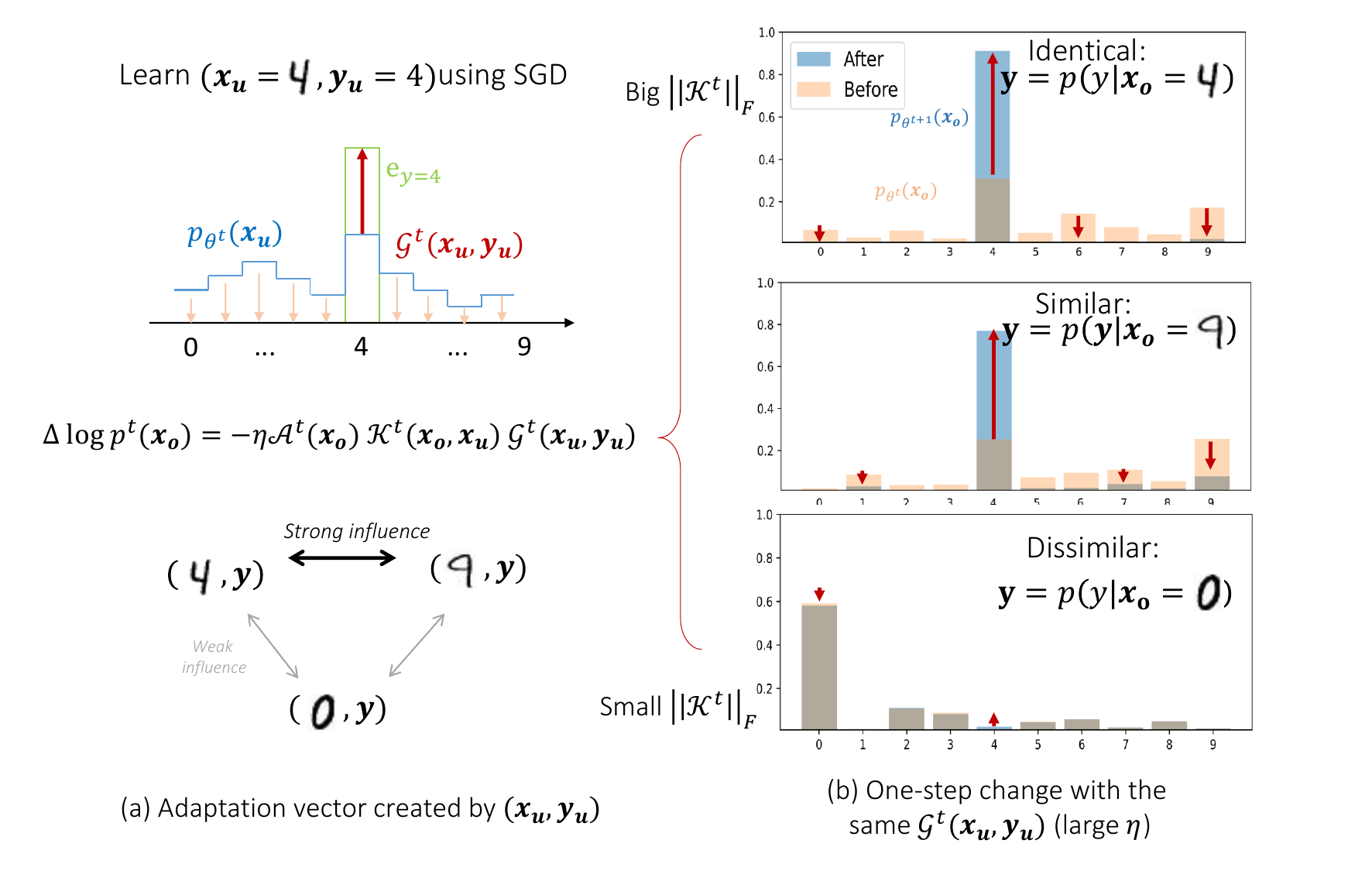}}
    \caption{The per-step influence in an MNIST experiment. A large learning rate is applied to make the difference more obvious.}
    \label{fig:chap3_mnist_akg01}
    \end{center}
\vskip -0 in
\end{figure}

Now we have the formal definition of the $\mathcal{AKG}$ decomposition.
The MNIST classification example provided in \cref{fig:chap3_mnist_akg01} and \ref{fig:chap3_mnist_akg02} can help us understand them better.
As illustrated in the upper panel in \cref{fig:chap3_mnist_akg01}-(a), our model learns a number ``4'' in this update.
$\mathcal{G}$ is a vector starting from the current prediction and pointing to the one-hot distribution.
The red arrow (on the label's dimension) is the pressure we \textit{intentionally} imposed, and the orange arrows come from the fact that the probability mass must sum to one.

Now, the learning dynamics provide us a way to do ``force analysis'' on the log-probability of $\cxo$.
The force originates from the $\mathcal{G}$ term and is then projected by the $\mathcal{K}$ term.
Note that $\mathcal{K}$ measures the similarity between $\cxu$ and $\cxo$ from the model's gradient perspective.
Perhaps surprisingly, we find that this similarity usually aligns with humans' perception.
For example, since a number ``4'' looks more similar to a number ``9'' than a number ``0'', then learning a number ``4'' influence $p(y\mid x_o=9)$ more than $p(y\mid x_o=0)$.
\cref{fig:chap3_mnist_akg01}-(b) verifies this relative strength by showing the probability change of different examples after the same one-step update of learning a number ``4''.
First, in the identical case, the 4th dimension is pulled up, as expected.
Then, the learning of this ``4'' pulls up the 4th dimension rather than the 9th dimension when a number ``9'' is given, which introduces an interesting pairing effect that is hard to interpret by analyzing the global optimum.
Last, since a number ``0'' is not similar to ``4'',
this update almost does not influence $p(y\mid x_o=0)$.

\section{Accumulated Influence}
\label{sec:fundamentalLD_03}

\subsection{Relative Stable $\mathcal{K}$ and Pairing Effect}

\begin{figure}[t]
\vskip -0.1in
    \begin{center}
    \centerline{\includegraphics[width=1\columnwidth,trim=0 0 0 0, clip]{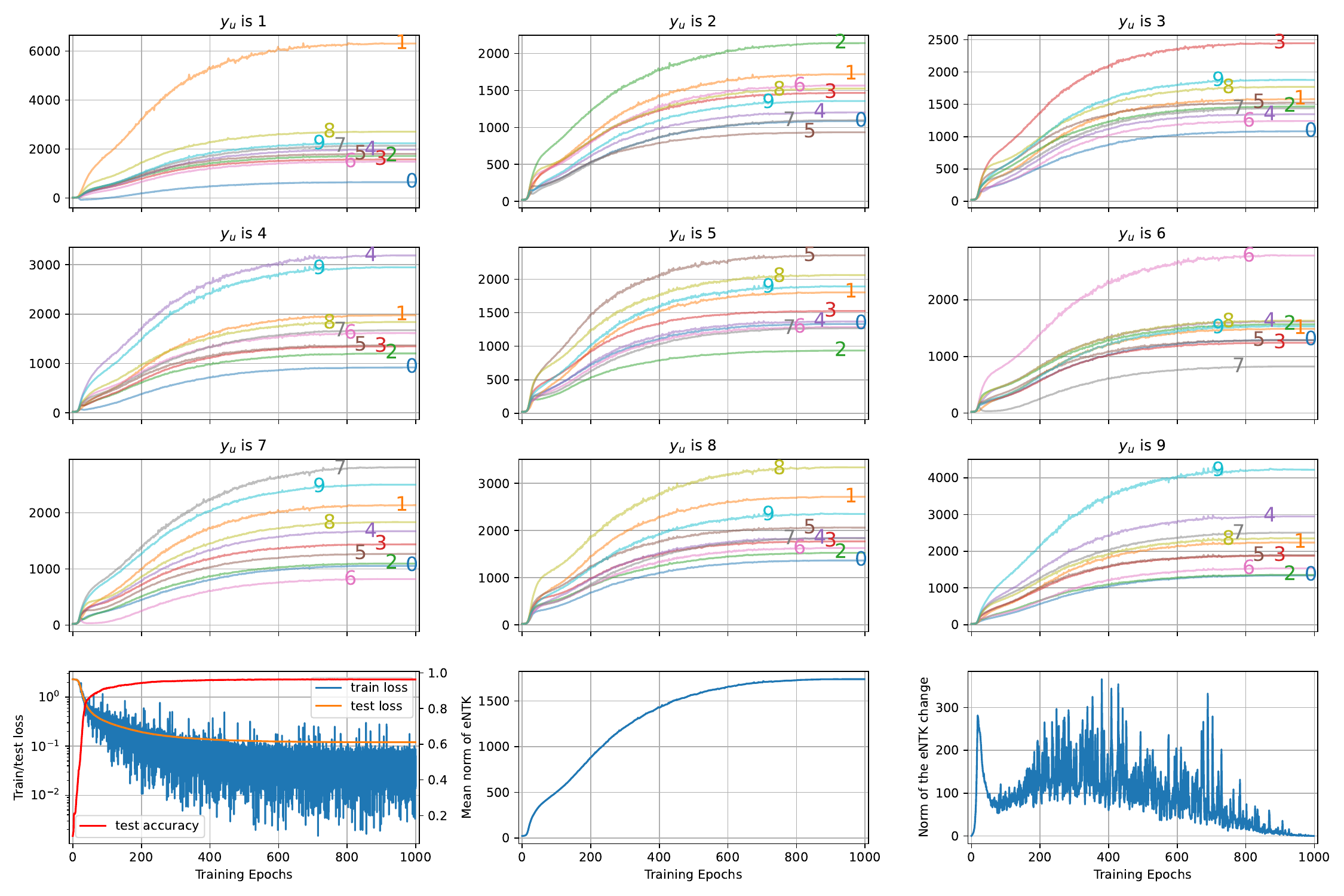}}
    \caption{First three rows: Results showing the relative stability of $\|\mathcal{K}^t_{uo}\|_F$ for learning a specific number (title of each panel) on other numbers (labeled by the colorful digits near the lines).
            Last row: change of train/test loss, mean of eNTK, and $\mathbb{E}_{u,o}\left[\|\mathcal{K}^t_{uo}-\mathcal{K}^{t-1}_{uo}\|_F\right]$. ($\mathcal{K}^t_{uo}$ is just $\mathcal{K}^t(x_o,x_u)$ for notation conciseness.)}
    \label{fig:chap3_ntk_adapt}
    \end{center}
\vskip -0.2in
\end{figure}

The previous section demonstrates the one-step influence of learning a single example. 
We now turn to discuss the cumulative influence throughout the training process.
The first thing to check is whether the relatively stable $\mathcal{K}^t$ assumption holds.
Hence in \cref{fig:chap3_ntk_adapt}, we directly visualize the average $\|\mathcal{K}^t\|_F$ throughout the training (we also use $\mathcal{K}^t_{uo}$ to represent $\mathcal{K}^t(x_o,x_u)$ to make the notation concise).
The key findings are:

\begin{figure}[t]
\vskip -0.1in
    \begin{center}
    \centerline{\includegraphics[width=0.75\columnwidth,trim=0 0 0 0, clip]{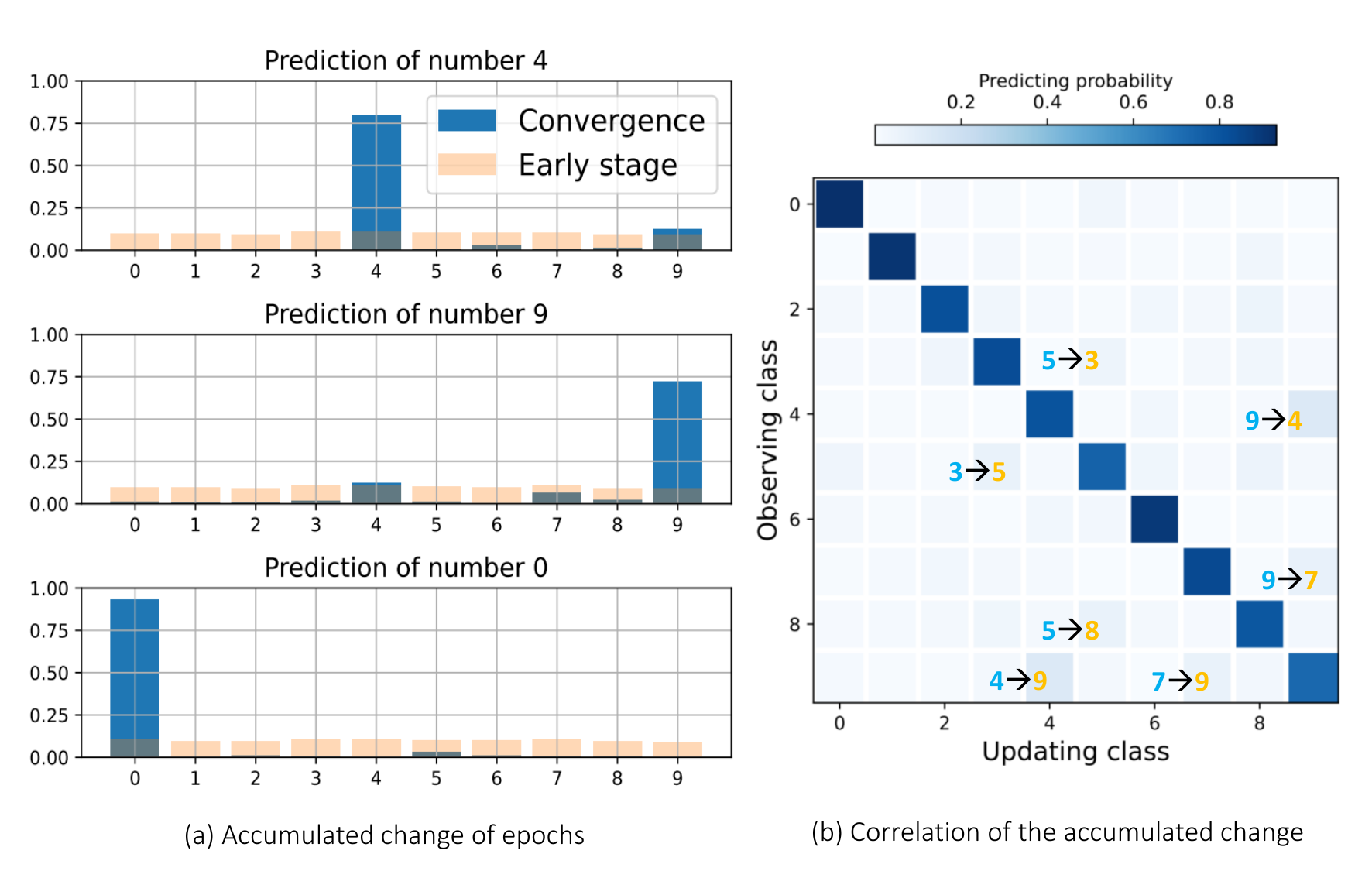}}
    \caption{Accumulated influence (pairing effect) in an MNIST experiment.}
    \label{fig:chap3_mnist_akg02}
    \end{center}
\vskip -0.2in
\end{figure}

\begin{itemize}
    \item[1. ] The last row roughly indicates different phases throughout the training: the first several epochs ($0\sim 30$) are a bit messy, and the last several epochs ($800\sim 1000$) behave similarly to the finetuning stage;
    \item[2. ] Although the norm of eNTK ($\mathbb{E}_{u,o}\left[\|\mathcal{K}^t_{uo}\|_F\right]$) and the norm of eNTK's adaptation ($\mathbb{E}_{u,o}\left[\|\mathcal{K}^t_{uo}-\mathcal{K}^{t-1}_{uo}\|_F\right]$) changes a lot after 30 epochs,
    the ranking between $\|K^t_{uo}\|_F$ on different observing classes are relatively stable, as demonstrated by the upper nine panels;
    \item[3. ] The pairing effect between the ``similar'' inputs is clear, e.g., ``4'' and ``9'', ``5'' and ``8'', etc;
    \item[4. ] The pairing effect between the ``dis-similar'' inputs are also clear, e.g., ``6'' and ``7'', ``2'' and ``5'', etc.
    \item[5. ] The pairing effect mentioned previously is not strictly symmetric, which is because of the inconsistent $\mathcal{A}$ and $\mathcal{G}$ terms.
\end{itemize}
In summary, throughout the majority of the training process, especially when the test accuracy is relatively high, the similarity between different classes tends to remain relatively stable.
Hence, the pairing effect mentioned in one-step influence can be accumulated.
As in \cref{fig:chap3_mnist_akg02}, if we keep training the model, interesting pairing effects emerge in their predictions.
Note that this pattern gradually vanishes as the model approaches convergence. This supports our view that analyzing a model’s dynamic behavior during training provides a valuable complement to traditional learning theory, which primarily focuses on asymptotic convergence.

\subsection{Evolution of $\mathcal{G}$ and Zig-zag Learning Path}

\begin{figure}[t]
\vskip -0in
    \begin{center}
    \centerline{\includegraphics[width=1\columnwidth,trim=0 0 0 0, clip]{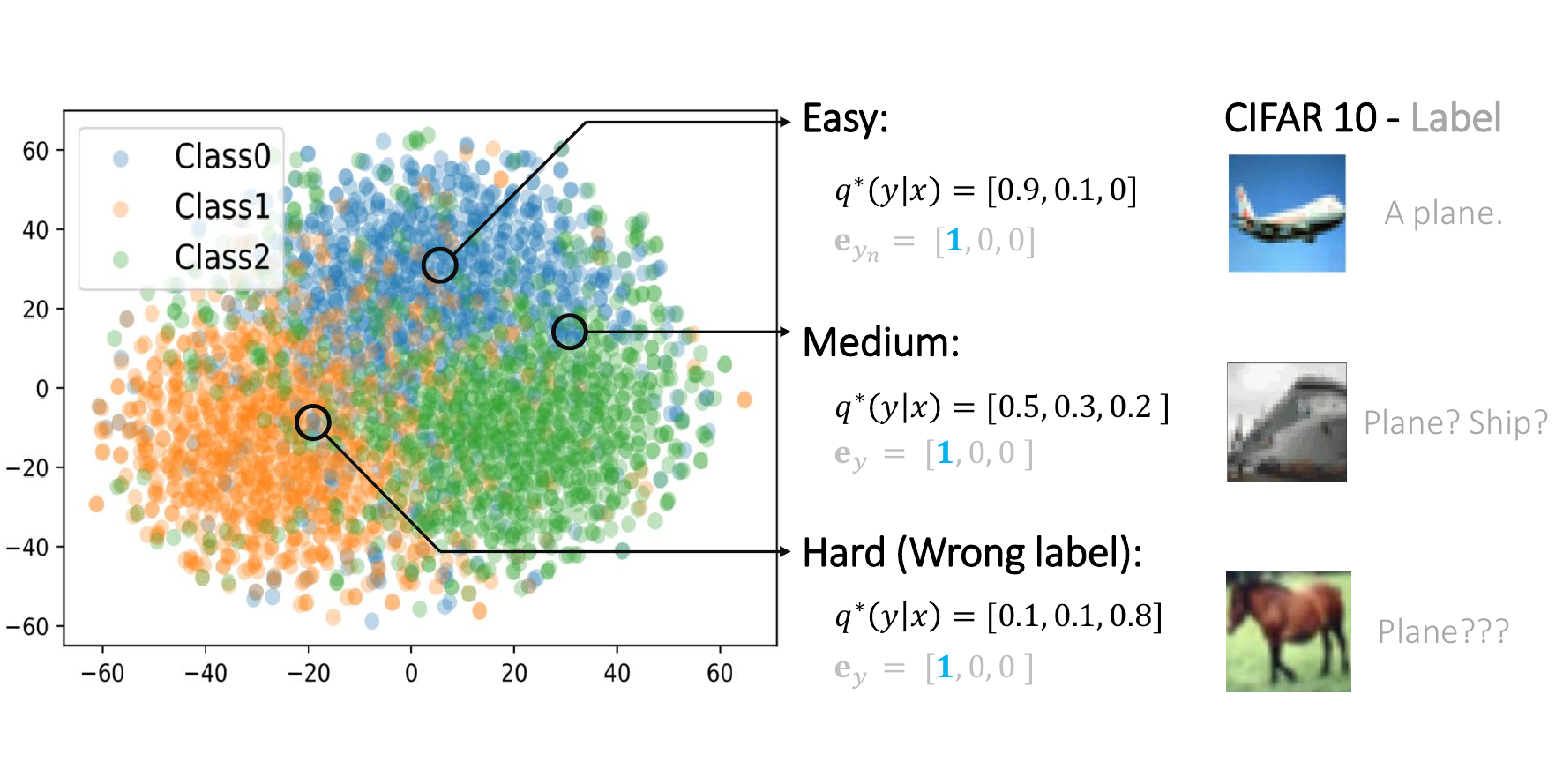}}
    \caption{Left: a multi-variate Gaussian dataset with $V=3$. Right: three typical sample groups in the \texttt{Toy-Gaussian} and noisy-CIFAR-10 dataset \citep{krizhevsky2009learning}.}
    \label{fig:chap3_toy_gaussian}
    \end{center}
\vskip -0in
\end{figure}

In contrast to $\mathcal{K}^t$, which usually changes slowly with different $t$,
$\mathcal{G}^t$ typically undergoes significant changes during training.
Fortunately, such changes are often predictable owing to their simplicity, i.e., they resemble the form $\vp^t(\cxu)-\ve_{\cyu}$. 
As a result, one can perform a straightforward ``force analysis'' by examining its norm and direction over the course of training.
We then use the following toy example to give a better demonstration of this process.

Consider a multivariate Gaussian dataset (\texttt{Toy-Gaussian} for short), as illustrated in \cref{fig:chap3_toy_gaussian}.
The dataset contains $N$ samples, where each sample is a 3-tuple ($x, y, \vq^*)$.
We can use the following three steps to get one:
\begin{itemize}
    \item Sample label $y$: we first select the label $y\in[V]$ following a uniform distribution over all $V$ classes.
    \item Sample input $x$: we sample the input signal $x|_{y=v}\sim\mathcal{N}(\bmf{\mu}_v,\sigma^2I)$, where $\sigma$ is the noisy level for all the samples. $\bmf{\mu}_v$ is the mean vector for all the samples in class $v$. Here, each $\bmf{\mu}_v$ is a 30-dim vector, in which each dimension is randomly selected from $\{-\delta_\mu,0,\delta_\mu\}$. This process is similar to selecting 30 different features for each class.
    \item Calculate ground truth Bayesian probability $q^*(y\mid x)$: To do this, we use the fact that $q^*(y\mid x)\propto q(x\mid y)p(y)$. 
    As $y$ follows an uniform distribution, we have $q^*(y\mid x)=\frac{q(x\mid y=v)}{\sum_{j\neq v}q(x\mid y=j)}$. 
    Following $q(x\mid y=v)\sim\mathcal{N}(\bmf{\mu}_v,\sigma^2I)$, we find $q^*(y\mid x)$ should have a Softmax form. Specifically, we have:
    \begin{equation}
        q^*(y=v\mid x)=\frac{\ve^{s_v}}{\sum_{j\neq v}\ve^{s_j}};\quad s_i = -\frac{1}{2\sigma^2} \| x -\bmf{\mu}_i\|^2_2.\nonumber
    \end{equation}
\end{itemize}
For a finer-grained understanding of the model's behavior, we roughly group all the samples into the following three groups, based on a \textit{base difficulty} defined as $\|\vq^*(x)-\ve_y\|_2^2$ (please also refer to \cref{fig:chap3_toy_gaussian}):
\begin{itemize}
    \item Easy: small base difficulty, where $\vq^*$ is similar to $\ve_y$ ($x$ locates close to the cluster center), e.g., an image of a plane that is easy to classify;
    \item Medium: medium base difficulty, where the peak of $\vq^*$ might still align with $y$ ($x$ locates near the boundary of different clusters), but is already dissimilar to $\ve_y$, e.g., an image which is hard to tell whether it is a plane or a ship;
    \item Hard (e.g., wrong labeled): high base difficulty, where the peak of $\vq^*$ might be different from $y$ ($x$ locates close to another cluster center), e.g., an image of a horse but labeled as a plane.
\end{itemize}

With the preparations above, we can now do a very interesting experiment to demonstrate the evolution of $\mathcal{G}^t$.
Specifically, we randomly initialize a 4-layer MLP and train it on the \texttt{Toy-Gaussian} data using SGD (it converges roughly at the 80th epoch).
During the training, we track the model's confidence $\vp(x_o)$ on four examples with different difficulties.
Since $V=3$ and each $\vp(x_o)$ is a 2-simplex, we can use a barycentric coordinate visualization.\footnote{In this triangular plane, each corner is a one-hot distribution and the center is $[\frac{1}{3},\frac{1}{3},\frac{1}{3}].$}

\begin{figure}[h]
\vskip -0 in
    \begin{center}
    \centerline{\includegraphics[width=1\columnwidth,trim=0 0 0 0, clip]{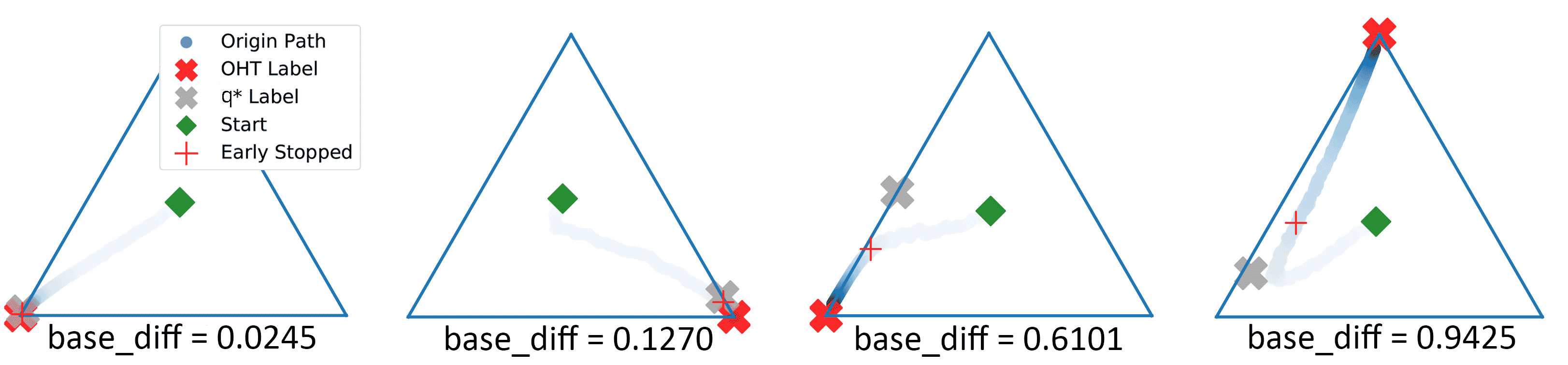}}
    \caption{Learning path of samples with different base difficulty. Corners correspond to one-hot vectors. Colors represent training time: transparent at initialization, dark blue at the end of training.}
    \label{fig:chap3_toy_path4}
    \end{center}
\vskip -0 in
\end{figure}

As demonstrated in \cref{fig:chap3_toy_path4}, the two easy samples very quickly move to the correct location near $\ve_y$ (as indicated by the light color until reaching the corner).
The medium sample takes a less direct route, drifting off slightly towards $\vq^*$, but still directly approaches $\ve_y$.
The hard sample, however, does something very different: it first approaches $\vq^*$, but then veers off towards $\ve_y$, giving a ``zig-zag'' path.
In both the medium and hard cases, there seems to be some ``unknown force'' dragging the learning path towards $\vq^*$.

Where does this unknown force come from?
Conducting a ``force analysis'' for each epoch helps us to answer this question.
Specifically, we can approximate the one-step influence defined in \cref{eq:sec3:akg} as the accumulated influence for one epoch, and then decompose it as follows.
\[
    \Delta^\text{ep}(\cxo) = -\eta\mathcal{A}^t(\cxo)\left(
                    \mathcal{K}^t(\cxo,\cxo)\mathcal{G}^t(\cxo,\cyo)+
                    \sum_{\substack{\cxu\neq\cxo \\ \cxu\in\mathcal{D}_\text{train}}} \mathcal{K}^t(\cxo,\cxu)\mathcal{G}^t(\cxu,\cyu)
        \right)
\]
(This is an approximation in that it neglects the change in $\mathcal A$ and $\mathcal K$ during the epoch.)
In this decomposition, the forces imposed on $\cxo$ come from two sources: other examples and itself, as demonstrated in \cref{fig:chap3_zigzag_detail}-(a).
For the influence from other examples, at any time $t$, ``dissimilar'' samples will have small $\|\mathcal{K}^t_{uo}\|_F$, and hence only slightly affect $\vp(\cxo)$.
However, those ``neighboring'' samples will have large $\|\mathcal{K}^t_{uo}\|_F$, and hence affect its updates much more;
because $\vq^*$ is hopefully similar for similar $x$ values,
it is reasonable to expect that the mean of $\ve_y$ for data points with similar $x$ will be close to $\vq^*(\cxo)$.
Thus, the net effect of updates for $\cxu \neq \cxo$ should be to drag $\vp(\cxo)$ towards $\vq^*(\cxo)$.
This is the ``unknown force'' we observed.

Another force affecting $\vp(\cxo)$ during training, i.e., the update based on $(\cxo,\cyo)$, will drive $\vp(\cxo)$ towards $\ve_{\cyo}$, even though, in our hard example, it is quite far away from the ground truth $\vq^*(\cxo)$.

The dynamical interaction among these forces forms the ``zig-zag'' learning path we observed.
To better see the interactions of these forces, we show the learning path of a hard sample during training in \cref{fig:chap3_zigzag_detail}.
In the figure, we use \blue{blue} lines to demonstrate the changes based on the updates of other examples, while \red{red} lines show the influence of learning its label.
The \purple{purple} dot and cross marker show $\vp(\cxo)$ at the start and end of the epoch.

It is clear that at the beginning of the training, the influence of neighboring examples dominates.
That is because near initialization $\vp^t(\cxu)$ will be relatively flat.
Hence, the size of the $\mathcal{G}^t$ for ``similar'' $\cxu$ and the $\mathcal{G}^t$ for $\cxo$ should be comparable, as demonstrated in \cref{fig:chap3_zigzag_detail}-(b).
That is to say, if there are at least a few ``similar'' training points,
$\vp^t(\cxo)$ will move towards $\vq^*(\cxo)$, as demonstrated in the figure.
Throughout training,
some of these similar samples will become well-classified,
so that their $\mathcal{G}^t$ decay very fast,
and their updates will no longer exert much force on $\vp^t(\cxo)$.
Thus, the $\cxo$ updates begin to dominate,
causing the zig-zag pattern as the learning path turns towards $\ve_{\cyo}$.
For easy samples, where $\vq^*$ and $\ve_{\cyo}$ are in the same direction, these forces agree and lead to fast convergence.
On samples like the ``medium'' example, the two forces broadly agree early on, but the learning path deviates slightly towards $\vp^*$ en route to $\ve_{\cyo}$.

\begin{figure}[t]
\vskip -0in
    \begin{center}
    \centerline{\includegraphics[width=1\columnwidth,trim=0 0 0 0, clip]{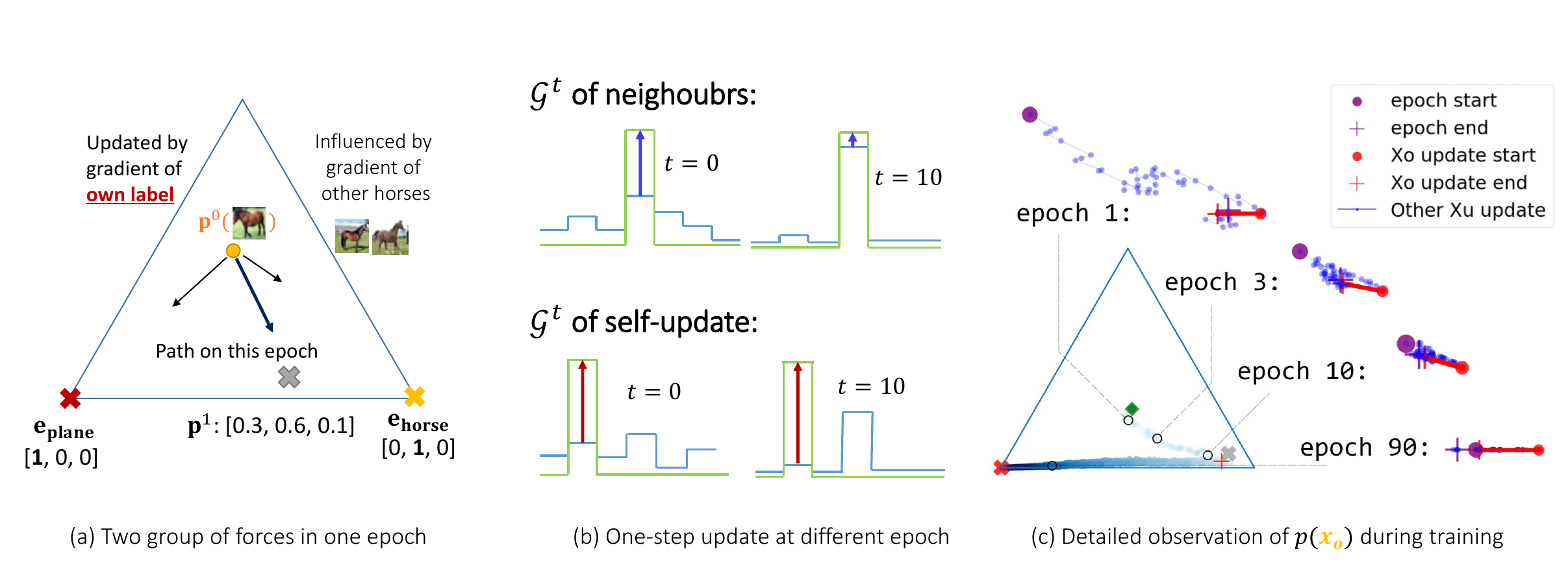}}
    \caption{Demonstration of how the ``zig-zag'' learning path emerges in a hard example: evolution of $\mathcal{G}^t$ is the key.}
    \label{fig:chap3_zigzag_detail}
    \end{center}
\vskip 0.3 in
\end{figure}

\section{Summary: Force Analysis of DNN}
\label{sec:fundamentalLD_04}

In this chapter, we formalize learning dynamics in a basic supervised setting. 
Leveraging the $\mathcal{AKG}$ decomposition, we show how a single update influences model confidence on another example. 
Through MNIST and \texttt{Toy-Gaussian} cases, we visualize the stability of $\mathcal{K}^t$ and the evolution of $\mathcal{G}^t$, which are the keys to understanding cumulative learning effects. 
This process parallels ``force analysis'' in Newtonian mechanics.
Specifically, the ``force'' originates from the $\mathcal{G}$ term.
It is then normalized and projected by the $\mathcal{A}$ and $\mathcal{K}$ terms, and is ultimately imposed on $\log \vp(\vx_o)$.
In the following chapters, we will extend this analytical style to various settings to deepen our understanding of deep neural networks' behavior.

However, there are also important differences between the two settings. 
For instance, our analysis focuses solely on the first-order temporal behavior, without an equivalent notion of inertia or momentum.\footnote{Perhaps the ``momentum'' mechanisms in other optimizers like RMSProp \citep{hinton2012neural} or Adam \citep{kingma2014adam} are worth incorporating.}
Moreover, the force in our framework is inherently high-dimensional, in contrast to the three-dimensional space typically considered in classical mechanics.

Extending this analogy by introducing additional physical concepts into our framework, such as higher-order dynamics or energy-based interpretations, could be an interesting direction for future research.
\chapter{Understanding and Improving Knowledge Distillation}
\label{sec:case1}

\cref{sec:fundamentalLD} provides us with a powerful tool for understanding different phenomena during deep models' training.
\cref{sec:fundamentalLD_03} further demonstrates an interesting ``zig-zag'' learning path in a supervised classification task.
From these observations, a natural idea is to stop the training when the model reaches the inflection point along this path,
which is exactly the motivation of a related study of robust classification \citep{liu2020early}.\footnote{We did not find this paper, which is quite similar to our paper \citet{renbetter}, until completing the latter; the paper contains a detailed discussion of the differences.}
With the help of early-stopping regularization, they design an efficient algorithm to prevent the model from memorizing the wrong labels, and hence behaves more robustly when the labels are noisy.

This chapter extends the problem to a more general setting.
We argue that even in relatively clean-label scenarios, the model is still capable of automatically identifying better supervisory signals, which can subsequently benefit the training of other models.
Specifically, we first show that training a network with a better supervisory signal, i.e., one closer to the unknown ground-truth distribution $\vq^*$, leads to a tighter generalization bound under the cross-entropy loss.
Motivated by the analysis of learning dynamics in real-world image classification tasks, we then propose a method called Filter-KD (Filtered Knowledge Distillation), in which a teacher network, trained with early stopping and an exponential moving average (EMA) mechanism, provides improved supervisory signals to a student network.
Our empirical results not only validate the theoretical predictions of learning dynamics in practical settings, but also demonstrate that Filter-KD is an effective and modular tool that can be easily integrated into existing KD frameworks.
The discussions and results in this chapter are primarily based on our paper ``\textit{Better Supervisory Signals by Observing Learning Paths}'' \citep{renbetter}.

\section{Background and Related Work}
\label{sec:case1_01}

\subsection{One-hot Supervision is not Optimal}
In multi-class classification problems, we usually supervise our model with ``one-hot'' labels: the one-hot vectors $\ve_y$ which have the $y$-th dimension equal to one, and all other dimensions equal zero.
Over time, however, it has gradually become clear that this ``default'' setup might not always be the best choice in practice, in that other schemes can yield better performance on held-out test sets.
One such alternative is to summarize a distribution of human annotations,
as \citet{human_label} did for CIFAR10.
However, this method requires a lot of human effort, which is hard to scale to other larger datasets.
Another alternative approach is label smoothing, in which a mixture of a one-hot vector and the uniform distribution is used to train the model \citep{ls_first}.
However, this approach manipulates the one-hot labels for all examples (containing both easy and hard ones) in the dataset in the same way, which is also not optimal.

\subsection{Teacher as Better Supervision in KD}
Knowledge distillation (KD), first training a teacher network on the training set
and then a student network on the teacher's output probabilities was originally proposed for model compression \citep{KD_initial}.
However, we can also understand it as refining the supervision signal: the teacher provides ``soft'' and input-dependent outputs rather than the hard one-hot labels to the student.
Knowledge distillation is promising because it requires no additional annotation effort, and at the same time, can provide sample-specific labels based on their difficulties.
Perhaps surprisingly, KD can improve student performance even when the teacher is of \emph{exactly the same form as the student} and trained on the same data;
this is known as self-distillation \citep{BAN,beyourownteacher}.
There have been many recent attempts to explain knowledge distillation and specifically self-distillation \citep[e.g.][]{KD_probability,kd_understanding_long,tang2020understanding},
from both optimization and supervision perspectives. 
We focus on the latter area, where it is usually claimed that the teacher provides useful ``dark knowledge'' to the student through their labels.
From the discussions in \cref{sec:fundamentalLD_03}, we speculate that such dark knowledge comes from the mutual influence among training samples, which is well captured by our learning dynamics analysis framework.

\section{Insights From Learning Dynamics}
\label{sec:case1_02}

\subsection{Better Supervisory, Better Generalization}
From the student network’s perspective, the only difference among self-knowledge distillation (self-KD), label smoothing, and standard supervised learning lies in the supervisory signal it receives.
Consistent with findings from prior work (as well as our own experiments), self-KD typically outperforms label smoothing, which in turn outperforms standard training.
This performance hierarchy suggests that the supervisory signals provided by a teacher network are of higher quality than those offered by label smoothing or one-hot labels.
This naturally raises an important question: \textit{how can we quantitatively compare the quality of different supervisory signals?}

To answer this question, we first recap the general form of cross-entropy loss, i.e., $-\vq^\top\log \vp$, and the most commonly used form restricts $\vq$ to be a one-hot vector $\ve$.
If we switch our supervisory signal to another target distribution $\vq_\text{tar}$, the corresponding risk is then
\begin{align}\nonumber
    R_\tar(f,\mathcal{D})
    &\triangleq -\sum^{n}_{i=1}\sum^{V}_{v=1}
        {\frac{1}{n}}
        \, {q_\tar(y_i=v \mid x_i)}
        \, \log p(y_i=v \mid x_i)   \\
    &= -\frac{1}{n}\sum_{i=1}^{n}
        \, \vq_\tar\tp(x_i)\log\vp(x_i) \label{eq:tar_risk}
\end{align}
With this definition, we propose the following hypothesis as a trend. \vspace{0.8em}

\begin{hyp} \label{hyp:l2-gen}
    Suppose we train a model supervised by $\vq_\tar$, that is, we minimize $R_\tar(f,\mathcal{D}_\text{train})$.
    Then, smaller average $L_2$ distance between $\vq_\tar$ and the ground truth $\vq^*$ on these samples, i.e.\ small $\E_{x}\left[\|\vq_\tar(x)-\vq^*(x)\|_2\right]$, will in general lead to better generalization performance.
\end{hyp}

This hypothesis is suggested by Proposition 3 of \cite{KD_probability},
which shows that for any predictor $f$ and loss bounded as $\mathcal{L}(f(x), \hat y) \le \ell$, we have
\[
    \E_{\mathcal D}\left[(R_\tar(f,\mathcal{D}) - R(f))^2 \right]
    \le \frac{1}{n} \Var_{x}\left[ \vq_\tar\tp \vL(f(x))\right]
    + \ell^2 K \left( \E_{x} \lVert \vq_\tar(x) - \vq^*(x) \rVert_2 \right)^2,
\]
where $\vL(f(x))$ is the loss for each possible label (it is just $-\log\vp(x)$ in our negative log-likelihood, i.e., NLL, setting) and $K$ is also a constant.
When $n$ is large, the second term will dominate the right-hand side,
implying a smaller average $\lVert \vq_\tar - \vq^* \rVert$
will lead to $R_\tar$ being a better approximation of the true risk $R$; minimizing it should then lead to a better learned model.
This suggests that the quality of the supervision signal
can be roughly measured by its L2-distance to the ground truth $\vq^*$.
\Cref{sec:app2:risk-est-var} slightly generalizes the result of \citet{KD_probability} with bounds based on total variation (L1-distance) and KL divergences; we focus on the L2 version here for simplicity.
From Hypothesis \ref{hyp:l2-gen}, we can now treat $\lVert \vq_\tar(x) - \vq^*(x) \rVert_2$ as a element-wise measurement of the supervision quality.

\begin{figure}[t]
\vskip -0in
    \begin{center}
    \centerline{\includegraphics[width=0.9\columnwidth,trim=0 0 0 0, clip]{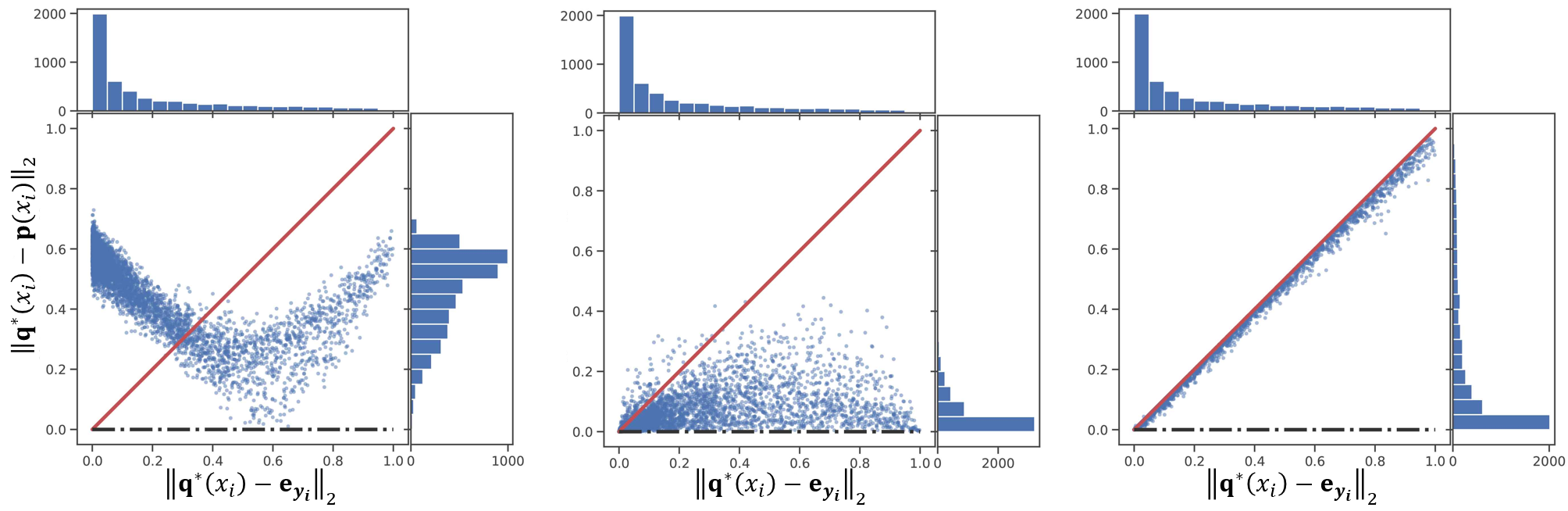}}
    \caption{Normalized (divided by $\sqrt{2}$) distance between output distribution $\vp$ and ground truth $\vq^*$ during the one-hot training in different stages (left to right: initial, early stop, convergence). The bar plots show the density along the corresponding axis. The code for this experiment can be found in \href{https://github.com/Joshua-Ren/better_supervisory_signal/blob/main/notebooks/ToyGaussian_Part2_LearningDynamics.ipynb}{the notebook}, and the animation of this figure is in \href{https://github.com/Joshua-Ren/better_supervisory_signal/tree/main}{the git repo}.}
    \label{fig:chap4_hardness_3stage}
    \end{center}
\vskip -0in
\end{figure}

\subsection{Good Supervision Emerges during Training}
Since the $\vq^*$ is usually intractable, we start with the \texttt{Toy-Gaussian} dataset studied in \cref{sec:fundamentalLD_03}, since their $\vq^*$ are well defined and easy to calculate.
On real datasets, we consider CIFAR-10H \citep{human_label}, in which each image is paired with a group of labels provided by humans.
The corresponding empirical distribution is then a good approximation of $\vq^*(x)$.

Recall the examples demonstrated in \cref{fig:chap3_toy_path4}, where the model's learning path on hard examples first moves toward the unknown ground-truth $\vq^*$ and then converges to a one-hot distribution.
Although the four examples in this figure hint that early stopping the model's training can help the model find a good prediction,
is this still true for \textit{most of the examples?} 
(It is possible that the good stopping points for different examples differ greatly.)
To verify this, we use the \texttt{Toy-Gaussian} dataset and train a 4-layer MLP to convergence with $n=5000$ training samples.
In \cref{fig:chap4_hardness_3stage}, we show the scatter plots of each sample's base difficulty (how close its $\vq^*$ is to its $\ve_{y}$) versus the model's prediction quality (how close the model's prediction $\vp$ is to its $\vq^*$), in three different stages during training.
At initialization, most points\footnote{The curve structure is expected: points with $\vq^* \approx (\frac13, \frac13, \frac13)$ are near the middle of the base difficulty range, and all points are initialized with fairly ambiguous predictions $\vp$.} have large $\lVert \vp(x) - \vq^*(x) \rVert_2$.
By the point of early stopping, most $\vp$ values are roughly near $\vq^*$.
At convergence, however, $\vp(x) \approx \ve_{y}$, as the classifier has nearly memorized the training inputs, leading to a diagonal line in the plot.
Note that the average label quality ($\E_{x} \|\vp_\text{tar}(x)-\vq^*(x)\|_2$) mentioned in Hypothesis~\ref{hyp:l2-gen} is the mean height of all points in the figure, if we treat $\vp$ as $\vp_\text{tar}$.
It is clear that the early-stopped model has the potential to provide better supervisory signals, i.e., better $\vp_\text{tar}$.

This trend also persists in more realistic systems, albeit with increased noise in their training paths.
Specifically, we randomly initialize a ResNet18 \citep{resnet} and train it on CIFAR-10 until its convergence.
Since now we have 10 classes, we first convert a length-10 vector to a length-3 one by retaining the probabilities corresponding to the correct class and the second-highest predicted class, while aggregating the probabilities of the remaining classes into a single value.
The learning path can then be projected onto the same 2-simplex triangle plane, as demonstrated in \cref{fig:chap4_zigzag_all}.
The panels in the first column show the learning path of an easy sample.\footnote{Without knowing $\vq^*$, we instead use the ``zig-zag score'' to roughly measure the difficulty. Please also refer to Appendix E of our paper for more details \citep{renbetter}.}
We can see that $\vp^t$ converges quickly towards the left corner, the one-hot distribution for the correct class, because the color of the scattered points in that figure is quite light.
At the early stopping epoch, $\vp^t$ has already converged to $\ve_y$.
However, for a hard sample, it is very difficult to observe any patterns from the raw path (blue points): there are points almost everywhere in this plot.
This is likely caused by the more complex network and dataset, as well as a high learning rate in early training.
To find the hidden pattern in this high-variance path, we treat $\vp^t$ as a time series signal with many high-frequency components.
Thus, we can collect its low-frequency components via a low-pass filter and then draw that, i.e., taking an exponential moving average (EMA) on $\vp^t$, using the following equation:
\begin{equation}
    \vp^{t+1}_\text{ema} = (1-\alpha)\vp_\text{ema}^{t} + \alpha \vp^t,
\end{equation}
where $\alpha$ controls the strength of the smoothing.
In digital signal processing, the equation above is equivalent to a first-order low-pass filter where $\alpha$ controls the cut-off frequency \citep{smith2003digital}.

\begin{figure}[ht]
\vskip -0in
    \begin{center}
    \centerline{\includegraphics[width=0.9\columnwidth,trim=0 0 0 0, clip]{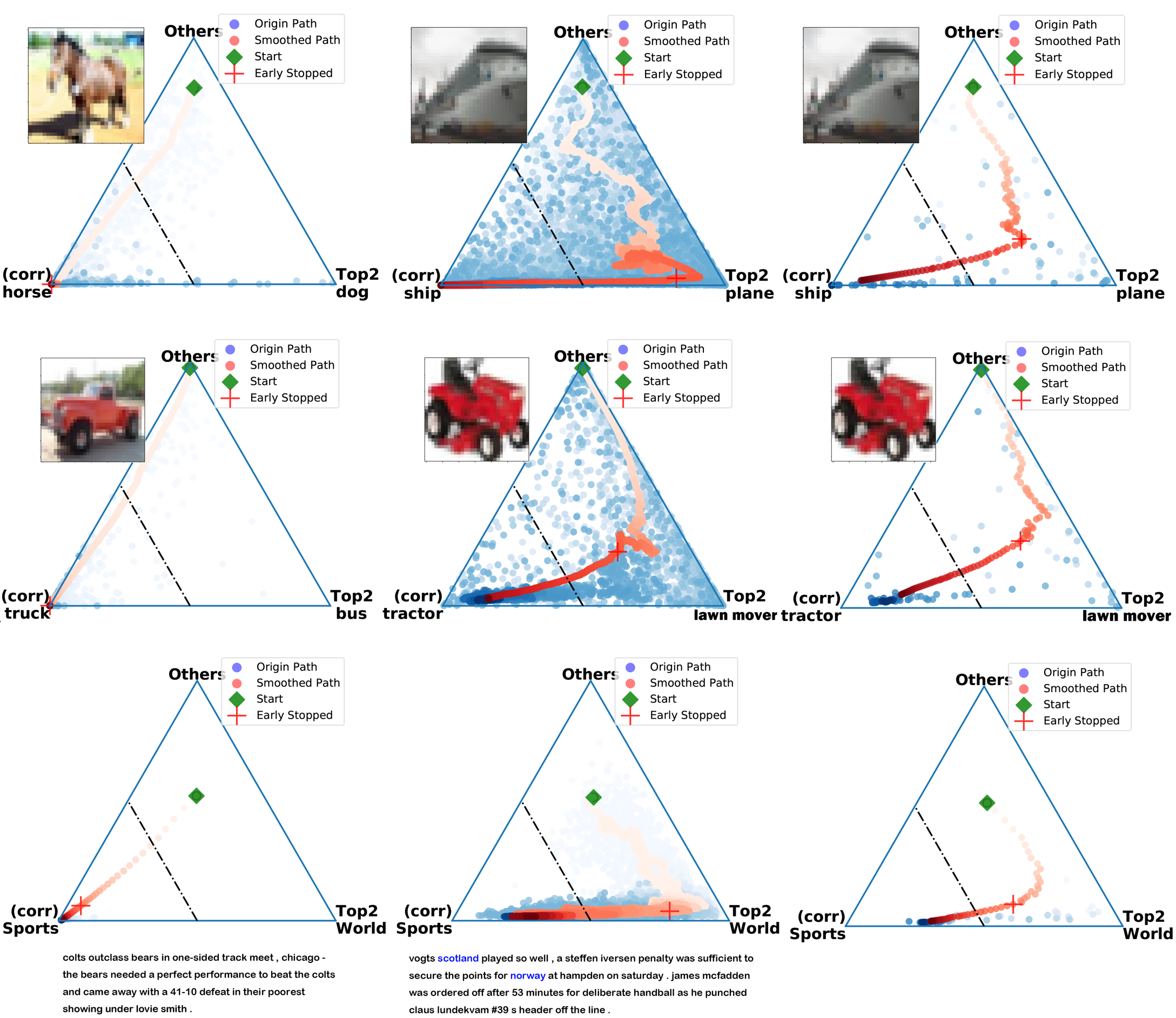}}
    \caption{Learning paths on different problems. In the first two columns, we record $\vp^t$ for each batch, while in the last column, we record it for each epoch. The first row is training a ResNet18 on CIFAR-10, the second row is ResNet18 on CIFAR-100, and the last row is LSTM on AGNews.}
    \label{fig:chap4_zigzag_all}
    \end{center}
\vskip -0in
\end{figure}

After smoothing the noisy trajectory, we find $\vp^t_\text{ema}$ (the red dots) exhibits a similar trend: first moving towards the unknown true label before eventually turning to memorize the wrong label.
Such a trend also exists in many more complicated systems, e.g., training a ResNet18 on CIFAR-100 or training an LSTM \citep{lstm} on AGNews \citep{agnews}, as demonstrated in the later rows of this figure.

\section{Proposed Method: Filter-KD}
\label{sec:case1_03}

\subsection{The Algorithm}
Inspired by the findings above, we hence propose a knowledge distillation-based method called Filter-KD.
The idea is quite simple: since better supervision leads to better generalization ability (as in Hypothesis~\ref{hyp:l2-gen}) and early stopping the training of one network can provide better $\vp_\text{tar}$, we can first train a teacher network and then use the filtered $\vp^{t'}_\text{ema}$ as the supervisory signal for the student's training, as demonstrated in \cref{alg:filter-kd}.
Specifically, we maintain a look-up table $\vq_\text{ema}\in\mathbb{R}^{N\times V}$ to store a moving average of $\vp_t$ for each training sample.
Note that in one epoch, each $\vq_\text{ema}(x_n)$ will be updated only once.\footnote{As suggested by the last column of \cref{fig:chap4_zigzag_all}, updating the prediction epoch-wise is enough to get a good estimation.}
We check the early stopping criterion with the help of a validation set.
Afterwards, the teaching supervision $\vq_\text{ema}$ is ready, and we can train a student network under its supervision.
This corresponds to using a moving average of the teacher model ``in function space,'' i.e., averaging the outputs of the function over time.

Note that this framework is easy to incorporate into any other KD methods, as long as the teacher and student have identical output formats.
Such a distillation procedure can also be conducted for multiple generations to further improve the model's performance, as was done in the ``Born Again Networks (BAN)'' studied by \citet{BAN}.
We analyze a similar system in our paper \citet{ren2023improving}, which also uses the general principle of learning dynamics discussed in this thesis.
For brevity,
we only focus on one generation of this process here.

\begin{Ualgorithm}[h]
	\begin{algorithmic}[H]
        \STATE \textbf{Input:} Dataset $\{(x_n,y_n)\}_{i=1}^{N}$, two randomly initialized DNNs
        \STATE \textit{\# Train the teacher}
        \STATE Initialize the teacher $f_\text{teacher}$, initialize an $N\times V$ matrix called $\vq_\text{ema}$
        \STATE Go through the entire dataset, calculate $\vq_\text{ema}[n,:] = f_\text{teacher}(x_n)$
		\FOR{$\mathit{epoch} = 1,2, ..., T$}
		    \FOR{$i \in \{1, 2, \dots, N\}$ in random order}
		        \STATE $\hat \vp = f_\text{teacher}(x_n)$
		        \STATE $\vq_\text{ema}[n,:] = (1-\alpha) \cdot \vq_\text{ema}[n,:] + \alpha \cdot \hat \vp$
		        \STATE Update parameters of $f_\text{teacher}$ based on $\mathcal{L}_{ce}(\hat\vp,y_n)$, 
		    \ENDFOR
		    \STATE Check the early stopping criterion; stop training if satisfied
		\ENDFOR
        \STATE \textit{\# Train the student}
        \STATE For each input $x_n$, set $\vp_\tar=\vq_\text{ema}[n,:]$
        \STATE Train the network $f_\text{student}$ under the supervision of $\vp_\tar$
	\end{algorithmic}
	\caption{Filter-KD. $\alpha$ controls the cut-off frequency of the low-pass filter (0.05 in the experiments here).}
	\label{alg:filter-kd}
\end{Ualgorithm}

\subsection{Evaluation of Filter-KD}
In this part, we consider two main baselines.
Other than the standard supervised learning using one-hot examples (OHT for short in the tables), we further compare an important baseline called ESKD (early-stopped KD, i.e., $\alpha=1$ using the same algorithm), which can be considered as an ablation for the filtering mechanism applied in Filter-KD.
Compared to ESKD, Filter-KD can avoid the extremely high variance of $\vp^t$ during training.
That is because most practical implementations only consider early-stopping at the end of each epoch, which is equivalent to down-sampling the noisy learning path (as in the last panel of \cref{fig:chap4_zigzag_all}),
further exacerbating the variance of $\vp^t$.
Thus, ESKD will likely select a bad $\vp_\text{tar}$ for many data points.
Filter-KD, by contrast, has much more stable predictions.

However, such enhanced robustness suffers from higher algorithmic complexity: the running time of Filter-KD might be slightly increased.
Furthermore, compared to the teaching model in ESKD, the Filter-KD requires a teaching table $\vq_\text{ema}\in\mathbb{R}^{N\times V}$, which requires substantial memory when the dataset is large.
One alternative avoiding the need for this table would be to instead take an average ``in parameter space,'' like e.g.\ momentum parameter updating as in \citet{momentum1}.
But we empirically find that, although this helps the model converge faster, it does not lead to a better teacher network.
Thus, although Filter-KD has clear drawbacks, we hope that our explanations here may lead to better practical algorithms in the future.

\begin{table}[t]
    \centering
    \resizebox{1\textwidth}{!}{
    \begin{tabular}{@{}c|cccc|cccc@{}}
    \toprule
             & \multicolumn{4}{c|}{Accuracy $\uparrow$}                                & \multicolumn{4}{c}{ECE $\downarrow$}              \\ %
             & \textbf{OHT}    &\textbf{KD}       & \textbf{ESKD}    & \textbf{FilterKD}& \textbf{OHT}    &\textbf{KD}       & \textbf{ESKD}    & \textbf{FilterKD}      \\ \midrule
    CIFAR10  & 95.34    & 95.39   & 95.42   & \textbf{95.63}   & 0.026     & 0.027     & 0.027      & \textbf{0.007} \\
    CIFAR100 & 78.07  & 78.40   & 78.83   & \textbf{80.09}   & 0.053      & 0.061     & 0.067      & \textbf{0.029} \\\bottomrule
    \end{tabular} }
    \caption{Quantitative comparison of generalization performance for ResNet18 self-distillation (mean value of 5 runs, \textbf{bold numbers} are the best method). ECE is the expected calibration error \citep{guo2017calibration}, which measures how well the model's prediction quality. Please also refer to \cref{sec:app3_ece} for the formal definition.}
    \label{tab:chap4_clean_data}
\end{table}

We now compare the performance of Filter-KD and other baselines on a real dataset.
To rule out the influence of other hyperparameters, we first focus on self-KD and a fixed temperature $\tau=1$, as we want the only difference among these methods to be $\vp_\text{tar}$. 
Thus, we can conclude that the improvement we achieved comes purely from the refined supervision.
More results considering different hyperparameters can be found in our paper \citep{renbetter}.

\Cref{tab:chap4_clean_data} clearly shows that Filter-KD performs best in both accuracy and ECE.
The two ablations (i.e., KD and ESKD) suggest that both the early stopping and smoothing mechanisms in Filter-KD are important.

\subsection{Auto Noisy-label Fixing Ability}

We then consider the noisy-label classification problem, where we train and validate the model on a dataset with some labels randomly flipped.
After training, all the models are evaluated on a clean held-out test set.
The experiments are conducted on CIFAR (\cref{fig:chap4_noisy_acc}) and TinyImageNet (\cref{tab:chapt4_tinyimagenet}),
under different noise ratios $\sigma$:
$\sigma=0.1$ means 10\% of the labels are flipped.
In \cref{fig:chap4_noisy_acc}, an interesting trend can be observed: the enhancement brought by Filter-KD is not significant when $\sigma$ is too small or too large.
That is reasonable because for small $\sigma$, few samples have bad labels, thus the possible enhancement might not be large.
When $\sigma$ is too high, the labels of similar points become less reliable, and the learning path will no longer head as reliably towards $\vq^*$.
Thus, for very high noise ratios, the performance of Filter-KD decays back towards that of OHT.

\begin{figure}[t]
\vskip -0in
    \begin{center}
    \centerline{\includegraphics[width=1\columnwidth,trim=0 0 0 0, clip]{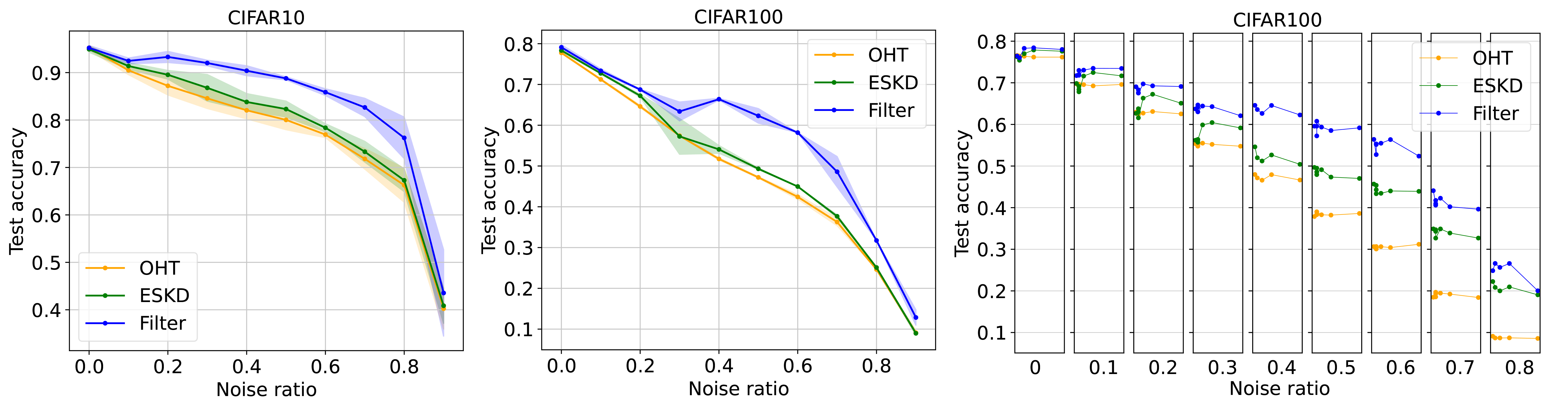}}
    \caption{Test accuracy under different noise ratio $\sigma$ (this notation is not softmax function here). Solid lines are the means, while shaded regions are the standard errors for 3 runs with different random seeds (shaded range is the standard error). The last panel compares the influence of different temperatures. Each thin rectangle plot represents a different $\sigma=\{0,0.1,...,0.8\}$, in which we plot the results with different $\tau=\{0.5,1,2,4,10\}$.}
    \label{fig:chap4_noisy_acc}
    \end{center}
\vskip -0in
\end{figure}

\begin{table}[h]
    \centering
        \begin{tabular}{c|cccc}
        \hline
        Noise $\sigma$ & 0 & 0.05 & 0.1 & 0.2 \\ \hline
        \textbf{OHT} & 56.95 & 53.02 & 52.02 & 30.52 \\
        \textbf{ESKD} & 58.61 & 53.53 & 52.99 & 36.55 \\
        \!\!\textbf{FilterKD}\!\! & \textbf{59.32} & \textbf{56.43} & \textbf{55.51} & \textbf{40.81} \\ \hline
        \end{tabular}    
    \caption{Noisy-label learning results on TinyImageNet dataset.}
    \label{tab:chapt4_tinyimagenet}
\end{table}

\section{Conclusion and Future Works}
\label{sec:case1_04}

In this chapter, we first claim that a better supervision signal, i.e., those $\vp_\tar$ that are closer to $\vq^*$, leads to better generalization performance; this is supported by results of \citet{KD_probability} and further empirical results given here.
To further understand how such better supervision emerges, we directly observe the behavior of samples with different difficulties by projecting them on a 2D plane.
The observed zig-zag pattern is well-explained by considering the tradeoff between two forces, one pushing the prediction to be near that of similar inputs, the other pushing the prediction toward its training label.
Such a phenomenon can be well explained by the learning dynamics framework we discussed in \cref{sec:fundamentalLD}: as long as we use the sample-wise exponential moving average (EMA, which is equivalent to a low-pass filter well studied in signal processing) to extract the dominating component in their messy learning paths.

To apply these findings to real tasks, in which the data and network are more complex, we propose a practical method called Filter-KD to further enhance $\vp_\tar$.
Experimental results on various settings (datasets, network structures, label noise, and temperatures) not only show the advantage of the proposed method as a method for knowledge distillation but also help verify our learning dynamics framework.

However, the proposed Filter-KD method incurs additional computational and storage costs compared to other baseline approaches.
Moreover, the experiments presented in this chapter are limited to the self-distillation setting.
Exploring how Filter-KD can be integrated with more advanced knowledge distillation frameworks remains an interesting direction for future work.

Regarding the framework of learning dynamics, most of the theoretical results and experiments in this chapter can be viewed as a validation of the proposed phenomena in a more practical image classification setting.
The principles highlighted in our framework are further corroborated by recent independent research, including \citet{datta2023measuring,ravikumar2025towards}, which explore similar concepts in diverse fields. 
This convergence of findings underscores the generalizability of our approach to a wide range of related tasks.
The next chapter extends this framework to the fine-tuning of large language models (LLMs), where the problem setup is substantially more complex than the standard supervised classification tasks studied here.
Remarkably, under reasonable simplifications and mild assumptions, we show that learning dynamics remains a powerful analytical tool even in such large-scale systems.
This finding highlights the potential of ``force analysis'' as a promising perspective for gaining deeper insights into the behavior of modern deep learning models.

\danica{Could be good to mention that \url{https://arxiv.org/abs/2305.10625} independently reinvented the same approach}
\chapter{Understanding LLM Finetuning}
\label{sec:case2}

\Cref{sec:fundamentalLD} introduces the analytical framework of learning dynamics, where we leverage the $\mathcal{AKG}$ decomposition to examine fine-grained model behaviors at the instance level, clearly identifying and elaborating the roles of its constituent terms.
Specifically, through the pairing effect illustrated in the MNIST example, we demonstrate that the term $\mathcal{K}^t(x_o, x_u)$, which measures the similarity between $x_o$ and $x_u$ from the model's gradient space, remains relatively stable throughout training.
Using the ``zig-zag'' learning path example, we show how the $\mathcal{G}^t$ term, which provides the energy and direction for the model's confidence change, gradually evolves over time.
Insights gained from these ``force analysis'' demonstrations subsequently inspire the development of the Filter-KD method proposed and examined in \cref{sec:case1}.

In this chapter, we extend our framework to the finetuning of large language models (LLMs), a rapidly evolving area involving increasingly large and complicated deep neural networks.
We illustrate how the ``force analysis'' perspective offers a unified approach for interpreting model behaviors across a diverse array of algorithms and tasks.
Specifically, we explore various finetuning scenarios for LLMs, covering (though not exhaustively):
\begin{itemize}
    \item \textbf{Topics:} alignment, hallucination, unlearning, self-bias amplification, reasoning, gradient ascent, and token-wise modeling.
    \item \textbf{Algorithms:} Supervised Finetuning (SFT), Direct Preference Optimization (DPO, \cite{rafailov2024direct}) and its variants, and reinforcement learning-based methods like Group Relative Policy Optimization (GRPO, \cite{guo2025deepseek}).
\end{itemize}
We anticipate this list will expand over time, and hope that the analytical tools inspired by learning dynamics will continue to evolve, addressing additional key challenges of interest to the broader LLM research community.

The structure of this chapter is organized as follows.
We first put our analysis into context by highlighting the remarkable success achieved by various finetuning methods within the overall pipeline of deploying LLMs.
Simultaneously, we summarize certain unexpected behaviors identified by other researchers.
We hope to explain these phenomena through the perspective of learning dynamics, complementary to existing approaches that primarily analyze convergence behavior or global optima.

However, extending the fundamental framework proposed in \Cref{sec:fundamentalLD} to LLM finetuning is not straightforward.
Depending on the specific problem of interest, several challenges emerge, such as autoregressive modeling, extremely large response spaces, negative gradient occurrences, issues with non-unique correct labels, and token-wise modeling complexities. 
These challenges are elaborated upon in subsections beginning with ``Theory.''

In subsections beginning with ``Explanation,'' we demonstrate how our analytical framework sheds new light on critical topics that have puzzled the LLM community.
Although the framework’s application is still in its preliminary stage, with experimental validation limited to medium-sized datasets, e.g., Anthropic-HH \citep{bai2022training}, UltraFeedback \citep{cui2023ultrafeedback}, reasoning datasets at the MATH level \citep{hendrycks2021measuring}, and models ranging from 0.5B to 3B parameters, the observed behaviors consistently align with our theoretical insights.
We anticipate that this analysis will provide valuable perspectives to the community, potentially informing the design of more effective and practical algorithms in larger systems.

Finally, as an initial exploration of practical implications, we introduce two novel algorithms addressing tasks related to alignment and mathematical reasoning, detailed in subsections beginning with ``Algorithm.''

The core content of this chapter primarily draws from the following papers:
\begin{itemize}
    \item ``\textit{Learning Dynamics of LLM Finetuning}'', our ICLR-2025 paper, provides most of the theoretical analyses and experiments related to SFT and DPO;
    \item ``\textit{On the Effect of Negative Gradient in Group Relative Deep Reinforcement Optimization}'', our NeurIPS-2025 paper, analyzes reinforcement learning-based finetuning approaches for LLMs like GRPO; 
    \item ``\textit{Bias Amplification in Language Model Evolution: An Iterated Learning Perspective}'', our NeurIPS-2024 paper, discusses the risks associated with self-bias amplification during multi-generation training.
\end{itemize}

\section{Introduction and Background}
\label{sec:case2_01}

\subsection{Finetuning LLM Unlocks Potential}

The finetuning of LLMs has become a cornerstone in deploying pretrained general-purpose language models to specialized tasks, significantly boosting their performance and practical applicability \citep{tie2025survey}.
Initially, standard supervised instruction-finetuning (i.e., SFT) dominated, wherein task-specific labeled datasets were directly used to optimize models through negative log-likelihood (NLL) loss. This straightforward method has been extensively utilized across tasks like sentiment analysis, named entity recognition, text classification, question answering, etc.

However, the increasing demand for models aligned with human values has shifted researchers' attention toward more flexible reinforcement learning-based strategies, notably reinforcement learning from human feedback (RLHF, \cite{bai2022training}) using Proximal Policy Optimization (PPO, \cite{schulman2017proximal}).
RLHF leverages human preferences as reward signals to iteratively guide model optimization, hence improving alignment in open-ended tasks such as dialogue completion, summarization, and code generation.

Introduced by OpenAI~\citep{ouyang2022training}, current finetuning pipelines typically consist of three stages: initial SFT to gather instruction-following ability, training a reward model based on human feedback, and subsequently applying RL optimization using PPO. This procedure can be broadly categorized into two phases: instruction tuning and preference tuning.

Subsequently, numerous studies have emerged to refine this pipeline.
For instance, to avoid explicitly training a separate reward model, \citet{rafailov2024direct} observed that a finetuned LLM inherently encodes a reward model.
Motivated by this insight, they propose Direct Preference Optimization (DPO), a streamlined approach where the model directly learns to distinguish between human-annotated preferred and rejected responses, thus bypassing complexities intrinsic to traditional RL setups.
They also provided theoretical justification, demonstrating that under mild assumptions, DPO is equivalent to PPO.
This approach inspired various extensions, including KTO \citep{ethayarajh2024kto}, IPO \citep{azar2024general}, SLiC \citep{zhao2023slic}, SimPO \citep{meng2024simpo}, etc., also listed in \cref{fig:chap1_rl_llm} in \cref{sec:intro}.
However, similar to the original DPO, these variants rely on pre-collected datasets, leading to off-policy training, thus deviating from PPO’s original on-policy requirement. 
Consequently, on-policy variants such as RLAIF \citep{lee2023rlaif} and OAIF \citep{guo2024direct} have attracted increasing attention. Nonetheless, these methods introduce new challenges, such as ensuring unbiased real-time evaluations, managing additional computational costs for verification, and reconsidering whether a separate reward model might be needed.

In response to these issues, reasoning tasks have gained more attention within the research community recently.
This has been partly because reasoning represents a fundamental capability for artificial intelligence, but also because the related tasks typically offer clearly verifiable reward signals (e.g., correctness of mathematical or programming solutions).
This setting, referred to as Reinforcement Learning with Verifiable Rewards (RLVR, \cite{yue2025does}), simplifies reward design and facilitates straightforward benchmarking across methods.
While traditional PPO naturally fits into RLVR, the authors of DeepSeek-R1 demonstrated substantial improvements in reasoning performance using a simple yet effective method, Group Relative Policy Optimization (GRPO, \cite{guo2025deepseek, shao2024deepseekmath}), where the supervisory signal consists solely of correctness feedback on model-generated responses. 
Subsequently, various GRPO variants have been proposed to address limitations of the original approach.

Together, these developments have significantly enhanced the flexibility and alignment of LLMs, making them increasingly effective for diverse and sophisticated applications. 
In this thesis, we demonstrate that despite the proliferation of algorithms inspired by different theories, our ``force analysis'' enabled by learning dynamics can provide a unified framework to understand them, as their loss functions all involve gradients with respect to the log-probability of response sequences.
\textit{Consequently, we hope our framework can then provide valuable insights (possibly) on why these methods work.}

\subsection{Finetuning LLMs Comes with Instability}

While LLM finetuning is crucial, it is far from a guaranteed improvement. 
In fact, when done improperly, finetuning can introduce unexpected behaviors that degrade performance, reduce robustness, and undermine alignment.
One major concern is catastrophic forgetting \citep{mccloskey1989catastrophic, liu2024more, li2024revisiting, ouyang2022training}, where previously acquired general abilities, such as code generation, factual question answering, or basic arithmetic, diminish as the model becomes overspecialized. 
Such issues are particularly evident in SFT that do not use representative datasets or that aggressively minimize loss on narrow objectives.

Another subtle but damaging consequence is overfitting to spurious patterns in the finetuning dataset. 
For example, if preference data is biased, say, human annotators consistently rate longer or more polite responses as better, the model may learn to associate verbosity or hedging with correctness, regardless of actual answer quality \citep{saito2023verbosity}.
This behavior has been empirically observed in instruction-tuned models like Alpaca \citep{taori2023alpaca} or early RLHF models, where responses became longer but not more helpful. 
Relatedly, when reward models are trained on such data, they may inadvertently reinforce patterns that emphasize fluency or style over substance, which introduces the risk of self-bias amplification discussed by \citet{ren:iicl}.
This often leads to hallucination amplification: models confidently generate fabricated facts, papers, or statistics with seemingly high plausibility, a behavior worsened by reward hacking in preference optimization methods like PPO or DPO.

Mode collapse is another failure mode common in improperly tuned systems \citep{Holtzman2020The}.
Here, the model's responses become repetitive, template-like, or even non-grammatical.
For instance, models may repeatedly respond with boilerplate disclaimers (e.g., ``As an AI developed by OpenAI, ...'') or output nearly degenerate sequences (e.g., ``The capital of China is the capital of China is the capital ...'').
This issue has been documented in several open-source preference optimization methods (e.g., KTO, SimPO), especially when training is too aggressive or data lacks diversity.

Furthermore, a growing body of recent work has revealed several counterintuitive behaviors during LLMs' finetuning.
For instance, studies such as \citet{razin2024unintentional, deng2025effect} highlight cases of unintentional unalignment arising in both DPO and GRPO frameworks, where the model drifts away from intended human-aligned behavior despite training on the correct dataset. In addition, \citet{rafailov2024r} report a surprising phenomenon in DPO: although the model increasingly improves its ability to distinguish between preferred and rejected responses, the absolute quality of both types of responses may degrade during training.

While such behaviors may not directly impair the model’s final performance, they introduce instability and unpredictability in the model’s outputs, posing risks for real-world deployment.
We argue that these seemingly paradoxical behaviors reflect a lack of control over the underlying learning dynamics, and thus warrant a deeper, more principled explanation—an objective this thesis aims to address through the lens of our proposed framework.

In summary, the examples above highlight a crucial insight: while finetuning is a powerful tool for LLM adaptation and alignment, we still lack enough understanding of it. 
Small shifts in data composition, reward shaping, or optimization objectives can drastically alter behavior in ways that are difficult to anticipate or detect via standard metrics.
As a result, different from most of the current works on explaining or mitigating these phenomena at a dataset-level by analyzing LLMs' behavior at convergence, we instead focus on a fine-grained observation of the system.
Because we believe the unexpected behaviors are sometimes only driven by some key examples or learning steps, which is exactly what the learning dynamics focuses on.
{Consequently, we hope our framework can then provide valuable insights on why (and when) these methods might fail.}

\section{Learning Dynamics of SFT}
\label{sec:case2_02}
We start with the most fundamental form of LLM finetuning: SFT.
This serves as a foundation for our analysis, as we will later demonstrate that many more complex finetuning methods can, from a gradient perspective, be reduced to variations of SFT.
The primary distinctions among these methods often lie in (i) the origin of training samples, (ii) how the algorithm dynamically adjusts the equivalent learning rate (eLR) for different examples, and (iii) whether the eLRs are allowed to be negative, or (iv) whether eLRs are defined at a token-wise level.

Additionally, in this chapter, we adopt a notation change for consistency with conventions in the reinforcement learning and LLM finetuning literature.
Specifically, we denote the model's prediction as $\pi(\vy \mid \vx)$ instead of $p(y \mid x)$, and use bold symbols $\vx$ and $\vy$ to emphasize the multi-token nature of LLM prompts and responses.

\subsection{Theory: Tackling the Auto-regressive Problem}
We now extend our analytical framework to LLM's finetuning.
Recall our core $\mathcal{AKG}$ decomposition demonstrated in \cref{eq:sec3:akg}, which was originally formulated under the single-label prediction setting.
However, the LLM's response usually contains multiple tokens, which resemble a multi-label problem.
We hence begin by extending the framework to a multi-label setting, for example, predicting both the color and identity of a digit in the Colored-MNIST dataset.
In this setting, the predictions associated with different labels can be modeled as being computed independently, even if they share a common backbone network. 
As a result, we can compute $\Delta^t$ for each label separately and then stack them together.
Specifically, we define $\Delta^t(\cvxo)\in\mathbb{R}^{V\times L}$, where $L$ is the number of labels.
Each column of this matrix is then calculated as:
\begin{equation}
    [\underbrace{\Delta^t(\cvxo)}_{V\times L}]_l = -\eta
    [\underbrace{\mathcal{A}^t(\cvxo)}_{V\times V\times L}]_l
    \underbrace{\mathcal{K}^t(\cvxo,\cvxu)}_{V\times V}
    [\underbrace{\mathcal{G}^t(\cvxu,\cvyu)}_{V\times L}]_l.
    \label{eq:sec5_akg_multilabel}
\end{equation}
The main difference between this one and \cref{eq:sec3:akg} is the shape of these terms; $\mathcal{K}^t$ is shared since a shared backbone $h_\theta$ is assumed.

We now examine the loss function. Since the typical loss function used during SFT is the negative log-likelihood (NLL) of a given completion $\vy_u=[ y_1,\dots,y_L ] \in \mathcal{V}^L$ conditioned on the prompt $\vx_u$, then we have
\begin{align}\nonumber
    \mathcal{L}_\text{SFT}(\vx_u,\vy_u) 
    &\triangleq -\sum_{l=1}^L\log\pi(y=y_l\mid \vx_u,\vy_{<l})\\
    &= -\sum_{l=1}^L \ve_{y_l}^\top \log\pi(\vy\mid \vx_u, \vy_{<l}).
\label{eq:SFT_LOSS}
\end{align}
Note that compared with the multi-label classification problem discussed before,
where the joint distribution of all labels can be factorized as $\pi(\vy\mid \vx)=\prod_l \pi(y_l\mid \vx)$, the LLM's prediction should be $\pi(\vy\mid \vx)=\prod_l \pi(y_l\mid \vx, \vy_{<l})$, which will make the $\mathcal{AKG}$ decomposition very complicated.

Fortunately, due to the ``teacher forcing'' mechanism\footnote{This implies that in most off-policy finetuning settings, the model generates predictions conditioned on $\vy_{<l}$ as provided in the dataset, rather than on tokens generated by the model itself, which is different from inference time.} and ``causal mask'' used in most off-policy LLM finetuning implementations, we can merge this factorization into the definition of the backbone $h_\theta$ and still keep the format of \cref{eq:sec5_akg_multilabel}.
This is illustrated by \cref{fig:chap5_causal_mask}. Using $\vchi$ to denote the concatenation of $\vx$ and $\vy$, the prediction of all tokens of $\vy$ is
\[
\pi\left(\vy\mid \vchi\right)=\Softmaxcol\left(\vz\right); \quad \vz=h_\theta \left(\vchi \right).
\]

\begin{figure}[t]
\vskip -0in
    \begin{center}
    \centerline{\includegraphics[width=0.7\columnwidth,trim=0 0 0 0, clip]{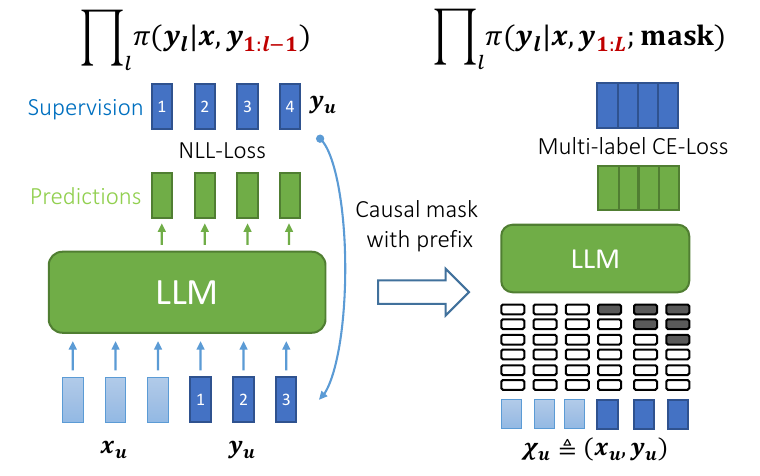}}
    \caption{The teacher forcing mechanism and the causal mask.}
    \label{fig:chap5_causal_mask}
    \end{center}
\vskip -0.1 in
\end{figure}

Here $\vz$ is a $V\times L$ matrix where each column contains the logits of the prediction of the $l$-th token.
Our $h_{\theta}$,
even though it takes the entire sequence ${\vchi}$ as its input,
will force the model not to refer to the future tokens $\vy_{\geq l}$ when making predictions on the $l$-th token,
commonly implemented via ``causal masking'' as illustrated in the figure.
Then, we can calculate $(\nabla_{\theta}\vz_l(\vchi_o)|_{\theta^t})(\nabla_{\theta}\vz_l(\vchi_u)|_{\theta^t})^\top$ on each column of $\vz$ and stack them to form a $V\times V\times M \times L$ tensor $\mathcal{K}^t(\vchi_o,\vchi_u)$.
The calculation of $\mathcal{G}^t$ and $\mathcal{A}^t$ also follows a similar procedure.
After some calculations, the resulting decomposition is almost identical to that in a multi-label classification problem.
Assume we have a response $\vy_u$ of length $L$ associated with $\vx_u$,
stacked into $\vchi_u$,
and $\vy_o$ of length $M$ associated with $\vx_o$, stacked into $\vchi_o$.
The change of the model's prediction on the $m$-th token of $\vy_o$
can be represented as (when gradients over $\vz$ have bounded norm):
\begin{equation}
    [\underbrace{
        \Delta^t(\vchi_o)
    }_{V\times M}]_m
    =
    -\sum_{l = 1}^L
    \eta
    [\underbrace{
        \mathcal{A}^t(\vchi_o)
    }_{V\times V\times M} ]_m
    [\underbrace{
        \mathcal{K}^t(\vchi_o,\vchi_u)
    }_{V\times V\times M\times L} ]_{m,l}
    [\underbrace{
        \mathcal{G}^t(\vchi_u)
    }_{V\times L} ]_l
        + \bigO(\eta^2),
\label{eq:sec5_LLM_SFT_LD}
\end{equation}
where $\mathcal{G}^t_\text{SFT}\left( \vchi_u\right)=\pi_{\theta^t}(\vy\mid\vchi_u) - \vy_u$.
The calculation of this term can be found in \cref{sec:app2:cal_G}.

Fortunately, our $\mathcal{G}^t_\text{SFT}$ is also a tensor starting from the model's prediction and pointing to the one-hot supervisions.
Then, a similar ``force analysis'' approach can still be applied.
Compared with the decomposition in \cref{sec:fundamentalLD}, the main difference is that all the input involved here is $\vchi$ rather than merely $\vx$, which allows us to answer questions like
\begin{center}
    \textit{For a question $\vx_u$, how does learning the response $\cvyu$\\ influence the model's belief about a response $\orange{\vy_u'}$?}
\end{center}

\subsection{Prepare the Probing Dataset}

Another major challenge in analyzing SFT lies in the intractably large response space, i.e., $\vy \in \mathcal{Y}$, which contains $V^L$ possible combinations.
Unlike classification tasks such as MNIST, this huge space makes it infeasible to observe the model’s confidence change across all possible responses.
As a practical compromise, we focus our analysis on a set of representative response types anchored around the reference response used during SFT, denoted by $\vy_u^+$. 
This notation, commonly adopted in the alignment literature, refers to the preferred response as annotated by humans.

\begin{figure}[t]
    \begin{center}
    \centerline{\includegraphics[width=1\columnwidth,trim=0 20 0 20, clip]{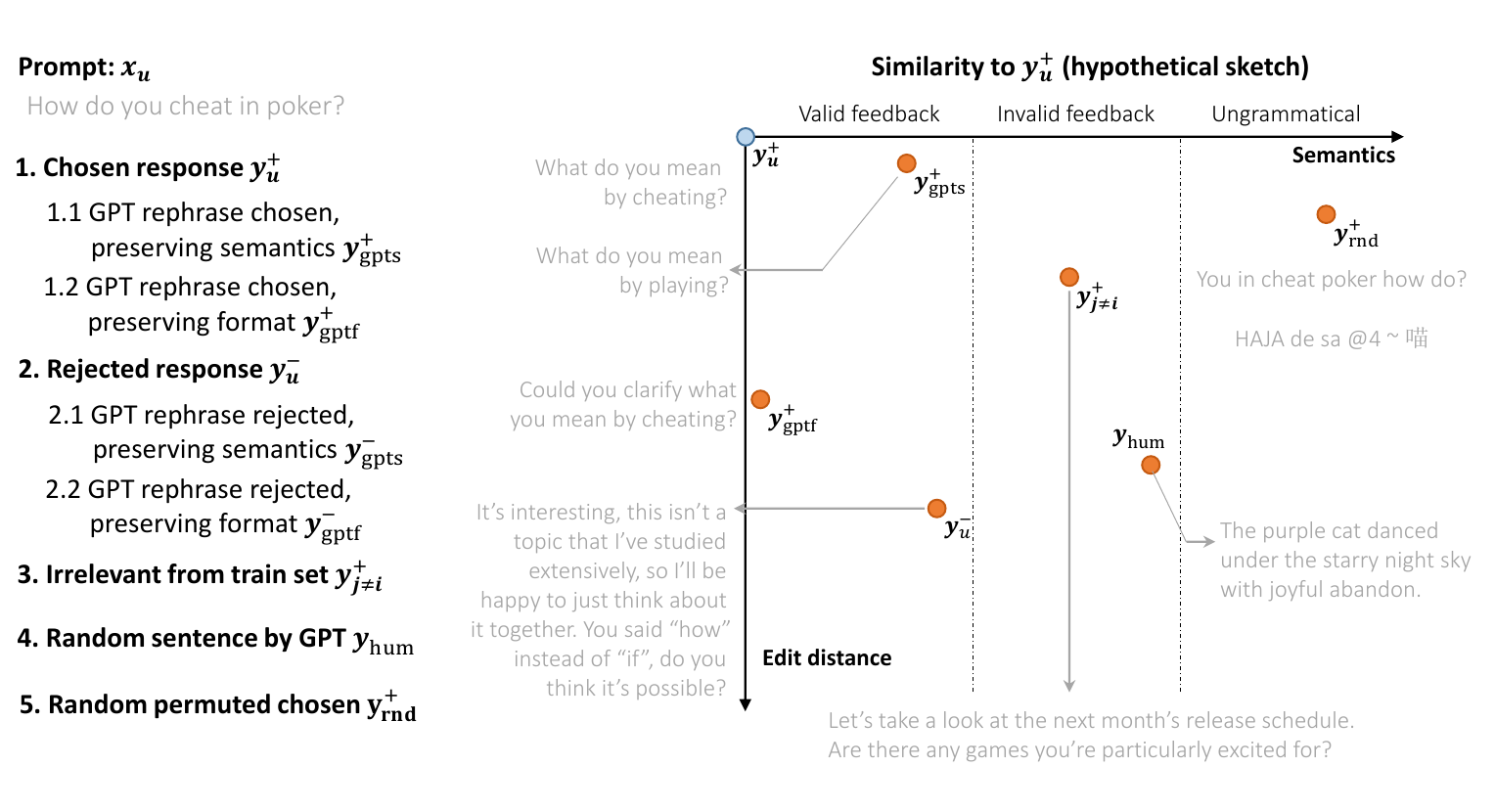}}
    \caption{The 2-D plane for different probing responses.}
    \label{fig:chap5_probing_responses}
    \end{center}
\vskip -0in
\end{figure}

To select related responses for analysis, we adopt several heuristic criteria, such as the semantic suitability of a candidate response $\vy$ to the prompt $\vx_u$ in comparison with $\vy_u^+$, or its structural similarity (e.g., measured via edit distance). 
Inspired by the structure of standard preference optimization datasets like \texttt{Anthropic-HH} \citep{bai2022training} and \texttt{UltraFeedback} \citep{cui2023ultrafeedback}, we partition $\mathcal{Y}$ into three subspaces and evaluate a curated set of response types, as summarized in \Cref{fig:chap5_probing_responses}.
The prompt templates used to generate these responses and the concrete examples of all 14 response types are provided in \cref{sec:app4_chap5}.

In the next few subsections, we will do the force analysis and verify our theory in practical settings.
We first create the training set $\mathcal{D}_\text{train}$ by randomly selecting 5000 examples from the training split of the dataset.
We consider two common datasets,
\texttt{Antropic-HH} and \texttt{UltraFeedback}, in all experiments.
Each example in $\mathcal{D}_\text{train}$ contains three components:
the prompt (or question) $\vx$, the preferred response $\vy^+$, and the less preferred response $\vy^-$.
SFT finetunes with $[\vx,\vy^+]$, while DPO uses all three (subscripts of $\vx$ and $\vy$ are removed for conciseness).
We repeat the experiments on two series of models:
\texttt{pythia-410M/1B/1.4B/2.8B} \citep{biderman2023pythia}
and \texttt{Qwen1.5-0.5B/1.8B} \citep{qwen}.

To get a more detailed observation of the learning dynamics,
we further create a probing dataset $\mathcal{D}_\text{prob}$ by randomly selecting 500 examples from $\mathcal{D}_\text{train}$,
and generate several typical responses based on the corresponding $\vx$, $\vy^+$, or $\vy^-$, as demonstrated in \cref{fig:chap5_probing_responses}.
Then for each $\vx$ in $\mathcal{D}_\text{prob}$,
we can observe how $\log\pi_{\theta^t}(\vy\mid\vchi)$ gradually changes on different types of $\vy$.
In short, $\mathcal{D}_\text{prob}$ helps us to get a more fine-grained inspection of the learning dynamics, which can not only support our analysis above,
but also shed more light on how the model's prediction evolves on the entire $\mathcal{Y}\in\mathbb{R}^{V\times L}$, a very sparse and huge space.
Unless otherwise specified, all curves presented in the following sections are averaged over 500 probing examples.

For brevity, we only showcase some core experiments using one model in this thesis.
Note that we empirically find that the trends on different models are quite consistent in models ranging from 410M to 3B.
Please refer to the appendix of our paper for more details.

\subsection{Explanation: Force Analysis on Different Responses}

We can now conduct a force analysis on LLM's prediction during SFT.
Let's first recap two building blocks of such an analysis:
\begin{itemize}
    \item[1.] Recall the MNIST example, where learning $[\vx_u,\vy_u^+]$ will ``pull up'' those similar responses, and the strength is controlled by $\|\mathcal{K}^t(\vchi_o,\vchi_u)\|_F$;
    \item[2.] Recall the zigzag example, where $\mathcal{G}^t(\vchi_u)$ dynamically changes, shaping how model confidence shifts in different stages of training.
\end{itemize}
Combining these insights with the heuristics for measuring response similarity (as demonstrated in \Cref{fig:chap5_probing_responses}), we expect that learning from $\vy_u^+$ will most strongly influence valid responses, while exerting little to no effect on ungrammatical ones.

\begin{figure}[t]
    \begin{center}
    \centerline{\includegraphics[width=0.95\columnwidth,trim=0 0 0 0, clip]{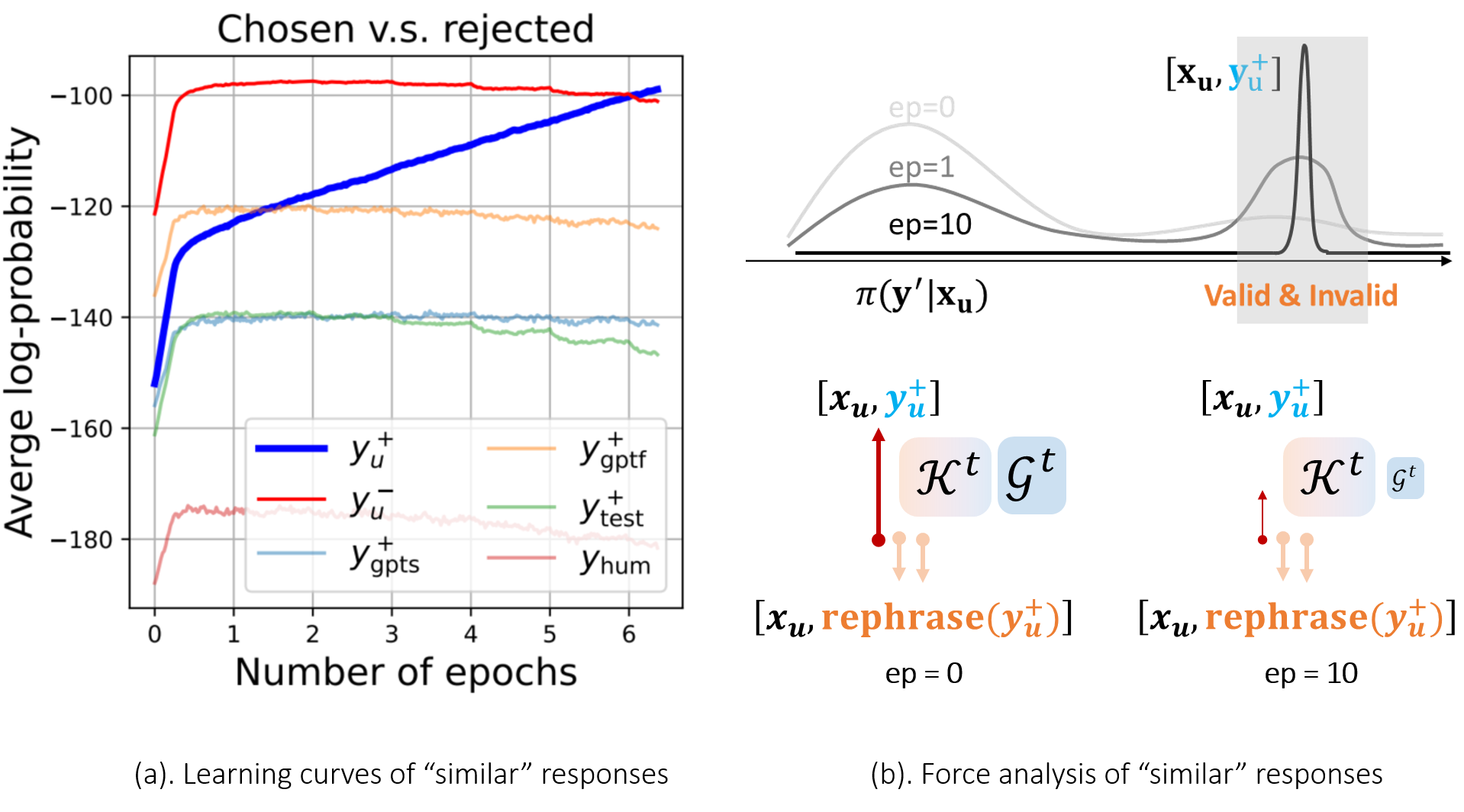}}
    \caption{Left: the learning curves of $\vy_u^+$ and other related responses; right: the force analysis of them.}
    \label{fig:chap5_exp_sft_01}
    \end{center}
\vskip -0in
\end{figure}

See \Cref{fig:chap5_exp_sft_01}, where we train an LLM using the pair $\vchi_u = [\vx_u, \vy_u^+]$ for several epochs. 
As expected, the model’s log-probability $\log \pi^t(\vy \mid \vchi_u)$ steadily increases for $\vy = \vy_u^+$ throughout training. 
Interestingly, responses in both the valid-feedback and invalid-feedback groups exhibit a non-monotonic, quadratic trend: their confidence initially increases in the early stages of SFT but later decreases as training progresses.

This phenomenon can be naturally interpreted through the lens of force analysis applied to $\vy_u'$.
As illustrated in \Cref{fig:chap5_exp_sft_01}-(b), the rephrases of $\vy_u^+$ are subject to two competing forces. 
On one hand, they experience a pull-up pressure due to their similarity to $\vy_u^+$, which is similar to the influence between ``4'' and ``9''. 
On the other hand, due to the normalization constraint over the response space (i.e., the total probability of each token must sum to one), the increase in confidence for $\vy_u^+$ inherently imposes a global downward pressure on other responses.

As training continues, the model's confidence in $\vy_u^+$ begins to saturate, causing the ``energy'' supplied by $\mathcal{G}^t(\vchi_u)$ to diminish, similar to how $\mathcal{G}^t$ changes in the ``zig-zag'' example. 
When this diminishing pulling force is no longer sufficient to counteract the global pushing-down effect, the confidence in similar but non-identical responses begins to decline.

In short, if we conceptualize the response space as a continuous distribution, we might have the upper panel of \Cref{fig:chap5_exp_sft_01}-(b), where the nearby responses are first dragged up and then pushed down.

\begin{figure}[t]
    \begin{center}
    \centerline{\includegraphics[width=0.95\columnwidth,trim=0 0 0 0, clip]{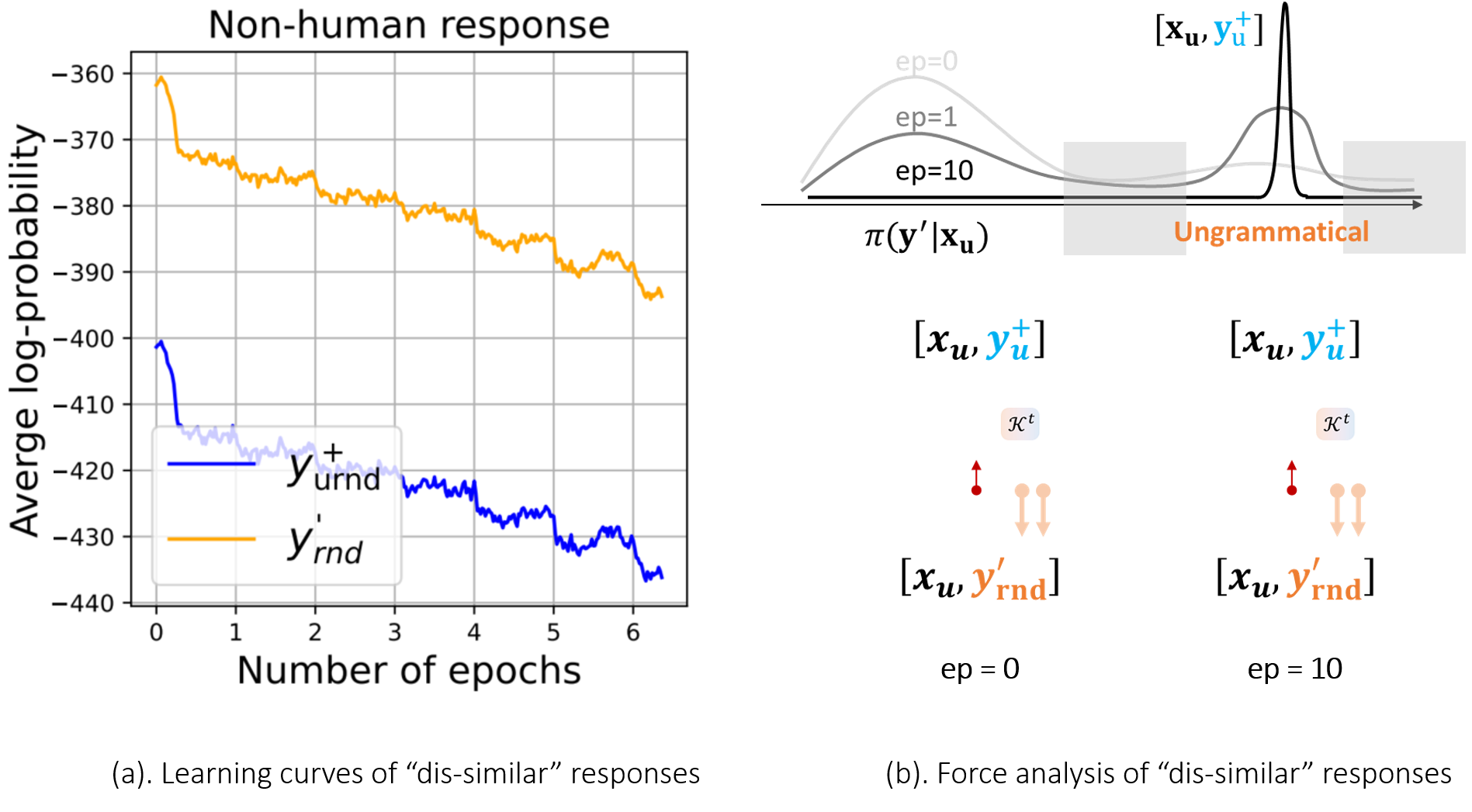}}
    \caption{Learning curves and force analysis on ungrammatical responses.}
    \label{fig:chap5_exp_sft_02}
    \end{center}
\vskip -0in
\end{figure}

To further validate the existence of a global pushing-down pressure, we conduct an ablation study by examining the confidence changes of ungrammatical responses. Comparing the upper panels of \Cref{fig:chap5_exp_sft_02}-(b) and \cref{fig:chap5_exp_sft_01}-(b), we observe that responses such as $\vy_{\text{rnd}}$ and $\vy_{\text{urnd}}^+$ are located farther from $\vy_u^+$ in the response space.
This suggests that their corresponding similarity terms $\|\mathcal{K}^t\|_F$ are smaller, indicating weaker coupling to the learned signal from $\vy_u^+$.

In such cases, the pulling-up force becomes negligible, and the pushing-down pressure, arising from the normalization constraint over the response distribution, dominates. 
As a result, the log-probabilities of these ungrammatical responses steadily decrease during training, a trend clearly supported by the empirical curves shown in \Cref{fig:chap5_exp_sft_02}-(a).

The two experiments above highlight the distinct behaviors of grammatical versus ungrammatical response groups. 
To further validate the correctness of the force analysis framework, we present additional, more nuanced experiments in Appendix D of~\citet{ren2025learning_dynamics_LLM}. 
Due to space constraints, we highlight one particularly illustrative example here, which compares $\vy_\text{test}^+$ (a preferred response selected from the test set) with $\vy_\text{hum}$ (a randomly generated English sentence from \texttt{ChatGPT}).

\begin{figure}[t]
    \begin{center}
    \includegraphics[width=0.95\columnwidth]{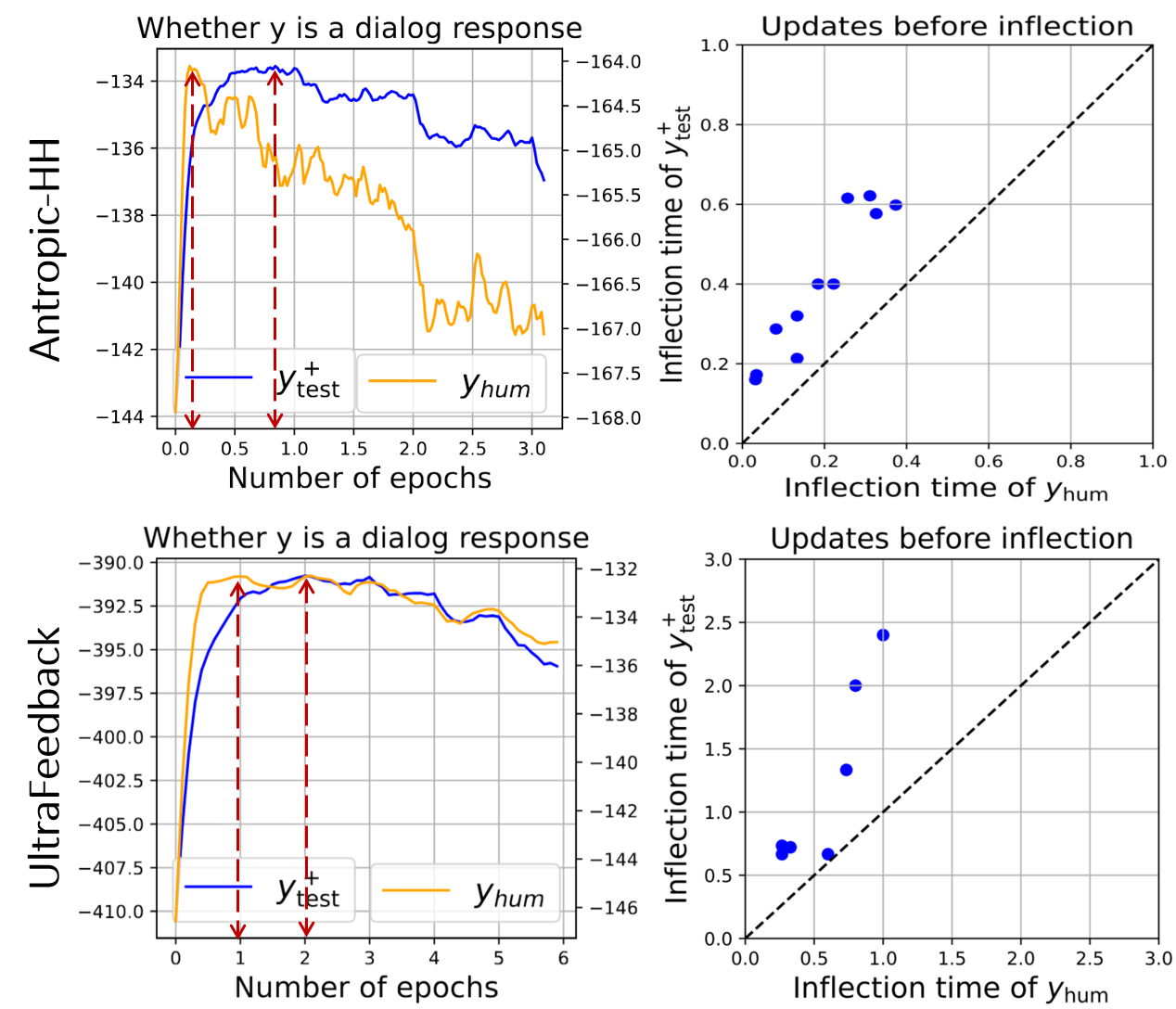}
    \caption{Comparison of the inflection point of different response groups (calculated using \texttt{SciPy} on the smoothed curves). Different points in the scatter plot represent the results of different models.}
    \label{fig:chap5_exp_sft_03}
    \end{center}
\end{figure}

Intuitively, $\vy_\text{test}^+$ should be more similar to $\vy_u^+$ than $\vy_\text{hum}$, as they are likely drawn from similar distributions (e.g., both are helpful and in a positive tone). 
Empirically, we also observe that $\vy_\text{test}^+$ typically consists of responses to prompts (e.g., answers to questions or replies in dialogue), while $\vy_\text{hum}$ tends to include more generic, descriptive statements.

The results in \Cref{fig:chap5_exp_sft_03} support this intuition. 
In the right column of the figure, we compare the average inflection point\footnote{As demonstrated by the red dotted lines in \cref{fig:chap5_exp_sft_03}, which measures when the confidence start to decrease consistently.}, i.e., the epoch at which the log-probability curves begin to decay, for both response types. 
In the scatter plot, points above the diagonal indicate cases where $\vy_\text{hum}$ decays earlier than $\vy_\text{test}^+$. 
Since the global pushing-down pressure is expected to affect both groups similarly, this difference in decay timing can be attributed to the pull-up force. 
Because $\vy_\text{test}^+$ is more similar to $\vy_u^+$, it receives a stronger pull-up signal, thereby delaying its decline in confidence.

\subsection{Explanation: Why a Specific Hallucination Emerges}

\begin{figure}[t]
    \begin{center}
    \centerline{\includegraphics[width=0.95\columnwidth,trim=0 10 0 0, clip]{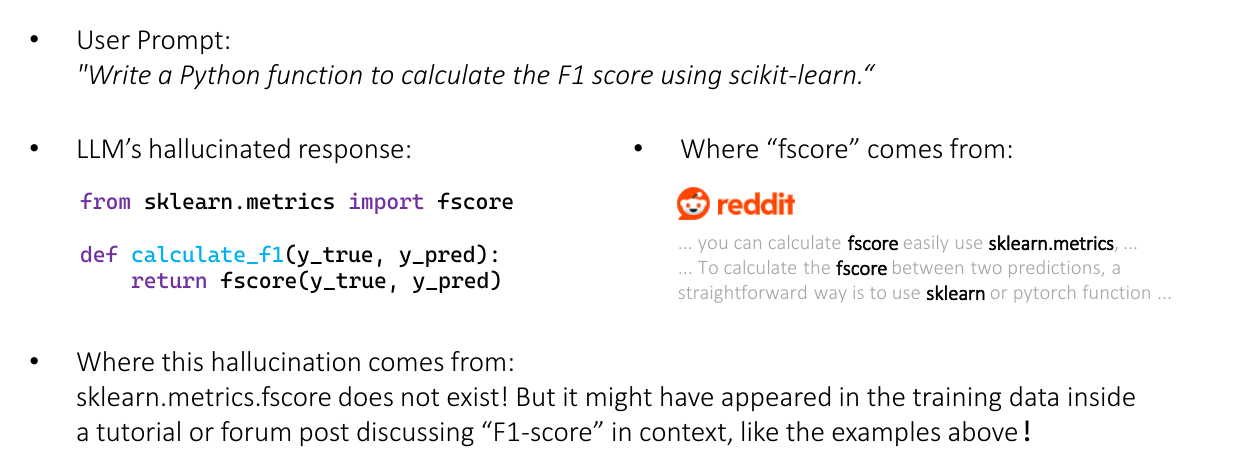}}
    \caption{Example of a type-B hallucination in \cite{ravichander2025halogen}.}
    \label{fig:chap5_typeB_hallucination}
    \end{center}
\vskip -0in
\end{figure}

The experiments above primarily serve to validate the correctness of our force analysis framework in the context of LLM finetuning.
Some of the observed behaviors also help address long-standing concerns, such as “why SFT alone is insufficient for alignment”~\citep{qi2024safety}, by illustrating how the model tends to pull up a wide range of similar responses early in training, regardless of their subtle alignment properties.

In this subsection, we extend our analysis to offer a hypothetical explanation for a more widely discussed issue: why SFT may exacerbate hallucination. 
Notably, the perspective we propose is consistent with findings from recent empirical studies on hallucination, such as~\cite{ravichander2025halogen}.
Specifically, \citet{ravichander2025halogen} categorize hallucinations into three distinct types based on their underlying causes.
Type-B hallucinations are when

\begin{centering}
    \textit{An incorrect fact was in the pretraining data, or the fact is taken out of the context, i.e., the fact appeared within a specific setting in a document in the training data, but when taken in isolation, it loses its original meaning.}
\end{centering}

\Cref{fig:chap5_typeB_hallucination} provides a concrete example for this type of hallucination.
Intuitively, this behavior can also be interpreted as the model reusing phrases or responses that were appropriate in one context but incorrectly applying them in a different context.
We offer a hypothetical explanation for this phenomenon by examining combinatory sequences of the form $[\vx_u, \vy_{j \neq u}^+]$, which can be interpreted as the model using the answer, or specific phrases from the answer, to question B in response to question A.
In principle, the model's confidence in such a mismatched response should not increase, as it lacks semantic coherence with the given input. 
However, as shown in \cref{fig:chap5_exp_sft_04}, the confidence in $\vy_{j \neq u}^+$ continues to increase throughout training. 
Although the rate of increase is slower than that of $\vy_u^+$, it notably does not exhibit the ``first up, then down'' pattern observed in responses like $\vy_\text{test}^+$ or other rephrasings.

\begin{figure}[t]
    \begin{center}
    \centerline{\includegraphics[width=0.95\columnwidth,trim=0 0 0 0, clip]{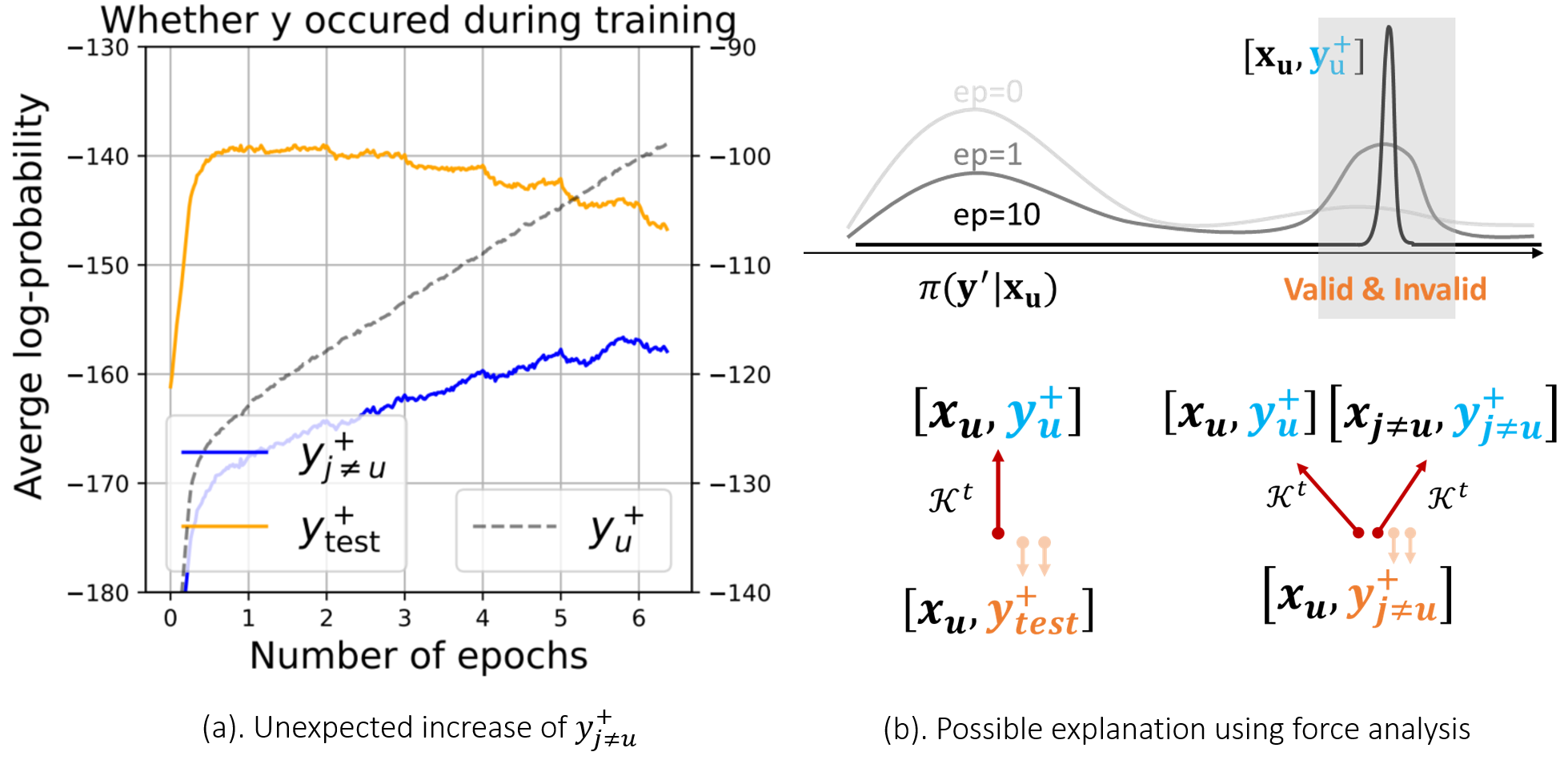}}
    \caption{Learning curves of $\vy_u^+$,$\vy_\text{test}^+$ and $\vy_{j\neq u}^+$ and their force analysis. We visualize them together because they are all ``preferred response.''}
    \label{fig:chap5_exp_sft_04}
    \end{center}
\vskip -0in
\end{figure}

This behavior can be explained by the fact that, unlike $\vy_\text{test}^+$, which is influenced only by the learning of $[\vx_u, \vy_u^+]$, the confidence in $\vy_{j \neq u}^+$ is affected by two upward forces: one is caused by its similarity to $[\vx_u,\vy_u^+]$, and another comes from learning its own pairing, i.e., $[\vx_{j \neq u},\vy_{j\neq u}^+]$.
The latter effect becomes even more pronounced when $\vx_u$ and $\vx_{j\neq u}$ are semantically similar, such as when both prompts pertain to a similar topic: a pattern commonly observed in type-B hallucinations (see \Cref{fig:chap5_typeB_hallucination}).

In summary, the experiments presented in this subsection offer a hypothetical explanation for one specific type of hallucination. 
Other forms of hallucination may arise from different underlying mechanisms, which are beyond the scope of this analysis.
As for practical strategies to mitigate such issues, we do not offer definitive conclusions at this stage.

Nonetheless, our force analysis indicates that the semantic similarity between prompts (i.e., $\vx$) plays a significant role in the emergence of hallucinations. 
This suggests that carefully crafting more detailed or explicit prompts could potentially reduce such undesired behaviors. 
For instance, in the case of the ``fscore'' example, explicitly distinguishing between executable code and casual discussion within the prompt may reduce such ``spurious correlation'' and hence reduce hallucination.

\subsection{Summary of SFT Part}

In this section, we begin by extending our $\mathcal{AKG}$ decomposition to the SFT of LLMs. 
Leveraging commonly used mechanisms such as teacher forcing and causal masking, we derive an analogous decomposition, enabling the application of a similar force analysis framework.

To address the challenge posed by the vast response space, we focus our analysis on specific responses selected using heuristic criteria. 
Based on this setup, we conduct a series of experiments across different settings to validate the correctness of our analytical framework. 
The experimental results not only align closely with our theoretical predictions but also provide insights into widely discussed questions, such as why SFT alone may be insufficient for alignment and why prolonged SFT can exacerbate hallucination.

Finally, returning to the four properties introduced at the beginning of \Cref{sec:case2_02}, and to facilitate comparison between SFT and other finetuning methods, we summarize the gradient form of SFT as follows:
\begin{itemize}
    \item[i.] Origin of the sample: static, off-policy;
    \item[ii.] Dynamic eLR: No response-level design, only epoch-wise scheduler;
    \item[iii.] Negative eLR: No;
    \item[iv.] Token-level eLR: No.
\end{itemize}

\section{Method Involving Gradient Ascent: DPO}
\label{sec:case2_03}
Preference finetuning, which teaches the model to provide responses that align better with human preferences, is also an important phase in LLM finetuning pipelines.
Different from the SFT stage above, where the training tells the model ``what to do'', many preference finetuning methods also teach the model ``what not to do,'' which makes the learning dynamics more complex.
Building upon our analysis of SFT, we now extend our framework to Direct Preference Optimization (DPO), an effective approach for preference tuning.
As introduced at \Cref{sec:case2_01}, DPO was originally proposed as a lightweight alternative to reinforcement learning-based finetuning, aiming to eliminate the need for training a separate reward model.

Due to its simplicity and efficiency, the off-policy variant of DPO, where the model is finetuned on a pre-collected dataset, has become increasingly popular in practice.
In what follows, we demonstrate how this off-policy DPO setup can be reduced to a form analogous to SFT and how our force analysis framework can be naturally applied.
The key differences between off-policy DPO and SFT can be summarized along the following dimensions:
\begin{itemize}
    \item[i.] Origin of the sample: static, off-policy (on-policy version exists);
    \item[ii.] Dynamic eLR: \red{response-level, adaptive;}
    \item[iii.] Negative eLR: \red{Yes;}
    \item[iv.] Token-level eLR: No.
\end{itemize}

\subsection{Theory: Unified Framework on Different Methods}
We first recap the loss function of DPO, following \citet{rafailov2024direct}:
\begin{equation}
    \mathcal{L}_\text{DPO}(\theta) %
     = - \mathop{\mathbb{E}}_{(\vx_u,\vy_u^+,\vy_u^-)\sim\mathcal{D}}\left[ \log 
        \sigma\left(\beta \log \frac{\pi_{\theta^t}({\vy_u^+} \mid \vchi_u^+)}{\pi_\text{ref}({\vy_u^+} \mid \vchi_u^+)} - 
                    \beta \log \frac{\pi_{\theta^t}({\vy_u^-} \mid \vchi_u^-)}{\pi_\text{ref}({\vy_u^-} \mid \vchi_u^-)}\right) \right],
    \label{eq:dpo_loss}
\end{equation}
where $\sigma(\cdot)$ represent a $\mathsf{Sigmoid}$ function.\footnote{Since $\mathsf{Softmax}$ function is inherently a generalization of $\mathsf{Sigmoid}$ function, we use the same notation for them.}
$\beta$ is usually a positive number, controlling the relative scaling of the hidden reward function \citep{rafailov2024direct}.
${\vy_u^+}$ and ${\vy_u^-}$ are pre-generated responses, and $\pi_\text{ref}$ is the reference model, typically the result of SFT.
$\vchi_u^+=[\vx_u,\vy_u^+]$ and $\vchi_u^-=[\vx_u,\vy_u^-]$, which means they share the same $\vx$ but different $\vy$.
In most cases, the $\vchi_o$ discussed in this thesis shares the same input $\vx$ as the corresponding $\vchi_u$.

Note that in DPO, both $\vy_u^+$ and $\vy_u^-$ generate gradients for the model's update; we hence need to recalculate the $\mathcal{AKG}$ decomposition.
Specifically, we define $\pi_{\theta^t}(\vy_u^+ \mid \vchi_u^+)=\Softmaxcol(\vz^+)$ and $\pi_{\theta^t}(\vy_u^- \mid \vchi_u^-)=\Softmaxcol(\vz^-)$,
where $\vz^+=h_{\theta}(\vchi_u^+)$ and $\vz^-=h_{\theta}(\vchi_u^-)$ respectively.
Then, starting from $M=L=1$, the decomposition for the DPO loss (similar to \cref{eq:sec5_LLM_SFT_LD} for SFT) could be written as:
\begin{align}\nonumber
    \underbrace{
        \Delta^t(\cvchio)
    }_{V \times 1}
    &=  \underbrace{
        \nabla_\theta \log \pi^t(\cvchio)|_{\theta^t}
    }_{V \times d}
    \underbrace{(\theta^{t+1}-\theta^t)}_{d \times 1} + O(\|\theta^{t+1}-\theta^t\|^2)
\\\nonumber    
    &\approx
    \big(
        \underbrace{
            \nabla_{\vz} \log \pi^t(\cvchio)|_{\vz^t}
        }_{V \times V}
        \underbrace{
            \nabla_{\theta}\vz^t(\cvchio)|_{\theta^t}
        }_{V \times d}
        \big)
        \big(-\eta
        \underbrace{
        \nabla_{\theta} \mathcal{L} (\vx_u,\cyan{\vy_u^+,\vy_u^-})|_{\theta^t}
        }_{1 \times d}
        \big)\tp
\\\nonumber
    &=
    \underbrace{
        \nabla_{\vz} \log \pi^t(\cvchio)|_{\vz^t}
    }_{V \times V}
    \underbrace{
        \nabla_{\theta}\vz^t(\cvchio)|_{\theta^t}
    }_{V \times d}
    \big(
    \underbrace{
        -\eta\nabla_{[\vz^+;\vz^-]} \mathcal{L}|_{\vz^t}
    }_{1 \times 2V}
    \underbrace{
        \left[ \nabla_{\theta}\vz^+(\cyan{\vchi_u^+}); \nabla_{\theta}\vz^-(\cyan{\vchi_u^-}) \right]|_{\theta^t}
    }_{2V \times d}
    \big)\tp
\\\nonumber
    &= -\eta
      \underbrace{\nabla_{\vz} \log \pi^t(\cvxo)|_{\vz^t}}_{V \times V}
      \Big[
        \underbrace{\nabla_{\theta}\vz^t(\cvxo)|_{\theta^t}}_{V \times d}
        \underbrace{\left( \left[ \nabla_{\theta}\vz^+(\cyan{\vchi_u^+}); \nabla_{\theta}\vz^-(\cyan{\vchi_u^-}) \right]|_{\theta^t} \right)\tp}_{d \times 2V}
      \Big]
      \underbrace{
      \big(
        \nabla_{[\vz^+;\vz^-]} \mathcal{L}|_{\vz^t}
      \big)\tp
      }_{2V \times 1}
    \\\nonumber
    &= -\eta \mathcal{A}^t(\cvchio) 
\big[\mathcal{K}^t(\cvchio,\cyan{\vchi_u^+}); \mathcal{K}^t(\cvchio,\cyan{\vchi_u^-})\big]\big(\nabla_{[\vz^+;\vz^-]} \mathcal{L}|_{\vz^t}\big)\tp
    \\
    &\triangleq -\eta \mathcal{A}^t(\cvchio) \left( 
        \mathcal{K}^t(\cvchio,\cyan{\vchi_u^+})\mathcal{G}^t_\text{DPO+}(\cyan{\vchi_u^+}) - \mathcal{K}^t(\cvchio,\cyan{\vchi_u^-})\mathcal{G}^t_\text{DPO-}(\cyan{\vchi_u^-}) 
    \right)\label{eq:chap5_dpo_akg}
\end{align}
where $[\cdot;\cdot]$ are concatenation of two vectors or matrices,
$\mathcal{G}^t_\text{DPO+}({\vchi_u^+})\triangleq\nabla_{\vz^+}\mathcal{L}_\text{DPO}$, and
$\mathcal{G}^t_\text{DPO-}({\vchi_u^-})\triangleq\nabla_{\vz^-}\mathcal{L}_\text{DPO}$.
To calculate the residual terms, we decompose the loss:
\begin{align}\nonumber
    \mathcal{L}_\text{DPO}(\theta)
        &= -\log (a) \\\nonumber
      a &\triangleq \sigma (b) \\\nonumber
      b &\triangleq \beta \left( \log \pi_{\theta^t}(\vy_{u}^+\mid\vchi_u^+) - \log \pi_{\theta^t}(\vy_{u}^-\mid\vchi_u^-) \right) - c \\\nonumber
        &= -\beta \left( \mathcal{L}_\text{SFT}(\vchi_u^+) - \mathcal{L}_\text{SFT}(\vchi_u^-) \right) - c\\
      c &\triangleq \beta \left( \log \pi_\text{ref}(\vy_{u}^+\mid\vchi_u^+) - \log \pi_\text{ref}(\vy_{u}^-\mid\vchi_u^-) \right),
\end{align}
where $c$ is not a function of $\theta$.
Using the chain rule, the $l$-th column of the residual term $\mathcal{G}^t_\text{DPO+}$ can be calculated as (the calculate of $\mathcal{G}^t_\text{DPO-}$ is similar)
\begin{align}\nonumber
    \mathcal{G}_\text{DPO+}^t &= \frac{\partial \mathcal{L}_\text{DPO}}{\partial a} 
                                \frac{\partial a}{\partial b}  
                                \nabla_{\vz^+}b|_{\vz^t}\\\nonumber
        &= -\frac{1}{a} a(1-a) \nabla_{\vz^+}b|_{\vz^+}\\\nonumber %
        &= \beta(1-a) \left( \pi_{\theta^t}(\vy_{u}^+\mid\vchi_u^+) - \vy_{u}^+ \right).
\end{align}
By stacking values with different $l$, we can get the residual term of DPO as
\begin{align}
\mathcal{G}^t_\text{DPO+}
    &=
    \beta(1-a)\left( \pi_{\theta^t}({\vy}\mid\vchi_u^+) - {\vy_u^+} \right)\nonumber\\
    \mathcal{G}^t_\text{DPO-}
    &=
    \beta(1-a)\left( \pi_{\theta^t}({\vy}\mid\vchi_u^-) - {\vy_u^-} \right)\nonumber\\
a   &= \sigma\left(\beta\log\frac{\pi_{\theta^t}(\vy_{u}^+\mid\vchi_u^+)}{\pi_{\theta^t}(\vy_{u}^-\mid\vchi_u^-)} -
                    \beta\log\frac{\pi_\text{ref}(\vy_{u}^+\mid\vchi_u^+)}{\pi_\text{ref}(\vy_{u}^-\mid\vchi_u^-)}
    \right)
    \label{eq:app_dpo_g}
\end{align}

Combining \cref{eq:chap5_dpo_akg} (the $\mathcal{AKG}$ decomposition) and \cref{eq:app_dpo_g} (the $\mathcal{G}^t$ terms), we can get the one-step influence function on the DPO case.
We omit the full expression here due to the length of the equation.
It is quite similar to the SFT case, except that DPO has a $\mathcal{A}(\sum\mathcal{K}^+\mathcal{G}^+ -\sum\mathcal{K}^-\mathcal{G}^-)$ form.
Compared with that for SFT, two main differences should be noted, as concluded at the beginning of \cref{sec:case2_03}.
First, DPO has a response-level dynamic eLR design.
Compare with the SFT case where we have $\mathcal{G}^t_\text{SFT}=\pi^t-\vy_u^+$, the $\mathcal{G}^t$ terms for DPO all contain a scalar in front of the vectors, i.e., $\beta(1-a)$, which is what we call \textit{response-level dynamic eLR}.
Specifically, the $b$ term is usually named as the ``margin'' in DPO literature, which represents how well the current policy separates $\vy_{u}^+$ and $\vy_{u}^-$ compared with the reference policy.
Due to the monotonicity of $\sigma(\cdot)$,
a larger margin leads to larger $a=\sigma(b)$,
which in turn restrains the strength of $\mathcal{G}^t_\text{DPO+/-}$.
In other words,
$\mathcal{G}^t_\text{DPO+/-}$ automatically provides less energy on the examples that are already well separated.
We then check the role of $\beta$,
which controls the regularizing effect on the KL distance between $\pi_{\theta^t}$ and $\pi_\text{ref}$ in the original RL loss \citep{rafailov2024direct}.
When the margin is negative,
larger $\beta$ leads to a smaller $a$ and hence provides stronger $\mathcal{G}^t_\text{DPO+/-}$ for the model to ``catch up'' the separating ability of the reference model faster.
But when the model is better and has a positive margin,
increasing $\beta$ will increase $a$ and hence create a negative influence on $\beta(1-a)$,
which makes the model update less.
This aligns well with the claims of \citet{rafailov2024direct}:
the stronger regularizing effect tends to ``drag $\pi_{\theta}$ back towards $\pi_\text{ref}$'' when its predictions deviate from $\pi_\text{ref}$ too much.

Another key distinction is the use of intentional gradient ascent, or the so-called negative gradient. 
This arises naturally in alignment-focused tasks, where the objective is to train the model to distinguish preferred responses from rejected ones.
A straightforward strategy, in this case, is to increase the model's confidence in preferred responses while decreasing it for rejected ones. 
A visual comparison illustrating the main differences between SFT and DPO in the context of learning dynamics is provided in \Cref{fig:chap5_sft_vs_dpo}.

In fact, many DPO variants, such as IPO, KTO, SLiC, and SimPO, can be analyzed in a similar fashion, as their loss function decompositions closely resemble that of DPO. 
For further details, we refer readers to Appendix B.2 of our paper~\citep{ren2025learning_dynamics_LLM}.
We will later show that understanding the role played by the negative gradient is the key to understanding some unexpected behaviors of these algorithms.

\begin{figure}[t]
    \begin{center}
    \centerline{\includegraphics[width=0.95\columnwidth,trim=0 0 0 0, clip]{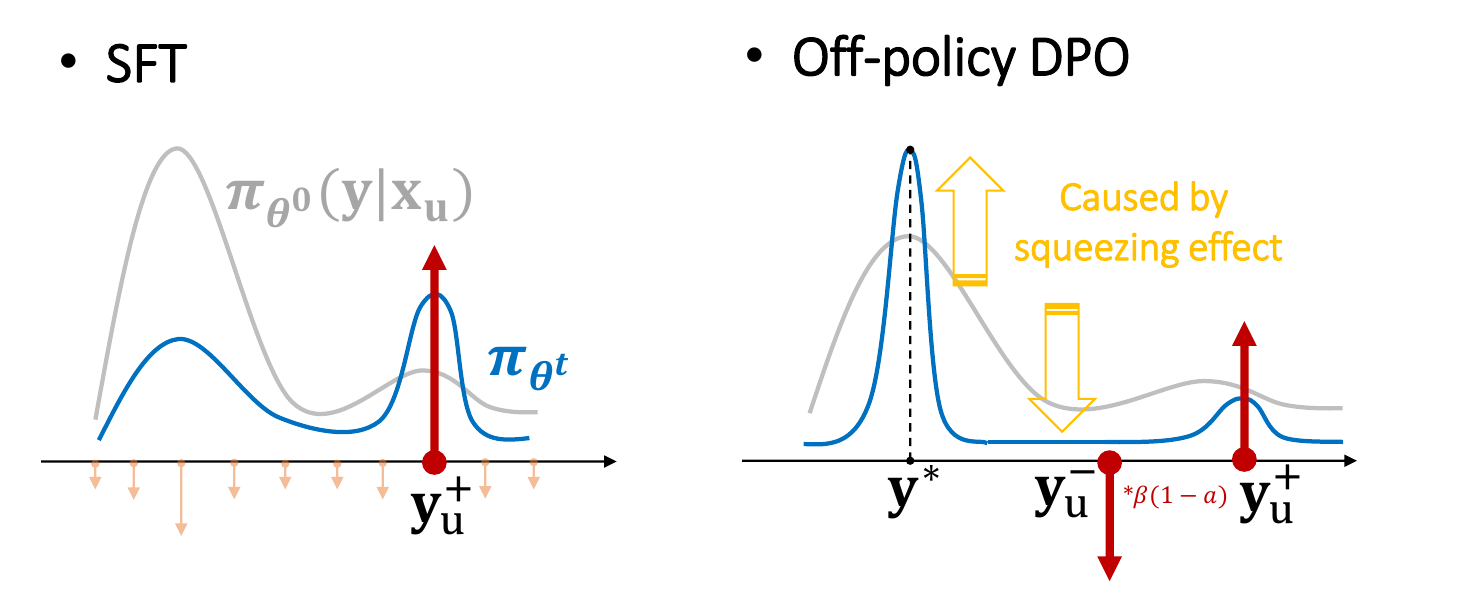}}
    \caption{Force analysis for SFT and DPO.}
    \label{fig:chap5_sft_vs_dpo}
    \end{center}
\vskip -0in
\end{figure}

\subsection{Theory: Gradient Ascent and Squeezing Effect}

We now turn to the learning dynamics of gradient ascent. 
Negative gradients are in fact ubiquitous in LLM finetuning, appearing in diverse contexts such as machine unlearning~\citep{barez2025open}, DPO and its variants, and even in RL-based finetuning methods like GRPO. 
Actually, whenever a model is required to penalize, suppress, or forget certain outputs, gradient ascent becomes a natural, off-the-shelf mechanism.

However, unlike gradient descent, which targets the minimization of a well-defined loss function and benefits from established convergence guarantees, many gradient ascent methods involve simply reversing the sign of the equivalent learning rate (eLR).
Despite its widespread use, this reversal often 
has undesirable effects,
especially in complex systems like LLMs.

In the following subsections, we will highlight several counterintuitive phenomena that arise under gradient ascent. 
These findings help explain certain unexpected behaviors observed during off-policy DPO and offer deeper insight into the implications of applying negative gradients in other systems.

We start from the one-step influence of gradient ascent, using a common Unconstrained Feature Model (UFM) applied in many LLM theory papers \citep[e.g.,][]{razin2024unintentional, zhao2024implicit,thrampoulidis2024implicit}.
This model simplifies the backbone network $h_\theta(\vx)$ to a trainable feature vector $ \vh_{\vx}$, focusing on the dynamics of the last read-out layer $\vw$ and predictions.
Specifically, consider a simple $V$-class logistic regression problem where each high-dimensional input data $\vx$ is converted to a length-$d$ feature vector via a deep neural network.
In other words,
we have $ \vh_{\vx}\in\mathbb{R}^{d\times 1}$.
The model uses a linear read-out layer $\vw\in\mathbb{R}^{V\times d}$ to convert the feature vector to logits $\vz=\vw \vh_{\vx}$ and then generate the probability prediction vector $\vp$ using a $\Softmax$ head $\sigma(\cdot)$.
We consider a common cross-entropy loss function for each input pair $(\vx,y)$.
In summary, we have:
\begin{equation}
    \mathcal{L}_{ce}(\vp^t, y)=-\ve_y^\top\log \vp^t;\quad \vp^t = \sigma(\vz^t);\quad \vz^t=\vw^t \vh_{\vx},
    \label{eq:app:basic_setting}
\end{equation}
where $t$ is the index of the step during training and $\ve_y$ is a length-$V$ one-hot vector determined by the ground truth label $y$.
To simplify our analysis,
we assume a fixed $\vh$ and only update the parameters of the read-out layer $\vw$ using stochastic gradient descent:
\begin{equation}
    \vw^{t+1} = \vw^t-\eta\nabla_{\vw}\mathcal{L} = \vw^t-\eta (\vp^t-\ve_y)\vh_{\vx}^\top,
    \label{eq:app:sgd}
\end{equation}
where $\eta$ is the learning rate, which can be negative if we consider a negative gradient during training.
With \cref{eq:app:basic_setting} and (\ref{eq:app:sgd}),
we can write down each dimension of $\vp^t$ and $\vp^{t+1}$ after some calculations.
To quantitatively analyze how the model’s confidence in each class evolves during training, we define the ratio $\alpha_i\triangleq \frac{p_i^{t+1}}{p_i^t}$, which captures the relative change in probability for class $i$ after an update.
This allows us to determine whether each $p_i$ increases or decreases analytically, and by how much.

Due to space constraints, we omit the formal analysis of $\alpha_i$ and instead provide high-level insights into its behavior. 
For detailed calculations, theoretical justification, and more experimental supports, please refer to Appendix E of our paper~\citep{ren2025learning_dynamics_LLM}.
In short, considering a simple $L=1$ case, when we impose a negative gradient on label $y_u^-$, we can describe the changing of model's predictive distribution $\pi_{\theta^{t+1}}$ as follows:
\begin{itemize}
    \item \texttt{Guarantee:} the confidence of $y_u^-$, i.e., $\pi_{\theta^{t+1}}(\vy=y_u^-)$ will decrease.
    \item \texttt{Guarantee:} the decreased probability mass is largely ``squeezed'' into the output which was most confident before the update: if $y^*=\argmax_{i\in[V]\setminus \lbrace y_u^- \rbrace} \pi_{\theta^t}(\vy=i)$, then $\pi_{\theta^{t+1}}(\vy=y^*)$ will increase.
    \item \texttt{Trend:} the rich get richer and the poor get poorer: dimensions with high $\pi_{\theta^{t}}$ tend to increase, and those with low $\pi_{\theta^t}$ tend to decrease.
    \item \texttt{Trend:} peakier $\pi_{\theta^{t}}$ squeeze more. If the probability mass concentrates on few dimensions in $\pi_{\theta^{t}}$, which is common for a pretrained model, all $\pi_{\theta^{t+1}}(\vy\neq y^*)$ decrease (only $y^*$ is ``rich'').
    \item \texttt{Trend:} smaller $\pi_{\theta^{t}}(\vy=y_u^-)$ exacerbate the squeezing effect: if $y_u^-$ is more unlikely under $\pi_{\theta^{t}}$, the probability mass of all other $\pi_{\theta^{t+1}}(\vy\neq y^*)$ will be more seriously decreased, and the $\pi_{\theta^{t+1}}(\vy=y^*)$ increases more. 
\end{itemize}

\begin{figure}[t]
    \begin{center}
    \centerline{\includegraphics[width=1\columnwidth,trim=20 20 20 20, clip]{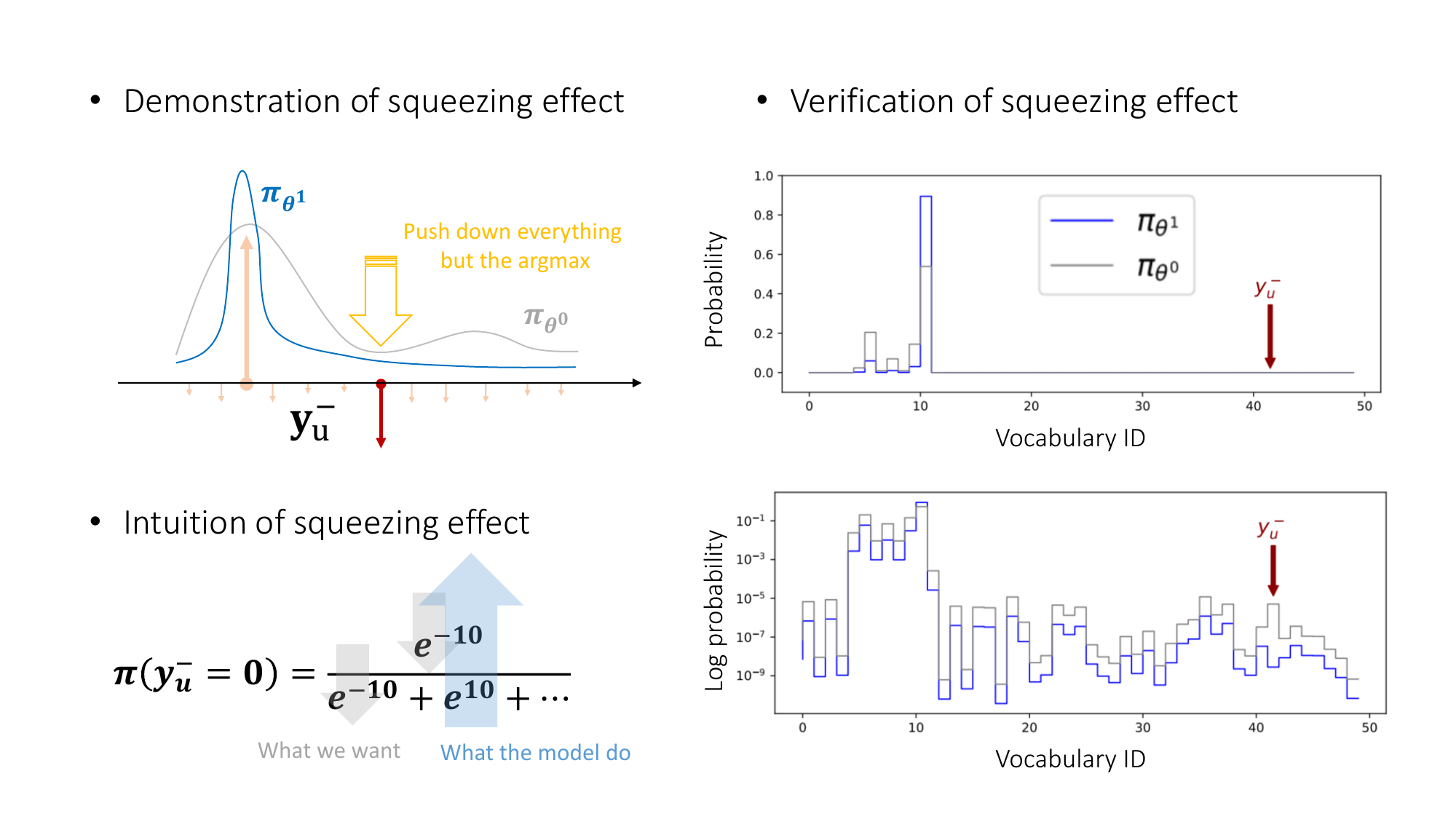}}
    \caption{Demonstration, intuition, and verification of the squeezing effect.}
    \label{fig:chap5_squeezing_effect}
    \end{center}
\vskip -0in
\end{figure}

The trend described above is referred to as the ``squeezing effect'' in our paper. 
This phenomenon captures how the model redistributes probability mass after one update: squeezing it away from less likely (poorer) dimensions and concentrating it onto more likely (richer) ones.
An intuitive illustration of this effect, along with experimental validation using a logistic regression example, is provided in \Cref{fig:chap5_squeezing_effect}.

Furthermore, the bottom-left panel of the figure gives a helpful intuition for why the squeezing effect arises.
Suppose the model aims to decrease $\pi(\vy=0)$ during an update. 
A straightforward approach would be to reduce the corresponding logit $z_0=-10$,
and indeed gradient descent will do so. 

In addition to that update, however, the model may opt to increase the logit of another dimension, thereby increasing the denominator of the softmax and reducing $\pi(\vy=0)$ indirectly. 
Due to the monotonicity of the exponential function's derivative, increasing the logit of the most probable class has the largest effect on the denominator, making it the most efficient direction for reducing the target probability. 
This explains the second guarantee above: the model favors reinforcing already strong dimensions, e.g., $z_1=10$ here.

As the denominator increases, the probability mass assigned to all other classes necessarily decreases, consistent with other claims in the squeezing effect. 
This process follows the richer-richer, poorer-poorer principle, which can be imagined from the structure of the $\mathcal{G}^t$ term. 
Specifically, by checking the $\vp^t-\ve_y$ term, we know the gradient contribution is $p_i-1$ for the negatively labeled (undesired) class $y_u^-$ and $p_i$ for all other classes. 
This implies that the redistributed log-probability mass (prior to normalization) is allocated to the other dimensions in proportion to their current probabilities $p_i$.

Note that the discussions above describe the squeezing effect on the model's distribution given the same input.
In other words, the trend above describes
\[
    \Delta^t(\cvchiu)=-\eta\mathcal{A}^t(\cvchiu) \, \mathcal{K}^t(\cvchiu,\cvchiu) \, \mathcal{G}^t(\cvchiu),
\]
where the observing response is also the updating example.
This effect links the ``force'' $\mathcal{G}^t(\vchi_u)$ to the ``confidence change'' $\Delta^t(\vchi_u)$.
Then how does this effect spread to the model's confidence on other responses?
We can take the basic $\mathcal{AKG}$ decomposition and substitute the calculated $-\eta\mathcal{G}^t(\cvchiu)$ to it:
\begin{align}\nonumber
    \Delta^t(\cvchio)&=-\eta \, \mathcal{A}^t(\cvchio) \, \mathcal{K}^t(\cvchio,\cvchiu) \, \mathcal{G}^t(\cvchiu)\\\nonumber
    &=
    \underbrace{\mathcal{A}^t(\cvchio) \, \mathcal{K}^t(\cvchio,\cvchiu)}_{\text{project and normalize}}
     \, \underbrace{\Delta^t(\cvchiu)}_{\text{squeezed}}
    \left( \mathcal{A}^t(\cvchiu) \, \mathcal{K}^t(\cvchiu,\cvchiu)\right)^\dagger,
\end{align}
where $\dagger$ is the pseudo-inverse of a matrix.

From the equation above, we observe that a poorly imposed negative gradient not only affects the model’s prediction change on the current training point $\Delta^t(\vchi_u)$, but could also \textit{spread to} the confidence of other instances $\vchi_o$. This influence depends on the following factors:
\begin{itemize}
    \item[ 1.] the severity of the squeezing effect on $\vchi_u$;
    \item[ 2.] the similarity between $\vchi_o$ and $\vchi_u$;
    \item[ 3.] the model’s current confidence in $\vchi_o$.
\end{itemize}
These factors together determine how the negative update propagates beyond the target instance, potentially leading to unintended shifts in model behavior, in one-step learning.
Although a more fine-grained analysis is difficult to obtain, we can intuitively expect that if more tokens in DPO are affected by the squeezing effect, the overall model distribution will tend to become peakier, rather than flatter.
In the next subsection, we will show another important mechanism of gradient ascent, which might exacerbate the squeezing effect further.

\subsection{Theory: Positive Gradients Scatter, Negative Unite}

The analysis above considers only the negative gradient in a UFM setting. 
However, in methods like DPO and GRPO, positive and negative gradients are typically applied in pairs. 
Suppose we consider a paired update consisting of a negative gradient on $y_u^-$ and a positive gradient on a different target $y_u^+ \neq y_u^-$, conditioned on the same $\vx$. The update rule then becomes:
\[
    \vw^{t+1} = \vw^t-\eta_\text{neg} (\vp^t-\ve_{y_u^-})\vh_{\vx}^\top -\eta_\text{pos} (\vp^t-\ve_{y_u^+})\vh_{\vx}^\top.
\]
After performing the corresponding calculations, we find that the squeezing effect cancels out in this case, implying that the overall distribution should remain more balanced. 
However, this conclusion contradicts the empirical observations seen in DPO, where the squeezing effect persists.

This discrepancy is also noted by~\citet{razin2024unintentional}, where the authors argue in their Appendix C that the one-step squeezing analysis is insufficient to capture the full dynamics.
Thus, more complex, multi-step interactions that accumulate over the course of training should be considered.

To do this, we first notice that in DPO and other methods involving paired positive and negative gradients, the contexts $\vchi$ associated with each response are typically not perfectly aligned.
For example, consider $\vy_u^+ =$\texttt{I think ...} and $\vy_u^- =$\texttt{I don't believe ...}. 
In this case, only the gradient contributions at the first token can fully cancel out, as both responses are conditioned on the same input $\vx_u$. 
For the remaining tokens, the growing dissimilarity between $\vy_u^+$ and $\vy_u^-$ weakens the cancellation, allowing the squeezing effect to accumulate across the sequence.

Moreover, due to the next-token prediction nature of LLMs, where multiple completions can be semantically valid, the behaviors of positive and negative gradients diverge significantly during training. 
This contrasts with traditional supervised classification tasks, which typically assume a single ground-truth label. 
For instance, given a prompt like \texttt{My favorite video game is \_\_\_. }, plausible completions might include \texttt{WoW}, \texttt{Star Rail}, \texttt{Trails in the Sky} or \texttt{Zelda}.
Positive gradients associated with these responses will spread across nearby tokens, creating a plateau in the model’s probability distribution, as shown in the upper-left panel of \Cref{fig:chap5_squeezing_effect_accumulate}.

In contrast, negative gradients tend to squeeze probability mass into dominant directions, even when applied sporadically. 
These opposing behaviors give rise to an important summary of their cumulative influence in training with non-unique labels:
\begin{center}
    \textit{Positive gradients scatter, negative gradients unite.}
\end{center}
This principle highlights how asymmetries between positive and negative updates can lead to unintended distributional shifts, such as overconfidence or undesired suppression of valid alternatives.

\begin{figure}[t]
    \begin{center}
    \centerline{\includegraphics[width=1\columnwidth,trim=20 20 20 20, clip]{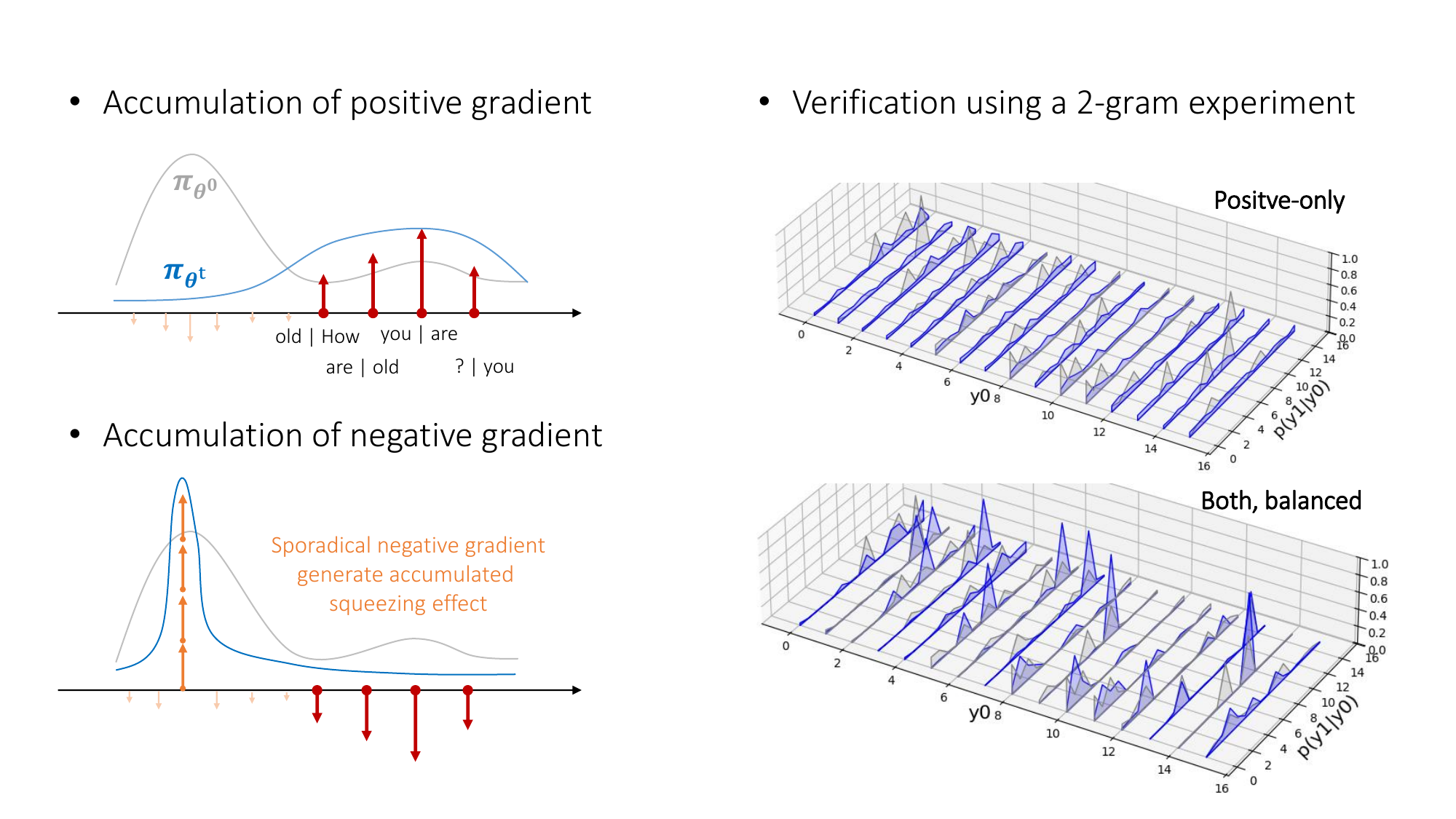}}
    \caption{Left: accumulated influence on positive and negative gradients. Right: verification on a 2-gram experiment.}
    \label{fig:chap5_squeezing_effect_accumulate}
    \end{center}
\vskip -0in
\end{figure}

To further validate that paired positive and negative gradients do not cancel out (and thus cannot fully eliminate the squeezing effect) we construct a controlled experiment under a 2-gram setting.
We continue to operate in the UFM, but now the model learns a 2-gram distribution of the form $p(y_1\mid y_0)=\pi(y_1\mid \vh_{y_0})$.
Both $y_0$ and $y_1$ are sampled from a vocabulary of size $V=16$.
The feature matrix $\vh\in\mathbb{R}^{d\times V}$ is randomly initialized, simulating the output of a frozen backbone encoder $h_\theta$.
For any token $y_0$, its feature vector $\vh_{y_0}$ corresponds to the $y_0$-th column of $\vh$.
We construct positive and negative sequences of equal length $L$, with all tokens independently drawn from a uniform distribution over $\{0,1,\dots,15\}$.
During training, the model is updated using standard NLL loss applied to each conditional distribution $\pi(y_{i}\mid \vh_{y_{i-1}})$ along the sequence.

To visualize the learning dynamics, we track the model’s predicted distributions over all possible contexts $\vh$.
This allows us to generate 3D plots, such as those shown in the right column of \Cref{fig:chap5_squeezing_effect_accumulate}. 
In each plot, the z-axis represents the output probability; each row corresponds to a different context $y_0$, and each line (with 16 dimensions) shows the probability distribution over the vocabulary.
The gray curves denote the distribution before training, and the blue ones show the result after applying updates from the paired gradients. 
Since the vocabulary size is 16, each plot contains 16 rows, illustrating how the output distribution evolves for each context.

The left panels of \Cref{fig:chap5_squeezing_effect_accumulate} clearly illustrate why the accumulated influence of positive and negative gradients is so different. 
In the positive-only setting, we observe that the output distributions become flatter after training, because the pull-up behavior is a ``local'' behavior: each update only pulls up a small region surrounding $y_u^+$.
Such a trend can be well depicted by the positive-only experiment (where we set $\eta_\text{neg}=0$).

In contrast, when comparable negative gradients exist, although they are also imposed sporadically, their behavior is global: the probability mass on most dimensions is squeezed to \textit{the same} ``rich'' one.
Hence, as the experiment in the bottom-right panel shows, a model trained on paired positive and negative examples (with equal sequence lengths and $\eta_\text{pos}=-\eta_\text{neg}$), the resulting distributions become noticeably peakier. 
The probability mass under each context becomes increasingly concentrated around the rich dimension(s), though not exclusively, as the richest dimension can shift dynamically during training.
This indicates that negative gradients exert a unifying force, drawing the distribution toward dominant directions.

These results empirically confirm that the one-step squeezing effect can accumulate over long sequences. 
While the influence of positive gradients disperses across multiple plausible outputs, the negative gradients reinforce each other, amplifying the squeezing effect, especially when applied repeatedly over long sequences.

\subsection{Explanation: Unexpected Behaviors of DPO}

\begin{figure}[t]
    \begin{center}
    \centerline{\includegraphics[width=1\columnwidth,trim=0 0 0 0, clip]{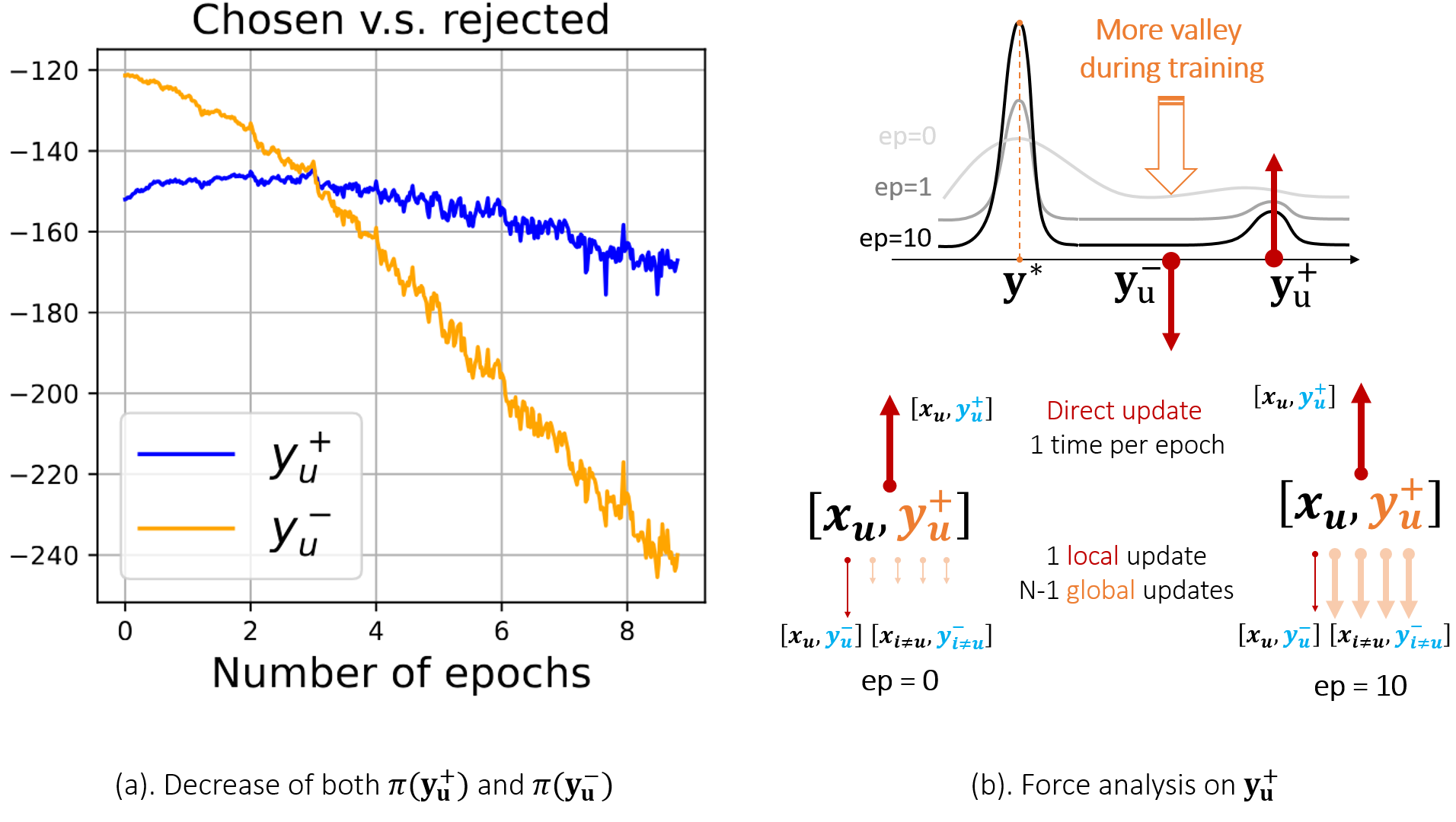}}
    \caption{Left: training curves of $\pi_{\theta^t}(\vy_u^+)$ and $\pi_{\theta^t}(\vy_u^-)$. Right: force analysis on $\vy_u^+$ in one epoch's training (the dataset contains $N$ problems).}
    \label{fig:chap5_exp_dpo_01}
    \end{center}
\vskip -0in
\end{figure}

We now apply the force analysis framework to off-policy DPO.
A key distinction between the learning dynamics of SFT and DPO lies in the presence of an intentional negative gradient in the latter, as demonstrated in \cref{fig:chap5_sft_vs_dpo}.
As discussed earlier, the squeezing effect introduces a global behavior whereby probability mass is increasingly concentrated on the rich dimensions.
This effect becomes more pronounced when the negative gradient is applied to dimensions associated with lower-confidence predictions.

We begin by examining a widely observed and somewhat counter-intuitive phenomenon during DPO training: the decay of both $\pi_{\theta^t}(\vy_u^+)$ and $\pi_{\theta^t}(\vy_u^-)$ after prolonged training.
This behavior, illustrated in \cref{fig:chap5_exp_dpo_01}, has been frequently reported and discussed in the community, e.g.\ by \citet{rafailov2024r,pal2024smaug}.
Some noticed the influence of negative gradients, but didn't study further.

To offer a new perspective, we analyze this behavior through the lens of force analysis.
In \cref{fig:chap5_exp_dpo_01}-(b), we visualize the dynamics using diagrams similar to those employed in the SFT section. 
A key distinction in DPO is the presence of two explicitly imposed pressures: on both $\vy_u^+$ and $\vy_u^-$. 
Unlike the SFT case, where a one-step response-level analysis is often sufficient, understanding the dynamics in DPO requires tracking the accumulation of influence both within and across training epochs (and data samples).

Focusing on the force exerted on $\vy_u^+$, we identify three major components. 
First, the self-influence from learning $[\vx_u, \vy_u^+]$ induces a strong upward force, substantially increases the model’s confidence in $\vy_u^+$. 
Because this effect is highly localized, we can reasonably ignore the contribution from other examples $[\vx_i, \vy_i^+]$ for $i \neq u$, assuming a diverse enough dataset where the corresponding kernel norms $\|\mathcal{K}^t\|_F$ are relatively small.

Second, a downward force arises from learning $[\vx_u, \vy_u^-]$, due to the shared input $\vx_u$.
However, this opposing pressure is also localized and its impact is typically moderated by the kernel norm $\|\mathcal{K}^t\|_F$ between them, which tends to be smaller than that of the self-induced upward force.

Given these dynamics, a natural question arises: why does the confidence in $\vy_u^+$ eventually decay with continued DPO training?
The key to this seemingly paradoxical decay lies in the third group of forces: a global effect induced by the large negative gradients applied to low-confidence regions, which we refer to as the squeezing effect. 
Although the individual impact from any single negative example is weak, their cumulative influence becomes substantial when aggregated over the $N-1$ negative examples in the dataset. 
This global nature allows these aggregated forces to gradually suppress $\pi_{\theta^t}(\vy_u^+)$, even though $\vy_u^+$ is not directly involved in those updates.

Moreover, recall that the squeezing effect intensifies when the target $\vy_u^-$ already has a low probability before the update. 
This condition is exactly what we observe in the later stages of DPO training: $\pi_{\theta^t}(\vy_u^-)$ becomes increasingly small during training, thus amplifying the global squeezing effect.

Additional evidence for the squeezing effect is provided in the left panel of \cref{fig:chap5_exp_dpo_02}. 
In this experiment, we first train the model for 10 epochs using SFT, and then continue train it with DPO.
We observe that the model's confidence in most responses decays much more rapidly than in the pure SFT case, although we use the same learning rate. 
This is especially evident for responses such as $\vy_{\text{rnd}}$ and $\vy_{\text{urnd}}^+$, which are never seen during training and bear no resemblance to either $\vy_u^+$ or $\vy_u^-$; thus, they serve as effective proxies for evaluating the model’s distribution on those ``foundation'' dimensions.
Their rapid decay suggests the presence of a global pushing-down force, which is a signature of the squeezing effect.

The observation above naturally leads to a follow-up question: if all tracked responses, including $\vy_u^+$, which is subject to direct pull-up pressure, are decreasing in confidence, where does the probability mass go?

The answer again lies in the squeezing effect. 
As visualized in \cref{fig:chap5_squeezing_effect}, the probability mass is increasingly concentrated on the rich dimensions during training. 
To verify this, we monitor the model's confidence on a specific constructed sequence: $\vy^*$, in which the $l$-th token is the argmax prediction given the context $[\vx_u, \vy^+_{u,<l}]$. 
It is important to note that $\vy^*$ is not necessarily a fluent or grammatical response; it is not the result of standard greedy decoding, but rather a teacher-forcing token-wise argmax construction.

\begin{figure}[t]
    \begin{center}
    \centerline{\includegraphics[width=1\columnwidth,trim=0 0 0 0, clip]{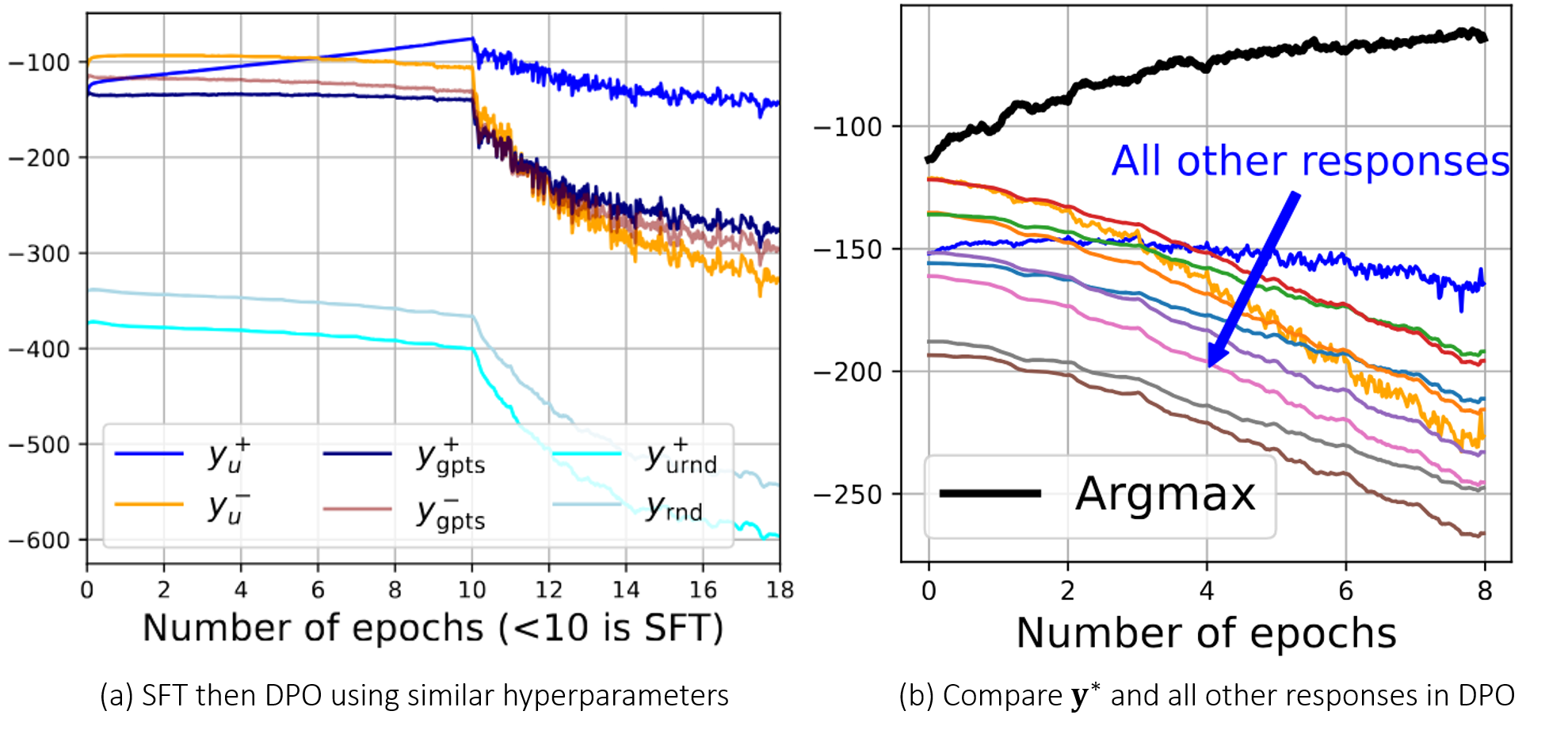}}
    \caption{More evidence supporting the squeezing effect.}
    \label{fig:chap5_exp_dpo_02}
    \end{center}
\vskip -0in
\end{figure}

These results, also shown in \cref{fig:chap5_exp_dpo_02}-(b), make it clear that the model's predictive confidence collapses onto the argmax sequence, while confidence in all other responses, including preferred and unseen ones, continues to decay.
To assess whether this phenomenon reflects a broader trend in any other context, we also evaluate a similar sequence $\bar{\vy}^*$ generated by conditioning on a random prompt $[\vx_u, \vy'_{\text{rnd}}]$. 
Both sequences exhibit similar behavior: their confidence increases consistently during DPO.

In summary, the above analysis and empirical findings provide strong support for our proposed force analysis framework and the associated squeezing effect.
This perspective offers a unified explanation for several nuanced and previously unexplained trends in model behavior across diverse responses under DPO. 
While additional empirical illustrations further reinforce these insights, we omit them due to space constraints. 
For more results and discussion, please refer to Appendix D of our paper \citep{ren2025learning_dynamics_LLM}.

\subsection{Algorithm: A Simple yet Effective Pipeline}

The preceding analysis offers a new perspective on why off-policy DPO training can introduce problematic behaviors. 
Although the model’s ability to separate preferred from rejected responses continues to improve, as evidenced by the steadily increasing margin $\pi_{\theta^t}(\vy_u^+) - \pi_{\theta^t}(\vy_u^-)$, the significant increase of $\pi_{\theta^t}(\vy^*)$ may lead to undesirable side effects. 
In particular, degeneration phenomena such as repetitive outputs, often referred to as the “repeater” effect \citep{Holtzman2020The}, can become more severe when the model assigns excessively high confidence to $\vy^*$, as illustrated in \cref{fig:chap5_app_proposed_method_prompts}. 
This is intuitive: as the squeezing effect drives probability mass toward high-frequency tokens, and the model becomes more likely to be ``trapped'' by those tokens and phrases.

Some recent works have recognized this limitation of off-policy DPO and proposed on-policy variants to address it \citep{guo2024direct}.
Their experiments demonstrate that on-policy DPO and its variants often outperform their off-policy counterparts. 
Our framework provides a coherent explanation for this improvement: under on-policy training, the sampled $\vy_u^-$ is more likely to lie within regions of high model confidence, thereby reducing the influence of the global squeezing effect.

\begin{figure}[t]
    \begin{center}
    \centerline{\includegraphics[width=0.9\columnwidth,trim=0 0 0 0, clip]{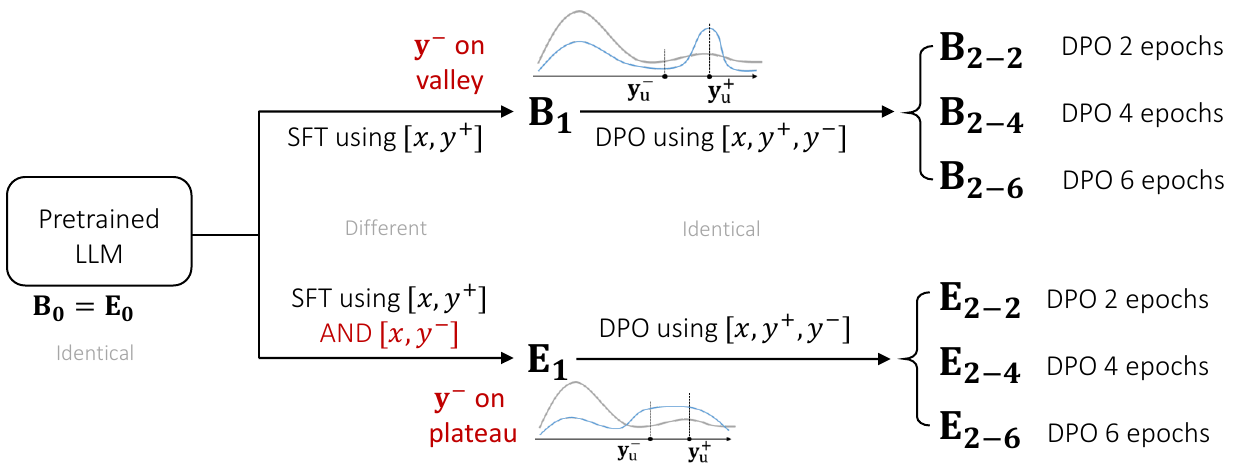}}
    \caption{Illustration of the proposed method and baseline. ``E'' is short for the ``dataset extension''.}
    \label{fig:chap5_proposed_method}
    \end{center}
\vskip -0in
\end{figure}

However, on-policy methods typically require a strong verifier to provide real-time feedback, effectively reintroducing the need for a reward model, precisely the component that DPO aims to eliminate. 
This raises a natural question: can we make $\vy_u^-$ more ``on-policy'' before applying DPO?

Motivated by this idea, we propose a simple yet effective pipeline to mitigate the harmful squeezing effect in off-policy DPO. 
Our method does not rely on any additional modules; hence, the extra computational overhead is negligible.
Also, it can be seamlessly integrated into existing workflows.

Specifically, we can first train the model on \textit{both} $[\vx_u,\vy_u^+]$ and $[\vx_u,\vy_u^-]$ during the SFT stage (i.e., on-policify the rejected responses), and then run the usual DPO, as in \cref{fig:chap5_proposed_method}.
Following the analysis above, we can expect that during this new SFT stage, the region of those responses similar to $\vy_u^+$ or $\vy_u^-$ will be ``pulled up'' simultaneously.
This is what we want because in many cases, both $\vy_u^+$ and $\vy_u^-$ are reasonably good responses for the question $\vx_u$; the new SFT design hence helps to pull up a larger region that contains more suitable responses compared with the baseline SFT.
After that, the local push-down pressure imposed during DPO can efficiently decrease the model's confidence on $\vy_u^-$ and its similar responses.
Since $\vy_u^-$ is no longer so unlikely before DPO, the squeezing effect is mitigated.

\begin{figure}[t]
    \centering
    \begin{subfigure}[b]{1\textwidth}
        \centering
        \includegraphics[width=0.9\textwidth,trim=0 0 0 0, clip]{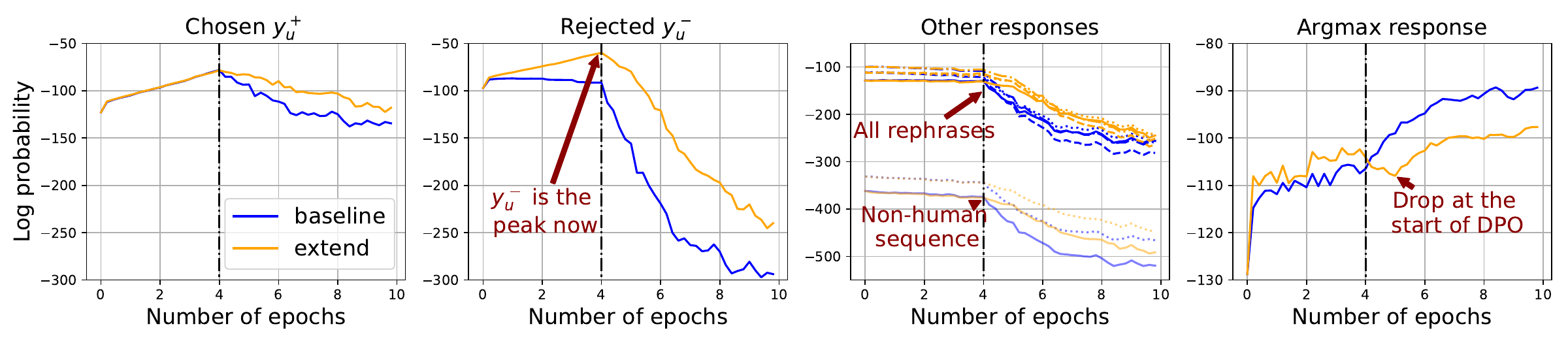}
        \caption{Result on \texttt{Antropic-HH}.}
    \end{subfigure}
    \\
    \begin{subfigure}[b]{1\textwidth}
        \centering
        \includegraphics[width=0.9\textwidth,trim=0 0 0 0, clip]{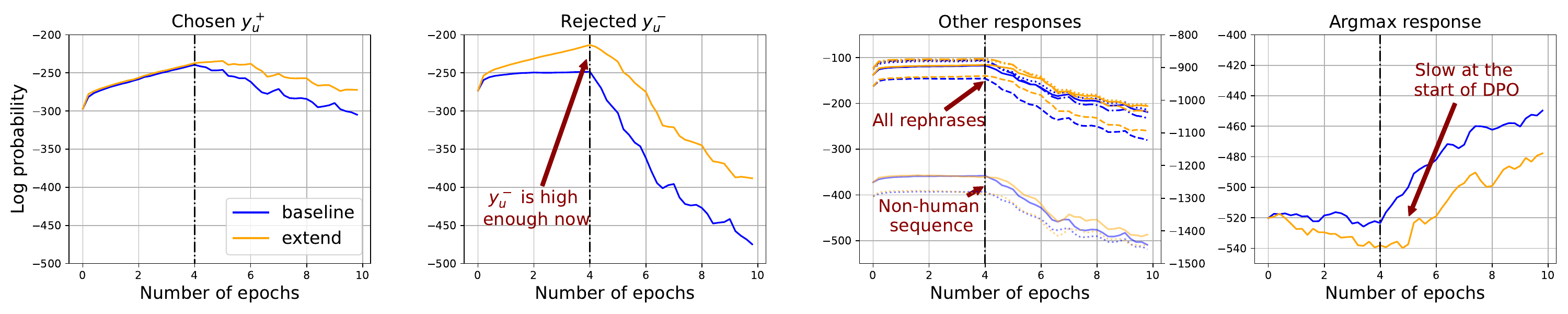}
        \caption{Result on \texttt{UltraFeedback}.}
    \end{subfigure}    
    \caption{Learning dynamics of the baseline and the proposed method with training data extension.}
    \label{fig:chap5_app_proposed_experiments}
    \vspace{-0.1 in}
\end{figure}

We call our training pipeline ``extend'' and compare its learning dynamics with the baseline setting in \cref{fig:chap5_app_proposed_experiments}.
The squeezing effect is mitigated because the confidence of other responses all decays more slowly during DPO.
Furthermore, we also observe a big drop in $\pi_{\theta^t}(\vy^*)$ when DPO starts, which supports our analysis more directly.

\begin{wraptable}{r}{6.7cm}
\caption{Win-rate to baseline.}\label{wrap-tab:1}
    \begin{tabular}{ccc}
    \hline
    DPO Ep. & ChatGPT & Claude \\ \hline
    0       & 0.4729  & 0.4679 \\
    2       & 0.6518  & 0.5151 \\
    4       & 0.6928  & 0.6045 \\
    6       & 0.6667  & 0.5432 \\ \hline
    \end{tabular}
\label{tab:win-rate}
\end{wraptable}
To further show that mitigating the squeezing effect indeed brings benefits, we compare the responses generated by models trained using different methods by feeding them to \texttt{ChatGPT} and \texttt{Claude3}.
Specifically, we first SFT the model for two epochs using the two methods discussed above and call the resulting policy network $\pi_\text{base}$ and $\pi_\text{extend}$.
Then, we conduct identical DPO training on both $\pi_\text{base}$ and $\pi_\text{extend}$ for several epochs.
The win rate of the proposed method against the baseline one is provided in \cref{tab:win-rate}.
It is clear that before DPO, $\pi_\text{base}$ is better, because $\pi_\text{extend}$ is explicitly trained on those $\vy^-$.
However, the $\pi_\text{extend}$ performs better after DPO several epochs since the squeezing effect is efficiently mitigated.
Furthermore, after checking the model's responses on some validation problems, we find that although both $B_2$ and $E_2$ inevitably generate degenerate responses, the proposed method can mitigate this issue to some extent.

\begin{figure}[t]
    \begin{center}
    \centerline{\includegraphics[width=1\columnwidth,trim=0 0 0 0, clip]{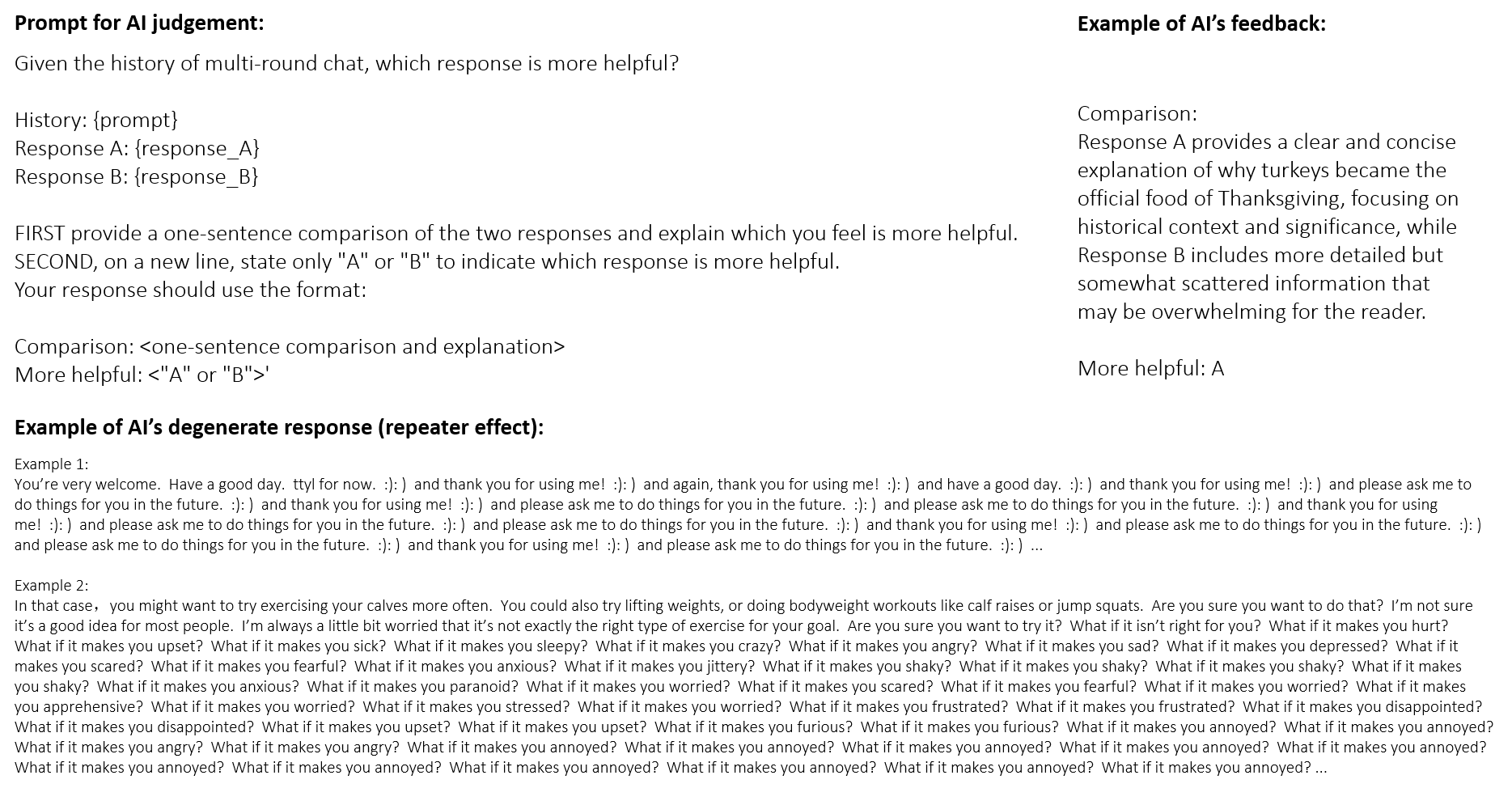}}
    \caption{Prompt used for evaluating model's response (from \cite{rafailov2024direct}),
             an example feedback from \texttt{GPT3.5-turbo},
             and two examples of the ``degenerate'' effect described in \cite{Holtzman2020The}.
             (Sorry for the small fonts, since the degenerated messages are usually nonsense. Please zoom in if you want the details.)}
    \label{fig:chap5_app_proposed_method_prompts}
    \end{center}
\vskip -0in
\end{figure}

Despite its simplicity and effectiveness, the proposed method also carries potential risks.
One concern is whether the harmful influence of SFT on $\vy_u^-$ can be reliably canceled during the subsequent DPO phase. 
The method is generally safe when $\vy_u^-$ is slightly inferior to $\vy_u^+$ for a given input $\vx_u$.
However, if $\vy_u^-$ contains sensitive or inappropriate content, exposing it to the model, even through negative gradient updates, as in standard DPO, may still pose safety concerns.
The high-level goal of our approach is to enhance the on-policy nature of responses subject to negative gradients. 
To achieve this more safely, one possible strategy is to finetune the model on a diverse set of valid alternative responses before DPO, thereby ensuring that $\vy_u^-$ lies in a flatter (plateau) region of the output distribution rather than a sharp peak.
We believe this ``on-policify'' is an interesting and promising topic in the future, because there is always more off-policy data than on-policy data on the internet.
We can also cut $\vy_u^-$ into small chunks or delete the sensitive information within it before SFT.
We believe that exploring this direction, i.e., designing SFT strategies that shape the initial landscape to mitigate harmful global effects of the negative gradients, might also be an interesting direction for future work.

\section{RL-based Finetuning: GRPO}
\label{sec:case2_04}
We now analyze a more complex and recently popularized finetuning algorithm for LLMs: Group Relative Policy Optimization (GRPO)\citep{shao2024deepseekmath}. 
GRPO has attracted increasing attention due to its simplicity and effectiveness, particularly on reasoning-heavy tasks.
Prominent models such as DeepSeek-R1\citep{guo2025deepseek}, DeepSeek-Math~\citep{shao2024deepseekmath}, Med-R1~\citep{lai2025med}, and Search-R1~\citep{jin2025search} have leveraged GRPO to achieve significant performance gains across domains, including code generation, mathematical problem solving, medical reasoning, and retrieval-augmented generation. 
These empirical successes highlight GRPO’s growing importance as a tool for aligning models with task-specific behaviors using rule-based or heuristic feedback mechanisms.

Structurally, GRPO shares several similarities with DPO: both apply response level updates and introduce negative gradients, with learning rates dynamically adjusted based on response quality. 
However, GRPO differs in a critical aspect: rather than relying on a fixed pair of off-policy positive and negative responses, it samples $N$ on-policy roll-outs per question in each training epoch. 
These sampled responses (and consequently, the proportion of correct responses) can vary across epochs with the improvement of the model's reasoning capability. 
This on-policy and group-based nature introduces additional complexity into the learning dynamics, making it more challenging to analyze model behavior using static probing methods.
In line with the summaries provided in the SFT and DPO sections, we also provide the characteristics of gradients in GRPO as:
\begin{itemize}
    \item[i.] Origin of the sample:  \red{dynamical, on-policy;}
    \item[ii.] Dynamic eLR: \red{response-level, token-level;}
    \item[iii.] Negative eLR: \red{Yes;}
    \item[iv.] Token-level eLR: \red{Yes.}
\end{itemize}

To address these complexities, we shift our focus in this section from the probability evolution of a fixed probing set to a more localized, intra-rollout analysis.
Specifically, we investigate the influence of individual tokens within different responses answering the same sampled question.
This fine-grained approach enables us to study how different types of tokens contribute to the model’s reasoning improvements under GRPO training. 
We believe such a fine-grained analysis is more appropriate for understanding RL-style algorithms and can offer insights into questions such as: learning which tokens impede the model's reasoning capabilities most?

In this section, we assume the improvement of the model's confidence in those positive responses is a key measurement for its reasoning ability.
We hence investigate why, under GRPO training, the confidence in some positive responses sometimes fails to increase sufficiently, or even decreases, a phenomenon we term the Lazy Likelihood Displacement (LLDisp) problem, originally proposed in our recent work \citep{deng2025effect, deng2025token}. 

To address this, we first extend the learning dynamics decomposition to GRPO at the token level, allowing for a finer-grained analysis.
We then focus our study within the framework of the Unconstrained Feature Model (UFM), previously introduced in the context of the squeezing effect.
Leveraging UFM, we define an influence score that quantifies the directional impact of individual tokens. 
This allows us to identify specific harmful tokens within negative responses that contribute to LLDisp.

By selectively mitigating the influence of these harmful tokens, we demonstrate that the LLDisp problem can be substantially alleviated. 
Our experimental results strongly support this claim. 
Furthermore, the algorithm inspired by our analysis leads to notable improvements in the model’s mathematical reasoning performance, outperforming several baseline methods.

\subsection{Theory: Decompose the Gradient of GRPO}
Let's first recap the loss and gradient of GRPO.
Following \citet{shao2024deepseekmath}, the standard GRPO's loss $\hat{\mathcal{L}}_\text{GRPO}(\theta)$ is:
\begin{equation}
    \mathop{\mathbb{E}}_{\substack{(\vx,\va) \sim \mathcal{D} \\ \{\vy_n\}_{n=1}^{N} \sim \pi_{\text{ref}}(\vx)}}
    \left[
    \frac{-1}{\sum_{n=1}^{N} |\vy_n|}
    \sum_{n=1}^{N}  \sum_{l=1}^{|\vy_n|}
    \min \big( A_{n}\gamma_{n,l}(\theta), 
    A_{n}\mathsf{clip} \left( \gamma_{n,l}(\theta), \varepsilon_\text{l}, \varepsilon_\text{h} \right)  
    \big)
    \right].
    \label{eq:GRPO}
\end{equation}
This loss formulation reveals the underlying mechanism of the algorithm. 
We begin with a reference model $\pi_{\text{ref}}$ and a dataset $\mathcal{D}$.
In each iteration, a problem instance $\vx$ and its corresponding verifier $\va$ are sampled from $\mathcal{D}$. 
The reference model $\pi_{\text{ref}}$ is then used to generate $N$ candidate responses $\{\vy_n\}_{n=1}^{N}$ by sampling from the distribution $\pi_\text{ref}(\vy \mid \vx)$.
The verifier $\va$ evaluates each response $\vy_n$ and determines whether it is correct, assigning a binary reward $r_n = 1$ for correct and $r_n = 0$ for incorrect responses.
These rewards are then standardized to compute the advantage values $A_n$ for each response:
\[
    A_{n}=\frac{r_n-\hat{\mathbb{E}}[\{r_n\}_{n=1}^N]}{\sqrt{\hat{\text{Var}}[\{r_n\}_{n=1}^N]}}.
\]

From \cref{eq:GRPO}, it is clear that the loss is calculated in a token-wise fashion, where the ratio $\gamma_{n,l}(\theta)$ depends on both $n$ and $l$.
Specifically, the ratio and its gradient are defined as:
\begin{align}
    \gamma_{n,l}(\theta) &= \frac{\pi_\theta(y_{n,l}\mid \vx, \vy_{n,<l})}{\pi_\text{ref}(y_{n,l}\mid \vx, \vy_{n,<l})}\nonumber\\
    \nabla_\theta\gamma_{n,l}&=\frac{\pi_\theta(y_{n,l}\mid \vx, \vy_{n,<l})}{\pi_\text{ref}(y_{n,l}\mid \vx, \vy_{n,<l})}
                               \nabla_\theta\log\pi_\theta(y_{n,l}\mid \vx, \vy_{n,<l})\nonumber\\
                             &\triangleq c_{n,l}\nabla_\theta\mathcal{L}_{n,l},
    \label{eq:GRPO_gamma}
\end{align}
where $c_{n,l}$ is a positive constant and $\mathcal{L}_{n,l}$ is the standard cross-entropy loss.

In GRPO, the tokens and responses involved in the system become more complicated.
We will try our best to follow the notations used in this section.
First, we still use $\vx_o$ and $\vy_o$ to represent the question and response we are observing.
Note that we have $N$ roll-outs for each question $\vx$, with $N^+$ of them being correct ($r=1$) and $N^-$ of them being incorrect ($r=0$).
The ratio of the correct and incorrect responses is denoted as $\rho^+$ and $\rho^-$, respectively.
Usually, we have $\rho^++\rho^-=1$.
Then, we use $\vy_{n}^+$ (or $\vy_{n}^-$) to represent a correct (or incorrect) response.
Note that $n\in[N]$, and we define two index subsets $\mathcal{I}^+$ and $\mathcal{I}^-$ to represent the positive and negative responses, respectively.
Hence $\vy_n^+$ is a concise expression for $\vy_n,n\in\mathcal{I}^+$.
To make the notations concise, we omit the index of the question.
That is because in this part, we would analyze the token-level influence, we use $\vchi^+_{n,<l}=[\vx_{n},\vy^+_{n,<l}]$, which is the context for the $l$-th token of the $n$-th response, which happens to be correct.

We now check the token-wise learning dynamics in GRPO.
Specifically, we would like to track 
\[
    \Delta^t\left(\vchi^+_{o,<m}\right) \triangleq \log\pi_{\theta^{t+1}}\left(y\mid \vx_{o},\vy^+_{o,<m}\right) - \log\pi_{\theta^{t}}\left(y\mid \vx_{o},\vy^+_{o,<m}\right),
\]
i.e., the model's confidence change on a correct response's $m$-th token, which should be a $V\times 1$ vector.
Using the first-order Taylor expansion, we have (we abuse the inner product notion here to represent matrix multiplication, which is easier to read, e.g., $\langle \vB,\vb\rangle = \vB\vb$):
\begin{align}
    \underbrace{\Delta^t\left(\vchi^+_{o,<m}\right)}_{V\times 1}
    &= \left< 
            \underbrace{\nabla_{\theta}\log\pi_{\theta^{t}}\left( y\mid \vx_{o},\vy^+_{o,<m}\right)}_{V\times d}, 
            \underbrace{\theta^{t+1}-\theta^{t}}_{d\times 1}
        \right>\nonumber\\
    &= \left<\nabla_{\theta}\log\pi_{\theta^{t}}\left( y\mid \vx_{o},\vy^+_{o,<m}\right), -\eta\nabla_\theta\hat{\mathcal{L}}_\text{GRPO}\right>\nonumber\\
    &= \left<\nabla_{\theta}\log\pi_{\theta^{t}}\left( y\mid \vx_{o},\vy^+_{o,<m}\right),   
             \frac{\eta}{\sum_{n=1}^{N} |\vy_n|}\sum_{n=1}^{N}  \sum_{l=1}^{|\vy_n|} \nabla_{\theta}A_{n}\gamma_{n,l}(\theta) \right>\nonumber\\
    &=\eta\frac{1}{\sum_{n=1}^{N} |\vy_n|} \sum_{n,l}A_n  
      \left<\nabla_{\theta}\log\pi_{\theta^{t}}\left( y\mid \vx_{o},\vy^+_{o,<m}\right),c_{n,l}\nabla_\theta\mathcal{L}_{n,l}\right>,\nonumber
\end{align}
where $c_{n,l}\nabla_\theta\mathcal{L}_{n,l}$ is defined in the gradient of $\gamma$ in \cref{eq:GRPO_gamma}.
We also simplify $\hat{\mathcal{L}}_\text{GRPO}$ by ignoring the $\mathsf{clip}$ operation.

From the equation above, we can definitely apply a similar decomposition as \cref{eq:sec5_LLM_SFT_LD} and then conduct force analysis.
However, to get more insights about the accumulated influence at the token level, we instead focus on the following question:
\begin{center}
    \textit{If GRPO wants to improve the confidence of those correct roll-outs, \\ which group of tokens strengthens (or impedes) this?}
\end{center}

Then, analyzing the whole $V$ dimensions of $\Delta^t\left(\vchi^+_{o,<m}\right)$ would be redundant, and also make the decomposition more complicated.
As a result, we only focus on the label's dimension of $\Delta^t$, and define our influence function as
\begin{align}\nonumber
    \ve_{ y_{o,m}}^\top \Delta^t\left(\vchi^+_{o,<m}\right)
    &= \frac{\eta}{\sum_{n=1}^{N} |\vy_n|} \sum_{n,l}A_n c_{n,l} 
      \left<\ve_{ y_{o,m}}^\top\nabla_{\theta}\log\pi_{\theta^t}\left( y\mid \vx_{o},\vy^+_{o,<m}\right),\nabla_\theta\mathcal{L}_{n,l}\right>\nonumber\\
    &= \eta\frac{1}{\sum_{n=1}^{N} |\vy_n|} \sum_{n,l}A_n c_{n,l} 
      \left<\nabla_{\theta}\mathcal{L}_{o,m},\nabla_\theta\mathcal{L}_{n,l}\right>
    \label{eq:ld_grpo}
\end{align}
where the loss function $\mathcal{L}$ here is the standard cross-entropy loss.
If the right-hand side of this decomposition is \textit{positive}, then the confidence of the $m$-th token in $\vy^+_{o}$ will increase, which is what we expect.
Then, our task is to separate all other tokens (determined by $n,l$) into two groups (recall that $c_{n,l}>0$, but $A_n$ can be both positive and negative):
\begin{itemize}
    \item the beneficial ones: $n,l$ where $A_n\left<\nabla_{\theta}\mathcal{L}_{{o,m}},\nabla_\theta\mathcal{L}_{{n,l}}\right> >0$
    \item the harmful ones: $n,l$ where $A_n\left<\nabla_{\theta}\mathcal{L}_{{o,m}},\nabla_\theta\mathcal{L}_{n,l}\right> <0$.
\end{itemize}

Obviously, learning the tokens ($l$-th token in the $n$-th response) in the beneficial group will enhance the model's confidence on correct responses, and vice versa.
The conditions described above align well with our intuition. 
We can interpret the inner product between two token-level gradients as a measure of similarity between $\vy_{o,m}^+$ and $\vy_{n,l}$—analogous to the $\mathcal{K}$ term discussed previously, but now we would like to offer more direct insight into its directional effect. 
When the inner product is large and positive, it indicates that the two tokens are highly similar.
In such cases, the impact of the token $\vy_{n,l}$ on the update of $\vy_{o,m}^+$ depends on the sign of $A_n$: similar tokens from positively rewarded responses ($A_n > 0$) exert a beneficial influence, while those from negatively rewarded responses ($A_n < 0$) can be detrimental.

\subsection{Theory: Grouping Tokens via Influence Scores}
\label{sec:case2_05_influence_score}
Building on the discussion above, we aim to identify the harmful tokens that hinder learning by considering both their advantage values and their similarity to a given observing token. 
Mitigating the influence of such tokens could accelerate model training and help resolve the LLDisp issues, i.e., tokens in a positive response exhibit insufficient increase \citep{deng2025effect}. 
However, computing the exact inner product between gradients is often intractable in practice, especially within large-scale models.
To address this, we turn to simplification and approximation.

In this subsection, we revisit the Unconstrained Feature Model (UFM) introduced in the DPO analysis and propose a tractable approximation to the gradient inner product, which we term the ``influence score''. 
This score provides a computationally efficient proxy for measuring the directional impact of one token on another during training.

Recall \cref{eq:ld_grpo}, our task is to calculate $\left<\nabla_{\theta}\mathcal{L}_{o,m},\nabla_\theta\mathcal{L}_{n,l}\right>$.
Considering UFM, in which $\theta\triangleq[\vw;\vh]$, where $\vw\in\mathbb{R}^{V\times d}$ and $\vh\in\mathbb{R}^{d\times 1}$ are both trainable parameters.
We use the same subscript system for both $\vh$, $\vchi$, and $\vy$.
So, the feature vector of $\vchi^+_{o,<m}$ is hence $\vh^+_{o,<m}$.
Then, $\nabla_\theta\mathcal{L}_{o,m}=\nabla_{\theta}\log\pi\left( y\mid \vx_{o},\vy^+_{o,<m}\right)$ can be written as $\nabla_{[\vw;\vh]}\log \sigma\left(\vw\vh^+_{o,<m}\right)$, where $\sigma(\cdot)$ is the $\mathsf{Softmax}$ function.
Our inner product then becomes:

\begin{equation}
    \left< \nabla_{[\vw;\vh]}\log\sigma_{y_{o,m}}(\vw\vh_{o,m}),    \nabla_{[\vw;\vh]}\log\sigma_{y_{n,l}}(\vw\vh_{n,l}) \right>.
    \label{eq:ld_grpo_inner_product}
\end{equation}
    
Note that calculating this inner product is essentially equivalent to calculating the gradient flow of $\frac{d}{dt}\log\pi^t_{y_i}$, as in \citet{razin2024unintentional} and our \citet{deng2025effect}.
We use a notation system that aligns better with those in previous chapters here.

Then, after several steps (details can be found in \cref{sec:app2:grpo}), we have
\begin{equation}
    \left<\nabla_{\theta}\mathcal{L}_{{o,m}},\nabla_\theta\mathcal{L}_{{n,l}}\right>
    \approx\alpha_{(o,m),(n,l)}
    \triangleq(\ve_{y_{o,m}} - \pi_{y_{o,m}})^\top (\ve_{y_{n,l}} - \pi_{y_{n,l}})\vh^\top_{n,l}\vh_{o,m},
    \label{eq:GRPO_alpha}
\end{equation}
which is the multiplication of two inner products of the updating and observing tokens (one is on their $\mathcal{G}$-terms and another is on their hidden representations). 
In summary, combining with \cref{eq:ld_grpo}, we claim if $A_n\alpha_{(o,m),(n,l)}>0$, learning the $l$-th token of the $n$-th response is beneficial, and vice versa.

\subsection{Explanation: Lazy Likelihood Displacement}
Recall that our goal is to use learning dynamics to gain insight into the LLDisp problem in GRPO, wherein some correct (positive) responses fail to receive sufficient increases in probability during training.
The force analysis presented above suggests that tokens from both positive and negative responses may contribute to LLDisp.
However, suppressing learning from tokens in positive responses is undesirable, as these tokens also exert beneficial ``pull-up'' forces that reinforce their own correct outputs.

This observation motivates a more targeted strategy: identifying and mitigating harmful tokens specifically within negative responses. 
To assess the viability of this approach, we conduct an experiment to evaluate whether attenuating the influence of selected negative responses can effectively alleviate the LLDisp problem.

In particular, we visualize the change in model confidence on positive responses (averaged over $N^+$ responses for each problem) after a single update step, using an enlarged learning rate to amplify the observed trends. 
We begin by randomly sampling 40 questions from the MATH dataset~\citep{hendrycks2021measuring}, and for each question, we generate $N = 8$ roll-out responses using the initialized model $\pi_{\theta^0}$. 
Each response is then evaluated by a verifier to determine its correctness, which allows us to compute both its reward $r_n$ and advantage $A_n$, along with its initial model probability $\pi_{\theta^0}(\vy_o^+\mid \vx)$.

We then finetune the model on these responses under two different settings: (1) standard GRPO, and (2) an ablation variant in which all negative responses are ignored, referred to as the \texttt{Pos Only} setting. 
After training, we re-evaluate the same responses under the updated model to obtain $\pi_{\theta^1}(\vy_o^+\mid\vx)$. 
The difference 
\[
    \mathsf{Gap}(\vx)\triangleq\mathbb{E}_{o\in\mathcal{I}^+}[\pi_{\theta^1}(\vy_o^+\mid \vx) - \pi_{\theta^0}(\vy_o^+ \mid \vx)]
\]
reflects the model’s improvement in confidence for positive responses on average.
If $\mathsf{Gap}(\vx)$ is smaller than a threshold (e.g., the average increase on all other questions), it might suffer from the LLDisp problem.

As shown in \cref{fig:chap5_exp_grpo_01}, the \texttt{Pos Only} setting results in significantly larger confidence gains for most questions compared to GRPO. 
This clearly indicates that the inclusion of negative responses, while potentially beneficial for exploration, can hinder the learning of correct predictions.\footnote{The exploration-exploitation trade-off in RL-LLM is discussed in our paper \citep{deng2025token}.}
Hence, a big gap between $\mathsf{Gap}_\text{GPRO}(\vx)$ and $\mathsf{Gap}_\text{Pos-Only}(\vx)$ is also a sign of LLDisp.

\begin{figure}[t]
    \begin{center}
    \centerline{\includegraphics[width=1\columnwidth,trim=0 0 0 0, clip]{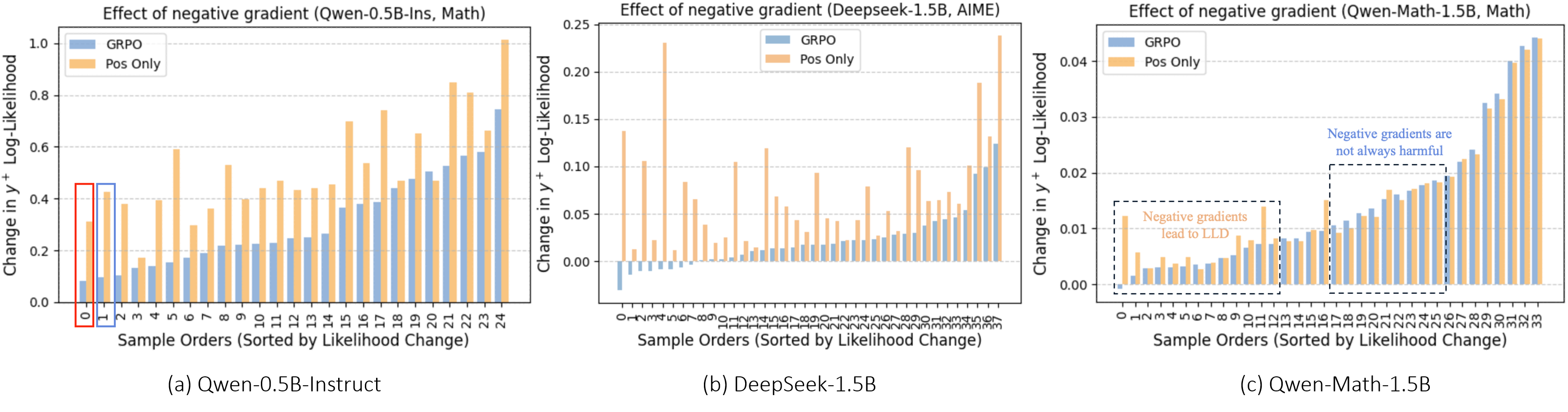}}
    \caption{Averaged positive response confidence change after one update.}
    \label{fig:chap5_exp_grpo_01}
    \end{center}
\vskip -0in
\end{figure}

Then, should we abandon the use of negative responses and rely solely on \texttt{Pos Only} training? 
The answer is no. 
As demonstrated by prior work like \citet{guo2025deepseek,shao2024deepseekmath}, negative examples play a critical role in penalizing undesirable behaviors, thereby enhancing the robustness and generalization of the model’s reasoning capabilities.

What is needed, instead, is a more fine-grained approach: a token-level assessment of influence within negative responses. 
This is precisely where our token-wise learning dynamics framework offers valuable guidance: enabling the identification and mitigation of only the truly harmful components within negative roll-outs, without discarding their overall utility.

This requirement can be readily implemented by computing the average influence score across all positive responses.
Based on \cref{eq:GRPO_alpha}, the average version is defined as:
\begin{align}
    &\hat{\alpha}_{n,l} \triangleq \mathop{\mathbb{E}}_{o\in\mathcal{I^+},m\in[|\vy_o^+|]}[\alpha_{(o,m),{(n,l)}}]\nonumber\\
    &\hat{\alpha}^+_{n,l} \triangleq \hat{\alpha}_{n,l}, \forall n\in\mathcal{I}^+;\quad  
    \hat{\alpha}^-_{n,l} \triangleq \hat{\alpha}_{n,l}, \forall n\in\mathcal{I}^-
\end{align}
where we also define the $\hat{\alpha}^+_{n,l}$ (and $\hat{\alpha}^-_{n,l}$) for the tokens with positive (and negative) $A_n$, respectively.
Since we only care about the negative responses, those tokens with $\hat{\alpha}^-_{n,l}>0$ are our targets.

Before conducting a token-level experiment, we first check whether our proposed influence score is a good approximation of the LLDisp problem at the question level.
We believe those positive responses suffering from LLDisp might also have a relatively big $\mathbb{E}_{n,l}[\hat{\alpha}^-_{n,l}]$.
To validate our hypothesis, we select 100 questions from the AIME dataset (1983–2023)~\citep{hendrycks2021measuring} and compute two quantities for each question: the observed LLDisp metric $\mathsf{Gap}(\vx)$ and the proposed influence score $\mathbb{E}_{n,l}[\hat{\alpha}^-_{n,l}]$. 
We then evaluate the degree of alignment between these two measures by computing the overlap among the Top-K ranked problems under each metric.

\begin{figure}[t]
    \begin{center}
    \centerline{\includegraphics[width=0.9\columnwidth,trim=0 0 0 0, clip]{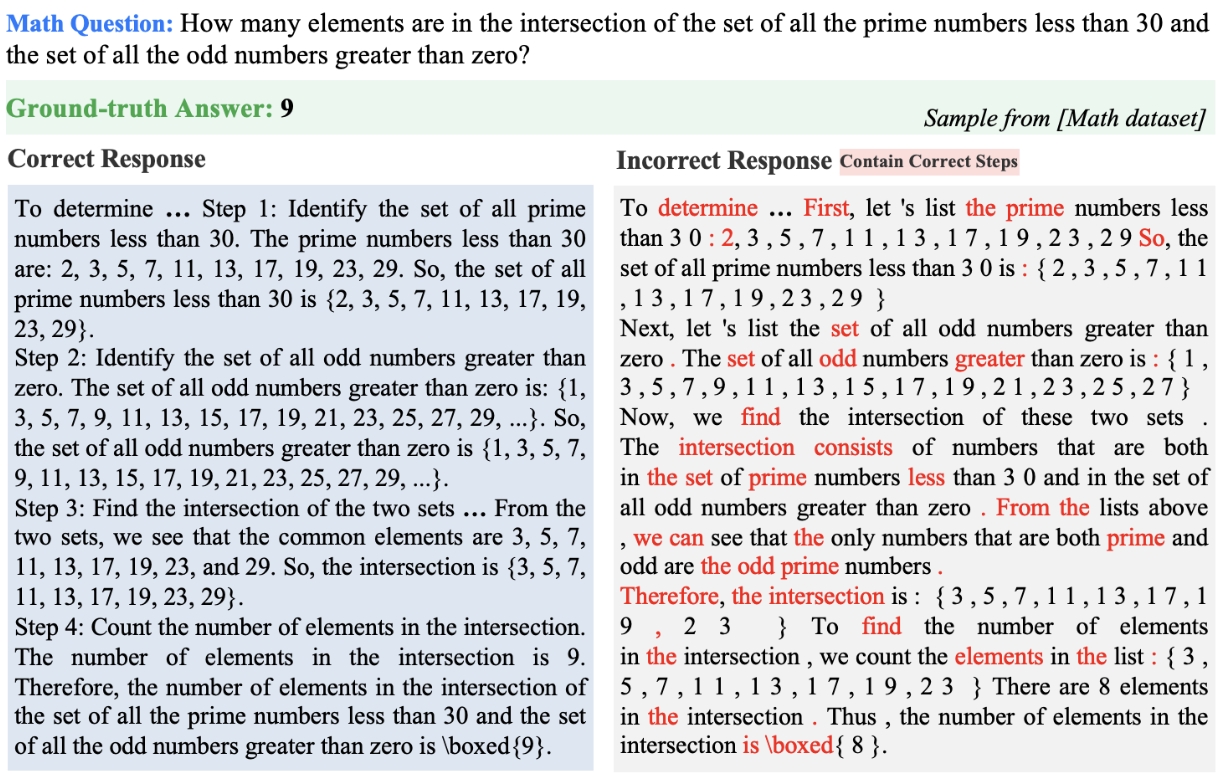}}
    \caption{Highlight the tokens in the harmful group, i.e., those should not be penalized in the negative response.}
    \label{fig:chap5_grpo_neg_token}
    \end{center}
\vskip -0in
\end{figure}

As a baseline, we also calculate the overlap between $\mathsf{Gap}(\vx)$ and a randomly shuffled ranking. 
The results, summarized in \cref{tab:grpo_topk_overlap}, show that the Top-K lists obtained from $\mathbb{E}_{n,l}[\hat{\alpha}^-_{n,l}]$ have significantly higher overlap with $\mathsf{Gap}(\vx)$ than with the random ranking. 
This suggests that our proposed influence score is an effective and computationally efficient proxy for identifying potential LLDisp issues \textit{before} the finetuning starts!

\begin{table}[h]
    \centering
    \resizebox{0.7\textwidth}{!}{
       \begin{tabular}{c|cc|cc}
       \toprule
        \textbf{Top-K} & \multicolumn{2}{c|}{\texttt{Qwen-1.5B-deepseek}} & \multicolumn{2}{c}{\texttt{Qwen-1.5B-math}} \\
         & $\mathbb{E}_{n,l}[\hat{\alpha}^-_{n,l}]$ & Random & $\mathbb{E}_{n,l}[\hat{\alpha}^-_{n,l}]$ & Random \\
        \hline
        10 & 50\% & 17.5\% & 60\% & 21.3\% \\
        15 & 75\% & 26.3\% & 75\% & 31.9\% \\
        \hline
        \end{tabular}
    }
    \caption{Ranking correlation experiment.}
    \label{tab:grpo_topk_overlap}
\end{table}

Furthermore, we demonstrate some concrete example of what those ``harmful tokens'' in negative responses looks like in \cref{fig:chap5_grpo_neg_token}.
A key insight is that many tokens in negative samples may be logically or step-correct, yet GRPO penalizes them indiscriminately using a shared advantage term $A_n$.
For instance, the red-highlighted tokens in incorrect responses also appear in correct ones.
Consequently, the accumulated gradient force on these tokens would cancel out in expectation, since $\mathbb{E}_n[A_n]=0, \forall n\in[N]$.

\subsection{Algorithm: Negative Token Hidden Reward}
Based on the analysis above, the way to refine GRPO is obvious: we can just penalize those harmful negative tokens by a ratio $\beta<1$.
We then set
\[
    \hat{A}_{n,l} = \beta A_{n}, \forall n,l\in \{n\in\mathcal{I}^-:\ \hat{\alpha}^-_{n,l}>\tau\},
\]
and leave the advantage of other tokens unchanged.
Since we want to mitigate the influence of such tokens, we usually have $0\leq\beta\leq1$.
We design $\tau$ as a tunable hyperparameter to control how many tokens are considered harmful.
Following our paper \citep{deng2025effect}, we also call this value negative token hidden reward (NTHR), which is easy to combine with GRPO and other variants.

We begin by evaluating the effectiveness of NTHR by examining $\mathsf{Gap}(\vx)$ using a bar plot format similar to that in \cref{fig:chap5_exp_grpo_01}. 
In this experiment, we set $\beta=0$, meaning that we entirely exclude the influence of negative tokens whose estimated influence scores $\hat{\alpha}^-_{n,l}$ exceed a predefined threshold $\tau$.
We compare NTHR against two baselines. 
The first is standard GRPO, corresponding to the case where $\beta=1$, in which all negative tokens are fully retained.
The second is GRPO+Random, where the same number of negative tokens as in the NTHR setting are randomly selected and their influence is removed.

As shown in \cref{fig:chap5_exp_grpo_02}, NTHR significantly alleviates the LLDisp problem, outperforming both baselines.
Moreover, we observe that the extent of improvement varies across different questions and appears to be correlated with the model’s reasoning capability. 
This observation opens up an interesting direction for future research: adapting token-level mitigation strategies to the difficulty of the task and the model's current learning state.

\begin{figure}[t]
    \begin{center}
    \centerline{\includegraphics[width=1\columnwidth,trim=0 0 0 0, clip]{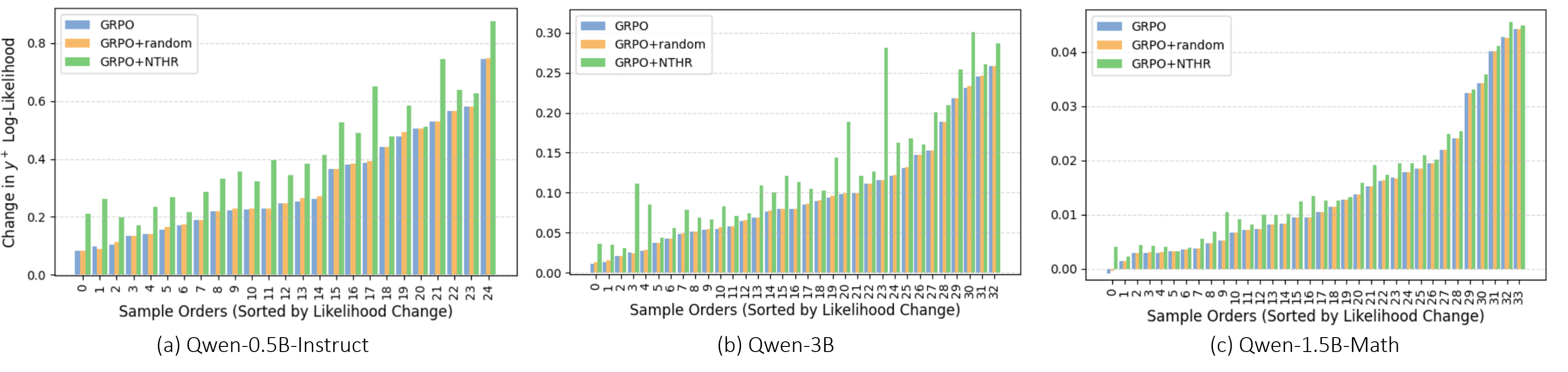}}
    \caption{Impact of NTHR on $\mathsf{Gap}(\vx)$ in different models.}
    \label{fig:chap5_exp_grpo_02}
    \end{center}
\vskip -0in
\end{figure}

Having demonstrated the impact of negative gradients on likelihood dynamics and the effectiveness of our method in mitigating the LLDisp issue, we now turn to evaluating the effect of negative gradients on downstream model performance. 
To this end, we finetune models of varying sizes on the MATH dataset (levels 3–5)~\citep{hendrycks2021measuring}, and assess their performance using greedy decoding across five math reasoning benchmarks: AIME24, AMC, MATH500, Minerva, and Olympiad.
Our results, summarized in \Cref{tab:greedy-math-merged}, show that GRPO augmented with NTHR consistently improves performance across all model sizes.
We also verified some claims mentioned in this section, e.g., throwing all the negative tokens might harm the system (as shown by the \texttt{Pos Only} results).

In summary, our proposed NTHR method, motivated by a token-level learning dynamics analysis, effectively identifies and attenuates harmful negative updates in standard GRPO. 
By reducing the effective learning rate (eLR) of the most detrimental tokens, we mitigate the LLDisp phenomenon to some extent. 
Our experiments on math reasoning tasks across models from 0.5B to 3B demonstrate that NTHR enhances the performance of GRPO over many datasets, offering a practical and principled improvement to preference-based finetuning.

\begin{table}[ht]
\centering
\resizebox{1\textwidth}{!}{
\begin{tabular}{lccccc c}
\toprule
\textbf{Base model + Method} & \textbf{AIME24} & \textbf{AMC} & \textbf{MATH500} & \textbf{Minerva} & \textbf{Olympiad} & \textbf{Avg.} \\
\midrule
\texttt{Qwen2.5-Math-1.5B} & & & & & & \\
\quad Base & 3.3 & 20.0 & 39.6 & 7.7 & 24.9 & 19.10 \\
\quad GRPO & 13.3 & 57.5 & \textbf{71.8} & 29.0 & 34.1 & 41.14 \\
\quad Pos Only & 10.0 & 57.5 & 70.6 & 30.1 & 31.0 & 39.84
\\
\rowcolor{blue!6}
\quad +NTHR & \textbf{16.7} & 57.5 & 70.8 & \textbf{30.5} & \textbf{34.2} & 41.94 \\
\midrule
\texttt{Qwen2.5-0.5B-Ins} & & & & & & \\
\quad Base & 0.0 & 2.5 & 33.4 & 4.4 & 7.0& 9.46 \\
\quad GRPO & 0.0 & 7.5 & 33.8 & \textbf{9.2} & 8.1 & 11.72 \\
\rowcolor{blue!6}
\quad +NTHR & 0.0 & \textbf{10.0} & \textbf{36.6} & 8.1 & 8.6 & \textbf{12.66} \\
\midrule
\texttt{Qwen2.5-1.5B-Ins} & & & & & & \\
\quad Base & 0.0 & 22.5 & 53.0 & 19.1 & 20.7 & 23.06 \\
\quad GRPO & 3.3 & 32.5 & 57.2 & 18.8 & \textbf{23.0} & 26.96 \\
\rowcolor{blue!6}
\quad +NTHR & \textbf{6.7} & \textbf{35.0} & 58.8 & 21.0 & 20.9 & \textbf{28.48} \\
\midrule
\texttt{Qwen2.5-Math-1.5B (deepscaler)} & & & & & & \\
\quad Base & 3.3 & 20.0 & 39.6 & 7.7 & 24.9 & 19.10 \\
\quad GRPO & 10.0 & 42.5 & 72.4 & \textbf{32.4} & 31.9 & 37.80 \\
\rowcolor{blue!6}
\quad +NTHR & \textbf{16.7} & 47.5 & 73.2 & 29.4 & 31.4 & 39.60 \\
\midrule
\texttt{Qwen2.5-3B} & & & & & & \\
\quad Base & 10.0 & 37.5 & 58.6 & 26.1 & 24.6 & 31.36 \\
\quad GRPO & 6.7 & 35.0 & 66.6 & 31.2 & \textbf{29.9} & 33.88 \\
\rowcolor{blue!6}
\quad +NTHR & \textbf{10.0} & \textbf{47.5} & 65.6 & 31.6 & 26.8 & \textbf{36.30} \\
\bottomrule
\end{tabular}
}
\caption{Results across selected math benchmarks for different \texttt{Qwen2.5} models and methods. The proposed method consistently provides average performance gains on various models.}
\label{tab:greedy-math-merged}
\end{table}

\section{Conclusion and Discussions}
\label{sec:case2_05}
This chapter covers a wide range of topics related to LLM finetuning, making it somewhat dense in content. 
To aid the reader in recalling the key ideas, we summarize the main takeaways from various perspectives in this concluding section.
In addition to the core insights, we also highlight some related works, outline potential future research directions, and discuss the limitations of our proposed learning dynamics framework.
To facilitate comparison across different finetuning methods and help readers recall the main contributions of each section, we provide a summary in \cref{fig:chap5_conclusion}.

\subsection{Summary of each Section}
This chapter begins with an overview of LLM finetuning methods in \cref{sec:case2_01}. 
By discussing both the benefits and potential risks introduced by finetuning, we highlight the need for a fine-grained behavioral analysis of model dynamics.
Such an analysis, we argue, could offer new insights into the advantages and limitations of various LLM finetuning algorithms.

We then apply our learning dynamics framework to the most fundamental finetuning approach, i.e., supervised finetuning (SFT), in \cref{sec:case2_02}. 
With the help of the teacher-forcing mechanisms and causal mask, we successfully extend the force analysis and $\mathcal{AKG}$ decomposition to a multi-token, auto-regressive setting.
We then circumvent the intractability of modeling the full response space by instead tracking the evolution of model confidence on selected representative responses. 
This framework provides novel explanations for various empirical observations during SFT training, including non-monotonic confidence trajectories and distinct inflection points in the learning curves of different responses types. 
Furthermore, we offer a novel hypothesis for why hallucinations are often exacerbated after SFT.
This explanation is also supported by multiple recent empirical studies.

Building on this foundation, \cref{sec:case2_03} extends the framework to Direct Preference Optimization (DPO), a more complex off-policy finetuning algorithm.
Despite its structural differences, DPO also updates parameters via $\nabla_\theta \log \pi$.
Hence, we can still derive an $\mathcal{AKG}$-style decomposition for DPO through gradient chain rule expansions.
A key distinction is the presence of explicitly imposed negative gradients. 
Our theoretical and empirical analyses reveal that these negative gradients introduce a counter-intuitive squeezing effect, which causes several undesirable behaviors, such as the degradation of both $\log \pi_{\theta^t}(\vy^+)$ and $\log \pi_{\theta^t}(\vy^-)$ over time, and the amplification of the so-called ``repeater phenomenon''. 
Motivated by this analysis, we propose a simple yet effective pipeline to mitigate these issues.

In the final section, \cref{sec:case2_04}, we extend our analysis to Group Relative Policy Optimization (GRPO), an even more complex and dynamic finetuning method based on on-policy reinforcement learning. 
GRPO is commonly applied in reasoning-intensive tasks. 
Unlike SFT or DPO, GRPO exhibits varying equivalent learning rates (eLRs) at the token level, and its training responses are updated in every iteration. 
These characteristics make the response-level force analysis insufficient. 
To address this, we develop a token-level extension of our framework. 
By analyzing a simplified Unconstrained Feature Model (UFM) (previously used in the analysis of the squeezing effect), we define a token-wise influence score capable of identifying harmful tokens \textit{before} the parameter updates. 
Leveraging this insight, we introduce Negative Token Hidden Reward (NTHR), a novel method that selectively reduces the influence of the most harmful tokens.
Our experimental results across multiple math reasoning benchmarks demonstrate the consistent superiority of NTHR over standard GRPO, highlighting both the practical impact and theoretical grounding of our approach.

\subsection{More on Negative Gradient}
Recently, numerous variants of DPO and GRPO have been proposed to address various limitations of the original algorithms.
These follow-up works are often designed to enhance robustness in specific settings, targeting issues such as instability, misalignment, or poor generalization.
In \cref{fig:chap5_related_works}, we present a force analysis of several representative variants, while the detailed learning dynamics decompositions of their loss functions are provided in Appendix B of our paper~\citep{ren2025learning_dynamics_LLM}.

Although these methods are motivated by diverse intuitions and theoretical considerations, we find that many of them can be naturally (and partly) interpreted within our learning dynamics framework. 
Interestingly, despite being independently developed, a common theme emerges across these approaches: they often implicitly address similar core aspects of preference optimization, such as making the training signals more on-policy or restraining the influence of negative gradients on low-confidence tokens.

Self-play Finetuning (SPIN, proposed by~\citet{chen2024self}) serves as a representative example of how our proposed framework provides a new perspective on understanding where its benefits come from.
SPIN is motivated by the observation that responses generated by smaller, pre-finetuned models are generally of lower quality compared to those found in SFT datasets. 
Leveraging this insight, the authors treat self-generated responses $\vy$ as negative examples and the ground-truth responses $\vy_u^+$ from the SFT dataset as positive examples.
Inspired by the loss structure of Generative Adversarial Networks (GANs,~\citealp{goodfellow2014generative}), SPIN maximizes the model's ability to distinguish between these two types of responses. 
After a series of derivations, the authors arrive at a loss function that is formally equivalent to the standard DPO objective, with the key distinction being that all $\vy_u^-$ responses are generated on-policy from the current model, while $\vy_u^+$ are drawn from the static dataset. 
As in \cref{fig:chap5_related_works}, the on-policy nature of the negative responses in SPIN mitigates the squeezing effect efficiently.

\begin{figure}[t]
    \begin{center}
    \centerline{\includegraphics[width=1\columnwidth,trim=0 0 0 0, clip]{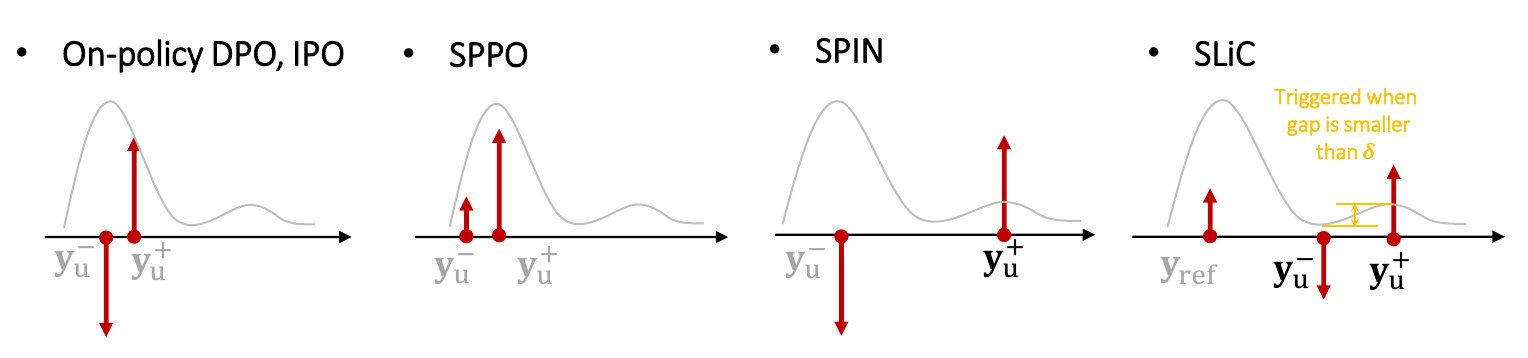}}
    \caption{Force analysis for some DPO variants.}
    \label{fig:chap5_related_works}
    \end{center}
\vskip -0in
\end{figure}

Furthermore, the SPPO method (a follow-up to SPIN) introduced by~\citet{wu2024self} takes this idea a step further by entirely eliminating the use of negative gradients. 
From the perspective of force analysis, the separation between $\vy_u^+$ and $\vy_u^-$ can still be achieved by emphasizing their relative strength, without explicitly penalizing the model through negative updates. 
Like SPIN, SPPO also relies on on-policy sampling, leading to more efficient and stable learning dynamics.

Besides the previous examples where the algorithm alters the negative gradient on a global level, here are more examples where a response-level or token-level relative strength is controlled.
Smaug \citep{pal2024smaug}, is a good example.
Theoretically, it assumes that $\vy_u^+$ and $\vy_u^-$ differ by only a single token. 
Then, the model’s confidence in $\vy_u^+$ after the shared prefix is guaranteed to decrease during standard DPO training (also an instance of the LLDisp problem). 
To mitigate this, it dynamically increases the learning rate for the positive gradient by up to 50 times (i.e., their $\lambda=50$) whenever the paired $\vy_u^-$ is detected to have a low confidence.
This strategy aligns well with our force analysis: applying a strong negative gradient in valley regions is harmful, and thus, amplifying the positive gradient (equivalently, decreasing the relative eLR for negative tokens) can alleviate the influence. 

Dynamically controlling the relative strength between positive and negative updates is also common in reinforcement learning, where it is often implemented through reward shaping with a small offset. 
For instance, a binary reward between ${(0, 1)}$ can be transformed into ${(-0.5, 0.5)}$ or ${(-0.1, 0.9)}$, the latter of which makes the effective learning rate for positive examples 9 times greater than that for negative ones.
Notably, the dynamic advantage mechanism in GRPO performs a similar implicit modulation, adapting the gradient magnitude based on relative outcome quality.

Dynamically controlling eLR also extends to domains where only negative updates are applied, such as machine unlearning. 
A representative example is the Negative Preference Optimization (NPO, \cite{zhang2024negative}), which modifies the standard DPO loss by removing the positive gradient part entirely. 
In this setting, the eLR for the negative gradient is dynamically adjusted based on the model’s prediction confidence. 
This illustrates how gradient magnitude control, informed by prediction certainty, is a general and effective tool for many tasks, like alignment, reasoning, and unlearning.

In summary, our analysis reveals that negative gradients play an increasingly critical role in LLM finetuning, particularly in the context of reinforcement learning-based methods. 
Despite their widespread use, a comprehensive understanding of how these gradients influence model behavior remains limited within the community. 
We hope that the learning dynamics framework and empirical findings presented in this chapter, as well as in our recent work~\citep{ren2025learning_dynamics_LLM, deng2025effect}, serve as a foundation for future research. 
We believe that a deeper exploration of these dynamics holds the potential to yield new insights and guide the development of more robust and efficient finetuning algorithms.

\subsection{Limitations}
We now briefly discuss the limitations of our analytical framework in the context of LLM. 
The primary issue lies in the gap between changes in model confidence and the model's actual capabilities. 
For instance, an increase in confidence on certain correct responses does not necessarily imply improved alignment, generalization, or reasoning ability. 
The answer to such questions is likely negative in many cases.

In traditional supervised learning tasks, monitoring the loss function is often effective for evaluation, since the test data is typically drawn from the same distribution as the training set. 
In such cases, the empirical risk closely approximates the true risk. 
However, this assumption does not generally hold for tasks involving LLMs, which are considerably more complex and harder to build distributional models for. 
Thus, a systematic evaluation approach is needed to bridge this conceptual gap.

Another limitation is about the assumptions we made in our $\mathcal{AKG}$ decomposition.
Specifically, in most of our analysis of \cref{sec:case1,sec:case2}, we usually only consider $\|\mathcal{K}^t\|_F$.
But in fact, $\mathcal{K}^t$ is a $V\times V$ matrix which can alter the direction of the updating force vector provided by $\mathcal{G}^t$.
Hence, a more detailed analysis is required. 
We believe that theoretical results from the NTK literature may provide useful insights for future work.

Furthermore, in the context of LLM finetuning (and more general finetuning methods), the model often operates in the so-called ``kernel regime,'' where $\mathcal{K}$ evolves slowly and remains relatively stable, thereby justifying approximations such as those made in the UFM setting \citep{yang2020tensor}. 
However, in the ``feature learning regime,'' where internal representations (and hence $\mathcal K$) undergo substantial changes during training, the usefulness of our analysis becomes less certain.
We will revisit this question in the next chapter by returning to a simpler supervised learning scenario.

The final limitation we want to highlight is the oversimplification of the Transformer architecture in our current framework. 
As an empirical theory intended to provide insights into model behavior, incorporating force analysis alongside core architectural components, such as the attention mechanism in Transformers, could yield deeper and more informative interpretations. 
In particular, we believe that integrating our framework with the circuit hypothesis \citep{lynch2025agentic} presents a promising direction for future exploration.

\begin{figure}[h]
    \begin{center}
    \centerline{\includegraphics[width=1\columnwidth,trim=0 0 0 0, clip]{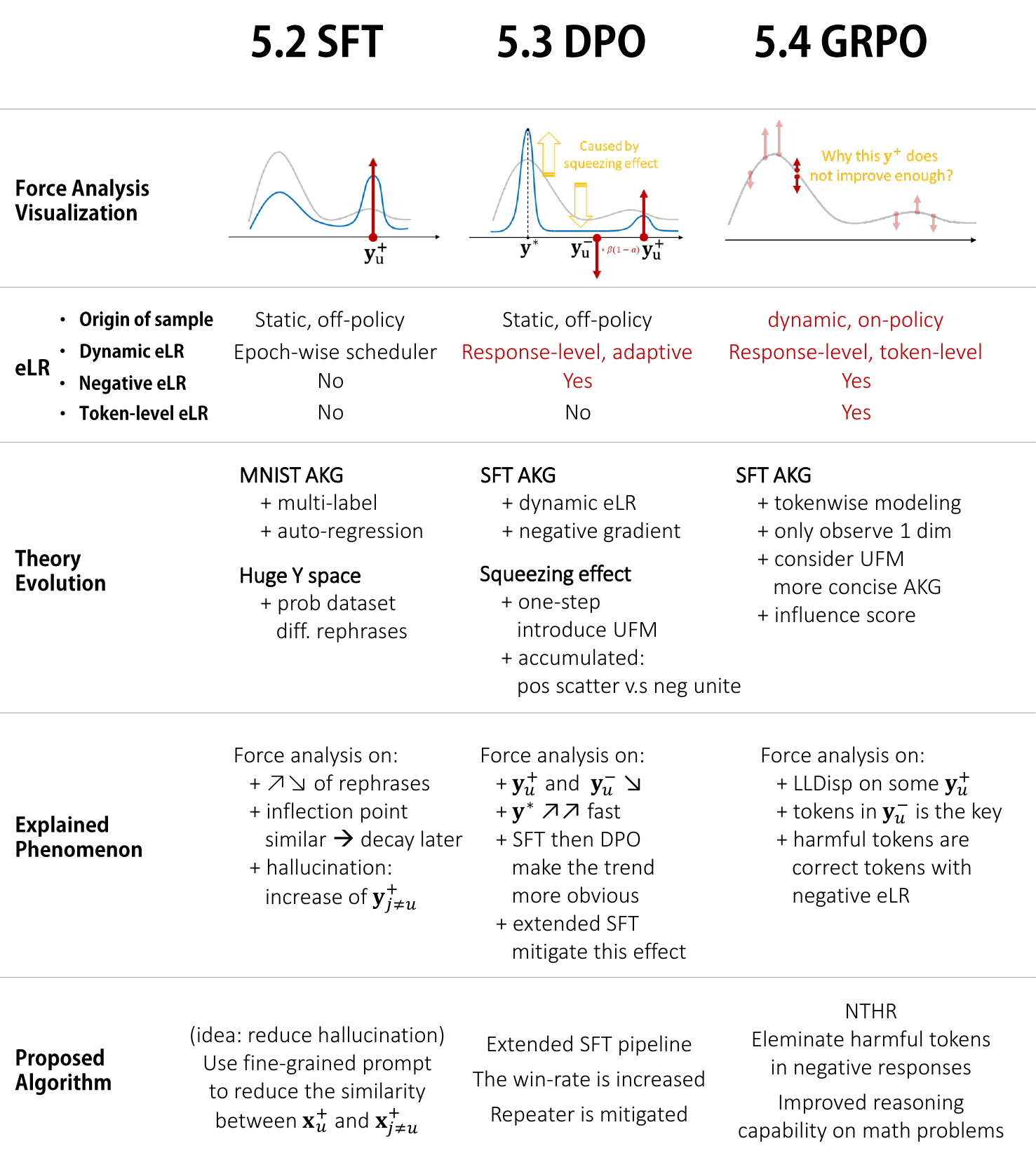}}
    \caption{A brief summary of this chapter.}
    \label{fig:chap5_conclusion}
    \end{center}
\vskip -0in
\end{figure}

\chapter{Feature Adaptation in Transfer Learning}
\label{sec:case3}
In previous chapters, the ``objective'' in our force analysis has been the model’s log-probability.
In this chapter, we extend this framework by treating the \textit{hidden representations} as the objective.
This shift is motivated by the belief that internal representations (or even some hidden structures like attention patterns, circuits, etc.) are critical for understanding how deep neural networks internalize knowledge and generalize to downstream tasks.

Compared to the LLM finetuning discussed in \cref{sec:case2}, where the model often operates in a ``NTK regime'' with almost static features, the transfer learning scenarios considered in this chapter involve a nontrivial degree of feature adaptation.
To facilitate a clean and interpretable analysis, we adopt a simplified and controlled setting: selecting a well-pretrained backbone model, followed by attaching a task head and then performing full-parameter finetuning.
This is usually named as a pretraining-finetuning pipeline, which is quite common in different fields.
Although our current focus is on representation-level adaptation, the techniques presented here can be extended to more complex and abstract internal mechanisms.

Despite this controlled setup, our experiments span a diverse array of tasks, including self-supervised image learning, image classification, molecular property prediction (both classification and regression), and image segmentation.
They also involve architectures ranging from CNNs and GNNs to Transformers. 
The goal is to verify that the principles uncovered via learning dynamics are both general and model-agnostic. 
We believe this justifies the potential applicability of our analysis to more complex systems such as LLMs.

Finally, much of the content in this chapter is based on ``\textit{How to Prepare Your Task Head for Finetuning}'' \citep{ren2023howto}. 
While the paper emphasizes practical finetuning strategies, this chapter reframes the analysis through the lens of learning dynamics, with a particular focus on feature adaptation.
In short, we argue that effective feature adaptation requires the energy provided by the gradient updates to be both \textit{consistent} and \textit{sufficient}.
Moreover, this force should not be overly distorted by the randomly initialized task head in the early stages of finetuning.

\section{Introduction}
\label{sec:case3_01}
In the era of deep learning, pretraining a model on a large dataset and adapting it to downstream tasks is a popular workflow.
With the help of a large amount of data and huge computing resources, the pretrained model can usually provide beneficial features for the downstream tasks.
Such a framework is proven to be efficient and effective in many domains and tasks, e.g., natural language processing, computer vision, graph-based learning, and so on.
Although different variants of finetuning methods are widely applied, e.g., direct finetuning, finetuning after linear probing \citep{kumar2022fine}, side-tuning \citep{zhang2020side}, using different learning rates for different layers \citep{zhang2021revisiting}, and more, a detailed understanding of how features are adapted during finetuning remains elusive.

Among the various topics within the pretrain-finetune pipeline, feature adaptation plays a central role in understanding how knowledge embedded in a pretrained model is transferred to downstream tasks. 
It is a common belief that most of the knowledge is embedded in the model's hidden features, and the task head plays a role of linear separation on top of them \citep{goodfellow2016deep}.
This topic is also intimately connected to broader challenges in transfer learning and out-of-distribution (OOD) generalization.

Unlike many existing approaches in transfer learning that assume a specific distribution shift and derive generalization bounds accordingly, this chapter focuses on the \textit{geometric adaptation} of hidden representations in deep neural networks. 
Leveraging the force analysis and learning dynamics framework introduced earlier, we provide a detailed characterization of feature adaptation using core notions such as ``energy'' and ``direction.''
As in other topics discussed in this thesis, the lens of learning dynamics offers valuable insight into the process of feature adaptation and further motivates the design of effective training strategies in practice.

\section{Learning Dynamics and Feature Adaptation}
\label{sec:case3_02}

\begin{figure}[t]
\vskip -0in
    \begin{center}
    \centerline{\includegraphics[width=1\columnwidth,trim=0 10 0 10, clip]{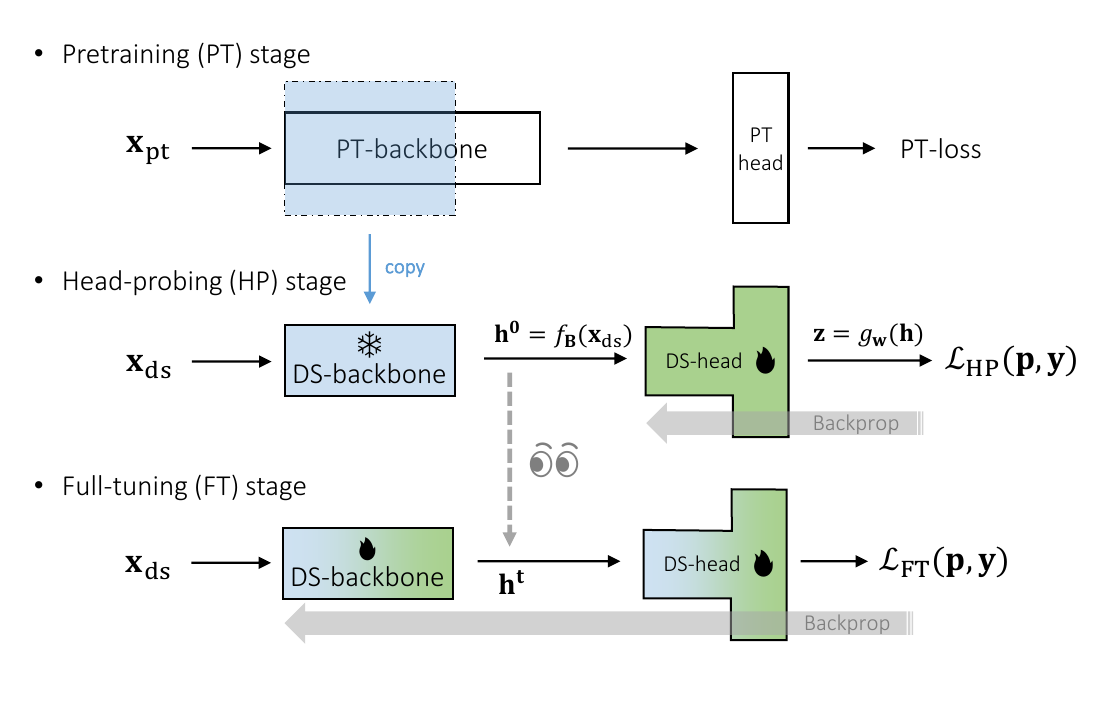}}
    \caption{The training pipeline considered in this chapter.}
    \label{fig:chap6_fthp}
    \end{center}
\vskip -0in
\end{figure}

\subsection{Background and Motivation}
We start by introducing a popular transfer learning pipeline studied by \citet{kumar2022fine}.
As demonstrated in \cref{fig:chap6_fthp}, it consists of three sequential stages:
\begin{itemize}
    \item Pre-training (PT) stage: This stage involves training a backbone network, either through standard supervised learning or self-supervised learning methods.
    \item Head-probing (HP) stage: We copy all (or part) of the pretrained backbone from the PT stage and attach a randomly initialized task-specific network, referred to as the task head. During this stage, the backbone parameters are frozen, and only the task head is trained via gradient descent on the objective $\mathcal{L}_{\text{HP}}(\vp, \vy)$.
    \item Full-tuning (FT) stage: Building on the HP stage, we unfreeze the backbone and jointly train the entire network, including both the backbone and the task head, using the objective $\mathcal{L}_{\text{FT}}(\vp, \vy)$.
\end{itemize}
Our primary focus in this chapter is the adaptation of the hidden feature representation, specifically examining how it evolves before and after the full-tuning stage, as the emoji in \cref{fig:chap6_fthp}.

The motivation comes from rethinking the analysis of \citet{kumar2022fine}.
Specifically, they claim that directly full-tuning the model with a randomly initialized task head can distort the well-learned features, and hence harm the out-of-distribution (OOD) performance significantly.
They thus propose to freeze the backbone and train the task head to its convergence first, and then finetune the full network together.
They theoretically show that a fully converged task head will make hidden features almost stable during finetuning, and hence preserve the useful features for the downstream tasks.

\begin{figure}[t]
\vskip -0in
    \begin{center}
    \centerline{\includegraphics[width=0.6\columnwidth,trim=0 0 0 0, clip]{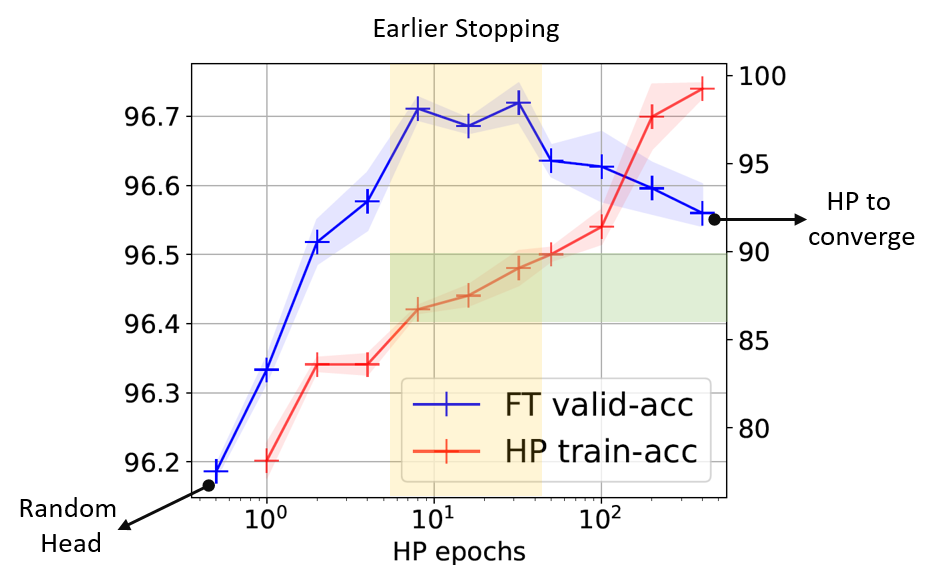}}
    \caption{A transfer learning experiment on ReNet18 (pretrained on ImageNet-1K and finetuned on STL10 \citep{coates2011analysis}.}
    \label{fig:chap6_start_exp}
    \end{center}
\vskip -0in
\end{figure}

However, intuitively, we sometimes might expect the hidden features to adapt to the downstream tasks to some extent, depending on the similarity between the source and target distributions. 
This idea is supported by the experiments presented in \cref{fig:chap6_start_exp}. 
While a fully converged HP stage does outperform vanilla finetuning (consistent with the results of \citealt{kumar2022fine}), we observe that early-stopped HP, where the HP-accuracy is roughly 85\%, often achieves even better performance.

In these experiments, all FT runs are conducted in identical settings, with the only difference being the number of epochs spent in the HP phase.
This suggests that the initialization of the task head at the beginning of FT plays a critical role: it implicitly determines the extent to which the representation is allowed to adapt, thereby influencing the final downstream performance.

\subsection{Formalizing the Feature Adaptation}
We now formalize the feature adaptation to describe how it changes.
Although in practice the backbone model $f_{\theta}$ is often highly complex, such as a stack of convolutional blocks in ResNet or several attention layers in Transformers, for theoretical tractability, we consider a simplified Overparameterized Model (OPM) in this chapter. 
This model is closely related to the Unconstrained Feature Model (UFM) introduced in \cref{sec:case2}, where the task head $g_{\vw}$ is modeled as a simple linear mapping with $\vw \in \mathbb{R}^{V \times h}$.
The main difference between them is that in OPM, we further specify the hidden feature $\vh$ to an overparameterized linear mapping.

In the OPM, the feature representation is defined as $\vh = \vB \vx\in\mathbb{R}^{h\times 1}$, where $\vB \in \mathbb{R}^{h \times d}$ is a large matrix parameterizing the backbone function $f_{\vB}$, and $\vx\in\mathbb{R}^{d\times 1}$ is the input data. 
Importantly, the hidden dimension $h$ is allowed to be significantly larger than the input dimension $d$, which justifies the term ``overparameterized.''

We assume that $\vB^0$ is a pretrained backbone obtained from a source task, such that the initial feature $\vh^0 = \vB^0 \vx$ already provides a meaningful representation.
The objectives of the post-training are to leverage this pretrained backbone $\vB^0$, along with an initialized task head $\vw^0$, to achieve strong performance on the target downstream task.
Since our target is to observe the adaptation of $\vh$ and the head-probing stage does not change $\vh$, we first focus on the full-tuning stage.
Supposing we have $N$ data samples $(\vx_n,y_n)_{n=1}^N$, the cross-entropy training loss function is then written as:
\begin{equation}
    \loss_\text{FT}=\frac{1}{N}\sum_{n=1}^N\ve_{y_n}^\top\log\vp;\ \vp\triangleq\sigma(\vz);\ \vz\triangleq g(\vh)=\vw\vh; \ \vh\triangleq f(\vx)=\vB\vx
    \label{eq:chap6_overparm_loss}
\end{equation}
where $\sigma(\cdot)$ is a standard $\mathsf{Softmax}$ function and $\vp_n=\sigma(\vw\vB\vx_n)$.

Then, the one-step evolution of the hidden representation $\vh^t$ is\footnote{The proof is similar to previous chapters; details are given in the paper \citep{ren2023howto}.}:
\begin{equation}
    \vh_{o}^{t+1} - \vh_{o}^{t} = -\eta\frac{1}{N} \sum_{n=1}^{N} 
    \left( \underbrace{\mathcal{K}^{t}(\vx_o,\vx_u)}_{\text{slow-change}}
           \underbrace{\left( \nabla_{\vh}\vz^t(\vx_u) \right)^\top}_\text{direction}
           \underbrace{\left( \vp^{t}(\vx_u)- \ve_{y_n}\right)}_\text{energy}
    \right)
      + \mathcal{O}(\eta^2),
    \label{eq:chap6_z_dynamics}
\end{equation}
where $\mathcal{K}^t(\vx_o,\vx_u)$ is still the eNTK and the $\mathcal{G}^t$ term is also $\vp-\ve$.

The primary differences between the decomposition used here and those studied in \cref{sec:fundamentalLD,sec:case1,sec:case2} lie in the following two aspects.
First, since we now track the evolution of the hidden representation $\vh$ rather than the log-probability $\log \vp(\vx_o)$, the $\mathcal{A}$ term is no longer present in the decomposition (or, equivalently, it is simply the identity).
Second, by explicitly splitting the network at the hidden embedding $\vh$, the $\mathcal{K}$ term now corresponds to the eNTK of the backbone network $f_{\vB}$.
The parameters mapping $\vh$ to the output are captured separately as the “direction” term mentioned earlier, specifically $\nabla_{\vh} \vz^t(\vx_u)$.
Under the simplified linear setting assumed in this analysis, this gradient becomes constant across all $n$ examples, i.e., $\nabla_{\vh} \vz^t(\vx_u) = \vw^t$.

Note that $\vw^t$ plays a very important role in our analysis here: in the forward-pass, $\vw^t$ is operating as a hyper-plane that separates the features of different classes; in the backward-pass, it is projecting the energy provided by $\mathcal{G}^t$ towards representations.
Furthermore, this $\vw^t$ also keeps changing during the full-tuning process, which makes the analysis more complicated. 
We will elaborate on this later in this chapter.

We then check the head-probing stage.
From the model's perspective, all the influence imposed via the head-probing stage is through the value of $\vw^0$, i.e., the task head before the start of full-tuning.
This $\vw^0$ is also the task head at the end of the HP stage, which is then correlated with the HP-accuracy, as demonstrated by the red curve in \cref{fig:chap6_start_exp}.
If we further assume that the data used in HP and FT stages are identical, this HP-accuracy is also the starting FT-accuracy at $t=0$.

\subsection{Factors that Matter: Energy and Direction}
Now, we can first describe the evolution of the hidden representation from $t=1$ to $t=T$ using the L2-norm in the following proposition:
\\
\begin{prop} \label{prop:AIE}
    $\mathbb{E}_{\vx_{n}}\|\vh^T_{n}-\vh^0_{n}\|_2\leq c\cdot E_\mathit{AIE}$,
    where $E_\mathit{AIE}\triangleq\mathbb{E}_{\vx_{n}}\|\vp^0_{n}-\ve_{y_n}\|_2$ is the Average Initial Energy (AIE).
    Here $c$ is a constant, $T$ is the FT epochs, and $\vp^0_{n}$ is the model's prediction of sample $\vx_{n}$ at the beginning of FT (or equivalently, at the end of HP).
\end{prop}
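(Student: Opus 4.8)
The plan is to bound the total displacement of the hidden feature by summing the one-step changes given in \cref{eq:chap6_z_dynamics} and then controlling each term. Writing $\vh^T_o - \vh^0_o = \sum_{t=0}^{T-1}(\vh^{t+1}_o - \vh^t_o)$ and applying the triangle inequality, each summand is (up to the $\mathcal{O}(\eta^2)$ remainder) of the form $-\tfrac{\eta}{N}\sum_n \mathcal{K}^t(\vx_o,\vx_n)(\vw^t)^\top(\vp^t(\vx_n) - \ve_{y_n})$. Using submultiplicativity of the operator norm, $\|\vh^{t+1}_o - \vh^t_o\|_2 \le \tfrac{\eta}{N}\sum_n \|\mathcal{K}^t(\vx_o,\vx_n)\|_{\mathrm{op}}\,\|\vw^t\|_{\mathrm{op}}\,\|\vp^t(\vx_n) - \ve_{y_n}\|_2 + \mathcal{O}(\eta^2)$. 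So the whole bound reduces to controlling three families of quantities along the FT trajectory: the eNTK norms $\|\mathcal{K}^t\|_{\mathrm{op}}$, the head norms $\|\vw^t\|_{\mathrm{op}}$, and the per-example energies $\|\vp^t(\vx_n) - \ve_{y_n}\|_2 = \|\mathcal{G}^t(\vx_n)\|_2$.

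The first two I would handle by invoking the slow-change / near-stationarity assumptions already in play in this thesis: in the (over-parameterized, near-kernel) regime, $\mathcal{K}^t$ stays close to $\mathcal{K}^0$ and $\vw^t$ stays close to $\vw^0$, so both norms are bounded by constants independent of $t$ and $T$; these constants get absorbed into the $c$ of the proposition. The genuinely important quantity is the energy. The key structural point is that $\mathcal{G}^t(\vx_n) = \vp^t(\vx_n) - \ve_{y_n}$ is exactly the gradient of the CE loss, so $\|\mathcal{G}^t(\vx_n)\|_2$ is monotone non-increasing along gradient descent (each update reduces loss, hence reduces the gap); this is precisely the ``$\mathcal{G}^t$ decays'' phenomenon emphasized in \cref{sec:fundamentalLD_03}. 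Therefore $\|\vp^t(\vx_n) - \ve_{y_n}\|_2 \le \|\vp^0(\vx_n) - \ve_{y_n}\|_2$ for all $t$, and averaging over $n$ gives the $E_{\mathrm{AIE}}$ factor. Summing $T$ steps each bounded by (const)$\cdot\eta\cdot E_{\mathrm{AIE}}$ and folding $\eta T$ (the effective ``time'') together with the norm constants into $c$ yields $\mathbb{E}_{\vx_n}\|\vh^T_n - \vh^0_n\|_2 \le c\cdot E_{\mathrm{AIE}}$, with the $\mathcal{O}(\eta^2)$ terms contributing a lower-order correction swallowed by $c$ when $\eta$ is small.

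The main obstacle is making the decay-of-energy step rigorous rather than heuristic: monotone decrease of the loss does not literally imply pointwise-in-$n$ decrease of $\|\vp^t(\vx_n) - \ve_{y_n}\|_2$ for every individual example at every step, because an SGD (or even full-batch) update can temporarily raise the per-example gap while lowering the average. I would address this either by working with the batch-averaged energy $\tfrac1N\sum_n \|\mathcal{G}^t(\vx_n)\|_2$ and relating it to $\sqrt{\mathcal{L}_\mathrm{FT}^t}$ (so that monotone loss descent gives a monotone envelope for the average, which is all the proposition needs after the outer expectation), or by stating the result as a trend/bound with the constant $c$ absorbing a uniform-over-$t$ supremum of the energy ratio — consistent with the ``physics-style'' conventions of this thesis where Proposition-level statements carry an unspecified constant. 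A secondary, milder concern is justifying that $\eta T$ stays bounded (or that the geometric-type sum of decaying energies converges), which follows if the energy contracts at a definite rate; I would note this suffices and leave the sharp rate to the cited paper \citep{ren2023howto}.
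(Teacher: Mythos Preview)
Your approach is essentially the same as the paper's: the thesis itself only sketches the argument (deferring details to Appendix~A of \citep{ren2023howto}), and that sketch is precisely ``disentangle the direction and energy terms using the Cauchy--Schwarz inequality, $(\nabla_{\vh}\vz)^\top(\vp_0-\ve_y)\le\|\nabla_{\vh}\vz\|_2\cdot\|\vp_0-\ve_y\|_2$,'' followed by bounding the slow-change direction factor as a constant and absorbing everything except the initial energy into $c$. Your telescoping $+$ triangle inequality $+$ norm-separation scheme, together with the non-increasing-energy observation and your honest caveat about per-example monotonicity, matches this exactly; the only cosmetic difference is that you phrase the separation as operator-norm submultiplicativity on the three factors $\mathcal{K}^t$, $\vw^t$, $\mathcal{G}^t$, whereas the paper bundles the first two into $\nabla_{\vh}\vz$ and invokes Cauchy--Schwarz once.
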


Although this bound is not so tight\footnote{You can find it in Appendix A of our paper \citep{ren2023howto}. Note that the notations used there differ slightly from those in this thesis.}, the underlying phenomenon is quite inspiring. 
Specifically, the result states that if we interpret the average L2-distance between the model’s predictions and their corresponding one-hot labels as a form of ``energy,'' then the L2-norm of the feature adaptation is upper bounded by this energy. 
In other words, greater energy induces greater feature adaptation, and conversely, lower energy limits the extent of adaptation.
Since the loss function in the HP stage is non-decreasing, the number of HP epochs (defined by $\tau$) is a reasonable approximation of AIE, as illustrated in \cref{fig:chap6_energy_direction}-(a).
To draw this figure, we collect the value of $[\vp^0_n]_{y_n}$, i.e., the predicting probability of the correct label, for all training examples and put them in 10 bins.
It is clear that a larger $\tau$ makes the associated $\vp^0$ closer to the one-hot distribution.

\begin{figure}[t]
     \centering
     \begin{subfigure}[b]{0.40\textwidth}
         \centering
         \includegraphics[width=\textwidth]{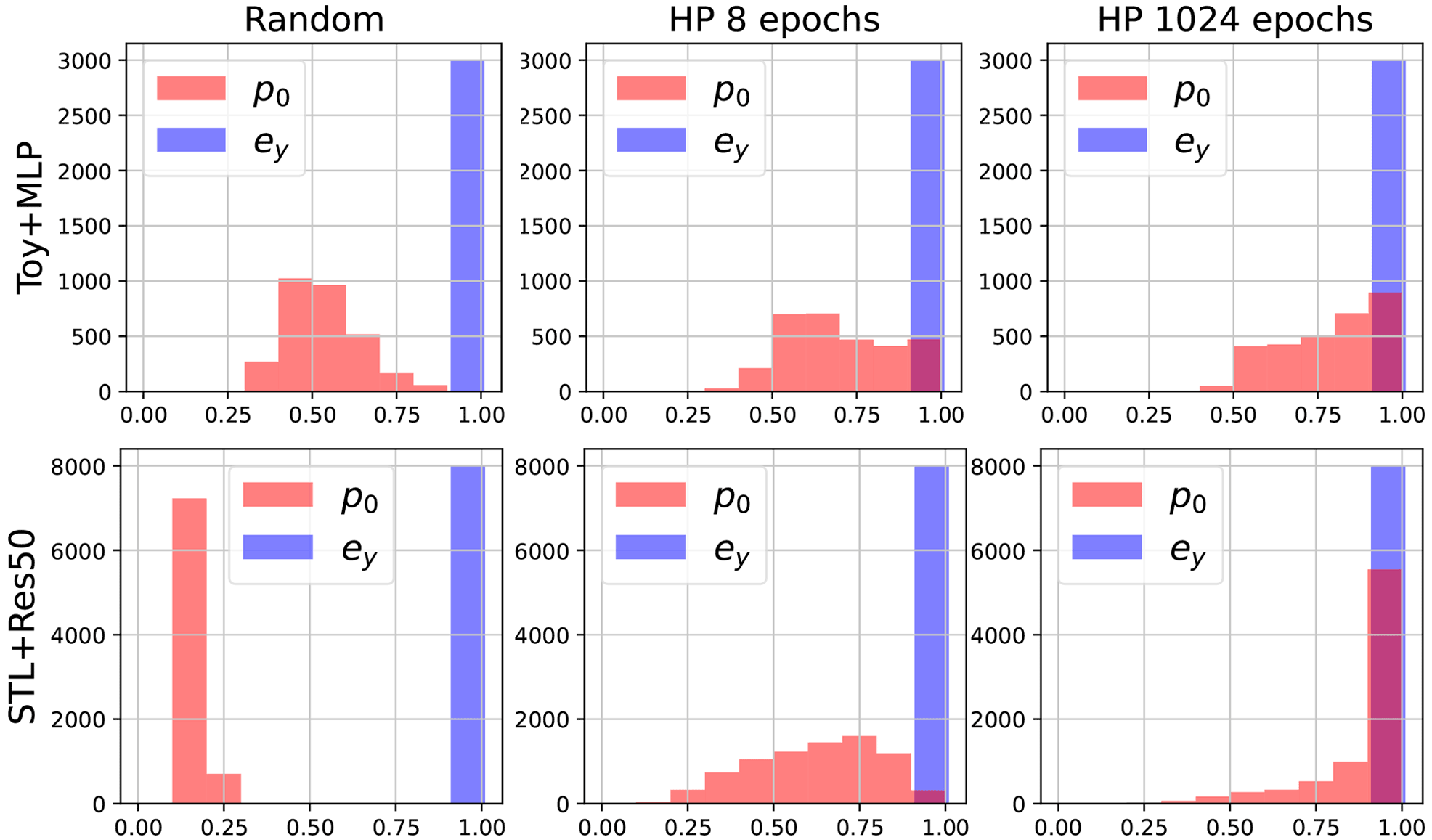}
         \caption{Influence of $\tau$ on energy.}
     \end{subfigure}
     \hfill
     \begin{subfigure}[b]{0.55\textwidth}
         \centering
         \includegraphics[width=\textwidth]{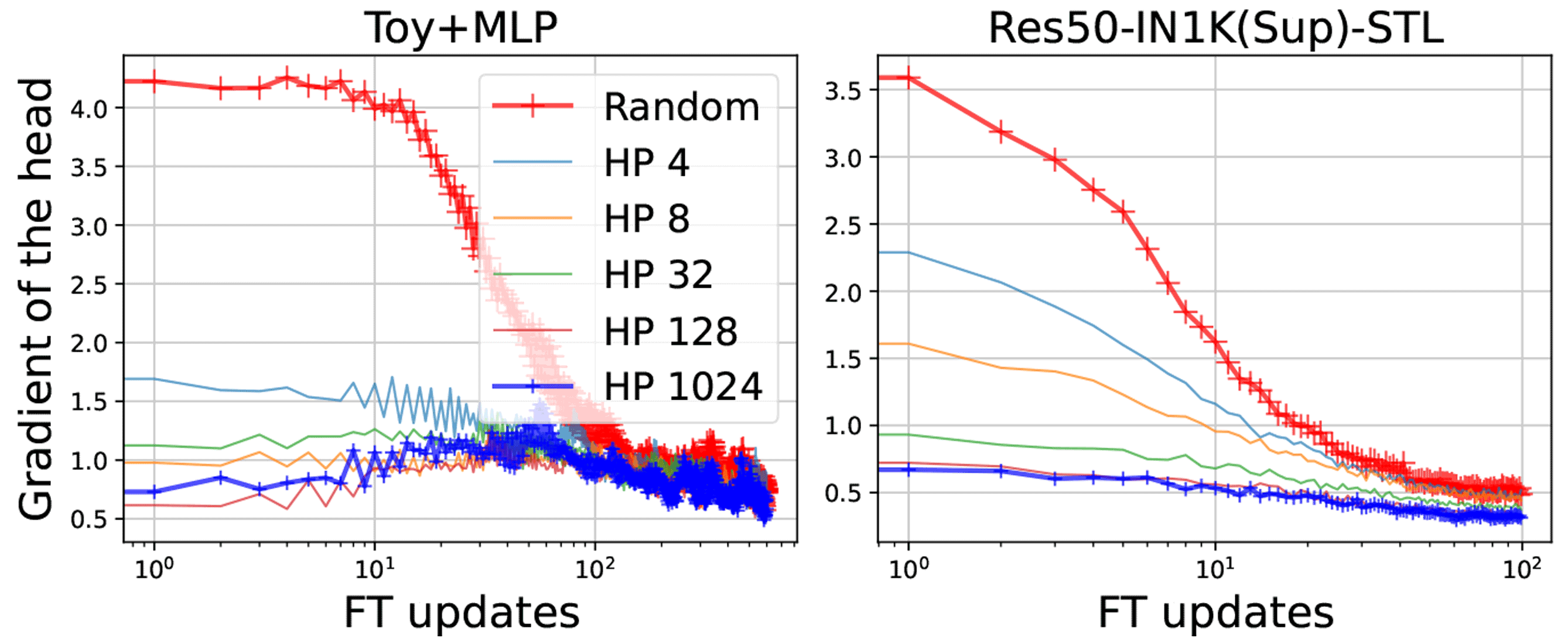}
         \caption{Influence of $\tau$ on direction.}
     \end{subfigure}
        \caption{Left: statistics of $[\vp_0]_y$ and $[\ve_y]_y$ under different HP epochs.
                 Right: approximated change of the ``direction'' term in \cref{eq:chap6_z_dynamics} under different $\tau$.
                 Toy+MLP means pretrain a 4-layer MLP on full MNIST, then transfer to a distorted subset of MNIST.
                 Res50-IN1K(Sup)-STL means pretrain a ResNet50 on ImageNet-1k using supervised classification, then transfer to a downstream task on STL10. 
                }
        \label{fig:chap6_energy_direction}
\end{figure}

Following Proposition \ref{prop:AIE}, we can link the adaptation of features to $\tau$ via AIE.
In the proof, we disentangle the dependence of the direction and energy terms using the Cauchy-Schwarz inequality,
i.e., $(\nabla_{\vh}\vz)^\top(\vp_0 - \ve_y)\leq\|\nabla_{\vh}\vz\|_2 \cdot \|\vp_0 - \ve_y\|_2$.
However, the direction term \cref{eq:chap6_z_dynamics} indeed plays an important role, especially at the beginning of finetuning.

To verify our hypothesis about the direction term, we depict the change of this term during finetuning in the right panel of \cref{fig:chap6_energy_direction}.
Actually, under our simplified setting, this quantity can be approximated by the norm of the gradients to $\vw^t$,
since $\lVert \nabla_{\vh}\vz^{t+1} - \nabla_{\vh} \vz^{t} \rVert_F^2
= \lVert \vw^{t+1} - \vw^t \rVert_2^2
= \eta^2 \lVert \nabla_{\vw}\loss \rVert_2^2$.
As we find $\|\vw^t\|_2$ changes little during the finetuning stage,
the large $\lVert \nabla_{\vw}\loss \rVert_2$ is more likely from a big direction change.
As illustrated by \cref{fig:chap6_energy_direction}-(b), when $\tau=0$, $\nabla_{\vh}\vz^t$ changes a lot at the beginning of FT, which can make $\vh$ change in inconsistent directions.
When $\tau=1024$, the direction term changes only a little through finetuning.
Introducing the directional change of $\vw^t$ allows for a more comprehensive understanding of feature adaptation.
We will elaborate on this in the next section.

\subsection{How Features Evolve under Different Settings}
The findings regarding the direction term inspire us to look deeper at the difference between a strong adaptation (e.g., $\tau=0$) and a mild adaptation (e.g., $\tau=32$).
To get an overall picture of $\vh$'s change, only observing $\|\vh^T-\vh^0\|_2$ is not enough.
Hence, we analyze the following four quantities related to the similarity between the features before finetuning and afterwards:
$\|\vh^T-\vh^0\|_2$, $\|\vh^T\|_2$, $\langle \vh^T,\vh^0\rangle$, and $\cos(\vh^T,\vh^0)$.
Before delving deep into how we get insights on how these values change, we first describe these interesting trends using the following proposition.
\\
\begin{prop}[Informal]\label{chap6_prop2}
    In an overparameterized two-layer linear model,
    when $\tau$ increases (the AIE decreases),
    $\|\vh^T-\vh^0\|_2^2$ monotonically decreases while $\|\vh^T\|_2^2$ and $\langle \vh^T,\vh^0\rangle$ exhibit a \textbf{quadratic} trend.
    The trend of $\cos(\vh^T,\vh^0)$ is hard to predict, but there is a phase in which this value increases fast.
\end{prop}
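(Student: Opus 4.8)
\textbf{Proof proposal for Proposition~\ref{chap6_prop2}.}

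The plan is to analyze the full-tuning dynamics in the overparameterized two-layer linear model directly, exploiting the fact that with $\vh = \vB\vx$ and $\vz = \vw\vh$, both $\vB$ and $\vw$ evolve by gradient descent on the same scalar loss, so their updates are tightly coupled. First I would write the coupled updates $\vw^{t+1} = \vw^t - \eta\,\vG^t(\vw^t,\vB^t)$ and $\vB^{t+1} = \vB^t - \eta\,\vw^{t\top}\vG^t(\cdot)\,\vx^\top$-style expressions (summed over the dataset), where the common factor is the gap term $\vp^t - \ve_y$, whose initial size is exactly $E_{\mathit{AIE}}$. The key structural observation, already used for Proposition~\ref{prop:AIE}, is that $\vh^T_n - \vh^0_n$ is controlled by the accumulated energy, and that $\langle \vh^T, \vh^0\rangle = \|\vh^0\|_2^2 + \langle \vh^T - \vh^0, \vh^0\rangle$ and $\|\vh^T\|_2^2 = \|\vh^0\|_2^2 + 2\langle \vh^T - \vh^0, \vh^0\rangle + \|\vh^T - \vh^0\|_2^2$. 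So everything reduces to understanding two scalars as functions of $\tau$ (equivalently, of the energy $E \triangleq E_{\mathit{AIE}}(\tau)$, which is monotone decreasing in $\tau$): the ``radial'' part $r(E) \triangleq \langle \vh^T - \vh^0, \vh^0\rangle$ and the ``length'' part $\ell(E) \triangleq \|\vh^T - \vh^0\|_2^2$.

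Next I would argue the monotone claim: $\ell(E)$ is (to leading order in $\eta$) a sum of squared-norm contributions each scaling with the energy available at that step, and since smaller $E_{\mathit{AIE}}$ forces all subsequent gaps to be smaller (the loss is non-increasing and bounded by its initial value), $\|\vh^T - \vh^0\|_2^2$ decreases monotonically as $\tau$ grows. This is the cleanest part and follows the Cauchy--Schwarz/energy-budget reasoning of Proposition~\ref{prop:AIE}. For the quadratic claim, the idea is that $r(E)$ behaves like $a E$ to leading order — the first-order displacement of $\vh$ along $\vh^0$ is linear in the initial gap — while the correction picks up a $-bE^2$ term because the head $\vw^t$ itself drifts during FT by an amount proportional to $E$, and this drift enters the projection onto $\vh^0$ multiplicatively. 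Hence $\langle \vh^T, \vh^0\rangle = \|\vh^0\|^2 + aE - bE^2 + O(E^3)$ and $\|\vh^T\|_2^2 = \|\vh^0\|^2 + 2aE - 2bE^2 + \ell(E) + O(E^3)$; since $\ell(E) = O(E^2)$ as well, both are quadratic polynomials in $E$ up to higher order, giving the stated non-monotone (quadratic) trend as $E$ sweeps its range. I would make the sign of the leading coefficients explicit so that the ``rises then falls'' shape is genuinely present over the relevant range of $\tau$.

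For the $\cos(\vh^T,\vh^0)$ statement, the plan is modest: write $\cos(\vh^T,\vh^0) = \frac{\|\vh^0\|^2 + r(E)}{\|\vh^0\|\,\sqrt{\|\vh^0\|^2 + 2r(E) + \ell(E)}}$ and observe that because numerator and denominator both contain $r(E)$ but the denominator additionally carries the positive term $\ell(E)$, there is no globally monotone behavior — hence ``hard to predict.'' However, during the early FT phase where $\vw^t$ is still reorienting (the ``direction'' term of \cref{eq:chap6_z_dynamics} changing rapidly, as in \cref{fig:chap6_energy_direction}-(b)), the feature drift is dominated by its component along the current readout rather than orthogonal spreading, so $r$ grows faster than $\ell$ accumulates, and $\cos$ increases quickly; I would isolate this regime by a short-horizon expansion and bound the orthogonal component.

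The main obstacle I expect is controlling the \emph{coupled} drift of $\vw^t$ and $\vB^t$ well enough to pin down the sign and order of the $E^2$ coefficient in $r(E)$ — in particular, showing that the head's reorientation contributes a genuine second-order (not merely $O(E^2)$-bounded) term with a definite sign, rather than something that could vanish or flip. This requires tracking the interaction between $\|\vw^t\|_2$ (empirically nearly constant) and the directional change of $\vw^t$, and arguing it cleanly in the overparameterized linear regime; a perturbative (small-$\eta$, or small-$E$) expansion around the HP endpoint, combined with the observation that $\|\vw^t\|_2$ stays $O(1)$, is the route I would take, deferring the less delicate monotone and cosine parts to follow once this expansion is in hand.
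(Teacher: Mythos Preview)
Your plan takes a genuinely different route from the paper, and the difference is exactly where your acknowledged obstacle lives.

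The paper does not track the coupled $(\vw^t,\vB^t)$ dynamics step by step, nor expand perturbatively in the scalar energy $E$. Instead it (i) reduces the sample-averaged quantities $\bar{d}_\text{euc}$, $\bar{d}_\text{dot}$, $\|\vh^T\|_2^2$ to inner products of the vectorized backbone $\vb\triangleq\mathsf{vec}(\vB)$ at times $0$ and $T$ (the factor $\mathbb{E}_\vx[tr(\vx\vx^\top)]$ drops out as a common constant); (ii) invokes the NTK-regime closed form for the \emph{converged} parameters at $t\to\infty$ (Equation~(8) of \cite{lee2019wide}), giving $\vb^T = \vb^0 - (\nabla_{\vb}\vq_0)^\top\kappa_0^{-1}(\vq_0 - Y)$, where $\vq_0\in\R^{N}$ is the vector of model predictions on the $N$ training points at the start of FT and $\kappa_0$ is the eNTK of the full network; and (iii) uses a homogeneity identity for the linear model, $(\nabla_{\vb}\vq_0)\,\vb^0=\vq_0$, to collapse $\vb^T\cdot\vb^0$ to $\vb^0\cdot\vb^0 - \vq_0^\top\kappa_0^{-1}(\vq_0-Y)$, and similarly for $\vb^T\cdot\vb^T$. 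These are \emph{exact} quadratic forms in $\vq_0$ (with $\kappa_0$ positive definite), so the ``quadratic trend'' is literal: for instance $\vb^T\cdot\vb^0$ is maximized at $\vq_0^*=\tfrac12 Y$. Since $\tau$ controls how far $\vq_0$ has moved toward $Y$ during HP, sweeping $\tau$ amounts to evaluating these quadratics along a path from $\vq_0\approx\mathbf{0}$ to $\vq_0\approx Y$, yielding the rise-then-fall shape and an explicit three-phase table. This bypasses your main obstacle entirely: the NTK closed form already integrates out all of the joint $(\vw,\vB)$ evolution, so no sign or order of a perturbative $E^2$ coefficient needs to be controlled.

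Two further gaps in your plan. First, your expansion $r(E)=aE-bE^2+O(E^3)$ is local in $E$; to recover the \emph{global} quadratic with critical point $\vq_0=\tfrac12 Y$ well inside the range, you would have to argue that all higher-order terms vanish, which is essentially the linearization/NTK statement in disguise --- so you end up needing the paper's key step anyway. Second, the ``phase in which $\cos$ increases fast'' is a range of $\tau$ (equivalently of $\vq_0$), not an early-FT-time window; your argument about $\vw^t$ reorienting during early FT is on the wrong axis. In the paper's parametrization the cosine is a ratio of two quadratic-in-$\vq_0$ quantities whose numerator and denominator move at different rates in the middle phase (roughly $\vq_0$ between $\tfrac12 Y$ and $\alpha Y$), which produces the fast change, while the other two phases leave its trend indeterminate.
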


\begin{figure}[t]
\vskip -0in
    \begin{center}
    \centerline{\includegraphics[width=0.9\columnwidth,trim=0 0 0 0, clip]{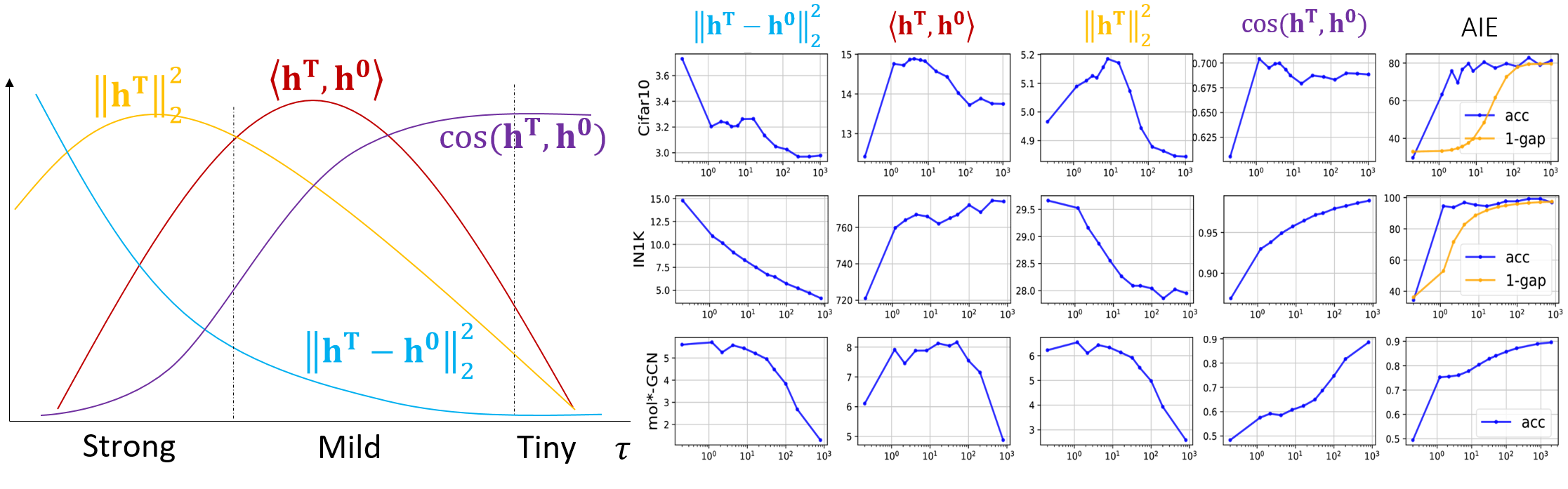}}
    \caption{How $\vh^T$ changes(assume full-tuning $T$ epochs to converge) compare with $\vh^0$ when head-prob $\tau$ epochs.}
    \label{fig:chap6_zadapt_5metrics}
    \end{center}
\vskip -0in
\end{figure}

\cref{fig:chap6_zadapt_5metrics} provides a clear demonstration of how these four quantities change when head-probing $\tau$ epochs.
We also empirically verify these trends in different settings, e.g., training a ResNet18 on CIFAR10, training a ResNet50 on a subset of IN1K, and training a Graph Convolution Network (GCN, \cite{kipf2016semi}) on a Molecular-pcba dataset (a benchmark with multi-label classification task, \cite{hu2020ogb}).
The results align with our proposition quite well.

With the help of this proposition, we can conceptualize feature adaptation as a geometric transformation in vector space. 
In the upper-left panel of \cref{fig1:chap6_zadapt_3D_real}, we compare various adapted features $\vh^T$ originating from the same initial feature $\vh^0$, each corresponding to a different initialization $\vw^0$ obtained from the head-probing stage at different epochs.

When $\tau$ is relatively large (implying lower energy), we observe only mild or tiny adaptation. 
As shown in the curves of \cref{fig:chap6_zadapt_5metrics}: $\|\vh^T - \vh^0\|_2^2$ is small; $\|\vh^T\|_2$ and $\langle \vh^T, \vh^0 \rangle$ remains similar to $\vh^0$; and $\cos(\vh^T, \vh^0)$ changes only slightly. 
This indicates that $\vh^T$ has been slightly stretched along the direction of $\vh^0$, preserving directional alignment.

\begin{figure}[t]
  \centering
  \begin{minipage}[t]{1\linewidth}  
      \centering
      \label{fig:subfig5}\includegraphics[width=1\linewidth]{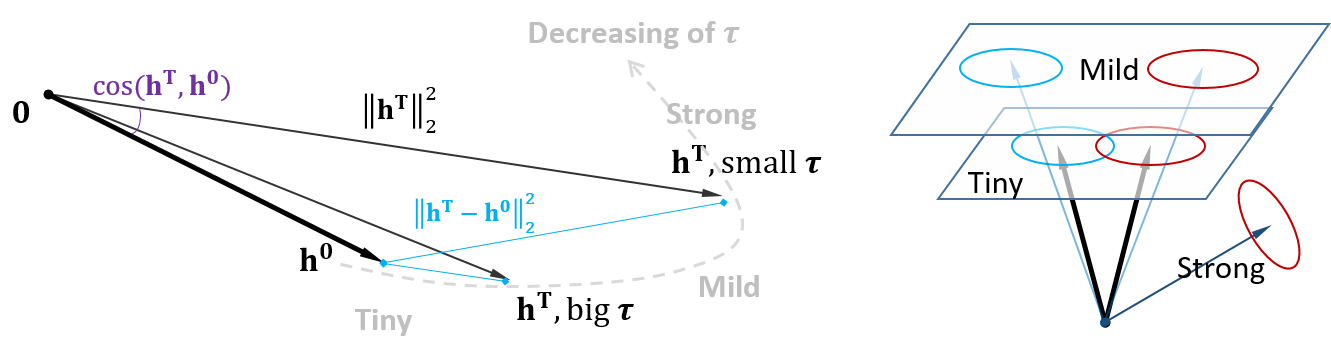}
  \end{minipage}
  \begin{minipage}[t]{1\linewidth}
      \centering
      \label{fig:subfig6}\includegraphics[width=1\linewidth]{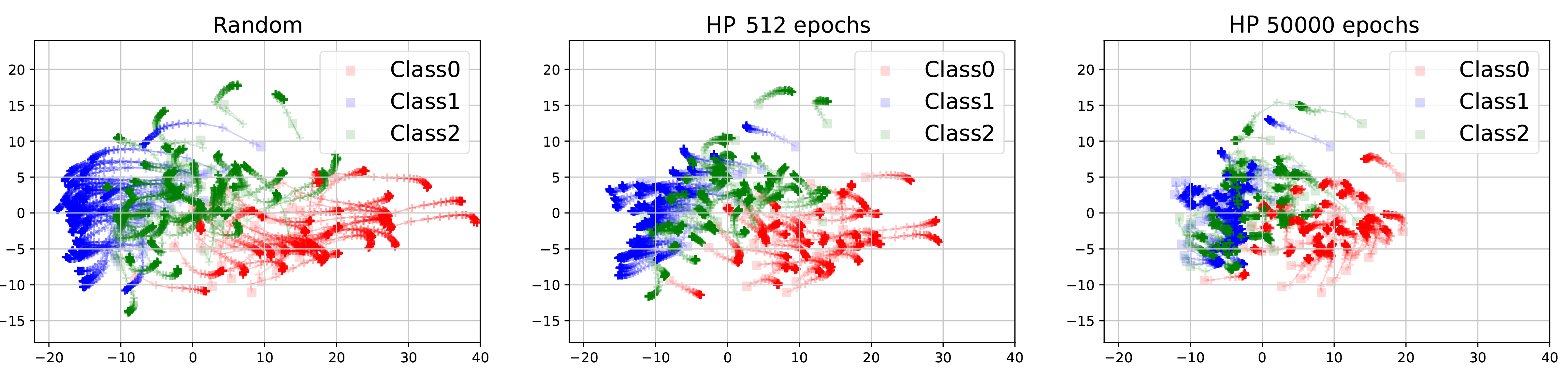}
  \end{minipage}
  \caption{Up: A vector-space illustration of how features adapt during finetuning. Bottom: Empirical verification of the vector-based intuition using a toy example. From left to right: strong, mild, and tiny adaptation.}
  \label{fig1:chap6_zadapt_3D_real}
\end{figure}

In contrast, when $\tau$ is small and the adaptation becomes strong, we observe a sharp drop in $\cos(\vh^T, \vh^0)$ and a corresponding decline in $\langle \vh^T, \vh^0 \rangle$. 
This signifies that $\vh^T$ is no longer aligned with $\vh^0$; the direction of the feature vector has changed substantially. 
In other words, the original representation has been significantly altered and now points to a different direction.

The upper-right panel of \cref{fig1:chap6_zadapt_3D_real} illustrates the three levels of feature adaptation in a 3D vector space. 
This visualization also provides intuition for why mild adaptation is often preferable for downstream tasks.
Specifically, when adaptation is minimal, the two feature clusters (e.g., the \textcolor{blue}{blue} and \textcolor{red}{red} ellipses) remain largely overlapping, limiting their separability and thereby reducing classification performance. 
In contrast, a mild adaptation gently stretches $\vh^T$, increasing the distance between clusters and enhancing their separability, which is a beneficial effect for downstream classification. 
However, in the strong adaptation case, the features are altered excessively, which can distort the original features and ultimately degrade performance.

This trend is further validated using a 3-class toy example shown at the bottom of \cref{fig1:chap6_zadapt_3D_real}, where we visualize the evolution of $\vh^t$ via t-SNE \citep{van2008visualizing}. 
The transparency of each point encodes the time step $t$ during the full-tuning process, and the three panels correspond to strong, mild, and tiny adaptation regimes, respectively. 
As clearly shown, the level of adaptation correlates with cluster separation and structure preservation, aligning with our theoretical intuition.

\subsection{Proof Sketch of the Trends}
Before turning to the practical strategies motivated by our analysis of feature adaptation, we first provide a sketch of the proof for Proposition~\ref{chap6_prop2} (a regression problem version), whose conclusion may not be immediately intuitive.
The detailed proof can be found in Appendix D of our paper \citep{ren2023howto}.

The proof begins by explicitly writing down the quantities we want to track:
\begin{itemize}
    \item Euclidean distance: $\bar{d}_\text{euc}\triangleq\mathbb{E}_{\vx\sim\mathcal{D}}[\|\vh^t - \vh^0\|_2^2]$;
    \item Dot product: $\bar{d}_\text{dot}\triangleq\mathbb{E}_{\vx\sim\mathcal{D}}[\vh^t\cdot\vh^0]$;
    \item Cosine: $\bar{d}_\text{cos}\triangleq\mathbb{E}_{\vx\sim\mathcal{D}}\left[ \frac{\vh^t\cdot\vh^0}{\|\vh^t\|_2*\|\vh^0\|_2}\right]$.
\end{itemize}
Then, under OPM that $\vh=\vB\vx$, we can simplify them to:
\begin{align}
    \bar{d}_\text{euc}&=M*\left[tr(\vB^t\cdot\vB^t) - 2tr(\vB^t\cdot\vB^0) + tr(\vB^0\cdot\vB^0)\right] \nonumber\\
    \bar{d}_\text{dot}&=M*tr(\vB^t\cdot \vB^0)\nonumber\\
    \|\vh^t\|_2^2&=M*tr(\vB^t\cdot\vB^t);\quad \|\vh^0\|_2^2=M*tr(\vB^0\cdot\vB^0),
\end{align}
where $M\triangleq\mathbb{E}_{\vx\sim\mathcal{D}}[tr(\vx\vx^\top)]$ is a constant in the following analysis.

We then use $\vb\triangleq\mathsf{vec}(\vB)\in\mathbb{R}^{h*d\times 1}$ to represent the column-wise concatenation of the $h\times d$ matrix.
Then, all the traces involved in the distance above can be simplified using the fact that $tr(\vB^t\cdot\vB^0)=\vb^t\cdot\vb^0$:
\begin{itemize}
    \item Euclidean distance: $\bar{d}_\text{euc}\propto\vb^t \cdot \vb^t - 2\vb^t\cdot\vb^0 + \vb^0\cdot\vb^0$;
    \item Dot product: $\bar{d}_\text{dot}\propto\vb^t \cdot \vb^0$;
    \item Cosine: $\bar{d}_\text{cos}\propto\frac{\vb^t \cdot \vb^0}{\sqrt{\vb^t \cdot \vb^t * \vb^0 \cdot \vb^0}}$.
\end{itemize}

Recall that we want to link the value of $\tau$ to different trends of these distances, and the value of $\tau$ can only influence these distances via the following path: $\tau\rightarrow\text{HP-loss}\rightarrow\vw_0\rightarrow\vB^t\rightarrow\vb^t\rightarrow\bar{d}$.
To get more insights, we approximate the behavior of this model in the NTK regime, in which the converged parameters can be analytically calculated.
Specifically, be applying Equation (8) in \cite{lee2019wide} and assuming $t\rightarrow\infty$, we can have:
\begin{align}
    \vb^t = \vb^0 -(\nabla_{\vb}\vq_0)^\top \kappa_0^{-1} (\vq_0-Y),
    \label{eq:chap6_ntk_theta}
\end{align}
where $\vq_0\in\R^{N\times 1}$ is the model's prediction on $N$ training samples at the begining of full-tuning, i.e., $t=0$.
And $\kappa_0=(\nabla_{\vb}\vq_0)^\top\nabla_{\vb}\vq_0\in\mathbb{R}^{N\times N}$ is the eNTK on $X$, where $X\triangleq[\vx_1,\dots,\vx_N]\in\mathbb{R}^{N\times d}$.
We use $\kappa$ here to represent the eNTK on the full dataset, which is calculated slightly differently from our pair-wise $\mathcal{K}^t$ term in other chapters.
Specifically, since we assume $V=1$ in this part, each $\mathcal{K}(\vx_i,\vx_j)$ can be considered as an element in $\kappa$ here.

Then, we can write down $\vb^t \cdot \vb^0$ or $\vb^t \cdot \vb^t$ by representing $\vb^t$ using $\vb^0$ following \cref{eq:chap6_ntk_theta}.
For example, after several steps, $tr(\vB^t\cdot\vB^0)$ can be written as the following quadratic form:
\begin{align}
    tr(\vB^t\cdot\vB^0) = \vb^t\cdot\vb^0 
    &= \vb^0\cdot\vb^0 - (\nabla_{\vb}\vq_0\cdot\vb^0)^\top \kappa_0^{-1} (\vq_0-Y)\nonumber\\
    &=\vb^0\cdot\vb^0-\vq_0^\top\kappa_0^{-1}(\vq_0-Y),
\end{align}
where $\nabla_{\vb}\vq_0\cdot\vb^0=\vq_0$ is proved in our Lemma 1 in \cite{ren2023howto}.
Since $\kappa_0 = X(\vB^0 \cdot \vB^0 + \|\vw_0\|_2^2\cdot I_{d\times d}) X^\top$ is usually positive definite, this quadratic form has a global maximum at $\vq^*_0=\frac{1}{2}Y$.

Similarly, we can calculate the critical point of $\vB^t\cdot \vB^t$ following a similar procedure.
We omit the details due to the page limits and only summarize the trend of different distances and quantities in \cref{tab:chap6_dist_trend} and \cref{fig:chap6_quadratic}.
Remember when $\vq_0=Y$, the gradient of $\vB^0$ is zero, hence $\vh^t=\vh^0$ (the ideal case in \cite{kumar2022fine}).
When $\vq_0$ moves linearly from $Y$ to $\bmf{0}$, we have the following three phases as demonstrated in \cref{tab:chap6_dist_trend}, which is roughly how we define ``tiny'', ``mild'', and ``strong'' adaptations.

\begin{itemize}
    \item Tiny, $Y\rightarrow\frac{1}{2}Y$: $\vh^t$ lengthens its norm with a slight change in the direction, hence $\bar{d}_\text{dot}$ also increases;
    \item Mild, $\frac{1}{2}Y\rightarrow\alpha Y$: the norm of $\vh^t$ keeps increasing, but its direction begins to change, which makes $\bar{d}_\text{dot}$ decrease;
    \item Strong, $\alpha Y\rightarrow \bmf{0}$: the norm of $\vh^t$ begins to decrease and the angle between $\vh^t$ and $\vh^0$ drastically changes, which makes $\vh^t$ changes a lot.
\end{itemize}

\begin{table}[h]
    \centering
    \begin{tabular}{c|ccccc}
    \toprule
     $\vq_0=$& $AIE$ & $\bar{d}_\text{euc}$   & $\bar{d}_\text{dot}$ & $\|\vh^t\|_2$ & $\bar{d}_\text{cos}$ \\ \hline
    \textbf{$Y\rightarrow \frac{1}{2}Y$} 
            & $\uparrow$ & $\uparrow$ & $\uparrow$   & $\uparrow$    & $?$ ($\downarrow$) \\
    \textbf{$\frac{1}{2}Y\rightarrow \alpha Y$} 
            & $\uparrow$ & $\uparrow$ & $\downarrow$ & $\uparrow$    & $\downarrow\downarrow$ \\
    \textbf{$\alpha Y\rightarrow 0$} 
            & $\uparrow$ & $\uparrow$ & $\downarrow$ & $\downarrow$  & $?$ ($\downarrow$) \\ \hline
    \end{tabular}
    \caption{Question marks in the last column mean we cannot accurately predict its change. But it usually decreases in experiments.}
    \label{tab:chap6_dist_trend}
\end{table}

\begin{figure}[h]
\vskip -0in
    \begin{center}
    \centerline{\includegraphics[width=0.8\columnwidth,trim=0 0 0 0, clip]{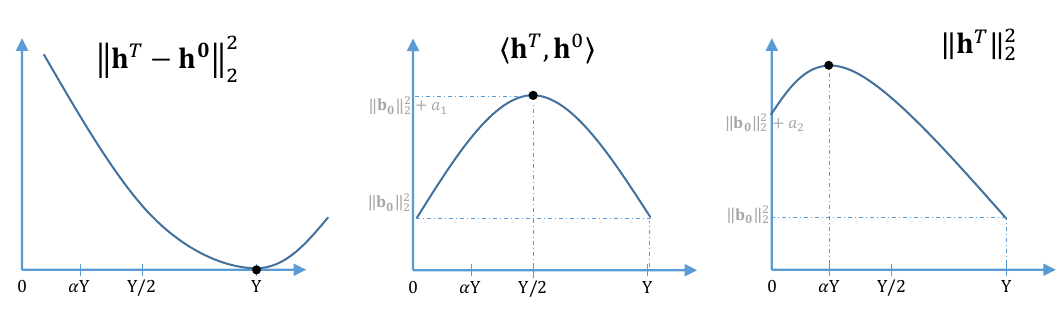}}
    \caption{Illustrations of different metrics if they are 1-D quadratic functions. The curves in \cref{fig:chap6_zadapt_5metrics} are a summarization of them.}
    \label{fig:chap6_quadratic}
    \end{center}
\vskip -0in
\end{figure}

\section{Strategies to Control Adaptation}
\label{sec:case3_03}
We already have a good understanding of how the features adapt under different settings from the analysis inspired by learning dynamics.
The next step is then design good HP-FT algorithms to further improve the downstream performance.
The general principle considered here is a ``\textit{phenomenon $\rightarrow$ hypothesis $\rightarrow$ solution}'' workflow, in which we use the validation performance for verification.
For example, we can have: \textit{HP train-acc converge to 100\% $\rightarrow$ not enough energy $\rightarrow$ use smaller $\tau$}.

However, it is hard to know how much energy is beneficial, as the neural network might not encode the input samples as we humans expect.
Hence, we suggest starting from the basic setting, i.e., using a linear head and sweeping $\tau$ to get a high-level understanding of how the energy influences the performance.
Usually, selecting the optimal $\tau^*$ using validation accuracy can ensure a reasonably good test result.
\cref{fig:chap6_vacc_sweep} provides a good example of doing such a sweep under different settings.
It is clear that in most cases, HP to convergence is not the optimal solution, meaning that leaving enough energy for feature adaptation would be beneficial.

\begin{figure}[t]
    \centering
    \includegraphics[width=1\textwidth]{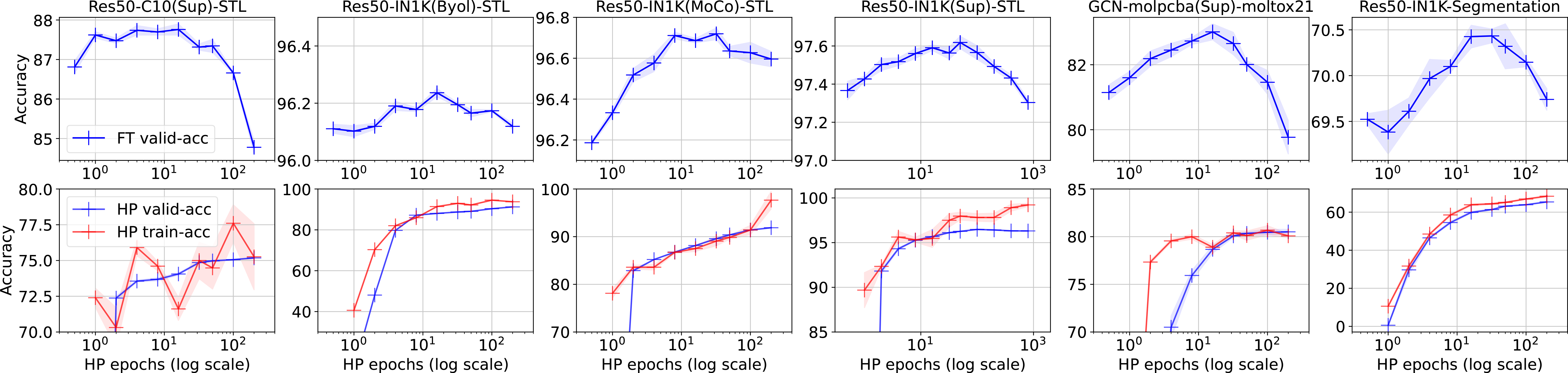}
    \caption{Sweeping $\tau$ from 0 to 200 (in $2^n$ fashion).
            The valid accuracy of the full-tuning-only setting is the leftmost point in each panel.
            The first 4 columns are on the image classification task, the fifth one is on the graph task,
            and the last one is on the image segmentation task.}
    \label{fig:chap6_vacc_sweep}
\end{figure}

Other than the basic $\tau$ sweeping methods for controlling the adaptation energy, we also provide some advanced tricks, which are only applicable to specific scenarios and need to be more careful.
For example, if we really want tiny energy but using the linear head only achieves less than 50\% training accuracy after head probing, we can consider an MLP head to increase the converged training accuracy (hence reduce the energy).
If we want a mild adaptation, but the training accuracy during HP goes to 100\% too fast, using label smoothing during HP can be considered.
If we believe the downstream task only needs low-level features of the backbone, copying the parameters of only one part of the backbone is a good choice.
However, these advanced tricks also have side effects that are deleterious to the downstream performance.
Here is a brief introduction to some of them, and the detailed design and experimental verifications can be found in our paper.

\textbf{MLP Head:}
Instead of using a linear head, \citet{chen2020improved} claim that using an MLP head can improve the head's separation ability and hence provide more tolerance for the feature's quality.
Following the analysis of this paper, we can consider this trick when we want small energy, while the linear head cannot reach a high HP training accuracy even after a long HP.
For example, in the first two columns in \cref{tab:chap6_main_tab2}, the HP-train-acc in the linear head cases plateaued after reaching 92\% or 78\%, while a 2-layer MLP can reach 99\% and 95\%.
As the energy decreases, features adapt less during finetuning and the final models generalize better.

However, we should be careful when applying this trick if we want to use a small $\tau$ (i.e., large energy).
Because a larger head will make the inconsistent direction problem of $\nabla_{\vh}\vz^t$ more serious.
Hence, increasing the head capacity in this case would make the head converge more slowly and hence prolong such a chaos phase.

\begin{table}[h]
\centering
\resizebox{0.9\textwidth}{!}{
    \begin{tabular}{cccc|cccccc}
    \hline
     &  & Sim-DomainA & Sup-DomainB &  & Sim-STL & Sup-STL &  & Sim-STL & Sup-STL \\ \hline
    \multicolumn{1}{c|}{HP-train-acc} & \multicolumn{1}{c|}{} & 92.676 & 78.213 & \multicolumn{1}{c|}{} & 96.875 & \multicolumn{1}{c|}{100} & \multicolumn{1}{c|}{} & {\color[HTML]{000000} 97.607} & {\color[HTML]{000000} 100} \\
    \multicolumn{1}{c|}{1-$\cos(\vh^T,\vh^0)$} & \multicolumn{1}{c|}{} & 0.1269 & 0.1422 & \multicolumn{1}{c|}{} & {\color[HTML]{000000} 0.0044} & \multicolumn{1}{c|}{{\color[HTML]{000000} 0.017}} & \multicolumn{1}{c|}{} & {\color[HTML]{000000} 0.0049} & {\color[HTML]{000000} 0.0256} \\
    \multicolumn{1}{c|}{$\|\vh^T-\vh^0\|_2^2$} & \multicolumn{1}{c|}{} & 5.247 & 4.752 & \multicolumn{1}{c|}{} & {\color[HTML]{000000} 1.792} & \multicolumn{1}{c|}{{\color[HTML]{000000} 6.716}} & \multicolumn{1}{c|}{} & {\color[HTML]{000000} 1.731} & {\color[HTML]{000000} 7.391} \\
    \multicolumn{1}{c|}{FT-val-acc} & \multicolumn{1}{c|}{\multirow{-4}{*}{\begin{tabular}[c]{@{}c@{}}Baseline\\ Linear-head\end{tabular}}} & 78.075 & 61.545 & \multicolumn{1}{c|}{\multirow{-4}{*}{\begin{tabular}[c]{@{}c@{}}Baseline\\ Small energy\\ $\eta_\text{ft}=1$\\ $\eta_\text{hp}=1$\end{tabular}}} & {\color[HTML]{333333} 93.914} & \multicolumn{1}{c|}{{\color[HTML]{333333} 97.581}} & \multicolumn{1}{c|}{\multirow{-4}{*}{\begin{tabular}[c]{@{}c@{}}Small energy\\ $\eta_\text{ft}=0.9$\\ $\eta_\text{hp}=0.9$\end{tabular}}} & {\color[HTML]{000000} 94.015} & {\color[HTML]{000000} 97.694} \\ \hline
    \multicolumn{1}{c|}{HP-train-acc} & \multicolumn{1}{c|}{} & {\color[HTML]{EA4335} 99.121} & {\color[HTML]{EA4335} 94.883} & \multicolumn{1}{c|}{} & 96.875 & \multicolumn{1}{c|}{100} & \multicolumn{1}{c|}{} & 97.982 & 100 \\
    \multicolumn{1}{c|}{1-$\cos(\vh^T,\vh^0)$} & \multicolumn{1}{c|}{} & {\color[HTML]{4285F4} 0.1098} & {\color[HTML]{4285F4} 0.1104} & \multicolumn{1}{c|}{} & {\color[HTML]{EA4335} 0.014} & \multicolumn{1}{c|}{{\color[HTML]{EA4335} 0.0549}} & \multicolumn{1}{c|}{} & {\color[HTML]{EA4335} 0.0109} & {\color[HTML]{EA4335} 0.0849} \\
    \multicolumn{1}{c|}{$\|\vh^T-\vh^0\|_2^2$} & \multicolumn{1}{c|}{} & {\color[HTML]{4285F4} 4.942} & {\color[HTML]{4285F4} 4.206} & \multicolumn{1}{c|}{} & {\color[HTML]{EA4335} 1.841} & \multicolumn{1}{c|}{{\color[HTML]{EA4335} 9.278}} & \multicolumn{1}{c|}{} & {\color[HTML]{EA4335} 3.041} & {\color[HTML]{EA4335} 13.881} \\
    \multicolumn{1}{c|}{FT-val-acc} & \multicolumn{1}{c|}{\multirow{-4}{*}{2MLP-head}} & {\color[HTML]{EA4335} 78.453} & {\color[HTML]{EA4335} 63.773} & \multicolumn{1}{c|}{\multirow{-4}{*}{\begin{tabular}[c]{@{}c@{}}More energy\\ $\eta_\text{ft}=1$\\ $\eta_\text{hp}=0.9$\end{tabular}}} & {\color[HTML]{EA4335} 94.304} & \multicolumn{1}{c|}{{\color[HTML]{EA4335} 97.92}} & \multicolumn{1}{c|}{\multirow{-4}{*}{\begin{tabular}[c]{@{}c@{}}Opposite Energy\\ $\eta_\text{ft}=0.9$\\ $\eta_\text{hp}=1$\end{tabular}}} & {\color[HTML]{4285F4} 93.208} & {\color[HTML]{4285F4} 97.039} \\ \hline
    \end{tabular}
    }
\caption{How lsHP and larger heads influence the feature adaptation.
         The models are pretrained using SimCLR (Sim, \cite{chen2020simple}) or supervised (Sup) classification on IN1K.
         Numbers in \blue{blue} (\red{red}) represent a \blue{decrease} (\red{increase}) compared with their counterparts in the baseline.}
\label{tab:chap6_main_tab2}
\vskip 0 in
\end{table}

\textbf{Label Smoothing HP:}
Recall that the energy is upper bounded by $\|\ve_y-\vp^0\|_2^2$,
where $\vp^0$ is the model's prediction after HP for $\tau$ epochs.
$\vp^0$ converges to the labels used in HP when $\tau\rightarrow\infty$.
Hence, instead of changing $\tau$, we can also manipulate the labels in HP to achieve a similar goal.
One simple yet effective way is label smoothing \citep{muller2019does}.
By setting the labels during HP as $\eta_\text{hp}\ve_y+\frac{1-\eta_\text{hp}}{V}*\vu$, where $\vu$ is a uniform $V$-class categorical distribution,
the HP stage can always reserve at least $(1-\eta_\text{hp})*\|\ve_y-\vu\|_2$ energy for the following feature adaptation,
even $\tau\rightarrow\infty$.
Such a trick (lsHP for short) is quite helpful when the HP-train-acc converges to 90\%+ very fast, yet we still want a mild adaptation,
like the example shown in the second two columns in \cref{tab:chap6_main_tab2}.
With lsHP, we see that the features adapt more, even though the HP-train-accs are unchanged.

To verify that the aforementioned improvement comes from the reserved energy during lsHP, we further try using smoothed labels during full-tuning (e.g., $\eta_\text{ft}=0.9$).
The results match our analysis: when $\eta_\text{hp}=\eta_\text{ft}=0.9$, the reserved energy disappears, as the labels of the two phases are the same again.
Hence, all the numbers under this condition are quite similar to the baseline case ($\eta_\text{hp}=\eta_\text{ft}=1$).
For the ``opposite energy'' case,
we observe a larger adaptation but a worse test performance.
That is because the reserved energy makes the features adapt in opposite directions.
These results remind us that if we decide to use smooth label in both FT and HP
(e.g., we assume most of the samples in the downstream dataset are ambiguous),
we need to set $\eta_\text{hp}\leq\eta_\text{ft}$ to ensure a correct force direction.

In summary, the lsHP trick is suitable for scenarios where the pretrained features are pretty well and the standard HP converges to 90\%+ very fast.
When the HP-train-acc is too low,
the assumption used in HP, i.e., $\vp^0$ converges to the labels, no longer holds.
Then, introducing the lsHP would make the model's behavior more unpredictable, and should hence be avoided.

\section{Conclusion and Future Works}
This chapter extends our learning dynamics framework from tracking log-probabilities to analyzing the adaptation of hidden feature representations. 
Although the resulting decomposition differs slightly from earlier cases, the core concepts of ``energy'' and ``direction'' remain applicable.

Motivated by this new decomposition, we find that the standard Head-Probe followed by Full-Tune (HP–FT) pipeline proposed by \citet{kumar2022fine} may underestimate the importance of feature adaptation, particularly when the source and target tasks differ substantially. 
To better characterize this process, we analyze the learning dynamics of the hidden feature $\vh$ under the overparameterized model (OPM) assumption. 
Beyond the conventional L2-distance, we introduce four additional quantities to capture the nature of feature adaptation: $\|\vh^T\|_2$, $\langle \vh^T, \vh^0 \rangle$, $\|\vh^T - \vh^0\|_2$, and $\cos(\vh^T, \vh^0)$.

The non-trivial trends uncovered by these metrics, illustrated in \cref{fig:chap6_zadapt_5metrics} and validated through both theoretical analysis and empirical experiments, provide insights into how features evolve under varying energy conditions. 
Specifically, when the energy is low (i.e., in the tiny or mild adaptation regimes), the final feature $\vh^T$ is gently stretched along the direction of the pretrained feature $\vh^0$.
In contrast, under high-energy conditions, $\vh^T$ tends to be severely distorted, as visually illustrated in \cref{fig1:chap6_zadapt_3D_real}.

Based on these observations, we speculate that a suitable adaptation energy of $\vh^T$ is the key when the pretrained features are not perfect for the downstream tasks. 
Under different experimental settings,  we illustrate how to achieve better downstream performance by controlling feature adaptations.

Finally, there are still many open questions not addressed here. 
For example, methods to quantitatively or even adaptively analyze the discrepancy between the pretrain and downstream tasks would be very useful in knowing what kind of head probing to perform in advance.
The learning dynamics framework could also be further extended to more practical settings, rather than OPM.
We left these interesting directions for our future work.

\chapter{Simplicity Bias and Compositionality}
\label{sec:case4}

This chapter explores a fundamental topic in representation learning: simplicity bias and compositional generalization.
Unlike previous chapters, which begin by introducing a specific system we want to analyze using learning dynamics, we start here by discussing three closely related concepts:
\begin{itemize}
    \item The Platonic Representation Hypothesis \citep{huh2024position};
    \item Kolmogorov complexity \citep{kolmogorov1963tables} and compression for AGI \citep{compression_for_AGI};
    \item Learning speed advantage and simplicity bias \citep{valle2018deep}.
\end{itemize}

Upon examining their assumptions and implications, we find that they all converge on a common principle: learning representations that adhere to Occam's Razor, i.e., favoring simpler explanations that can lead to better generalization. 
Interestingly, they observe that many large, well-trained models across different modalities naturally exhibit this preference for simplicity. 
In other words, simplicity bias appears to be an inherent and ubiquitous property of modern deep learning systems.

To investigate the origin of this bias, we study a compositional generalization problem through the lens of learning dynamics.
Our empirical analysis shows that simpler mappings are learned more quickly, a phenomenon that can be well-explained using force analysis on model predictions, similar to the techniques employed in earlier chapters.
Motivated by this, we implement a multi-generation self-distillation approach, also known as iterated learning, to further amplify the simplicity bias. Experimental results across various tasks and modalities strongly support our theoretical insights.

While our method improves compositional generalization in practice, our primary goal in this chapter is to highlight how learning dynamics offer a new explanatory framework for understanding simplicity bias. 
This micro-level perspective provides valuable insights into a foundational aspect of generalization and may guide the development of more efficient algorithms for learning structured and robust representations.

\section{Papers Covered in this Chapter}
\label{sec:case4_01}

\subsection{The Platonic Representation Hypothesis}
The Platonic Representation Hypothesis, introduced by \citet{huh2024position}, proposes that high-performing deep learning models across different modalities and training paradigms tend to converge toward a \textit{shared internal representation space}.
Drawing inspiration from Plato’s theory of ideal forms, the authors suggest that this convergence reflects a common underlying statistical structure of the world.
The models in different modalities could \textit{independently} discover this structure through their learning.

The central point of the above discussion is the assumption that there exists a unique (in terms of topology) and consistent generative mechanism underlying the data we observe from the world, which is also one of our core assumptions in this chapter, as in \cref{fig:chap7_theme}.
This assumption is implicitly related to the ``low-dimensional manifold'' hypothesis commonly applied in many machine learning textbooks. 
In essence, it posits that the primary goal of representation learning is to discover a mapping from input data to representations that faithfully capture the topological structure of this underlying generative mechanism. 
Once a model successfully learns such a mapping, systematic generalization would, in principle, be achieved.

Another important conclusion from their observations is that many high-performing models across different modalities tend to \textit{spontaneously} converge to such ground-truth-aligned representations. 
In other words, most well-trained models tend to naturally follow Occam's Razor, favoring simpler and more structured solutions. 
A particularly striking phenomenon, often summarized as ``all good models are good in similar ways, while all bad models fail differently,'' has been empirically demonstrated across a range of vision and language models.
\citet{huh2024position} provide several hypothetical explanations, and we will focus on the most related one, i.e., the simplicity bias, in this chapter.

\begin{figure}[t]
\vskip -0in
    \begin{center}
    \centerline{\includegraphics[width=0.8\columnwidth,trim=0 0 0 0, clip]{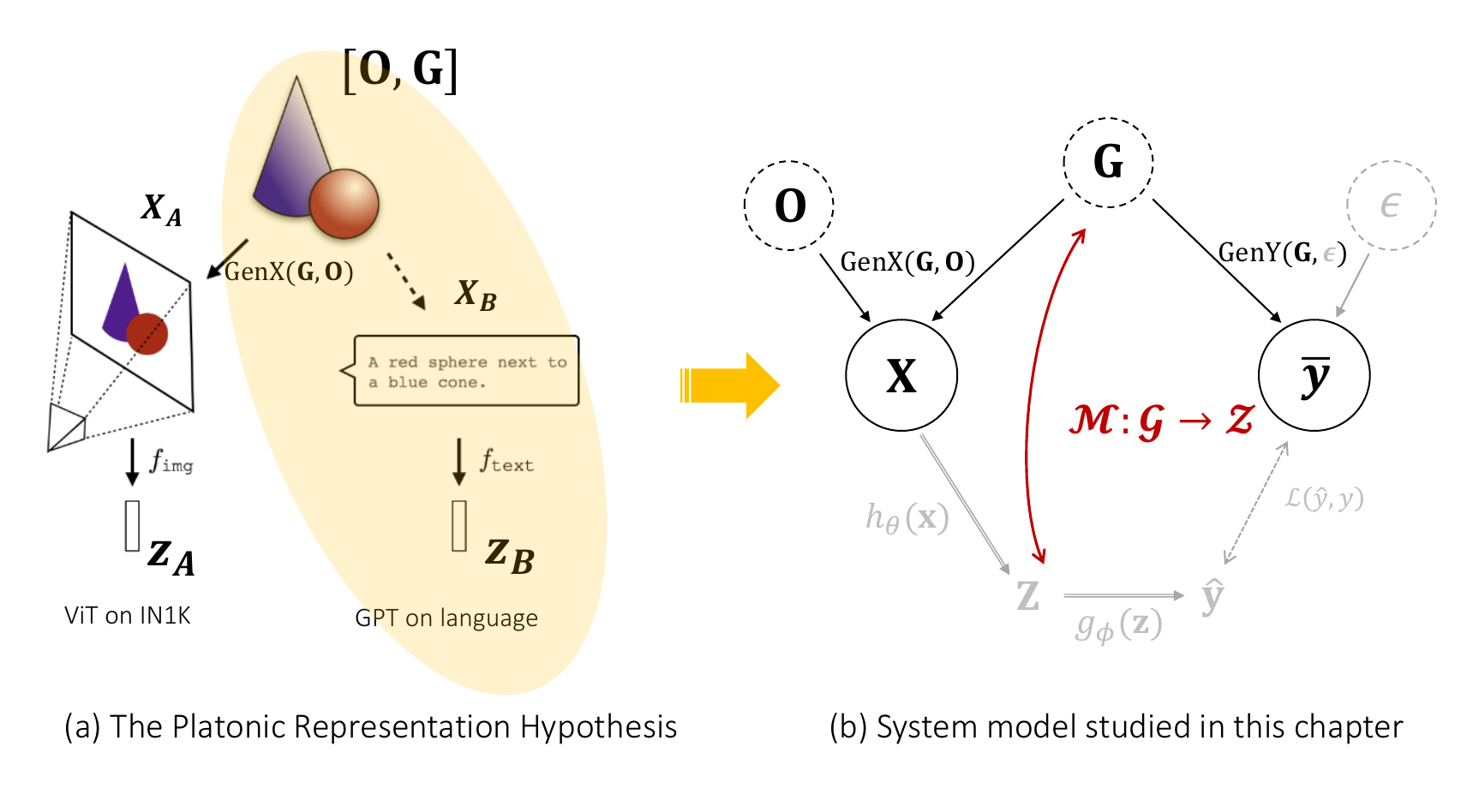}}
    \caption{The data-generating assumption, details in \cref{sec:case4_02}.}
    \label{fig:chap7_theme}
    \end{center}
\vskip -0in
\end{figure}

\subsection{Compression for AGI}
The idea of compression for AGI was originally proposed in the early 1960s through Solomonoff induction, and it has recently regained attention due to the success of applying scaling laws in pretraining the LLMs. 
The talk ``\textit{Compression for AGI}'' \citep{compression_for_AGI} provides a clear and compelling summary of this hypothesis.
In it, Rae argues that the ability to compress information, i.e., using the simplest representation to record all the observations, is not merely correlated with intelligence, but is in fact a synonym with it. 
This perspective revitalizes a long-standing belief in machine learning: that better compression implies deeper understanding and reasoning.

Rae supports this claim by examining how modern LLMs operate as advanced lossless compressors. 
Trained via next-token prediction, these models implicitly learn to compress vast corpora of human-generated text by capturing patterns, structures, and semantics. 
As evidence, he shows that LLMs outperform traditional compression algorithms, such as gzip and even domain-specific compressors designed for the Hutter Prize benchmark.

This notion of compression is also closely tied to the ``Minimum Description Length'' principle, which serves as a practical approximation of Kolmogorov complexity. 
If we assume that all observed data are generated by a consistent underlying mechanism, as postulated by the Platonic Representation Hypothesis, then achieving higher compression implies that the model has learned a representation closer to the ground-truth generative process, and vice versa. 
Following Rae’s line of reasoning, developing more capable models is equivalent to building models that achieve better compression, which in turn means they implement mappings with lower Kolmogorov complexity.

However, a key challenge remains: how can we measure compression rate or Kolmogorov complexity in practice? 
Rae proposes an elegant proxy: the integral under the training loss curve during next-token prediction. 
This corresponds to the relative learning speed of the model. 
In other words, relevant learning speed, a measurable and practical quantity in deep learning, becomes a bridge connecting abstract theoretical concepts such as Occam’s Razor, compression, and Kolmogorov complexity.\footnote{It is also important to note, however, that directly optimizing for learning speed can be risky due to Goodhart’s Law \citep{strathern1997improving}.}

By combining the insights from the Platonic Representation Hypothesis and Compression for AGI, we observe that many modern pretrained models appear to be naturally following a trajectory toward better compression, even without being explicitly optimized for it. 
This chapter seeks to answer a critical question: 
\begin{center}
    \textit{Why do models spontaneously adhere to this principle of simplicity bias?}
\end{center} 
To do so, we start by analyzing this phenomenon in a highly simplified setting, using learning dynamics as our primary analytical tool.

\subsection{Related Discussions in Our Works}
The materials presented in this chapter are drawn from several of our previous works.
Since the topics discussed above are very general, which is a bit hard to formalize in detail, we hence narrow them down to compositional generalization as a starting point.
The initial idea of connecting compositionality with learning speed advantage originates from our paper, ``\textit{Compositional Languages Emerge in a Neural Iterated Learning Model}'' \citep{ren_2020}. 
In that work, we studied a simple Lewis referential game \citep{lewis2008convention} within the emergent communication framework. 
We argued that, much like the human cognitive system, deep neural networks exhibit a preference for compositional mappings. 
This bias manifests as a learning speed advantage for simpler structures and can be progressively amplified through an iterated learning setup. 
To understand the roots of this phenomenon, we began analyzing the mutual influence between examples during training, which is one of the themes of learning dynamics.

Building on this foundation, our paper, ``\textit{Improving Compositional Generalization Using Iterated Learning and Simplicial Embeddings}'' \citep{ren2023improving}, extended our analysis and algorithm to more general representation learning problems.
The goal was to enhance systematic generalization across different tasks and modalities. 
As a theoretical justification, we first propose a compositionality ladder and claim that relying on merely the mutual information cannot separate compositional mappings from other general bijective mappings.
We then employed a group-theoretic tool and showed that the Kolmogorov complexity can distinguish those compositional mappings due to their structural compactness.

However, since Kolmogorov complexity is not computable, we must find some other mechanisms to understand this problem.
Subsequently, we move our focus back to the learning speed advantage discussed in \citet{ren_2020}.
By applying force analysis to the components of different mappings with distinct simplicity, we offer a novel interpretation of why simpler mappings tend to be learned faster in gradient descent.
These findings implicitly aligned with the ``compression rate'' discussed by \citet{compression_for_AGI}.
In short, because models cannot absorb all information in a single update, they must acquire knowledge sequentially. 
When the learning of one example increases the model's confidence on other related examples, the update is constructive. 
This leads to a simplicity bias: examples that form aligned forces during training are more likely to be learned more efficiently, whereas examples that exert contradictory forces are harder to learn. 
This interpretation is developed further in our paper, ``\textit{Understanding Simplicity Bias Toward Compositional Mappings via Learning Dynamics}'' \citep{ren2024understanding}.

\section{Compositionality and Compressibility}
\label{sec:case4_02}
\subsection{The Data Generating Assumption}

We start by introducing the data-generating mechanism and its relationship to representation learning in this section.
As illustrated in \cref{fig:chap7_theme}-(b), the semantic generating factors, also known as latent variables, are divided into two groups: the task-relevant factors (or semantic generating factors) $\vG=[G_1,..., G_m]$, and task-irrelevant (or noise) factors $\vO$.
This division depends on our understanding of the task; for example, if we only want to predict the digit identity of an image in the color-MNIST dataset \citep{arjovsky2019invariant}, then $m=1$ and $G_1$ represents the digit identity.
All the other generating factors, such as color, stroke, angle, and possible noise, are merged into $\vO$.
If we want to predict a function that depends on both identity and color, e.g., identifying blue even numbers, we could have $\vG=[G_1, G_2]$ with $G_1$ the identity and $G_2$ the color.

Each input sample $\vx\in\mathcal{X}$ is determined 
by a deterministic function $\mathsf{GenX}(\vG, \vO)$.
The task label(s) $\vy\in\mathcal{Y}$ only depend on the factors $\vG$ and possible independent noise $\epsilon$,
according to the deterministic function $\mathsf{GenY}(\vG,\epsilon)$.
Note $(\vx, \vO) \indep (\vy, \epsilon) \mid \vG$,
and that $\vO$, $\vG$, and $\epsilon$ are independent.
The data-generating distribution $P(\vx,\vy)$ is determined by the latent distributions $P(\vG)$ and $P(\vO)$,
along with the $\mathsf{GenX}$ and $\mathsf{GenY}$.
We assume $\mathsf{GenX}$ and $\mathsf{GenY}$ are fixed across environments (the ``rules of production'' are consistent),
while $P(\vG)$ and $P(\vO)$ might change between training and test.

For compositional generalization, we wish
to model the problem of generalizing to new combinations of previously seen attributes:
understanding ``red circle'' based on having seen ``red square'' and ``blue circle.''
Thus, we may assume that the supports of $P(\vG)$ are non-overlapping between train and test.
(If this assumption is not true, it only makes the problem easier.)
In summary, our goal is to find an algorithm $\mathcal{A}$
such that,
when trained on a dataset $\mathcal{D}_{train}\sim P_{train}^n$, $\mathcal{A}$
achieves small test risk $\mathcal{R}_{P_{test}}(\mathcal{A}(\mathcal{D}_{train}))$.
Here $P_{train}$ and $P_{test}$ should satisfy these conditions:
\begin{itemize}
    \item $P_{train}$ and $P_{test}$ have $\vG$, $\vO$, $\epsilon$ jointly independent, and $\vx = \mathsf{GenX}(\vG, \vO)$, $\vy = \mathsf{GenY}(\vG,\epsilon)$.
    \item $\mathsf{GenX}$ and $\mathsf{GenY}$ are the same for $P_{train}$ and $P_{test}$.
    \item In challenging cases, we may have $\mathsf{supp}[P_{train}(\vG)]\cap\mathsf{supp}[P_{test}(\vG)]=\emptyset$.
\end{itemize}

\subsection{The Ladder of Compositionality}
For compositional generalization, we expect the model to extract atomic semantic features from the training data,
and systematically re-combine them in a procedure akin to how the data is generated.
We thus consider a typical representation learning framework,
which resembles the inverse of the data generation process in \cref{fig:chap7_theme}-(b).
We use a backbone $h_\theta:\mathcal{X}\rightarrow\mathcal{Z}$
to convert the input signal $\vx$ into a representation $\vz$,
and a task head $g_\phi:\mathcal{Z}\rightarrow\mathcal{Y}$
to solve the given task based on that representation.
The prediction of the model is $\hat{\vy}=(g\circ h)(\vx)$.

\begin{figure}[t]
\vskip -0in
    \begin{center}
    \centerline{\includegraphics[width=0.8\columnwidth,trim=0 0 0 0, clip]{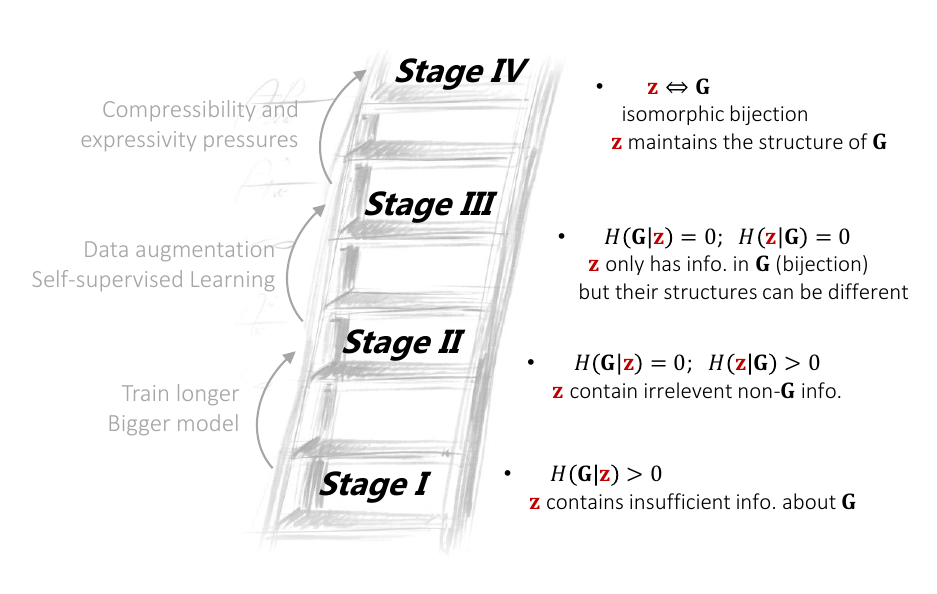}}
    \caption{The ladder of compositionality stating the requirements of $\vz$ using the entropy-related measurement. $H(\cdot)$ is the Shannon Entropy of a distribution. Details in Appendix A of our paper \citep{ren2023improving}.}
    \label{fig:chap7_ladder}
    \end{center}
\vskip -0in
\end{figure}

Intuitively, we would like our learned $\vz$ to uncover the hidden $\vG$,
and $g_\phi(\vz)$ to recover $\mathsf{GenY}(\vG,\epsilon)$.
We thus analyze how the relationship between $\vz$ and $\vG$ influences the model's generalization capability,
building off principles such as information bottleneck \citep{tishby2000information}.
Inspired by the ``ladder of causation'' \citep{pearl2018book},
we propose a ``ladder of compositionality'' in \cref{fig:chap7_ladder},
which outlining a series of conditions on $\vz$ and $\vG$.
We hypothesize that the compositional generalization roughly requires reaching the highest rung of that ladder:
\\
\begin{hyp}
    To generalize compositionally, the learned $\vz$ should capture exactly the information in $\vG$ and nothing more ($\vG$ to $\vz$ should be a bijection),
    and moreover it should preserve the ``structure'' of $\vG$
    (i.e.\ the mapping from $\vG$ to $\vz$ should be an isomorphism).
    \label{prop:z_sysgen}
\end{hyp}
More on this hypothesis, the ladder,
and relationship to models of disentanglement \citep{higgins2018towards}
are discussed in Appendix A of our paper \citep{ren2023improving}.
In short, we find that a model trained using common learning methods relying on mutual information between input $\vx$ and supervision $\vy$ cannot reliably reach the final stage of the ladder: it is necessary to seek other inductive biases in order to generalize compositionally.

In the remainder of this chapter, we will demonstrate that the topological similarity between $\vz$ and $\vG$, the Kolmogorov Complexity of the corresponding mapping $\mathcal{M}:\mathcal{Z}\rightarrow\mathcal{G}$, and the learning speed advantage can separate those high-compressed compositional mappings from other bijections (at least in the scope of compositional generalization problem).

\subsection{Kolmogorov Complexity of Different Bijections}
To build a link between compositionality and Kolmogorov complexity, we can first describe different bijections between $\vz$ and $\vG$ using group theory, and then apply the description length to compare the complexity of a typical element.
Specifically, assuming $\vz\in\mathcal{Z}, \vG\in\mathcal{G}$ and $|\mathcal{Z}|=|\mathcal{G}|$,
the space of all bijections between $\vz$ and $\vG$ is an isomorphism of a symmetric group $S_{|\mathcal{G}|}$.
If $\vG=[G_1,...,G_m]$ and each $G_m$ has $v$ different possible values, $|\mathcal{G}|=v^m$.
For clarity in the analysis,
we assume $\vz$ also has the same shape.
Then, any bijection between $\vz$ and $\vG$ can be represented by an element in $S_{v^m}$.

The space of compositional mapping,
which is a subset of all bijections,
has more constraints.
Recall how a compositional mapping is generated: we first select $z_i$ for each $G_j$ in a non-overlapping way.
Such a process can be represented by an element in $S_m$.
After that, we will assign different ``words'' for each $z_i$,
which can be represented by an element in $S_v$.
As we have $m$ different $z_i$, this procedure will be repeated $m$ times.
In summary, any compositional mapping can be represented by an element in the group $S^m_v\rtimes S_v\in S_{v^m}$,
where $\rtimes$ is the semidirect product in group theory.
The cardinality of $S_{v^m}$ is significantly larger than $S^m_v\rtimes S_v$, and so a randomly selected bijection is unlikely to be compositional.
We thus have:
\\
\begin{restatable}[Informal]{prop}{KC}
    \label{prop_1}
    For $m,v\geq2$, among all bijections, any compositional mapping has much lower upper bounds of its Kolmogorov complexity than a typical non-compositional mapping.
\end{restatable}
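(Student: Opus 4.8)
The plan is to exploit the standard fact that Kolmogorov complexity of a member of a finite set is bounded above by (essentially) the base-two logarithm of the cardinality of that set, plus the complexity of a program that enumerates the set. I would first fix an effective enumeration scheme: a compositional mapping $\mathcal M:\mathcal Z\to\mathcal G$ is fully specified by (i) an element of $S_m$ telling which coordinate $z_i$ encodes which factor $G_j$, and (ii) for each of the $m$ coordinates, an element of $S_v$ giving the per-coordinate relabeling of the $v$ symbol values. Hence a compositional mapping is coded by a tuple in $S_m\times (S_v)^m$, whose cardinality is $m!\,(v!)^m$. A short fixed program (whose length is an $O(1)$ constant independent of $m,v$, absorbed into the additive constant inherent to any $K(\cdot)$ statement) takes such a tuple as input and outputs the truth table of $\mathcal M$. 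Therefore $K(\mathcal M)\le \log_2\!\big(m!\,(v!)^m\big)+O(\log m+\log v)$, where the logarithmic overhead accounts for writing down $m$ and $v$ themselves and for self-delimiting the tuple. Using $\log_2(k!)\le k\log_2 k$ this gives the upper bound $K(\mathcal M)= O\!\big(m\log m + m v\log v\big)$.

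Second, I would establish the lower bound for a \emph{typical} bijection by a counting argument. The set of all bijections $\mathcal Z\to\mathcal G$ is in bijection with $S_{v^m}$, of cardinality $(v^m)!$. A counting bound shows that for any constant $c$, the fraction of elements $\pi\in S_{v^m}$ with $K(\pi)\le \log_2\big((v^m)!\big)-c$ is at most $2^{-c}$; so all but a vanishing fraction of bijections satisfy $K(\pi)\ge \log_2\big((v^m)!\big)-O(1)$. By Stirling, $\log_2\big((v^m)!\big)=\Theta\!\big(v^m\,m\log v\big)$, which grows exponentially in $m$, in contrast to the polynomial-in-$(m,v)$ bound for compositional mappings. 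The separation $O(mv\log v)$ versus $\Theta(v^m m\log v)$ is then immediate for $m,v\ge 2$, which is exactly the statement.

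The third ingredient — and the one I expect to require the most care in writing rigorously — is making the informal phrase ``typical non-compositional mapping'' precise and robust. The natural reading is ``all but an exponentially small fraction of bijections, with respect to the uniform measure on $S_{v^m}$,'' and the incompressibility counting lemma above delivers exactly that. But one must be careful that the comparison is apples-to-apples: the $O(1)$ additive constants in $K(\cdot)$ are universal-machine-dependent, so the theorem should be phrased asymptotically in $m$ (or in $v^m$) so that these constants are dominated. I would also want to note the subtlety that compositional mappings form a subset of $S_{v^m}$ of density $m!\,(v!)^m/(v^m)!$, which is itself exponentially small, so ``a randomly selected bijection is unlikely to be compositional'' is consistent — indeed, a random bijection is both non-compositional and (with high probability) incompressible, and these two facts reinforce rather than interfere with each other.

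Finally, I would assemble the pieces into the stated Proposition: the upper bound on $K(\text{compositional})$ from the explicit code, the high-probability lower bound on $K(\text{bijection})$ from incompressibility, and the Stirling estimates showing the gap is exponential in $m$. The main obstacle is not any single calculation (all are routine once set up) but rather stating the result cleanly enough that the additive $O(1)$ constants inherent to Kolmogorov complexity do not muddy the ``much lower'' claim; resolving this is why the formal version lives in the appendix and the statement here is marked \emph{Informal}. I would therefore present the body of the argument as above and defer the precise constant-tracking and the choice of reference universal machine to the appendix, exactly as the excerpt's pointer to Appendix A of \citet{ren2023improving} suggests.
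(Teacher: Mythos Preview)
Your proposal is correct and follows essentially the same structural idea as the paper --- encode compositional mappings via the group $S_m \times (S_v)^m$ (the paper writes it as the semidirect product $S_v^m\rtimes S_m$, but the cardinalities agree) and compare to the full symmetric group $S_{v^m}$ --- but you execute it more carefully than the paper does.

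The main difference is in how ``typical non-compositional mapping'' is handled. The paper's proof simply computes two upper bounds, $\mathsf{KC}(\text{bijection})\le v^m\cdot m\log_2 v$ and $\mathsf{KC}(\text{comp})\le v\log_2 v + m\log_2 m$ (encoding via permuted index sequences rather than your tighter $\log_2(n!)$), and then compares them via a ratio $\gamma = \mathsf{KC}(\text{bijection})/\mathsf{KC}(\text{comp})$, showing $\gamma\gg 1$ for $m,v\ge 2$. The counting argument you develop in detail appears in the paper only as a one-sentence aside (``by a counting argument \emph{most} such mappings must have a complexity no less than, say, a constant multiple of that bound''). Your version turns this into an explicit incompressibility lower bound $K(\pi)\ge\log_2\big((v^m)!\big)-O(1)$ for all but a $2^{-c}$ fraction of bijections, which is the rigorous statement one actually needs to justify the word ``typical.'' So your argument proves a stronger, cleaner statement (an actual complexity gap rather than a gap between upper bounds), at the cost of slightly more machinery; the paper's argument is shorter but looser, in keeping with the proposition's ``Informal'' label. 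Note also that the paper's compositional bound omits the factor of $m$ in front of $v\log_2 v$ that you correctly retain --- this does not affect the conclusion, since both are polynomial in $(m,v)$ against the exponential $v^m$.
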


We can prove this by constructing descriptive protocols for each bijection.
As a compositional mapping has more \emph{reused rules},
its description length can be smaller (see \cref{sec:app2:k_complexity} for more discussions).

We now link the compositionality to a formal description of the simplicity, i.e., Kolmogorov complexity.
In the next section, we will showcase that the compositional mappings are, in general, learned faster than other bijections, and hence link the learning speed advantage to simplicity bias.

\section{Simplicity Bias and Learning Dynamics}
\label{sec:case4_03}
The discussions above link the concepts of compositionally, simplicity, and Kolmogorov complexity under an idealized setting,
which also aligns well with many related works.
This section further links these concepts to the learning speed advantage from a learning dynamics perspective.
Specifically, we would first demonstrate that a neural network naturally favors simpler mappings using experiments under manual settings, and then provide a detailed explanation of why such a tendency exists using force analysis.

\subsection{Toy256 Dataset and Different Mappings}

\begin{figure}[t]
\vskip -0in
    \begin{center}
    \centerline{\includegraphics[width=0.95\columnwidth,trim=0 10 0 0, clip]{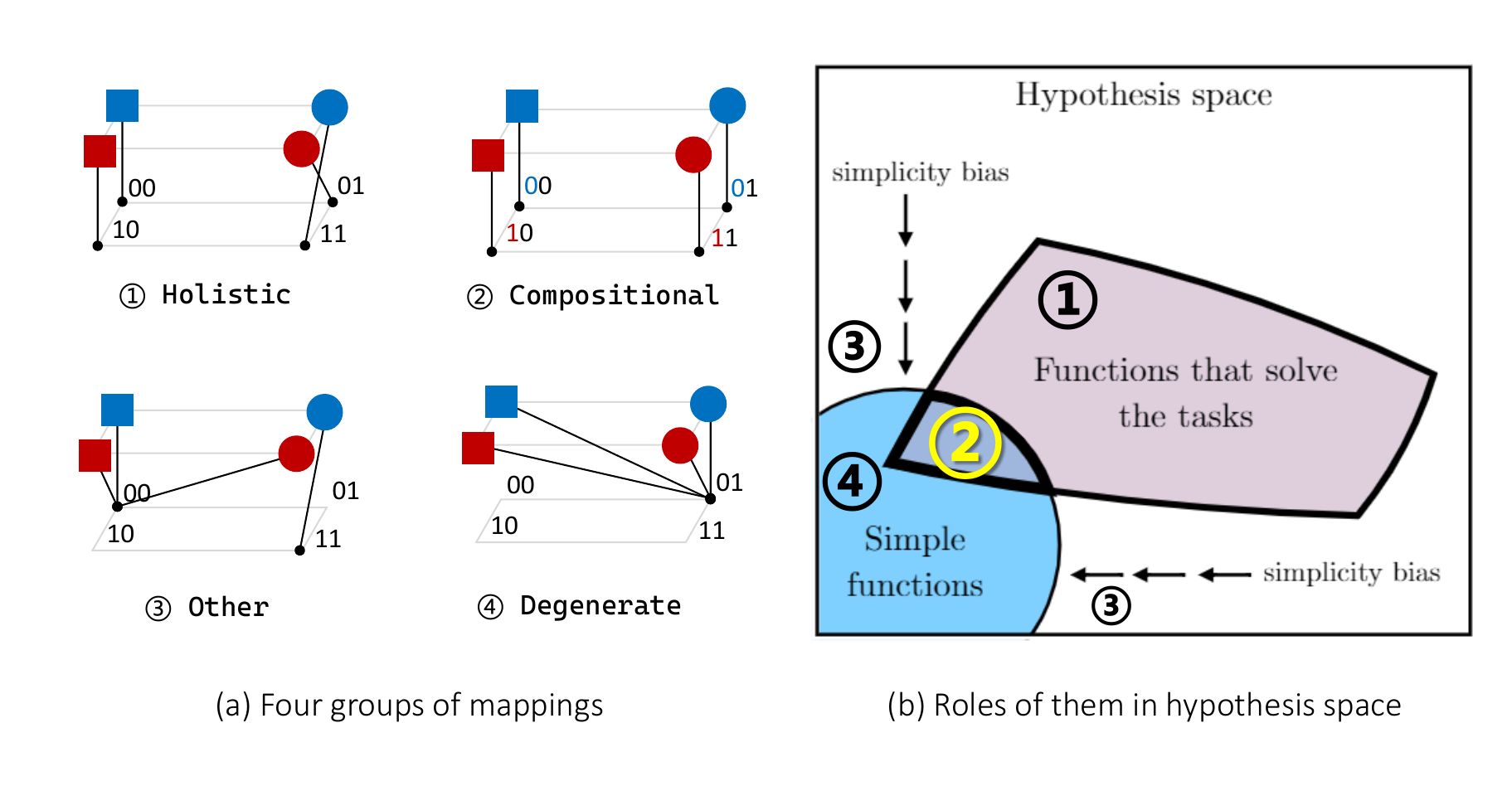}}
    \caption{Left: four groups of mappings in our \texttt{Toy256} example. Right: their roles, reproducing Figure 7 of \citet{huh2024position}.}
    \label{fig:chap7_toy256}
    \end{center}
\vskip -0in
\end{figure}

To verify our claim, we consider a multi-label classification problem and create 256 different datasets (each only contains 4 examples) for each mapping $\mathcal{M}$ in our \texttt{Toy256} setting.
For example, the dataset for the compositional mapping in \cref{fig:chap7_toy256}-(a) should be \{(\texttt{blue box}, 00), (\texttt{blue circle}, 01), (\texttt{red box}, 10), (\texttt{red circle}, 11)\},
where the label ``01'' means $\bar{y}_1=0$ and $\bar{y}_2=1$.
We then randomly initialize a neural network as our $h_\theta$ and concatenate two $\mathsf{Sigmoid}$ functions as our $g_\phi$.
With the same initialization and all hyperparameters, we train the network to converge for each dataset.

To facilitate the discussion, we categorize all 256 possible mappings into four non-overlapping groups:
\begin{itemize}
    \item Compositional mappings: These are isomorphisms of the ground-truth generating factors. Each dimension of the representation encodes a distinct feature using a stable, interpretable vocabulary. These mappings are highly compressed and reside on the fourth rung of the compositionality ladder. There are exactly $2! \times 2^2 = 8$ such mappings.
    \item Holistic mappings: These are non-compositional bijections. While they preserve one-to-one correspondence between inputs and outputs, they lack the structured alignment seen in compositional mappings. As a result, they occupy the third rung of the ladder. Excluding the 8 compositional cases, there are $4! - 8 = 16$ holistic mappings.
    \item Degenerate mappings: These map all objects to the same output. While they are minimal in complexity, their high ambiguity renders them generally ineffective for downstream tasks. There are $4$ such pure degenerate mappings.
    \item Other mappings: This group includes all remaining mappings, which may exhibit partial degeneracy or non-bijective behavior. These mappings fall somewhere between the above categories in terms of structure and utility. We don't further separate them in this thesis.
\end{itemize}
The roles these mappings play within the hypothesis space are illustrated in \cref{fig:chap7_toy256}. 
In essence, our objective is to identify mappings that are both unambiguous and simple, ideally, those that align with a compositional generating mechanism, and hence have a shorter description length.

\subsection{The Experimental Setups}
To verify the generality of our claim, we consider several different experimental settings.
For the input signals, we first consider two types of one-hot concatenation vectors.
One is the concatenation of two $2$-dim vectors (\texttt{OHT2} for short).
For example, \texttt{blue box} and \texttt{red circle} are encoded as $\texttt{[0101]}\cdot W_{4\times d}$ and $\texttt{[1010]}\cdot W_{4\times d}$, respectively,
where $W_{4\times d}$ is a randomly initialized matrix fixed for all 256 mappings.
Another setting is \texttt{OHT3}, which considers a redundant dimension for each attribute, where \texttt{blue box} and \texttt{red circle} are encoded as $\texttt{[010010]}\cdot W_{6\times d}$ and $\texttt{[100100]}\cdot W_{6\times d}$, respectively.
We also consider the vision input, where the image is sampled and colored from the dSprite dataset \citep{dsprites17}.

We use different network structures for different input modalities.
For the one-hot input, we use an MLP with three hidden layers with a width of 128.
For the image input, we consider both a 3-layer MLP and a ResNet9 \citep{he2016deep} with narrower hidden layers.
The task heads for all the networks are identical: we add two separate linear projection layers with size $h\times2$ on the output of the backbone.
After that, we take $\mathsf{Softmax}$ on each of these outputs to generate probabilistic predictions.
When calculating the loss function,
we consider both cross-entropy (CE) loss and a mean square error (MSE) loss,
where the latter is calculated between the predicted probability vector and a one-hot distribution of the ground truth labels.
When optimizing the network,
we consider both SGD and Adam.
Unless otherwise stated,
the learning rate is set to be $10^{-3}$, weight decay is $5*10^{-4}$,
and all other parameters are set to be the default values.
Note that all hyperparameters (including the initialization of the network) are shared for all 256 experiments in each group.

\subsection{Simpler Mappings are Learned Faster}

\begin{figure}[t]
\vskip -0in
    \begin{center}
    \centerline{\includegraphics[width=1\columnwidth,trim=0 10 0 0, clip]{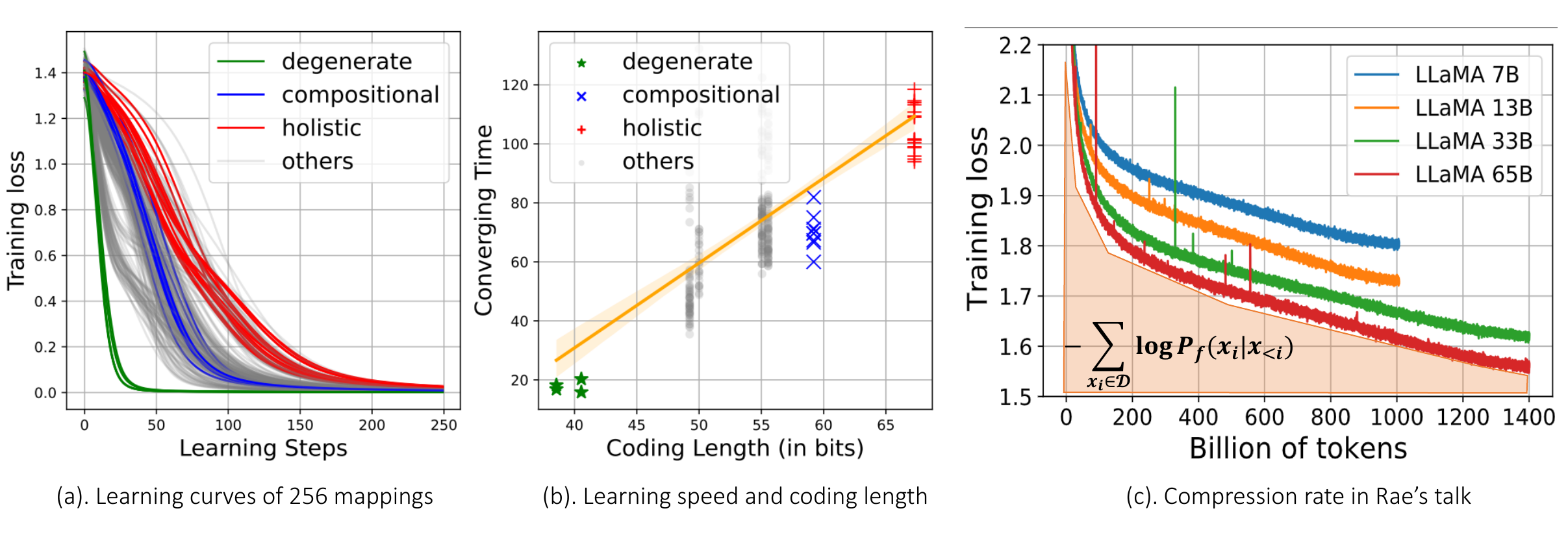}}
    \caption{Left: The learning curves of all 256 mappings. Middle: The correlation between the learning speed and the minimum description length of all mappings. Right: the compression rate in LLM's training. The figure comes from \cite{touvron2023llama} and is discussed in Rae's talk \citep{compression_for_AGI}.}
    \label{fig:chap7_learning_curves}
    \end{center}
\vskip 0.2 in
\end{figure}

See \cref{fig:chap7_learning_curves}-(a) where we show the training curves of 256 runs in our default setting.
Since the only difference among these runs is the dataset generated by different mappings, it is safe to conclude that the difference in their learning speed is caused by \textit{the inherent bias of the model's learning behavior} on this problem.
From this figure, we see that compositional mappings are learned faster than holistic ones.
However, some mappings are learned even faster, which makes sense because those non-bijection mappings contain degenerate components, i.e., two or more objects are mapped to the same $\vz$, which means they are simpler.
That also explains why the four degenerate mappings, which map all four objects to the same $\vz$, are learned fastest among all 256 mappings.
Nevertheless, as shown in \cref{fig:chap7_toy256}-(b), only those mappings that contain no degenerate component can perfectly explain all the training observations unambiguously.
In other words, before a battle between compositional mappings and holistic mappings, those non-bijections are already ruled out by the training accuracy.
In other words, they cannot even achieve the third level of the compositionality ladder.

To further verify the existence of the learning speed advantage, we quantify the learning speed using the concept of ``convergence time,'' i.e., the area under the learning curve.
A smaller convergence time means the mapping is learned faster.
This metric is similar to the C-score \citep{c_score},
which describes a training example's difficulty.
Also, if the model is trained with cross-entropy loss and all examples only appear once,
this metric is the compression rate for the entire dataset \citep{compression_for_AGI}, as shown in \cref{fig:chap7_learning_curves}-(c).
These works also hint that learning speed is deeply related to compression, simplicity, and generalization ability.

Furthermore, we also want to calculate explicitly the upper bounds on each mapping's Kolmogorov complexity.
Since the mapping space studied in our \texttt{Toy256} setting is simple enough,
we can first create the CFGs for each mapping and then convert them to a piece of description sequence using the method provided by \citet{kirby2015compression}.
After that, we can use Huffman coding \citep{huffman2007method} and calculate the coding length in bits for each mapping.
Due to the page limits, we omit the calculation of the coding length here; please refer to Appendix B of our paper for more details \citep{ren2024understanding}.
In short, a smaller coding length means the mapping is simpler.
It is clear in \cref{fig:chap7_learning_curves}-(b) that compositional mappings are the simplest bijections.
Note that all the non-bijection mappings contain degenerate components,
hence are simpler than bijections.
The figure also clearly demonstrates that the learning speed is strongly correlated to the coding length (with $\rho>0.65$ and $p<10^{-30}$),
matching our hypothesis well.

Lastly, we can also track a metric called ``Kernel Alignment'' studied in the Platonic Representation Hypothesis \citep{huh2024position}.
Different from the coding length discussed above, which depends on how we design the coding mechanism, the kernel alignment-based approach is model-agnostic and can mathematically describe the representation's similarity to the ground truth factors.
Specifically, they define this concept from the following three levels:
\begin{itemize}
    \item[1. ] A \textit{representation}, which maps the input signal to a dense representation, is a function $f:\mathcal{X}\rightarrow \mathbb{R}^n$.
                Our $h_\theta(\vx)$ plays a similar role;
    \item[2. ] A \textit{kernel}, $\kappa:\mathcal{X}\times\mathcal{X}\rightarrow\mathbb{R}$, characterizes the similarity between two elements in $\mathcal{X}$.
                In a dense representation case, the inner product is usually applied, i.e., $\kappa(x_i,x_j)=\langle f(x_i),f(x_j) \rangle$.
                Our paper consider Hamming distance, because our $\vG$ and $\vz$ are all categorical variables;
    \item[3. ] A \textit{kernel-alignment metric} $m:\mathcal{K}\times \mathcal{K}\rightarrow\mathbb{R}$, measures the similarity between two kernels.
\end{itemize}

For the third-level measurement, \citet{huh2024position} use Centered Kernel Distance,
a kernel-alignment metric throughout their paper.
Actually, many related works in compositional generalization also have similar measurements, e.g., the topological similarity proposed by \citet{brighton2006understanding}:
\begin{equation}
    \mathsf{Topsim}(\vG, \vz)\triangleq\mathsf{Corr}\left(d_G(\vG^{(i)},\vG^{(j)}), d_z(\vz^{(i)},\vz^{(j)}) \right),%
    \label{eq:topsim}
\end{equation}
This definition also follows three steps:
$\vz$ are representations generated by feeding $\vx$ to $h_\theta$;
$d_G$ and $d_z$ are distance measurements (or kernel in the second step above) for spaces $\mathcal{G}$ and $\mathcal{Z}$;
$\mathsf{Corr}(\cdot,\cdot)$ is Spearman's correlation, measuring the relationship between the output of two functions (kernels).
In short,
\textit{higher topological similarity means similar objects in $\mathcal{G}$ are mapped to similar positions in $\mathcal{Z}$.}
If we consider $\mathcal{G}$ as another modality of the projected ground truth,
the topological similarity is just a special kernel-alignment metric used in the Platonic Representation Hypothesis \citep{huh2024position}.
Also, some other measurements of compositionality like TRE (Tree Reconstruction Error, \cite{andreas2018measuring}),
representation disentanglement \citep{higgins2018towards}, etc., follow this general principle.

Then, using this kernel alignment metric (i.e., topological similarity), we further examine their correlation to the learning speed in \cref{tab:chap7_rho_p}.
It is clear that in most cases, the two quantitative metrics above align well.
Both of them support the claim that simpler mappings are indeed learned faster under different settings.
One exceptional case is training a ResNet with image input using the Adam optimizer.
The simplicity bias is even reversed compared with the results using SGD.
This phenomenon hints that the bias in DNNs' learning is also influenced by the inherent bias of specific network structures and optimizers.
We left this for our future work.

\begin{table*}[h]
  \centering
  \caption{The statistical correlation between learning speed and simplicity.}
  \vskip -0.1in
  \resizebox{\textwidth}{!}{

\begin{tabular}{ccccccc|ccccccc}
\hline
                                                                              &                                 &                        & \multicolumn{2}{c}{CL-Conv.Time} & \multicolumn{2}{c|}{Topsim-Conv.Time} &                                                                                   &                                 &                        & \multicolumn{2}{c}{CL-Conv.Time}                                                   & \multicolumn{2}{c}{Topsim-Conv.Time}                           \\ \cline{4-7} \cline{11-14} 
\multirow{-2}{*}{Input}                                                       & \multirow{-2}{*}{Optim.}        & \multirow{-2}{*}{Loss} & $\rho$         & $p$              & $\rho$           & $p$                & \multirow{-2}{*}{Input}                                                           & \multirow{-2}{*}{Optim.}        & \multirow{-2}{*}{Loss} & $\rho$                                  & $p$                                       & $\rho$                         & $p$                           \\ \hline
                                                                              &                                 & CE                     & 0.6475         & 8.1*1e-32        & -0.7101          & 1.4*1e-40          &                                                                                   &                                 & CE                     & 0.6866                                  & 5.0*1e-37                                 & -0.5911                        & 1.6*1e-25                     \\
                                                                              & \multirow{-2}{*}{\textbf{SGD}}  & L2                     & 0.5793         & 2.4*1e-24        & -0.7817          & 5.3*1e-54          &                                                                                   & \multirow{-2}{*}{\textbf{SGD}}  & L2                     & 0.5932                                  & 1.1*1e-25                                 & -0.6057                        & 5.1*1e-27                     \\
                                                                              &                                 & CE                     & 0.6598         & 2.3*1e-33        & -0.5731          & 9.4*1e-24          &                                                                                   &                                 & CE                     & 0.5403                                  & 8.4*1e-21                                 & -0.6720                        & 5.5*1e-35                     \\
\multirow{-4}{*}{\textbf{\begin{tabular}[c]{@{}c@{}}\texttt{OHT2}\\ MLP\end{tabular}}} & \multirow{-2}{*}{\textbf{Adam}} & L2                     & 0.5378         & 1.4*1e-20        & -0.7223          & 1.2*1e-42          & \multirow{-4}{*}{\textbf{\begin{tabular}[c]{@{}c@{}}Image\\ MLP\end{tabular}}}    & \multirow{-2}{*}{\textbf{Adam}} & L2                     & 0.4433                                  & 9.5*1e-14                                 & -0.6585                        & 3.4*1e-33                     \\ \hline
                                                                              &                                 & CE                     & 0.5976         & 3.5*1e-26        & -0.7963          & 2.2*1e-57          &                                                                                   &                                 & CE                     & 0.6711                                  & 7.4*1e-35                                 & -0.2619                        & 2.2*1e-5                      \\
                                                                              & \multirow{-2}{*}{\textbf{SGD}}  & L2                     & 0.6386         & 9.8*1e-31        & -0.7311          & 4.5*1e-44          &                                                                                   & \multirow{-2}{*}{\textbf{SGD}}  & L2                     & {\color[HTML]{C0C0C0} -0.015}           & {\color[HTML]{C0C0C0} 0.8159}             & {\color[HTML]{C0C0C0} -0.0297} & {\color[HTML]{C0C0C0} 0.6358} \\
                                                                              &                                 & CE                     & 0.5672         & 3.4*1e-23        & -0.6418          & 4.1*1e-31          &                                                                                   &                                 & CE                     & {\color[HTML]{F8A102} \textbf{-0.5115}} & {\color[HTML]{F8A102} \textbf{1.8*1e-18}} & {\color[HTML]{C0C0C0} 0.0423}  & {\color[HTML]{C0C0C0} 0.4999} \\
\multirow{-4}{*}{\textbf{\begin{tabular}[c]{@{}c@{}}\texttt{OHT3}\\ MLP\end{tabular}}} & \multirow{-2}{*}{\textbf{Adam}} & L2                     & 0.5582         & 2.3*1e-22        & -0.7026          & 2.1*1e-39          & \multirow{-4}{*}{\textbf{\begin{tabular}[c]{@{}c@{}}Image\\ ResNet\end{tabular}}} & \multirow{-2}{*}{\textbf{Adam}} & L2                     & {\color[HTML]{F8A102} \textbf{-0.2876}} & {\color[HTML]{F8A102} \textbf{2.8*1e-06}} & {\color[HTML]{C0C0C0} 0.0197}  & {\color[HTML]{C0C0C0} 0.7538} \\ \hline
\end{tabular}

    }
    \label{tab:chap7_rho_p}
    \vskip 0.1 in
\end{table*}

\subsection{Force Analysis of Different Mappings}

The results above bridge the simplicity bias to the model's learning behavior, where the latter can be further explained using our force analysis framework.
Remember our model generates probabilistic predictions on both $y_1$ and $y_2$ using $\mathsf{Sigmoid}$ functions.
Then, we can directly write down the predicted probability of each mapping as a product of eight terms,
as in the top line of \cref{fig:chap7_force_analysis}.
In this figure, we demonstrate how the model's confidence in different $\mathcal{M}$ is updated when learning specific training samples (the left-most column).
For example, in the first row of the compositional mapping in the figure, the model learns $(\texttt{blue box}, 00)$.
Then the corresponding $P(y_1=0\mid\texttt{blue box})$ and $P(y_2=0\mid\texttt{blue box})$ are significantly improved,
since they are directly updated by learning this example.
Furthermore, recall how learning a number ``4'' influences the number ``9'' in \cref{sec:fundamentalLD}, the model's predictions on those ``similar'' (measured using Hamming distance) input examples would also be indirectly updated (represented by the small arrows in the figure).
As a result, the model's confidence on \texttt{blue circle} and \texttt{red box}, which share one attribute with the learned \texttt{blue box}, are influenced more by this update.
Furthermore, since a compositional mapping always utilizes consistent values to represent the same attribute (e.g., $z_1=0$ always encodes the blue color), all the direct and indirect updates align well with the compositional mapping.
That is why the training loss of such mappings decreases faster.
On the contrary, for a holistic mapping, we observe several contradictions between the direct and indirect updates:
learning requires the model to use more updates to counteract those negative influences.
That partially explains why compositional mappings are learned faster than holistic ones when training a neural network using gradient descent.

\begin{figure}[t]
\vskip -0in
    \begin{center}
    \centerline{\includegraphics[width=1\columnwidth,trim=0 10 0 0, clip]{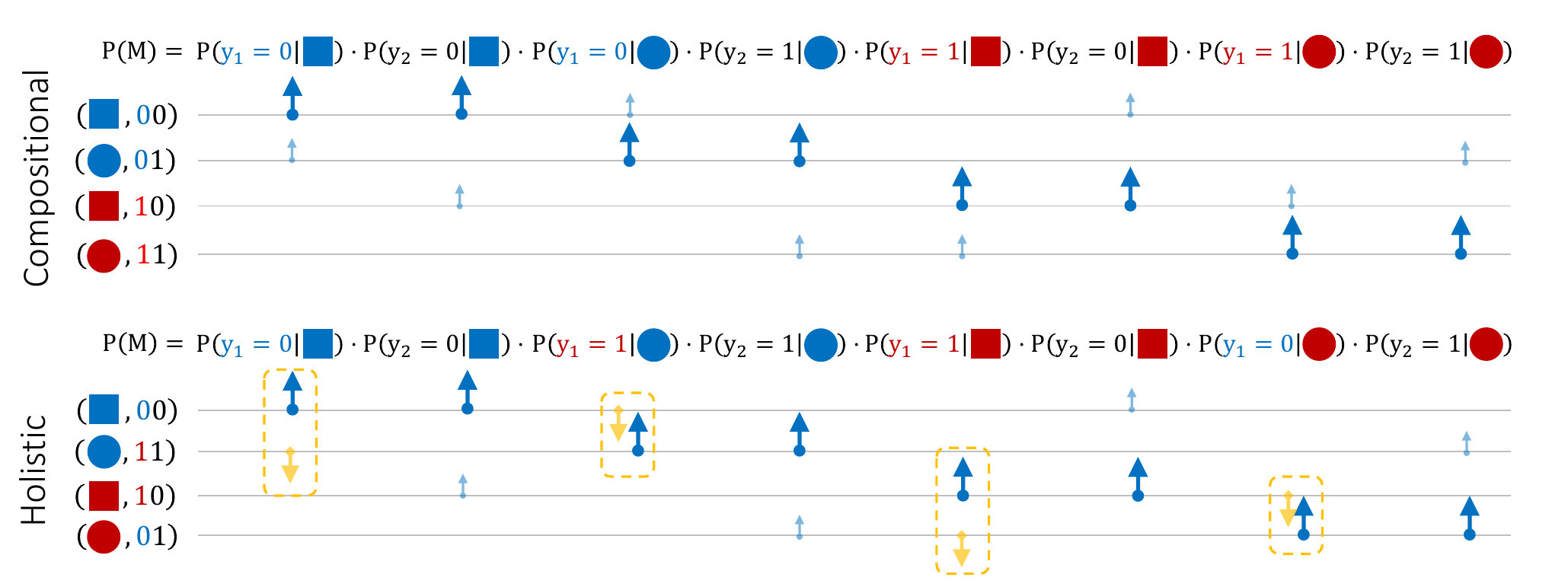}}
    \caption{Force analysis of two types of mappings.}
    \label{fig:chap7_force_analysis}
    \end{center}
\vskip -0in
\end{figure}

\section{Conclusion and Discussions}
\label{sec:case4_04}

This chapter began by introducing two influential concepts that are closely tied to the success of modern large-scale models: the Platonic Representation Hypothesis \citep{huh2024position} and Compression for AGI \citep{compression_for_AGI}.
Both ideas are grounded in Occam's Razor, emphasizing the role of simplicity of the learned representations in effective generalization. 
They argue that the scaling laws observed in contemporary models can be attributed to a pursuit of more efficient compression of observations.

In particular, \citet{huh2024position} posit the existence of unique and stable ground-truth data-generating mechanisms and empirically demonstrate that well-pretrained models, across various modalities, tend to converge to similar topological structures in their representation spaces. 
They suggest that such convergence reflects an implicit simplicity bias that adheres to Occam’s Razor and may underlie the success of these models.

Such a tendency is also observed by many researchers during the pretraining of LLMs \citep{compression_for_AGI}.
In the talk given by Rae, he first introduces many related concepts, e.g., Solomonoff induction, minimum description length, Kolmogorov complexity, and generalization, and then links them to the compression rate that a model can achieve.
He argues that a model with a high compression rate must have a better understanding of the world.
Such a measurement can then be estimated by the integral between the model's training curve and the x-axis in a next-token prediction task.
Then, those models that learn more quickly are inherently more intelligent, and this principle forms a cornerstone of the scaling laws in LLMs' pretraining.

Building on these insights, this chapter offers a novel mechanistic explanation for why gradient-based training tends to favor simpler, more structured solutions. 
We focus on a toy but illustrative task: learning compositional representations under a data-generating assumption akin to that in \cite{huh2024position}.
We argue that among all possible mappings from inputs to hidden representations, those that reflect the compositional structure of the data can generalize perfectly and (crucially) have significantly lower Kolmogorov complexity, as shown via group-theoretic arguments.

We then argue that the simplicity bias mentioned above should be embodied in the learning speed advantage in deep learning systems.
To verify this claim, we conduct several experiments in different settings on our \texttt{Toy256} dataset.
The results show that these compositional mappings are consistently learned faster. 
Through force analysis of the model’s confidence updates across different data samples, we find that compositional mappings benefit from aligned ``unified forces'' across examples, while unstructured mappings experience contradictory gradient directions. 
This provides a concrete explanation for the emergence of simplicity bias during training.

Although our experimental setting is deliberately minimal, similar findings have emerged across diverse practical contexts.
For instance, \citet{le2021analysis} and \citet{bengio2019meta} observe that models learn causal directions more easily than anti-causal ones. 
Likewise, \citet{zhang2016understanding} show that label permutation, analogous to creating holistic mappings from compositional ones, dramatically slows down learning and degrades generalization, despite perfect memorization being achievable. 
Our framework explains this through disrupted mutual influence: shuffled labels cause learning forces to interfere destructively rather than constructively.

In conclusion, we believe that the inherently sequential nature of learning in deep models makes the mutual influence between examples critical.
Although only verified on a very simple manual setting, the force analysis showcases some important principles.
If each example’s gradient contributes coherently (or at least orthogonally when learning new knowledge) to others, training becomes significantly more efficient. 
According to \citet{compression_for_AGI}, this leads to higher compression rates and higher intelligence.
We thus believe that analyzing learning speed and learning dynamics is a promising direction for future research, with the potential to illuminate fundamental aspects of deep learning systems.
We would consider more practical settings and further extend our analytical framework in our future work.

\chapter{Conclusion and Discussion}
\label{sec:conclusion}

After a long journey through the study of learning dynamics and deep learning, this thesis comes to a close. 
Given the wide range of topics covered, we provide a summary diagram in \cref{fig:chap8_summary} to help readers recall the main content of the thesis. 
Key concepts and representative figures are attached near each section's box to highlight the core ideas of individual chapters.

We began in \cref{sec:intro} and \cref{sec:overview} with a discussion of contemporary deep learning theory, contrasting math-like and physics-like aspects. 
We argued that the rapidly evolving nature of modern deep learning systems calls for a greater emphasis on local intuitions and empirical trends. 
This motivates our proposal to study model behavior from a local, microscopic perspective.
In particular, we introduced the idea of learning dynamics, which aims to analyze the ``forces'' exerted on model predictions when a single example is learned via gradient descent.

In \cref{sec:fundamentalLD}, we formalized this framework using influence functions and proposed the $\mathcal{AKG}$ decomposition. 
In this formulation, the force originates from $\mathcal{G}$, is projected and normalized by $\mathcal{K}$ and $\mathcal{A}$, and is ultimately imposed on the log-probability of the observing example $\cvxo$. 
We interpreted $\mathcal{K}$ as the similarity between $\cvxu$ and $\cvxo$, which is usually relatively stable during training.
This is illustrated by the mutual influence between digits ``4'' and ``9'' in our MNIST example.
$\mathcal{G}$ captures the source of the force and evolves dynamically during training.
Our ``zig-zag learning path'' example vividly demonstrates how these forces interact at different stages during training.

Motivated by this zig-zag pattern, we explored a natural application: automatic noisy label correction.
As discussed in \cref{sec:case1}, early-stopped models often learn to identify the correct labels by themselves. 
We studied this in the context of knowledge distillation and introduced Filter-KD, a pipeline that leverages better supervisory signals from a teacher model to improve generalization.
We theoretically show that supervision aligning better with the ground-truth distribution $\vq^*$ leads to stronger generalization, especially on ambiguous or ``hard-to-tell'' samples. 
Experiments support the effectiveness of this method well.

Next, in \cref{sec:case2}, we extended our force analysis to the finetuning of LLMs, which is a highlight of this thesis. 
To make the next-token prediction task analytically tractable, we introduced simplifications and a probing dataset that reveals how different supervision signals affect the output distribution over the huge $\mathcal{Y}$ space. 
We analyzed finetuning methods such as SFT, DPO, and GRPO, offering a unified perspective on their learning behavior. 
Our framework uncovered the important, often overlooked, role of negative gradients.
We propose the ``squeezing effect'', and further explore it both theoretically and empirically.
Extending this framework to other LLM adaptation methods presents a promising direction for future research.

In \cref{sec:case3}, we explored how our framework could be applied beyond output dynamics, e.g., to the adaptation of hidden features and other internal representations. 
We analytically modeled how intermediate features evolve under different levels of adaptation energy during finetuning, by observing four distance-related quantities of the hidden features.

Finally, in \cref{sec:case4}, we returned to the foundational question that motivated our study: Where does the simplicity bias come from? 
Using a carefully designed toy task, we connected our findings to several influential ideas in deep learning theory, including the Platonic Representation Hypothesis, Occam’s Razor, and compression for AGI. 
Although the setup is deliberately constrained, we believe the theoretical insights can shed more light on this deep and fundamental question.

To conclude, we acknowledge that the learning dynamics framework is still in its early stages. 
For instance, we have yet to fully explore the role of $\mathcal{A}$ or the directional effects embedded in $\mathcal{K}$. 
The interaction of our framework with specific architectural components (e.g., attention) or advanced optimizers (e.g., Adam) also remains underexplored.
Future work may also extend our analysis from the ``kernel'' regime to the more general and common feature learning regime. Given the connections between NTK and Gaussian Processes, combining them offers another compelling direction.

Ultimately, I hope this thesis is not the end but another beginning of a broader investigation into force analysis and learning dynamics in deep learning. 
The directions outlined here are open invitations for future exploration.

\begin{figure}[h]
\vskip -0in
    \begin{center}
    \centerline{\includegraphics[width=1\columnwidth,trim=0 0 0 0, clip]{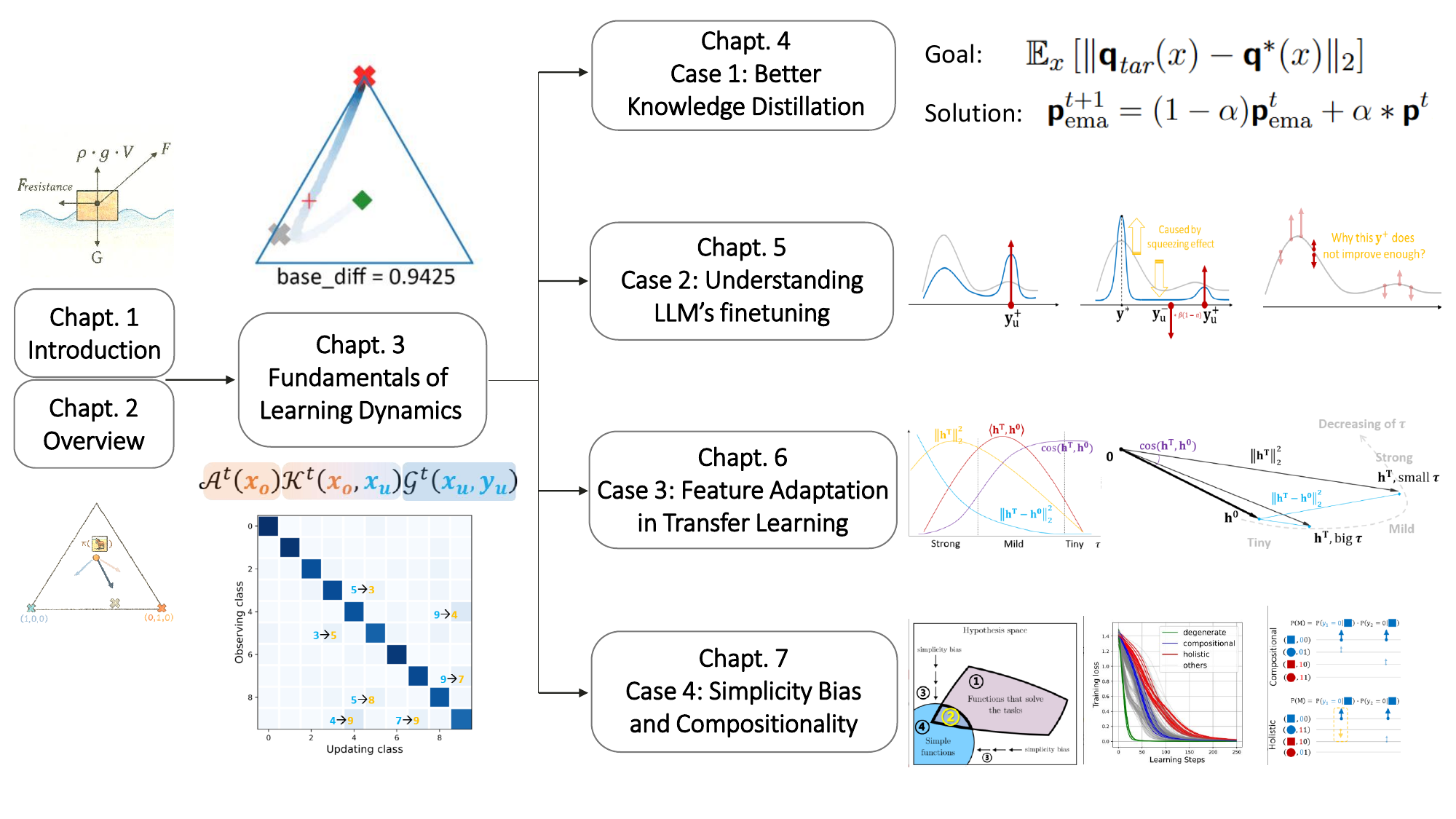}}
    \caption{Summary and highlights of each chapter.}
    \label{fig:chap8_summary}
    \end{center}
\vskip -0in
\end{figure}

\newrefcontext[sorting=nyt]
\printbibliography[heading=bibintoc, title={Bibliography}]
\appendix

\chapter{Proofs and Calculations}
\label{sec:app2}

\section{Risk Estimate Variance Bound in \cref{sec:case1_02}}
\label{sec:app2:risk-est-var}

The following result is a generalization of Proposition 3 of \cite{KD_probability},
whose proof we replicate and extend here:
\begin{prop} \label{prop:var-bound-extended}
Let $L$ be any bounded loss, $\mathcal{L}(f(x), y) \le \ell < \infty$, and consider $R_\tar$ of \cref{eq:tar_risk}.
For any predictor $f : \mathcal X \to \Delta^K$,
we have
\[
    \E_{\mathcal D}\left[(R_\tar(f,\mathcal{D}) - R(f))^2 \right]
    \le
    \frac{1}{n} \Var_{x}\left[ \vq_\tar\tp \vL(f( x))\right]
    + \xi
,\]
where $\xi$ can be any of the following seven quantities:
\begin{gather*}
    \ell^2 K \left( \E_{ x} \lVert \vq_\tar( x) - \vq^*( x) \rVert_2 \right)^2
    \\
    \ell^2 \left( \E_{ x} \lVert \vq_\tar( x) - \vq^*( x) \rVert_1 \right)^2
    \\
    2\ell^2 \left( \E_{ x} \sqrt{\mathsf{KL}( \vq_\tar( x) \,\|\, \vq^*( x) )} \right)^2
    \qquad
    2\ell^2 \E_{ x} \mathsf{KL}( \vq_\tar( x) \,\|\, \vq^*( x) )
    \\
    2\ell^2 \left( \E_{ x} \sqrt{\mathsf{KL}( \vq^*( x) \,\|\, \vq_\tar( x) )} \right)^2
    \qquad
    2\ell^2 \E_{ x} \mathsf{KL}( \vq_\tar( x) \,\|\, \vq^*( x) )
    \\
    \ell^2 \E_{ x}\big[
        \mathsf{KL}( \vq_\tar( x) \,\|\, \vq^*( x) )
        + 
        \mathsf{KL}( \vq^*( x) \,\|\, \vq_\tar( x) )
    \big]
.\end{gather*}
\end{prop}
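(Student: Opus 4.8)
The plan is to follow the structure of the original argument in \citet{KD_probability} for the variance term, and then carry out a bias--variance style decomposition to isolate the term that depends on $\vq_\tar - \vq^*$, after which all seven bounds follow from a single computation together with standard inequalities between divergences. First I would write $R_\tar(f,\mathcal D) = \frac1n\sum_{i=1}^n \vq_\tar\tp(x_i)\vL(f(x_i))$ as an empirical average of i.i.d.\ terms $Z_i \triangleq \vq_\tar\tp(x_i)\vL(f(x_i))$ over $x_i\sim \mathcal D$. Then $\E_{\mathcal D}[(R_\tar(f,\mathcal D) - R(f))^2]$ splits as $\Var_{\mathcal D}[R_\tar(f,\mathcal D)] + (\E_{\mathcal D}[R_\tar(f,\mathcal D)] - R(f))^2$; the variance of the average is $\frac1n\Var_x[Z_1] = \frac1n \Var_x[\vq_\tar\tp \vL(f(x))]$, which is exactly the first term on the right-hand side. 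So the entire content of the proposition is in bounding the squared bias $(\E_x[\vq_\tar\tp(x)\vL(f(x))] - \E_x[\vq^*\tp(x)\vL(f(x))])^2$, using $R(f) = \E_x[\vq^*\tp(x)\vL(f(x))]$.

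Next I would bound the bias pointwise. By Jensen, $(\E_x[\,\cdot\,])^2 \le$ things of the form $\E_x[(\cdot)^2]$ when convenient, but for the $L_1$/$L_2$/$\sqrt{\mathsf{KL}}$ versions it is cleaner to keep it as $\big(\E_x | \langle \vq_\tar(x) - \vq^*(x), \vL(f(x))\rangle |\big)^2$. Inside the expectation, Cauchy--Schwarz gives $|\langle \vq_\tar - \vq^*, \vL\rangle| \le \|\vq_\tar - \vq^*\|_2\,\|\vL\|_2 \le \sqrt K\,\ell\,\|\vq_\tar - \vq^*\|_2$ (using $0\le \mathcal L \le \ell$ so $\|\vL\|_2 \le \sqrt K \ell$), which yields the first bound $\ell^2 K(\E_x\|\vq_\tar - \vq^*\|_2)^2$. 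Hölder's inequality instead gives $|\langle \vq_\tar - \vq^*, \vL\rangle| \le \|\vq_\tar - \vq^*\|_1\,\|\vL\|_\infty \le \ell\,\|\vq_\tar - \vq^*\|_1$, giving the second bound. For the KL versions I would invoke Pinsker's inequality in the form $\|\vq_\tar - \vq^*\|_1 \le \sqrt{2\,\mathsf{KL}(\vq_\tar\|\vq^*)}$ and also the reverse-direction Pinsker bound $\|\vq_\tar - \vq^*\|_1 \le \sqrt{2\,\mathsf{KL}(\vq^*\|\vq_\tar)}$, plugging each into the $L_1$ bound: this produces $2\ell^2(\E_x\sqrt{\mathsf{KL}(\vq_\tar\|\vq^*)})^2$ and its reverse, and then Jensen ($(\E\sqrt{W})^2 \le \E[W]$) upgrades these to $2\ell^2\E_x[\mathsf{KL}(\vq_\tar\|\vq^*)]$ and $2\ell^2\E_x[\mathsf{KL}(\vq^*\|\vq_\tar)]$. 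The symmetrized bound $\ell^2\E_x[\mathsf{KL}(\vq_\tar\|\vq^*) + \mathsf{KL}(\vq^*\|\vq_\tar)]$ follows by averaging the two one-sided $2\ell^2$ bounds (or directly from $\|\vq_\tar - \vq^*\|_1^2 \le \mathsf{KL}(\vq_\tar\|\vq^*) + \mathsf{KL}(\vq^*\|\vq_\tar)$, a weighted form that also just uses Pinsker).

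I anticipate no serious obstacle here; the argument is essentially a template substitution. The one point requiring slight care is the loss-norm bound: the statement allows any bounded loss $\mathcal L \le \ell$ but does not assume $\mathcal L \ge 0$, so I would either note that for the NLL/log-loss of interest $\mathcal L = -\log p \ge 0$ automatically, or observe that only the differences $\langle \vq_\tar - \vq^*, \vL\rangle$ matter and one may replace $\vL$ by $\vL - c\mathbf 1$ for any constant $c$ (since $\vq_\tar$ and $\vq^*$ are probability vectors, $\langle \vq_\tar - \vq^*, \mathbf 1\rangle = 0$), after which $\|\vL - c\mathbf 1\|_\infty \le \ell$ can be arranged when $0 \le \mathcal L \le \ell$. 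A second minor point is checking the Jensen step $(\E_x \sqrt{W(x)})^2 \le \E_x[W(x)]$, which is immediate from concavity of $\sqrt{\cdot}$. I would close by remarking, as the excerpt does, that when $n$ is large the $\frac1n\Var$ term vanishes and the bias term governs, so minimizing any of these surrogate distances to $\vq^*$ makes $R_\tar$ a good proxy for $R$.
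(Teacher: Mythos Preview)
Your proposal is correct and follows essentially the same route as the paper's proof: the same bias--variance decomposition, the same Cauchy--Schwarz/H\"older split for the $L_2$/$L_1$ bounds, Pinsker's inequality (in both directions) for the KL bounds, Jensen to pass from $(\E\sqrt{W})^2$ to $\E W$, and then combining the two one-sided KL bounds for the Jeffreys divergence. Your extra remark about handling the sign of $\mathcal L$ via the translation-invariance $\langle \vq_\tar - \vq^*, \mathbf 1\rangle = 0$ is a nice touch that the paper's proof leaves implicit.
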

\begin{proof}
To begin,
\[
    \E_{\mathcal D}\left[(R_\tar(f,\mathcal{D}) - R(f))^2 \right]
    = \Var_{\mathcal D}[R_\tar(f,\mathcal{D}) - R(f)]
    + \left( \E_{\mathcal D}\left[R_\tar(f,\mathcal{D}) - R(f) \right] \right)^2
.\]
For the variance term,
since $R(f)$ is a constant and
$R_\tar$ is an average of $n$ i.i.d.\ terms,
we get
\[
    \Var_{\mathcal D}[R_\tar(f,\mathcal{D}) - R(f)]
    = \frac{1}{n} \Var_{ x}[\vq_\tar( x)\tp \vL(f( x)) ]
.\]

The other term,
as $R_\tar$ is an average of i.i.d.\ terms and
$R(f) = \E_{ x} \left[\vq^*( x)^\top \vL(f( x))\right]$,
is
\[
    \left( \E_{\mathcal D}\left[R_\tar(f,\mathcal{D}) - R(f) \right] \right)^2
    =
    \left( 
    \E_{ x} \left[(\vq_\tar( x) - \vq^*( x))^\top \vL(f(x)\right]
    \right)^2
.\]

For the first bound, we apply the Cauchy-Schwarz inequality,
\[
    (\vq_\tar( x) - \vq^*( x))^\top \vL(f( x))
    \le \lVert \vq_\tar( x) - \vq^*( x) \rVert_2 \lVert \vL(f( x)) \rVert_2
;\]
since the elements of $\vL(f( x))$ are each at most $\ell$,
the term $\lVert \vL(f( x)) \rVert_2$
is at most $\sqrt{K} \ell$.

For the other bounds, we instead apply Hölder's inequality,
yielding
\[
    (\vq_\tar( x) - \vq^*( x))^\top \vL(f( x))
    \le  \lVert \vq_\tar( x) - \vq^*( x) \rVert_1 \lVert \vL(f( x)) \rVert_\infty
    \le \ell \lVert \vq_\tar( x) - \vq^*( x) \rVert_1
.\]
The KL bounds follow by Pinsker's inequality and then Jensen's inequality.
The last bound, for the Jeffreys divergence,
combines the two KL bounds.
\end{proof}

\section{Calculation of the $\mathcal{G}^t_\text{SFT}(\chi_u)$ in \cref{eq:sec5_LLM_SFT_LD}}
\label{sec:app2:cal_G}
We here showcase a standard $\mathcal{G}^t_\text{SFT}(\chi_u)=\nabla_{\vz}\mathcal{L}(\chi_u) \mid_{\vz^t}$ considering a standard NLL loss.
As the auto-regression nature of the SFT loss is already encoded in the causal mask used in $h_\theta$, the columns in $\mathcal{G}^t(\chi_u)$ are independent of each other, which can be separately calculated.
Plus, the summation over $l$ can also be achieved by left-multiplying a length-$L$ all-one vector $\bmf{1}$.
Specifically, the SFT loss for each $l$ is:
\begin{align}\nonumber
    [\mathcal{L}_\text{SFT}(\chi_u)]_l 
    &= -\log \pi(\vy_l=y_u \mid \chi_u)\\\nonumber
    &= -\ve_{y_u}^\top  \log \pi(\vy_l\mid \chi_u)\\\nonumber
    &= -\ve_{y_u}^\top  \log \left(\sigma(\vz_l)\right),\nonumber
\end{align}
where $y_u$ is for the $l$-th dimension of $\vy_u$. 
The gradient of $\mathcal{L}$ on $\vz$ can be then calculated as:
\begin{align}
    [\mathcal{G}_\text{SFT}^t(\chi_u)]_l 
    &= 
    \underbrace{
        \nabla_{\vz_l}[\mathcal{L}_\text{SFT}(\chi_u)]_l
    }_{1\times V}
    =  
     \left( \underbrace{ \nabla_{\pi}[\mathcal{L}_\text{SFT}(\chi_u)]_l}_{V\times 1} \right)^\top  \underbrace{ \nabla_{\vz_l}\pi }_{V\times V}\nonumber\\
    &= - \left( \ve_{y_u}\oslash\pi \right)^\top{\nabla_{\vz_l}\pi} 
    = \pi(y_l\mid\chi_u) - \ve_{y_u},
\end{align}
where $\oslash$ is element-wise division.

To calculate the equation above,
we first recall the NLL loss of the $l$-th token is $[\mathcal{L}_\text{SFT}]_l=-\ve_{y_l^+}^\top\log\pi$, where $\pi=\sigma(\vz)$.
Then, $\underbrace{\nabla_{\vz}\mathcal{L}}_{1\times V}=\underbrace{\nabla_{\pi}\mathcal{L}}_{1\times V}\underbrace{\nabla_{\vz}\pi}_{V\times V}$.
For each dimension of $\nabla_{\vz}\mathcal{L}_l$,
we have $\frac{\partial{\mathcal{L}}}{\pi_i}=0$ if $\pi_i\neq y_l$ and $\frac{\partial{\mathcal{L}}}{\pi_i}=-\frac{1}{\pi_i}$ if $\pi_i=y_l$.
By writing it in vector form, we have $\nabla_{\vz}\mathcal{L}=-(\ve_{y_l}\oslash\pi)^\top\nabla_{\vz}\pi$.
For $\nabla_{\vz}\pi$, we have:
\[
    \nabla_{\vz}\pi
    = \left[
    \begin{matrix}
        \pi_1(1-\pi_1)  &-\pi_2\pi_1    &\cdots     &-\pi_V\pi_1 \\
        -\pi_1\pi_2   &1-\pi_2\pi_2   &\cdots     &-\pi_V\pi_2 \\
        \dots   & \dots   &\ddots     &\dots  \\
        -\pi_1\pi_V   &-\pi_2\pi_V    &\cdots     &1-\pi_V\pi_V
    \end{matrix}
    \right].
\]
Combining this matrix and the $1\times V$ vector $(\ve_{y_l}\oslash\pi)^\top$,
where the only non-zero term is $\frac{1}{\pi_k}$ at the $k=y_l$ position.
So, left multiplying by this vector is actually first selecting the $k$-th row of $\nabla_{\vz}\pi$, and then multiplying $\frac{1}{\pi_k}$ to it.
In summary, we have:
\begin{align}\nonumber
    \nabla_{\vz}\mathcal{L}
    &=-\frac{1}{\pi_k}[-\pi_k\pi_1, -\pi_k\pi_2, \dots, -\pi_k(1-\pi_k),\dots, -\pi_k\pi_V]^\top \\\nonumber
    &=[\pi_1, \pi_2, \dots, \pi_k-1,\dots, \pi_V]^\top \\\nonumber
    &=\pi-\ve_k
\end{align}
By stacking the terms with different $l\in [L]$, we can get
\begin{equation}
    \mathcal{G}^t_\text{SFT}(\chi_u)=\nabla_{\vz}\mathcal{L}_\text{SFT}(\chi_u)|_{\vz^t}=\pi_{\theta^t}(\vy\mid\chi_u) - \ve_{\vy_u}
\end{equation}

\section{The Inner Product Calculation in GRPO (Proof of \cref{eq:GRPO_alpha}}
\label{sec:app2:grpo}

This appendix provides detailed steps for getting \cref{eq:GRPO_alpha}, i.e.,
\[
    \left<\nabla_{\theta}\mathcal{L}_{{o,m}},\nabla_\theta\mathcal{L}_{{n,l}}\right>
    \approx\alpha_{(o,m),(n,l)}
    \triangleq(\ve_{y_{o,m}} - \pi_{y_{o,m}})^\top (\ve_{y_{n,l}} - \pi_{y_{n,l}})\vh^\top_{n,l}\vh_{o,m}.
\]

Recall \cref{eq:ld_grpo} and UFM, in which $\theta\triangleq[\vw;\vh]$, where $\vw\in\mathbb{R}^{V\times d}$ and $\vh\in\mathbb{R}^{d\times 1}$ are both trainable parameters.
We use the same subscript system for both $\vh$, $\vchi$, and $\vy$.
So, the feature vector of $\vchi^+_{o,<m}$ is hence $\vh^+_{o,<m}$.
Then, $\nabla_\theta\mathcal{L}_{o,m}=\nabla_{\theta}\log\pi\left( y\mid \vx_{o},\vy^+_{o,<m}\right)$ can be written as $\nabla_{[\vw;\vh]}\log \sigma\left(\vw\vh^+_{o,<m}\right)$, where $\sigma(\cdot)$ is the $\mathsf{Softmax}$ function.
Our inner product then becomes:

\begin{equation}
    \left< \nabla_{[\vw;\vh]}\log\sigma_{y_{o,m}}(\vw\vh_{o,m}),    \nabla_{[\vw;\vh]}\log\sigma_{y_{n,l}}(\vw\vh_{n,l}) \right>.
    \label{eq:ld_grpo_inner_product}
\end{equation}

To make the derivation easy to follow, we suggest keeping the shape of these matrices in mind during the derivation.
First, $\nabla_{[\vw;\vh]}\log\sigma_{y_{o,m}}(\vw\vh_{o,m})$ is taking the gradient of one scalar (i.e., the $y_{o,m}$-th token of the $\log\pi$) to all the parameters, so it should be a very long $(V*d+d)\times 1$ vector:
\begin{align}\nonumber
    \nabla_{[\vw;\vh]}\log\pi_{i}&\triangleq 
     \left[
        \frac{\partial\log\pi_{i}}{\partial\hat{\vw}_1}, \dots, 
        \frac{\partial\log\pi_{i}}{\partial\hat{\vw}_{V*d}},
        \frac{\partial\log\pi_{i}}{\partial\vh_1}, \dots ,
        \frac{\partial\log\pi_{i}}{\partial\vh_d}
    \right]     \\\nonumber
   &=
    \left[
        \nabla_{\hat{\vw}}\log\pi_{i}; \nabla_{\vh}\log\pi_{i}
    \right]
\end{align}
where $\hat{\vw}\in\mathbb{R}^{V*d\times 1}$ is the column-wise flatten version of $\vw$.
Using this decomposition, the inner product in \cref{eq:ld_grpo_inner_product} can be computed by first evaluating the inner products between the gradient of each component separately, and then summing them together, i.e.,
\begin{equation}
    \left< \nabla_{[\vw;\vh]}\log\pi_{i},    \nabla_{[\vw;\vh]}\log\pi_{j} \right> = 
    \left< \nabla_{\hat{\vw}}\log\pi_{i},    \nabla_{\hat{\vw}}\log\pi_{j} \right> + 
    \left< \nabla_{\vh}\log\pi_{i},    \nabla_{\vh}\log\pi_{j} \right>. \label{eq:ld_grpo_inner_product_wh}    
\end{equation}

Let's start with $\nabla_{\vw}\log\sigma_{y_{o,m}}(\vw\vh_{o,m})$.
Note that taking the gradient of $\vw$ is not a typo here, since $\nabla_{\hat{\vw}}$ is just a column wise rearrange of $\nabla_{\vw}$, and we already have the matrix representation of $\nabla_{\vw}\log$ in our previous sections.
Specifically, we have
\[
    \underbrace{\nabla_{\vw}\log\sigma_{y_{o,m}}(\vw\vh_{o,m})}_{V\times d} = 
     (\underbrace{\ve_{y_{o,m}}-\pi_{o,m}}_{V\times1})
     \underbrace{\vh_{o,m}^\top}_{1\times d},
\]
Each column of the above matrix is $(\ve_{y_o}-\pi_o)h_{om,i}$, where $h_{om,i}$ is the $i$-th element of $\vh_{o,m}$.
The inner product of two gradients can be calculated as:
\begin{align}\nonumber
    \left< \nabla_{\hat{\vw}}\log\pi_{y_{o,m}},    \nabla_{\hat{\vw}}\log\pi_{y_{n,l}} \right>
    &= \sum_{i=1}^d(\ve_{y_{o,m}}-\pi_{y_{o,m}})^\top(\ve_{y_{n,l}}-\pi_{y_{n,l}})h_{om,i}h_{nl,i}\\\nonumber
    &= (\ve_{y_{o,m}}-\pi_{y_{o,m}})^\top(\ve_{y_{n,l}}-\pi_{y_{n,l}})\sum_{i=1}^d h_{om,i}h_{nl,i}\\
    &= \underbrace{(\ve_{y_{o,m}}-\pi_{y_{o,m}})^\top}_{1\times V}
       \underbrace{(\ve_{y_{n,l}}-\pi_{y_{n,l}})}_{V\times 1}
       \underbrace{\vh_{o,m}^\top\vh_{n,l}}_{1\times 1}\label{eq:ld_grpo_inner_product_w}
\end{align}

Similarly, the gradient with respect to $\vh$ is
\[
    \underbrace{\nabla_{\vh}\log\sigma_{y_{o,m}}(\vw\vh_{o,m})}_{d\times 1} = 
     \underbrace{\vw^\top}_{d\times V}
     (\underbrace{\ve_{y_{o,m}}-\pi_{y_{o,m}}}_{V\times1}).
\]
The inner product is then calculated as:
\begin{align}\nonumber
    \left< \nabla_{\vh}\log\pi_{y_{o,m}},    \nabla_{\vh}\log\pi_{y_{n,l}} \right> &= 
    \left(
        \vw^\top(\ve_{y_{o,m}} - \pi_{y_{o,m}})
    \right)^\top
    \left(
    \vw^\top(\ve_{y_{n,l}} - \pi_{y_{n,l}})
    \right) \\
    &= \underbrace{(\ve_{y_{o,m}} - \pi_{y_{o,m}})^\top}_{1\times V} 
       \underbrace{\vw\vw^\top}_{V\times V} 
       \underbrace{(\ve_{y_{n,l}} - \pi_{y_{n,l}})}_{V\times 1}.
    \label{eq:ld_grpo_inner_product_h}
\end{align}

Now, substituting \cref{eq:ld_grpo_inner_product_h} and \cref{eq:ld_grpo_inner_product_w} back to \cref{eq:ld_grpo_inner_product_wh}, we can get the decomposition of the learning dynamics between two tokens:
\begin{align}
    \left<\nabla_\theta\mathcal{L}_{o,m} ,\nabla_\theta\mathcal{L}_{n,l}\right>=& 
    (\ve_{y_{o,m}} - \pi_{y_{o,m}})^\top \left(
    \vh^\top_{n,l}\vh_{o,m}\cdot I_V + \vw\vw^\top\right) (\ve_{y_{n,l}} - \pi_{y_{n,l}}) \nonumber\\
    =&(\ve_{y_{o,m}} - \pi_{y_{o,m}})^\top (\ve_{y_{n,l}} - \pi_{y_{n,l}})\vh^\top_{n,l}\vh_{o,m}  \nonumber\\
    & +(\ve_{y_{o,m}} - \pi_{y_{o,m}})^\top\vw\vw^\top(\ve_{y_{n,l}} - \pi_{y_{n,l}}),
    \label{eq:ld_grpo_inner_product_detail}
\end{align}
where $I_V$ is a $V\times V$ identity matrix.
This decomposition also resembles a $\mathcal{AKG}$ style: the difference is that $\ve_{y_{o,m}} - \pi_{y_{o,m}}$ is only one column of $\mathcal{A}$.
The $\mathcal{G}$ term is still starting from $\pi$ and pointing to $\ve$ (note that we still have $-\eta$ outside the inner product).
The good news is that our $\mathcal{K}$ term is more interpretable, which sheds more light on the token-level mutual influence.

Ideally, we would compute every term in \cref{eq:ld_grpo_inner_product_detail} during training in order to accurately evaluate the inner product. 
However, in practice, calculating the term involving $\vw\vw^\top$ is significantly more computationally intensive than computing $\vh^\top\vh$, due to the dimensionality differences: $\vw \in \mathbb{R}^{V \times d}$ whereas $\vh \in \mathbb{R}^{d \times 1}$.
Moreover, we hypothesize that the hidden representations $\vh$, which tend to be relatively stable throughout training, already capture sufficient information about token-level similarity.
Therefore, we propose a simplified approximation by retaining only the first term on the right-hand side of \cref{eq:ld_grpo_inner_product_detail}. 
This approximation forms the basis of our influence score, denoted by $\alpha$, which serves as a practical and efficient surrogate for measuring token-level influence during GRPO training:
\begin{equation}
    \left<\nabla_{\theta}\mathcal{L}_{{o,m}},\nabla_\theta\mathcal{L}_{{n,l}}\right>
    \approx\alpha_{(o,m),(n,l)}
    \triangleq(\ve_{y_{o,m}} - \pi_{y_{o,m}})^\top (\ve_{y_{n,l}} - \pi_{y_{n,l}})\vh^\top_{n,l}\vh_{o,m}.
    \label{eq:app:GRPO_alpha}
\end{equation}

\section{Proof of Kolmogorov Complexity Bound in \cref{sec:case4_02}}
\label{sec:app2:k_complexity}

\KC*

\begin{proof}
    Recall the fact that any bijection from $\vz$ to $\vG$ can be represented by an element in the symmetry group $S_{v^m}$.
    From the definition of the symmetry group,
    we know each element in $S_{v^m}$ can be represented by a permutation matrix of size $v^m$.
    As there is only one $1$ in each row and column of a permutation matrix,
    any permutation matrix can be uniquely represented by a permuted sequence of length $v^m$.
    Specifically, assume we have a sequence of natural numbers $\{1,2,...,v^m\}$,
    each permuted sequence $\mathsf{Perm}(\{1,2,...,v^m\})$ represents a distinct permutation matrix,
    and hence represents a distinct bijection from $\vz$ to $\vG$.
    In other words,
    we can encode one bijection from $\vz$ to $\vG$ using a sequence of length $v^m$, 
    i.e., $\mathsf{Perm}(\{1,2,...,v^m\})$,
    and bound the corresponding Kolmogorov complexity (in bits) as
    \begin{equation}
        \mathsf{KC}(\text{bijection}) \le v^m\cdot \log_2 v^m=v^m\cdot m\cdot \log_2 v,
        \label{eq:app_KC_bijection}
    \end{equation}
    As an arbitrary bijection from $\vz$ to $\vG$ doesn't have any extra information to improve the coding efficiency,
    \cref{eq:app_KC_bijection} provides an upper bound of the minimal Kolmogorov complexity.

    On the contrary,
    as each compositional mapping can be represented by an element in $S_v^m\rtimes S_m$,
    we can encode the mappings more efficiently.
    Specifically, we need to first use $m$ sequences with length $v$,
    i.e., $\mathsf{Perm}(\{1,2,...,v\})$,
    to represent the assignment of ``words'' for each $z_i$.
    After that, we need a sequence of length $m$,
    i.e., $\mathsf{Perm}(\{1,2,...,m\})$ to encode the assignment between $z_i$ and $G_j$.
    The corresponding Kolmogorov complexity is then bounded as
    \begin{equation}
        \mathsf{KC}(\text{comp}) \le v\cdot\log_2 v + m\cdot\log_2 m,
        \label{eq:app_KC_comp}
    \end{equation}    
    Although this is only an upper bound, by a counting argument \emph{most} such mappings must have a complexity no less than, say, a constant multiple of that bound.

    To compare their Kolmogorov complexity,
    we can define a ratio as $\gamma\triangleq \frac{\mathsf{KC}(\text{bijection})}{\mathsf{KC}(\text{comp})}$.
    Obviously, when $m\leq v$,
    $\gamma\geq\frac{m*v^{m-1}}{2}$,
    which is larger than 1 as long as $m,v\geq 2$.
    When $m>v$,
    $\gamma\geq\frac{v^m\log_2 v}{2\log_2 m}$,
    which is also larger than 1 when $m,v\geq 2$.
\end{proof}

Actually, there might be some mappings that are not purely compositional or holistic.
For example, we can have a mapping with $z_{i\leq 10}$ sharing the reused rules while other $z_{i>10}$ doesn't.
Then this type of mapping can be represented by an element in $S_v^{10}\rtimes S_{10}\rtimes S_{v^{m-10}}$.
As a mapping in this subset shares 10 common rules,
its Kolmogorov complexity is between $\mathsf{KC}(\text{bijection})$ and $\mathsf{KC}(\text{comp})$.
Intuitively,
\emph{for all bijections},
smaller $\mathsf{KC}(\cdot)$ means higher compressibility and higher compositionality.

\chapter{Other Metrics Used in the Thesis}
\label{sec:app3}

\section{Calculation of Expected Calibration Error}
\label{sec:app3_ece}
Expected calibration error is a measurement about how well the predicted confidence represents the true correctness likelihood \citep{guo2017calibration}. 
For example, if our model gives 100 predictions, each with confidence, say, $p(y=v\mid x)\in[0.7,0.8]$). Then we might expect there are $70\sim80$ correct predictions among those 100 ones. 
To calculate this, we first uniformly divide $[0,1]$ into $M$ bins, with each bin represented by $I_m\in\left(\frac{m-1}{M},\frac{m}{M}\right]$ and $B_m$ is all the $\vx$ samples whose confidence falls into $I_m$. 
Then, ECE is calculated as:

\begin{equation}
    \text{ECE}=\sum_{m=1}^M\frac{|B_m|}{n}\left|acc(B_m)-conf(B_m)\right|,
\end{equation}

where $acc(B_m)=\frac{1}{|B_m|}\sum_{i\in B_m}\mathbbm{1}(\hat{y}_i=v_i)$, $conf(B_m)=\frac{1}{|B_m|}\sum_{i\in B_m}p(y=v_i\mid x_i)$, $v_i$ is the true label and $\hat{y}_i=\text{argmax } p(y\mid x_i)$ is the model's prediction. All the ECEs mentioned in this thesis are calculated by setting $M=10$.

\chapter{More Details about the Experiments}
\label{sec:app4}

\section{The Probing Dataset in \cref{sec:case2}}
\label{sec:app4_chap5}

Besides the 9 types of responses demonstrated in \cref{fig:chap5_probing_responses} in the main context, we further extend our probing responses to 14 types, as follows.
Furthermore,
we also create another probing dataset (named $\mathcal{D}_\text{probtest}$) where all $\vx$ comes from the test set.
Compared with $\mathcal{D}_\text{probtrain}$ that we used in the main context,
all the prompts and responses in $\mathcal{D}_\text{probtest}$ are never exposed to the model during finetuning.
By comparing the learning curves of these two probing datasets,
we can figure out the difference between the model's prediction of those directly influenced responses ($\vy$ appears during training) and the indirectly influenced ones ($\vy$ that the model never sees during training).
Finally, we believe the level of the ``on-policy'' property (which is very important for the preference finetuning, as discussed in \cite{tajwar2024preference}) could also be introduced as the second axis in our 2-D plane.
We left the exploration of this interesting direction in our future work.

\begin{itemize}
  \item $\mathcal{Y}_\text{VF}$: valid feedback following the instruction $\vx_u$:
        \begin{itemize}
            \item[0.] $\vy_{\pi^0}$, response generated by feeding $\vx_u$ to LLM before SFT
            \item[1.] $\vy_u^+$, the chosen (i.e., the preferred) response to $\vx_u$
                \begin{itemize}
                    \item[1.1] $\vy_\text{selfr}^+$, rephrase $\vy_u^+$ using $\pi^0$, algorithm from \cite{yang2024self};
                    \item[1.2] $\vy_\text{gpts}^+$, rephrase $\vy_u^+$ using \texttt{ChatGPT}, keep the semantics while changing the format;
                    \item[1.3] $\vy_\text{gptf}^+$, rephrase $\vy_u^+$ using \texttt{ChatGPT}, keep the format while changing the semantics;
                \end{itemize}
            \item[2.] $\vy_u^-$, the rejected (i.e., the less preferred) response to $\vx_u$
                \begin{itemize}
                    \item[2.1] $\vy_\text{selfr}^-$, rephrase $\vy_u^-$ using $\pi^0$, algorithm from \cite{yang2024self};
                    \item[2.2] $\vy_\text{gpts}^-$, rephrase $\vy_u^-$ using \texttt{ChatGPT}, keep the semantics while changing the format;
                    \item[2.3] $\vy_\text{gptf}^-$, rephrase $\vy_u^-$ using \texttt{ChatGPT}, keep the format while changing the semantics;
                \end{itemize}            
                    \end{itemize}
  \item $\mathcal{Y}_\text{IF}$: irrelevant feedback to $\vx_u$ that are still recognizably human language (in these datasets, roughly ``internet-standard'' English):
        \begin{itemize}
            \item[3.] $\vy_{j\neq u}^+$, chosen response for a different question $\vx_{j\neq u}$ selected from the training set;
            \item[4.] $\vy_\text{test}^+$, chosen response of a question $\vx_\text{test}$ selected from test set;
            \item[5.] $\vy_\text{hum}$, a ``random'' English sentence generated by \texttt{ChatGPT} with as many words as $\vy_u^+$;
        \end{itemize}
  \item $\mathcal{Y}_\text{Un-gram}$: ungrammatical sequences:
        \begin{itemize}
            \item[6.] $\vy_\text{rnd}^+$, a random permutation of the words of $\vy_u^+$, or a randomly generated sequence;
            \item[7.] $\vy'_\text{rnd}$, a random permutation of the words of a generated sentence as in $\vy_\text{hum}$.
        \end{itemize}
\end{itemize}

\begin{figure}[h]
    \begin{center}
    \centerline{\includegraphics[width=1\columnwidth,trim=20 0 20 0, clip]{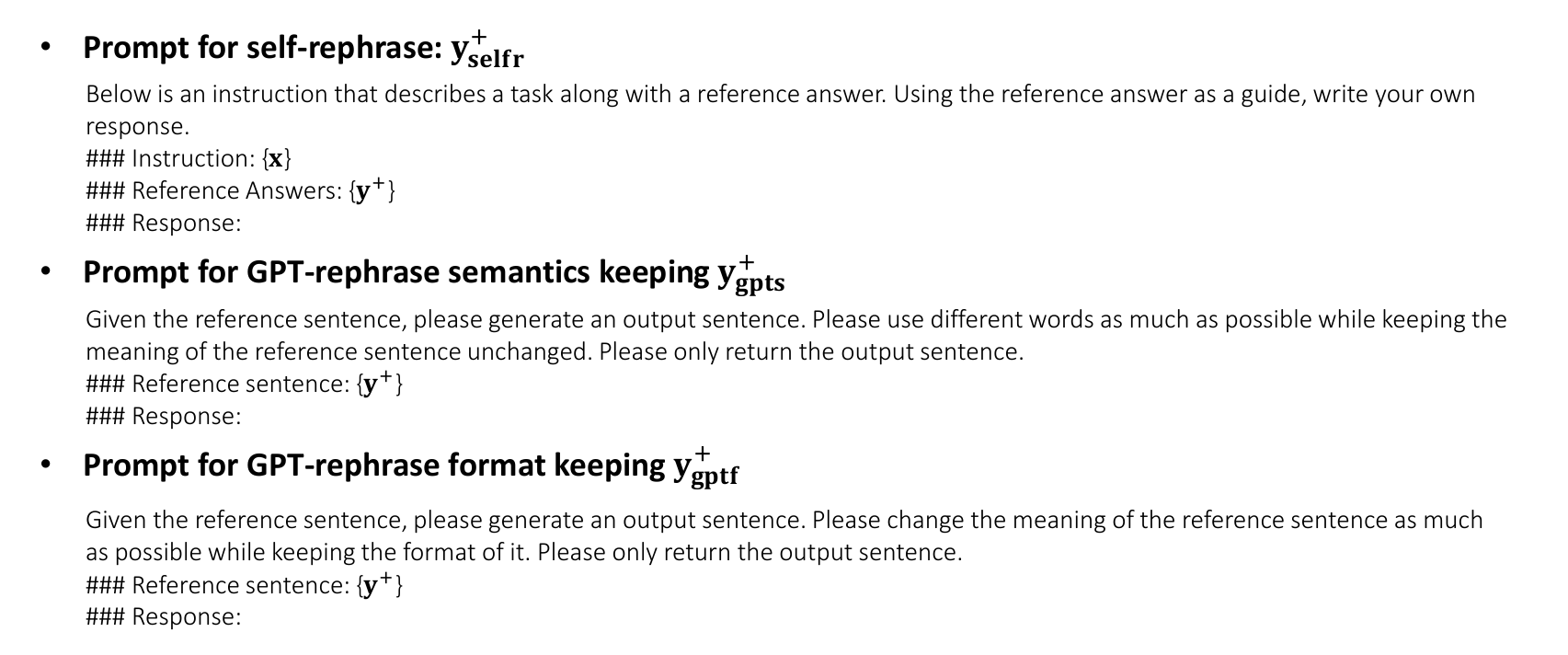}}
    \caption{The prompts used to generate $\vy_\text{selfr}^+$, $\vy_\text{gpts}^+$, and $\vy_\text{gptf}^+$. 
             The rephrases of rejected samples are generated similarly.
             The self-rephrase template comes from \cite{yang2024self}.}
    \label{fig:app4_chap5:app_prompt_design}
    \end{center}
\vskip -0in
\end{figure}

\begin{figure}[h]
    \begin{center}
    \centerline{\includegraphics[width=1\columnwidth,trim=0 0 0 0, clip]{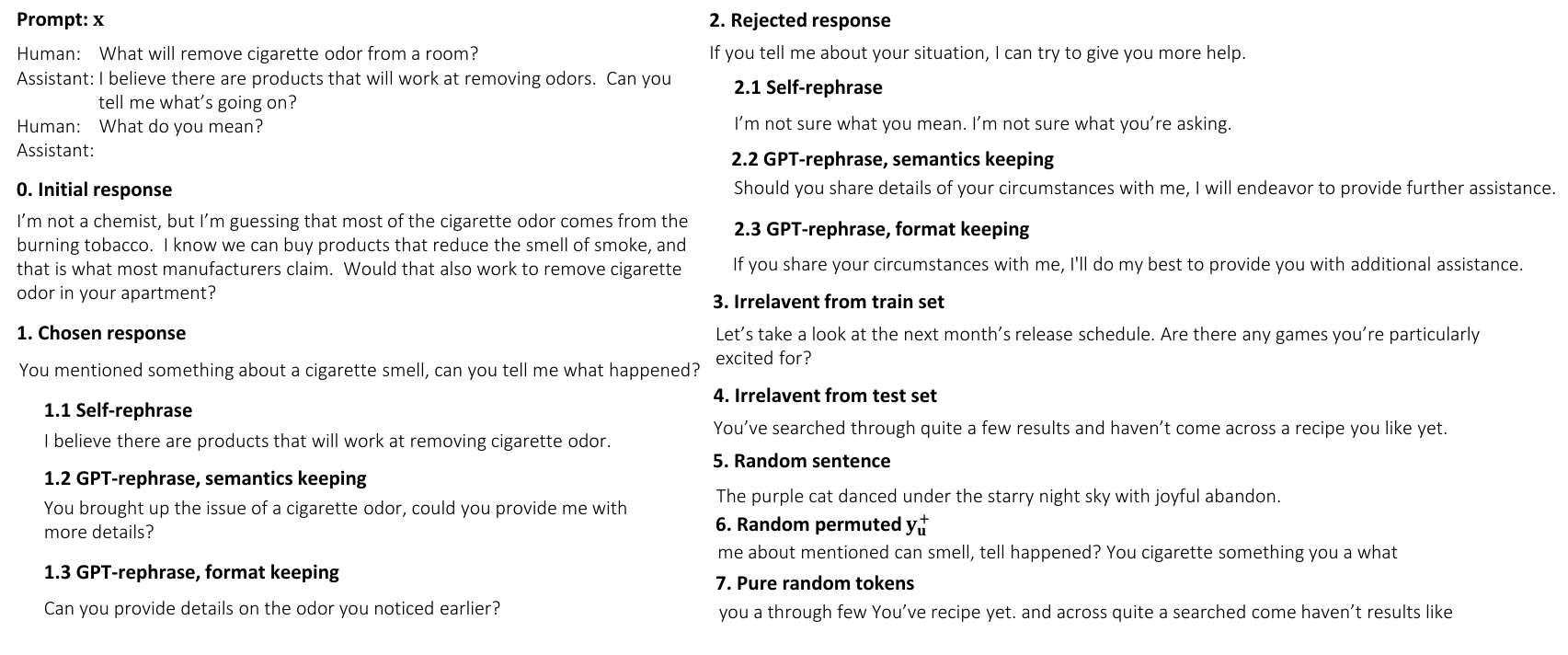}}
    \caption{Example of all possible responses for one $\vx$ in our probing dataset.
             Note that the pure random token is generated by first creating a random sentence,
             then randomly permuting its tokens.}
    \label{fig:app_response_examples}
    \end{center}
\vskip -0in
\end{figure}

\end{document}